\newcommand{\mx}{\mathcal{X}}
\newcommand{\mz}{\mathcal{Z}}
\newcommand{\md}{\mathcal{D}}
\newcommand{\ceta}{c_{\eta}}
\newcommand{\reals}{\mathbb{R}}
\newcommand{\norm}[1]{\| #1 \|}
\newcommand{\wvec}[1]{\ww^{(#1)}}
\newcommand{\uvec}[1]{\uu^{(#1)}}
\newcommand{\net}{N_{W}}
\newcommand{\nett}[1]{N_{W_{#1}}}
\newcommand{\abs}[1]{\left|{#1}\right|}
\newcommand\numberthis{\addtocounter{equation}{1}\tag{\theequation}}
\DeclareMathOperator*{\argmax}{arg\,max}
\DeclareMathOperator{\sign}{sign}
\newtheorem{thm}{Theorem}[section]
\newtheorem{prop}[thm]{Proposition}
\newtheorem{lem}[thm]{Lemma}
\newtheorem{definition}[thm]{Definition}
\renewcommand{\xi}{{\xx}^{(m)}}
\newcommand{\prob}{\mathbb{P}}
\newcommand{\probarg}[1]{\prob\left[{#1}\right]}
\newcommand{\needcite}[1]{}
\newcommand{\be}{\begin{equation}}
\newcommand{\ee}{\end{equation}}
\newcommand{\benn}{\begin{equation*}}
\newcommand{\eenn}{\end{equation*}}
\newcommand{\bea}{\begin{eqnarray*}}
\newcommand{\eea}{\end{eqnarray*}}
\newcommand{\bean}{\begin{eqnarray}}
\newcommand{\eean}{\end{eqnarray}}
\newcommand{\ww}{\boldsymbol{w}} 
\newcommand{\xx}{\boldsymbol{x}}
\newcommand{\zz}{\boldsymbol{z}}
\newcommand{\vv}{\boldsymbol{v}}
\newcommand{\uu}{\boldsymbol{u}} 
\newcommand{\pp}{\boldsymbol{p}}
\newcommand{\agtodo}[1]{({\textcolor{red}{#1}})}
\newcommand{\ignore}[1]{}
\newcommand{\polyring}[1]{\reals\left[x_1,\ldots,x_n\right]}
\renewcommand{\eqref}[1]{Eq.~\ref{#1}}
\newcommand{\figref}[1]{Fig.~\ref{#1}}
\newcommand{\secref}[1]{Sec.~\ref{#1}}
\newtheorem{remark}[thm]{Remark}
\title{Why do Larger Models Generalize Better? A Theoretical Perspective via the XOR Problem}
\author{
	Alon Brutzkus \\ \texttt{alonbrutzkus@mail.tau.ac.il}
	\and
	Amir Globerson \\
	\texttt{gamir@post.tau.ac.il}
}
\date{%
	The Blavatnik School of Computer Science, Tel Aviv University %
}
\begin{document}
\maketitle

\begin{abstract} 
	Empirical evidence suggests that neural networks with ReLU activations generalize better with over-parameterization. However, there is currently no theoretical analysis that explains this observation. In this work, we provide theoretical and empirical evidence that, in certain cases, overparameterized convolutional networks generalize better than small networks because of an interplay between weight clustering and feature exploration at initialization.  We demonstrate this theoretically for a 3-layer convolutional neural network with max-pooling, in a novel setting which extends the XOR problem. We show that this interplay implies that with overparamterization, gradient descent converges to global minima with better generalization performance compared to global minima of small networks. Empirically, we demonstrate these phenomena for a 3-layer convolutional neural network in the MNIST task.
\end{abstract}

\section{Introduction}
\label{sec:intro}
Most successful deep learning models use more parameters than needed to achieve zero training error. This is typically referred to as \textit{overparameterization}.
Indeed, it can be argued that overparameterization is one of the key techniques that has led to the remarkable success of neural networks. However, there is still no theoretical account for its effectiveness.
 
One very intriguing observation in this context is that overparameterized networks with ReLU activations, which are trained with gradient based methods, often exhibit better generalization error than smaller networks  \citep{neyshabur2014search,neyshabur2018towards,novak2018sensitivity}. 
In particular, it often happens that two networks, one with $N_1$ neurons and one with $N_2>N_1$ neurons achieve zero training error, but the larger network has better test error.
This somewhat counter-intuitive observation suggests that first-order methods which are trained on overparameterized networks have an \textit{inductive bias} towards solutions with better generalization performance. Understanding this inductive bias is a necessary step towards a full understanding of neural networks in practice. 

Providing theoretical guarantees for overparameterization is extremely challenging due to two main reasons. First, to show a generalization gap between smaller and larger models, one needs to prove that large networks have better sample complexity than smaller ones. However, current generalization bounds that are based on complexity measures do not offer such guarantees.\footnote{We note that better generalization \textit{upper} bounds for overparameterized networks do not prove this.} Second, analyzing
convergence of first-order methods on networks with ReLU activations is a major challenge. Indeed, there are no optimization guarantees even for simple learning tasks such as the classic two dimensional XOR problem. Given these difficulties, it is natural to analyze a simplified scenario, which ideally shares various features with real-world settings.

In this work we follow this approach and show that a possible explanation for the success of overparameterization is a combination of two effects: weight exploration and weight clustering. Weight exploration refers to the fact that larger models explore the set of possible weights more effectively since they have more neurons in each layer. Weight clustering is an effect we demonstrate here, which refers to the fact that weight vectors in the same layer tend to cluster around a small number of prototypes.  

To see \textit{informally} how these effects act in the case of overparameterization, consider a binary classification problem and a training set. The training set typically contains multiple patterns that discriminate between the two classes. The smaller network will find detectors (e.g., convolutional filters) for a subset of these patterns and reach zero training error, but not generalize because it is missing some of the patterns. This is a result of an under-exploration effect for the small net. On the other hand, the larger net has better exploration and will find more relevant detectors for classification. Furthermore, due to the clustering effect its weight vectors will be close to a small set of prototypes. Therefore the effective capacity of the overall model will be restricted, leading to good generalization. 


\ignore{
Here we will show that a possible explanation for the success of overparameterization is a combination of two effects. 
On the one hand, larger networks explore parameter space more effectively, since random initialization of the different neurons leads to a variety of locations in weight space. On the other hand, weight vectors tend to cluster around a small set of prototypes, and thus overfitting is avoided because the full expressivity of the large network is not utilized. The combination of these two factors implies that if the underlying data can be explained with a relatively small number of weight vectors, the larger network will be able to both find these weight vectors, and avoid finding irrelevant ones.
}
\ignore{
Providing theoretical guarantees for this phenomenon is extremely challenging due to two main reasons. First, to show a generalization gap between smaller and larger models, one needs to prove that large networks have better sample complexity than smaller ones. However, current generalization bounds that are based on complexity measures do not offer such guarantees.\footnote{We note that better generalization \textit{upper} bounds for overparameterized networks do not prove this.} Second, analyzing the dynamics of first-order methods on networks with ReLU activations is a major challenge. Indeed, there do not exist optimization guarantees even for simple learning tasks such as the classic XOR problem in two dimensions.\footnote{We are referring to the problem of learning the XOR function given four two-dimensional points with binary entries, using a moderate size one-hidden layer neural network (e.g., with 50 hidden neurons). Note that there are no optimization guarantees for this setting. Variants of XOR have been studied in \cite{lisboa1991complete,sprinkhuizen1998error} but these works only analyzed the optimization landscape and did not provide guarantees for optimization methods. We note that recent results on optimization of neural networks do not apply in this case (see Section \ref{sec:related_work}). We provide guarantees for this problem in \secref{sec:xor}.}
}
\ignore{
 To illustrate this, consider the simple XOR problem. Given a training sample of four points $(1,1)$, $(1,-1)$, $(-1,-1)$, $(1,1)$ with labels 1,-1,1,-1, respectively, the goal is to a learn a function that classifies all points correctly with a neural network. Assume that the network has one-hidden layer with 50 hidden ReLU neurons and it is trained with randomly initialized gradient descent on a classification loss.\footnote{The choice of 50 hidden neurons is arbitrary. The XOR function can be realized by a network with four neurons. Therefore any constant greater than four would be an overparameterization setting.}  Empirically, in this setting, a neural network can learn the classifier in a few epochs (see \secref{sec:exp_setups} for experiment details). However, to the best of our knowledge, there are no guarantees for this problem.\footnote{There have been previous works on variants of this XOR problem (\cite{lisboa1991complete,sprinkhuizen1998error}) but they have only studied the optimization landscape and did not provide guarantees for optimization methods.}
}
\ignore{
Given the difficulty of explaining this phenomenon, it is natural to first try to explain it in a simplified scenario, which ideally shares various features with real-world settings. In this work, we take this approach and focus on a particular learning setting that captures key properties of the overparameterization phenomenon. 
}

The network we study here includes some key architectural components used in modern machine learning models. Specifically, it consists of a convolution layer with a ReLU activation function, followed by a max-pooling operation, and a fully-connected layer. This is a key component of most machine-vision models, since it can be used to detect patterns in an input image. We are also not aware of any theoretical guarantees for a network of this structure.

For this architecture, we consider the problem of detecting two dimensional binary patterns in a high dimensional input vector. The patterns we focus on are the XOR combination (i.e., $(1,1)$ or $(-1,-1)$). This problem is a high dimensional extension of the XOR problem. We refer to it as the ``XOR Detection problem (XORD). 
One advantage of this setting is that it nicely exhibits the phenomenon of overparameterization empirically, and is therefore a good test-bed for understanding overparameterization. \figref{fig:xor_over_param} shows the result of learning the XORD problem with the above network, and different number of channels. It can be seen that increasing the number of channels improves test error.\footnote{Note that a similar curve is observed when only considering zero training error, implying that smaller networks are expressive enough to fit the training data.}


\begin{figure}	
	
	\centering
	\includegraphics[width=.5\linewidth]{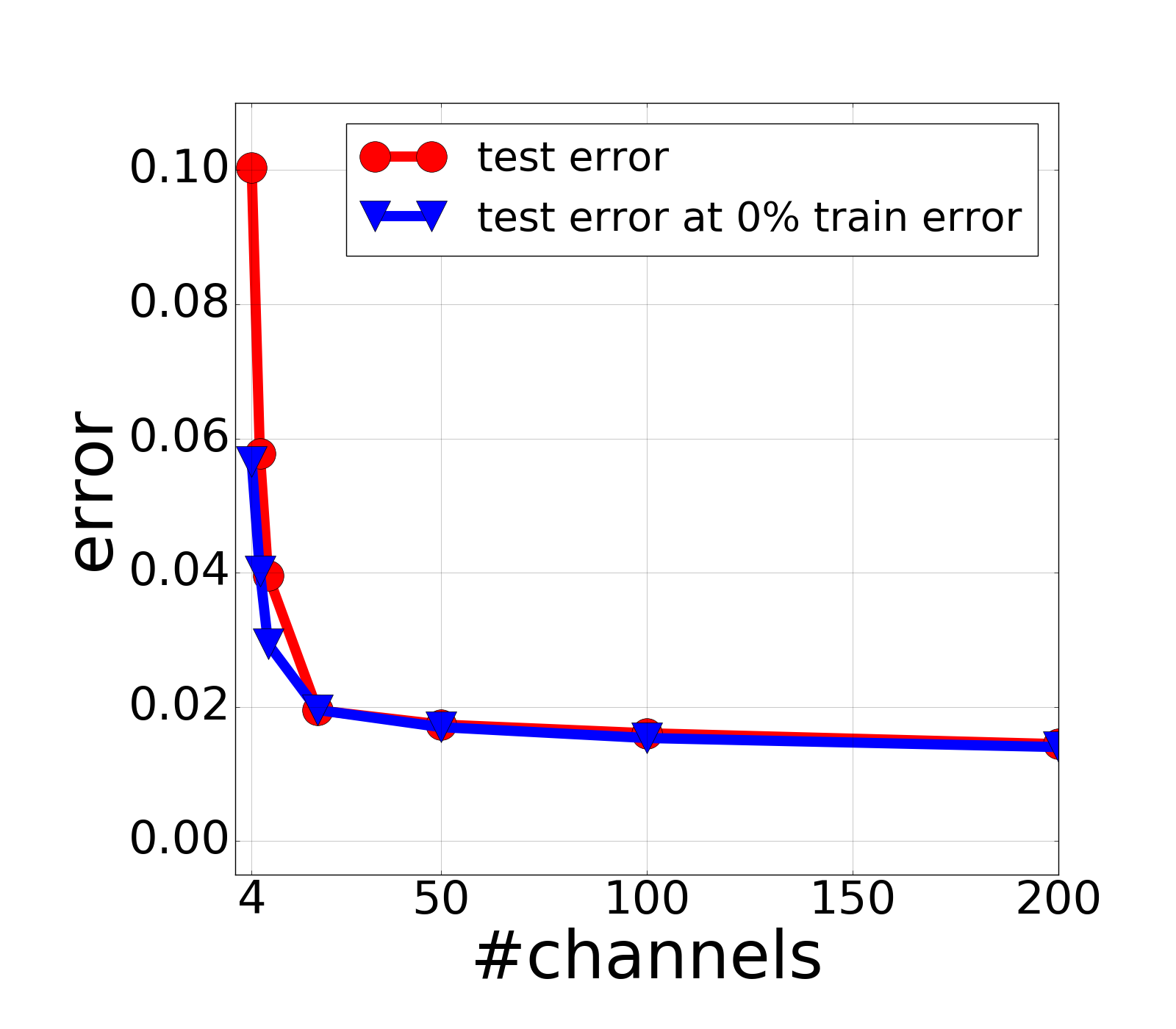}
	\caption{\small{overparameterization improves generalization in the XORD problem.  The network in \eqref{eq:xord_network} is trained on data from the XORD problem (see \secref{sec:xord_problem_formulation}). The figure shows the test error obtained
	for different number of channels $k$. The blue curve shows test error when restricting to cases where training error was zero. It can be seen that increasing the number of channels improves the generalization performance. Experimental details are provided in supplementary material.}}.
	\label{fig:xor_over_param}
\end{figure}

\ignore{
	\begin{figure}	
		\begin{subfigure}{.33\textwidth}
			\centering
			\includegraphics[width=1.0\linewidth]{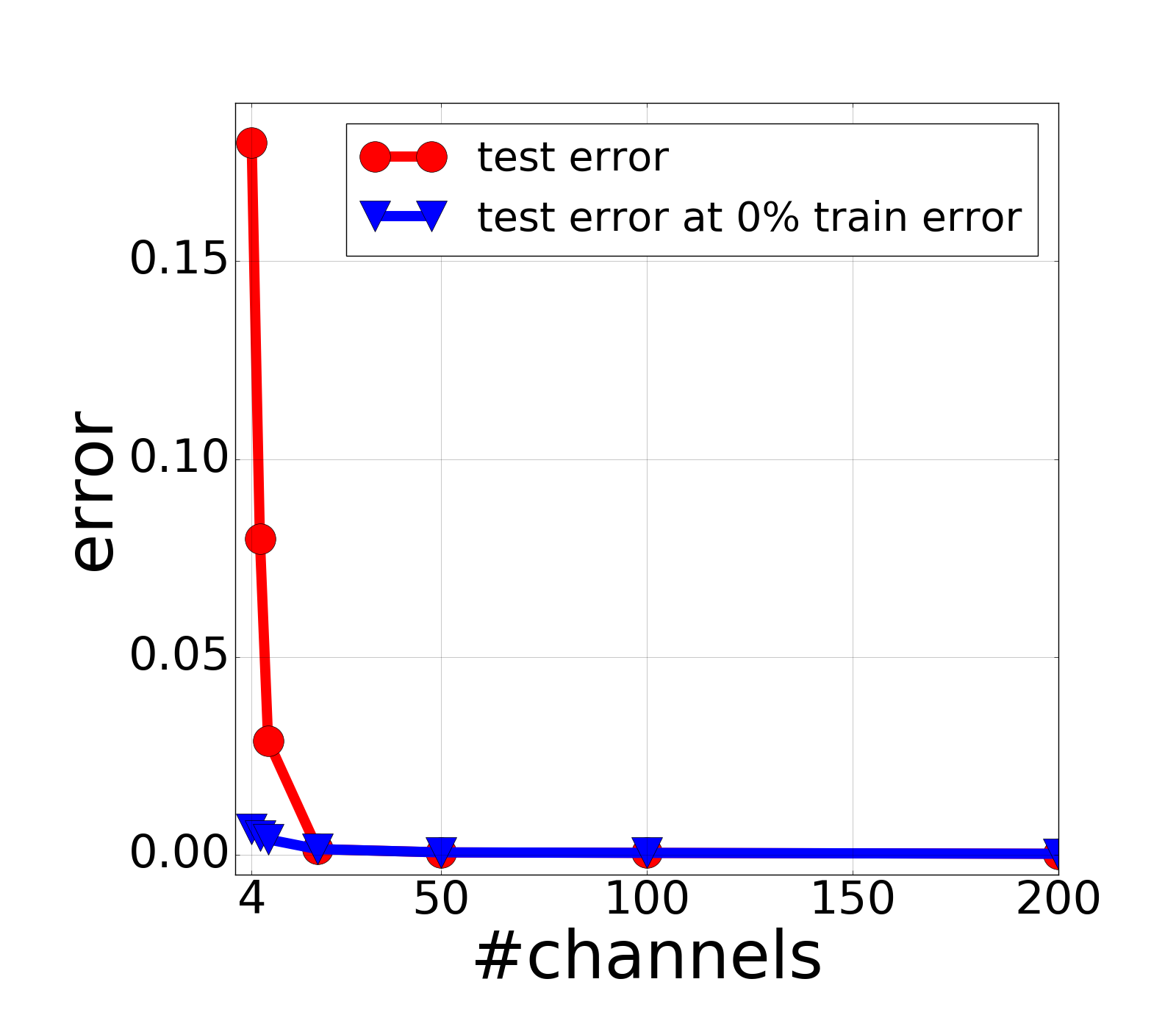}
			\caption{}
			\label{fig:sfig1}
		\end{subfigure}%
		\begin{subfigure}{.33\textwidth}
			\centering
			\includegraphics[width=1.0\linewidth]{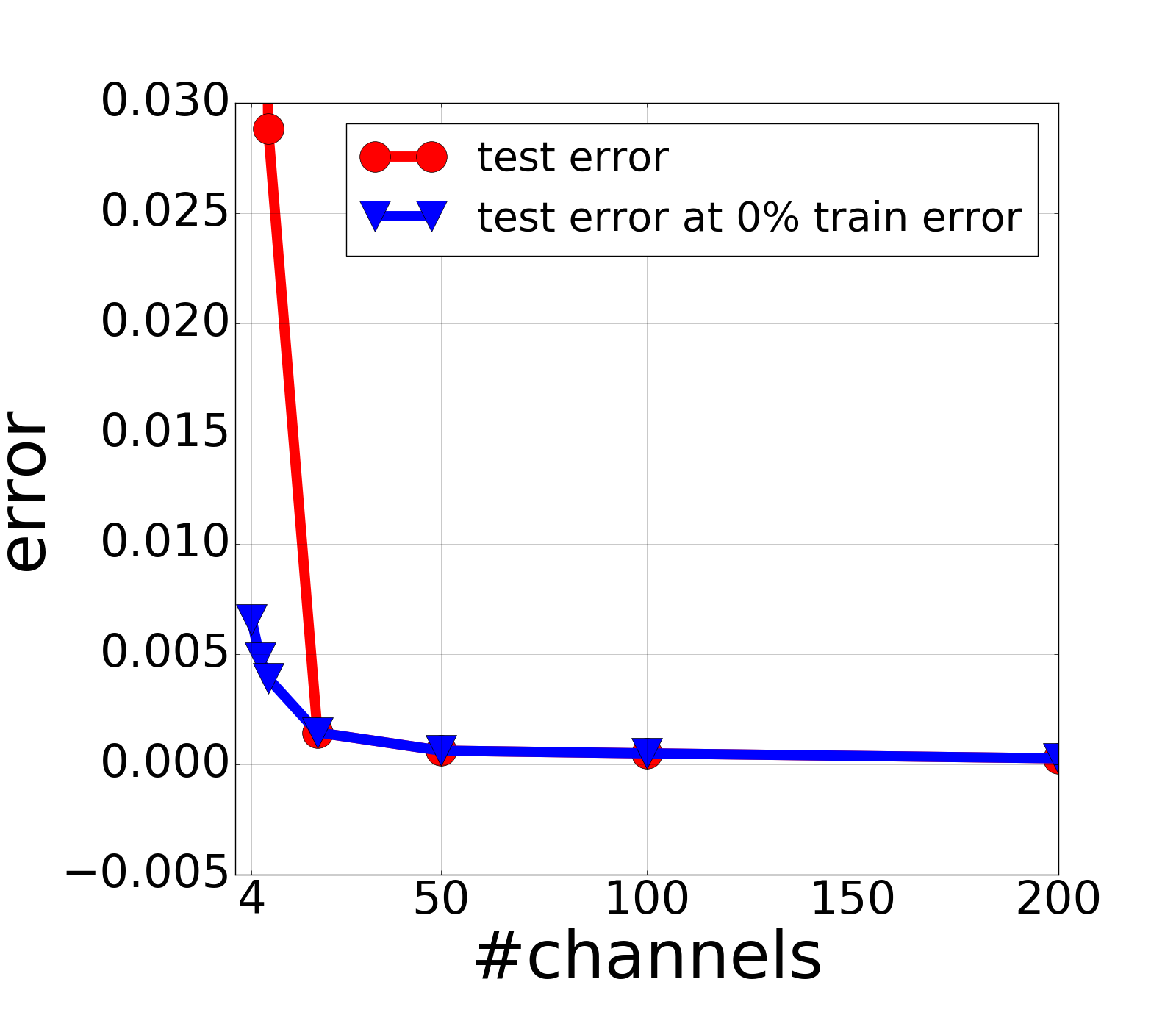}
			\caption{}
			\label{fig:sfig2}
		\end{subfigure}
		\begin{subfigure}{.33\textwidth}
			\centering
			\includegraphics[width=1.0\linewidth]{xor_overparam_conf5.png}
			\caption{}
			\label{fig:sfig3}
		\end{subfigure}
		\caption{\agtodo{I suggest keeping just one figure here. This is the beginning of the paper so no need to flood with too many details.} overparameterization improves generalization in the XOR detection problem. We show this effect for two different input distributions. Figures (a) and (b) are the same in different scales and show this effect for one input distribution. Figure (c) show this for another distribution. We plot both average test errors and average test error of $\%0$ training error solutions. We see that increasing the number of channels improves the generalization performance of solutions with 0\% train error. This suggests that the improvement in generalization performance is not only due to the success in optimization, but also to some form of inductive bias that is introduced with overparameterization. See \secref{sec:exp_setups} for experimental setup. }
		\label{fig:xor_over_param}
	\end{figure}
}



Motivated by these empirical observations, we present a theoretical analysis of optimization and generalization in the XORD problem. Under certain distributional assumptions, we will show that overparameterized networks enjoy a combination of better exploration of features at initialization and clustering of weights, leading to better generalization for overparameterized networks. 


\ignore{
\textbf{Contribution: } In this work we provide the first analysis which proves that larger models generalize better than smaller ones.
Concretely, we analyze the optimization and generalization of gradient descent in the XORD problem. We show that overparameterized networks enjoy a combination of better exploration of features at initialization and clustering of weights at convergence. We then prove that this implies that overparameterized networks are biased towards global minima with better generalization performance than global minima of smaller networks.
}
Importantly, we show empirically that our insights from the XORD problem transfer to other settings. In particular, we see a similar phenomenon when learning on the MNIST data, where we verify that weights are clustered at convergence and better exploration of weights for large networks.



Finally, another contribution of our work is the first proof of convergence of gradient descent in the classic XOR problem with inputs in $\{\pm1\}^2$. The proof is simple and conveys the key insights of the analysis of the  general XORD problem. See Section \ref{sec:xor} for further details.

\section{Related Work}
\label{sec:related_work}

In recent years there have been many works on theoretical aspects of deep learning. We will refer to those that are most relevant to this work. First, we note that we are not aware of any work that shows that generalization performance provably improves with over-parameterization. This distinguishes our work from all previous works. 

Several works study convolutional networks with ReLU activations and their properties \citep{du2017convolutional,du2017gradient,brutzkus2017globally}. All of these works consider convolutional networks with a single channel. Recently, there have been numerous works that provide guarantees for gradient-based methods in general settings \citep{daniely2017sgd, li2018learning,du2018gradient,du2018gradientdeep, allen2018learning}. However, their analysis holds for over-parameterized networks with an extremely large number of neurons that are not used in practice (e.g., the number of neurons is a very large polynomial of certain problem parameters). Furthermore, we consider a 3-layer convolutional network with max-pooling which is not studied in these works.

 \cite{soltanolkotabi2018theoretical}, \cite{du2018power} and \cite{li2017algorithmic} study the role of over-parameterization in the case of quadratic activation functions. \cite{brutzkus2018sgd} provide generalization guarantees for over-parameterized networks with Leaky ReLU activations on linearly separable data. 
\cite{neyshabur2018towards} prove generalization bounds for neural networks. However, these bounds are empirically vacuous for over-parameterized networks and they do not prove that networks found by optimization algorithms give low generalization bounds.

\section{Warm up: the XOR Problem}
\label{sec:xor}

We begin by studying the simplest form of our model: the classic XOR problem in two dimensions.\footnote{XOR  is a specific case of XORD in \secref{sec:xord_problem_formulation} where $d=1$.}
We will show that this problem illustrates the key phenomena that allow overparameterized networks to perform better than smaller ones. Namely, exploration at initialization and clustering during training. For the XOR problem, this will imply that overparameterized networks have better \textit{optimization} performance.  In later sections, we will show that the same phenomena occur for higher dimensions in the XORD problem and imply better \textit{generalization} of global minima for overparameterized convolutional networks.



\subsection{Problem Formulation}
\label{sec:xor_problem_formulation}

In the XOR problem, we are given a training set $S = \left\{(\xx_i,y_i)\right\}_{i=1}^4 \subseteq \{\pm1\}^2 \times \{\pm1\}^2$ consisting of points  $\xx_1 = (1,1)$, $\xx_2 = (-1,1)$, $\xx_3 = (-1,-1)$, $\xx_4 = (1,-1)$ with labels $y_1 = 1$, $y_2 = -1$, $y_3 = 1$ and $y_4 = -1$, respectively. Our goal is to learn the XOR function $f^*:\{\pm 1\}^2 \rightarrow \{\pm 1\}$, such that $f^*(\xx_i) = y_i$ for $1 \le i \le 4$, with a neural network and gradient descent.

\paragraph{\underline{Neural Architecture:}} For this task we consider the following two-layer fully connected network.

\begin{equation}
\label{eq:xor_network}
\net(\xx)=\sum_{i=1}^{k}\left[\sigma\left(\wvec{i}\cdot \xx \right) - \sigma\left(\uvec{i}\cdot \xx \right)\right]
\end{equation}
where $W \in \reals^{2k \times 2}$ is the weight matrix whose rows are the $\wvec{i}$ vectors followed by the $\uvec{i}$ vectors, and $\sigma(x)=\max\{0,x\}$ is the ReLU activation applied element-wise. We note that $f^*$ can be implemented with this network for $k=2$ and this is the minimal $k$ for which this is possible. Thus we refer to $k > 2$ as the overparameterized case.

\paragraph{\underline{Training Algorithm:}} The parameters of the network $\net(\xx)$ are learned using gradient descent on the hinge loss objective.
We use a constant learning rate $\eta \le \frac{\ceta}{k}$, where $\ceta < \frac{1}{2}$. The parameters $\net$ are initialized as IID Gaussians with zero mean and standard deviation $\sigma_g \le \frac{\ceta}{16k^{3/2}}$. We consider the hinge-loss objective:
 $$\ell(W)=\sum_{(\xx,y)\in S}{\max\{1-y\net(\xx),0\}}$$
where optimization is only over the first layer of the network. We note that for $k \ge 2$ any global minimum $W$ of $\ell$ satisfies $\ell(W) = 0$ and $\sign(\net(\xx_i)) = f^*(\xx_i)$ for $1 \le i \le 4$.

\paragraph{\underline{Notations:}} We will need the following notations. Let $W_t$ be the weight matrix at iteration $t$ of gradient descent. For $1 \leq i \leq k$, denote by $\wvec{i}_t \in \reals^2$ the $i^{th}$ weight vector at iteration $t$. Similarly  we define $\uvec{i}_t \in \reals^2$ to be the $k + i$ weight vector at iteration $t$. 
For each point $\xx_i \in S$ define the following sets of neurons:
\begin{eqnarray}
W_t^+(i) &=& \left\{j \mid \wvec{j}_t\cdot \xx_i > 0\right\} \nonumber \\
U_t^+(i) &=& \left\{j \mid \uvec{j}_t\cdot \xx_i > 0\right\} \nonumber
\end{eqnarray}
and for each iteration $t$, let $a_i(t)$ be the number of iterations $0 \le t' \le t$ such that $y_i\nett{t'}(\xx_i) < 1$.

\begin{figure*}[t]
	\begin{subfigure}{.24\textwidth}
		\centering
		\includegraphics[width=1.0\linewidth]{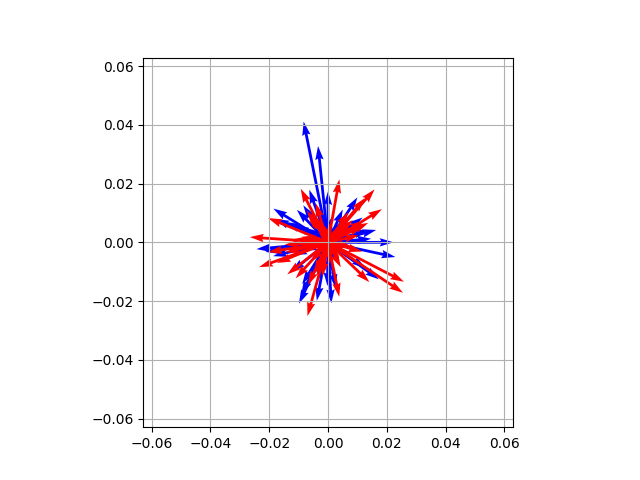}
		\caption{}
		\label{fig:exp_xor1}
	\end{subfigure}%
	\begin{subfigure}{.24\textwidth}
		\centering
		\includegraphics[width=1.0\linewidth]{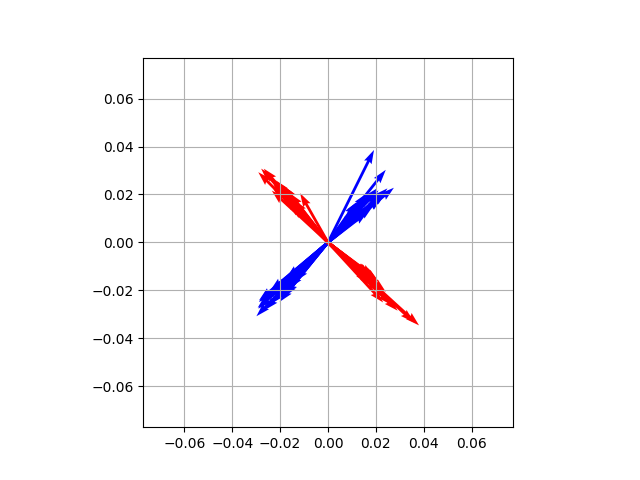}
		\caption{}
		\label{fig:exp_xor2}
	\end{subfigure}%
	\begin{subfigure}{.24\textwidth}
		\centering
		\includegraphics[width=1.0\linewidth]{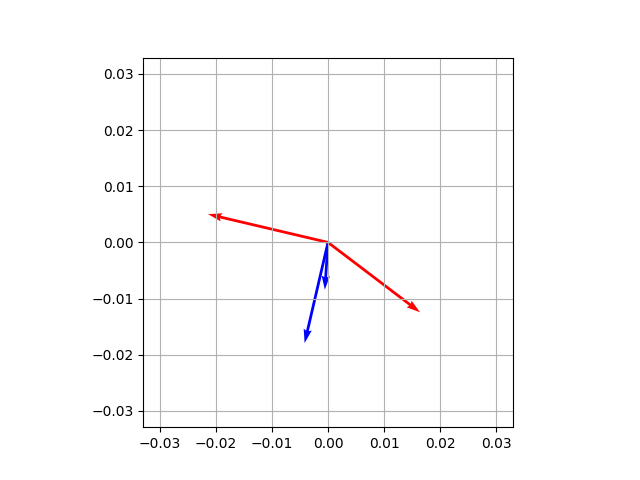}
		\caption{}
		\label{fig:exp_xor3}
	\end{subfigure}
	\begin{subfigure}{.24\textwidth}
		\centering
		\includegraphics[width=1.0\linewidth]{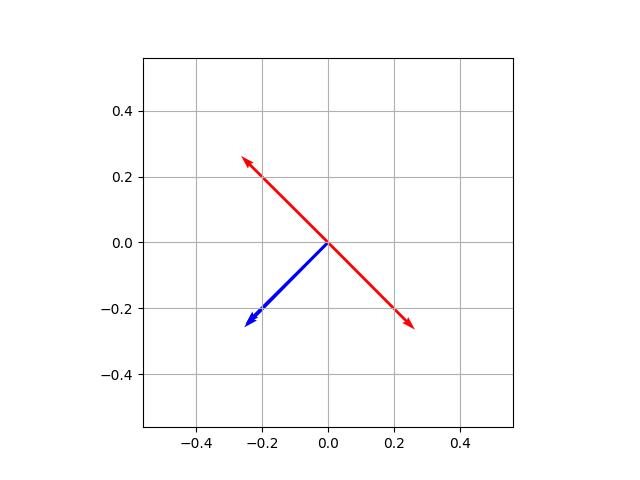}
		\caption{}
		\label{fig:exp_xor4}
	\end{subfigure}
	\caption{\small{Overparameterization and optimization in the XOR problem. The vectors in blue are the vectors $\wvec{i}_t$ and in red are the vectors $\uvec{i}_t$. (a) Exploration at initialization (t=0) for $k=50$ (Lemma \ref{lem:exploration_init}) (b) Clustering and convergence to global minimum for $k=50$ (Lemma \ref{lem:clustering_xor} and Theorem \ref{thm:xor_overparam}) (c) Non-sufficient exploration at initialization (t=0) for $k=2$ (Theorem \ref{thm:xor_local_min}). (d) Convergence to local minimum (Theorem \ref{thm:xor_local_min}).}}
	\label{fig:exp_xor}
\end{figure*}

\subsection{Over-parameterized Networks Optimize Well}
\label{sec:xor_overparam}

In this section we assume that $k > 16$. The following lemma shows that with high probability, for every training point, overparameterized networks are initialized at directions that have positive correlation with the training point. The proof uses a standard measure concentration argument. We refer to this as ``exploration'' as it lets the
optimization procedure explore these parts of weight space.
\begin{lem}
	\label{lem:exploration_init}
	\textbf{Exploration at Initialization.} With probability at least $1-8e^{-8}$, for all $1 \le i \le 4$ $$\frac{k}{2} -2\sqrt{k}\le \abs{W_0^+(i)},\abs{U_0^+(i)} \le \frac{k}{2} +2\sqrt{k}$$
\end{lem}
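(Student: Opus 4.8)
The plan is to reduce the statement to a binomial concentration bound. Fix $1 \le i \le 4$ and $1 \le j \le k$. Since the entries of $\wvec{j}_0$ are i.i.d.\ centered Gaussians, the scalar $\wvec{j}_0 \cdot \xx_i$ is a centered Gaussian (of variance $2\sigma_g^2$, as $\|\xx_i\|^2 = 2$), hence symmetric about $0$ and nonzero almost surely; therefore $\Pr[j \in W_0^+(i)] = \Pr[\wvec{j}_0 \cdot \xx_i > 0] = \tfrac12$. As the vectors $\wvec{1}_0,\dots,\wvec{k}_0$ are mutually independent, the events $\{j \in W_0^+(i)\}_{j=1}^k$ are independent each with probability $\tfrac12$, so $|W_0^+(i)|$ — being their count — is distributed as $\mathrm{Binomial}(k,\tfrac12)$, and identically for $|U_0^+(i)|$. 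Applying Hoeffding's inequality, for $X \sim \mathrm{Binomial}(k,\tfrac12)$ one has $\Pr\big[\,\big|X - \tfrac k2\big| \ge t\,\big] \le 2\exp(-2t^2/k)$, and taking $t = 2\sqrt k$ gives $2e^{-8}$. Thus each of the deviation events $\big\{\,\big||W_0^+(i)| - \tfrac k2\big| \ge 2\sqrt k\,\big\}$ and $\big\{\,\big||U_0^+(i)| - \tfrac k2\big| \ge 2\sqrt k\,\big\}$ has probability at most $2e^{-8}$.

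The only point requiring care is the counting in the final union bound: there are eight such events (four points, two families $W$ and $U$), which would naively yield $16e^{-8}$. However, the XOR points satisfy $\xx_3 = -\xx_1$ and $\xx_4 = -\xx_2$, so $\wvec{j}_0 \cdot \xx_3 = -\wvec{j}_0 \cdot \xx_1$, and hence (up to the probability-zero event of a tie) $j \in W_0^+(3) \iff j \notin W_0^+(1)$, i.e.\ $|W_0^+(3)| = k - |W_0^+(1)|$; likewise $|W_0^+(4)| = k - |W_0^+(2)|$, and the analogous identities hold for $U$. Consequently the deviation event for $i=3$ is literally the same event as for $i=1$, and similarly $i=4$ with $i=2$, so there are only four distinct bad events — say $i \in \{1,2\}$ for $W$ and $i \in \{1,2\}$ for $U$. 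A union bound over these gives total failure probability at most $4 \cdot 2e^{-8} = 8e^{-8}$, and on the complement all eight inequalities in the lemma hold simultaneously.

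I do not expect any genuine obstacle here: the argument is a routine measure-concentration computation, and the only content-bearing observation is the antipodal reduction that keeps the constant at $8$ rather than $16$. (One may also note that the assumption $k > 16$ of this section is what makes the asserted interval $[\tfrac k2 - 2\sqrt k,\ \tfrac k2 + 2\sqrt k]$ nontrivial, with a positive lower endpoint, so that the lemma is saying something about both families of neurons being nonempty for every training point.)
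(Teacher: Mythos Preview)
The proposal is correct and matches the paper's proof essentially line for line: both identify $|W_0^+(i)|$ as $\mathrm{Binomial}(k,\tfrac12)$, apply Hoeffding with $t=2\sqrt{k}$ to get $2e^{-8}$ per event, and then use the antipodal relation $|W_0^+(3)|=k-|W_0^+(1)|$ (and analogues) to reduce from eight to four distinct bad events before taking the union bound. Your write-up is in fact more explicit about the counting than the paper's, which simply invokes ``symmetry and the union bound.''
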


Next, we show an example of the weight dynamics which imply that the weights tend to cluster around a few directions. The proof uses the fact that with high probability the initial weights have small norm and proceeds by induction on $t$ to show the dynamics.
\begin{lem}
	\label{lem:clustering_xor}
	\textbf{Clustering Dynamics.} Let $i \in \{1,3\}$. With probability $\ge 1-\frac{\sqrt{2k}}{\sqrt{\pi} e^{8k}}$, for all $t \ge 0$ and $j \in W_0^+(i)$ there exists a vector $\vv_t$ such that $v_t \cdot \xx_i > 0$, $\abs{v_t \cdot \xx_2} < 2\eta$ and $\wvec{j}_t = a_i(t)\eta\xx_i + \vv_t$.
\end{lem}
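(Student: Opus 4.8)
The plan is to track a single weight vector $\wvec{j}_t$ with $j\in W_0^+(i)$ along the gradient-descent trajectory and express it in the basis supplied by the data. The geometric fact that makes this work is that $\xx_3=-\xx_1$, $\xx_4=-\xx_2$, and $\xx_1\cdot\xx_2=0$, so $\norm{\xx_m}^2=2$ for every $m$ and, for $i\in\{1,3\}$, the pair $(\xx_i,\xx_2)$ is an orthogonal basis of $\reals^2$ with $\xx_2$ orthogonal to both $\xx_1$ and $\xx_3$. Writing $p_t:=\wvec{j}_t\cdot\xx_i$ and $q_t:=\wvec{j}_t\cdot\xx_2$, we have $\wvec{j}_t=\tfrac12 p_t\,\xx_i+\tfrac12 q_t\,\xx_2$. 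The gradient-descent update of this neuron is $\wvec{j}_{t+1}=\wvec{j}_t+\eta\sum_{m=1}^{4}y_m\,\mathbbm{1}[y_m\nett{t}(\xx_m)<1]\,\mathbbm{1}[\wvec{j}_t\cdot\xx_m>0]\,\xx_m$; dotting it with $\xx_i$ and with $\xx_2$ (and using $\wvec{j}_t\cdot\xx_{i'}=-p_t$ for the antipodal index $i'$ with $\xx_{i'}=-\xx_i$, and $\wvec{j}_t\cdot\xx_4=-q_t$), the two coordinates decouple into
\[
p_{t+1}=p_t+2\eta\,\alpha_{t,i}\,\mathbbm{1}[p_t>0]-2\eta\,\alpha_{t,i'}\,\mathbbm{1}[p_t<0],
\qquad
q_{t+1}=q_t-2\eta\,\alpha_{t,2}\,\mathbbm{1}[q_t>0]+2\eta\,\alpha_{t,4}\,\mathbbm{1}[q_t<0],
\]
where $\alpha_{t,m}:=\mathbbm{1}[y_m\nett{t}(\xx_m)<1]\in\{0,1\}$ is the global ``margin violated on $\xx_m$'' indicator.

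Next I would set up the high-probability event. Since $\wvec{j}_0\cdot\xx_2\sim\mathcal{N}(0,2\sigma_g^2)$, the Gaussian tail bound $\bar\Phi(x)\le\phi(x)/x$ gives $\prob[\,\abs{\wvec{j}_0\cdot\xx_2}\ge 2\eta\,]\le\frac{\sigma_g}{\sqrt\pi\,\eta}e^{-\eta^2/\sigma_g^2}$, and a union bound over the $k$ neurons, using that $\sigma_g$ is small enough that $\eta/\sigma_g\ge 2\sqrt{2k}$ (which holds under the hypotheses on $\sigma_g$ and $\eta$), yields probability at most $\frac{\sqrt{2k}}{\sqrt\pi e^{8k}}$ that $\abs{\wvec{j}_0\cdot\xx_2}\ge 2\eta$ for some $j$. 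Conditioning on the complement, I would prove by induction on $t$ that $p_t>0$ and $\abs{q_t}<2\eta$ for all $t\ge0$: at $t=0$, $p_0=\wvec{j}_0\cdot\xx_i>0$ since $j\in W_0^+(i)$, and $\abs{q_0}<2\eta$ on the event; for the step, $p_t>0$ kills the second term so $p_{t+1}=p_t+2\eta\alpha_{t,i}\ge p_t>0$, while $\abs{q_t}<2\eta$ together with a two-case split on $\sign(q_t)$ shows $q_{t+1}\in\{q_t,\,q_t\pm2\eta\}$ again lies in $(-2\eta,2\eta)$. Finally, unrolling $p_{t+1}=p_t+2\eta\alpha_{t,i}$ gives $p_t=\wvec{j}_0\cdot\xx_i+2\eta\,a_i(t)$, where $a_i(t)$ is exactly the number of iterations at which the margin on $\xx_i$ has been violated; since $\norm{\xx_i}^2=2$ this rearranges to $\wvec{j}_t=a_i(t)\,\eta\,\xx_i+\vv_t$ with $\vv_t:=\tfrac12(\wvec{j}_0\cdot\xx_i)\,\xx_i+\tfrac12 q_t\,\xx_2$, and then $\vv_t\cdot\xx_i=\wvec{j}_0\cdot\xx_i>0$ and $\vv_t\cdot\xx_2=q_t$ with $\abs{q_t}<2\eta$, which is the claim. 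The $i=3$ case is identical to $i=1$ after swapping $\xx_1\leftrightarrow\xx_3$, since $\xx_2$ is orthogonal to both.

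The main obstacle is the bookkeeping that the ReLU activation pattern of neuron $j$ on the directions it interacts with is stable enough to make the dynamics decouple --- concretely, that $\wvec{j}_t\cdot\xx_i$ stays strictly positive for every $t$, hence $\wvec{j}_t\cdot\xx_{i'}=-\wvec{j}_t\cdot\xx_i<0$ so that the antipodal example is permanently gated off. This is precisely the $p_t>0$ half of the induction above, and it is what collapses the $p$-recursion to the clean increment $p_{t+1}-p_t=2\eta\,\alpha_{t,i}$ that produces the coefficient $a_i(t)$. The gate on $\xx_2$ and $\xx_4$ does flip whenever $q_t$ changes sign, but this is already encoded in the $q$-recursion and the contraction ``$\abs{q_t}<2\eta\Rightarrow\abs{q_{t+1}}<2\eta$'' holds no matter which of the two is open; everything else (the inner-product bookkeeping and the Gaussian tail estimate) is routine.
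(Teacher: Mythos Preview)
Your proof is correct and follows essentially the same route as the paper: condition on the small-initialization event via a Gaussian tail bound plus union bound, then run an induction on $t$ that keeps the $\xx_i$-coordinate positive and the $\xx_2$-coordinate trapped in a width-$2\eta$ band. Your explicit decoupling into the two scalar recursions $(p_t,q_t)$ is a slightly cleaner packaging of exactly the case analysis the paper sketches; the paper conditions on the marginally stronger event that \emph{all} inner products $\wvec{j}_0\cdot\xx_m$ (and $\uvec{j}_0\cdot\xx_m$) are $\le\tfrac{\sqrt{2}\eta}{4}$, whereas you only need $\abs{\wvec{j}_0\cdot\xx_2}<2\eta$, but this does not change the argument or the stated probability.
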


The sequence $\{a_i(t)\}_{t \ge 0}$ is non-decreasing and it can be shown that $a_i(0) = 1$ with high probablity. Therefore, the above lemma shows that for all $j \in W_0^+(i)$, $\wvec{j}_t$ tends to cluster around $\xx_i$ as $t$ increases. Since with probability $1$, $W_0^+(1) \cup W_0^+(3) = [k]$, the above lemma characterizes the dynamics of all filters $\wvec{j}_t$. In the supplementary we show a similar result for the filters $\uvec{j}_t$.

By applying both of the above lemmas, it can be shown that for $k > 16$ gradient descent converges to a global minimum with high probability and that the weights are clustered at convergence.

\begin{thm}
	\textbf{Convergence and Clustering.}\label{thm:xor_overparam}
	With probability $\ge 1 - \frac{\sqrt{2k}}{\sqrt{\pi} e^{8k}} - 8e^{-8}$ 
after at most $T \le \frac{16\sqrt{k}}{\sqrt{k}-2}$ iterations, gradient descent converges to a global minimum $W_T$. Furthermore, for $i \in \{1,3\}$ and all $j \in W_0^+(i)$, the angle between  $\wvec{j}_T$ and $\xx_i$ is at most $\arccos\left(\frac{1-2\ceta}{1+\ceta}\right)$. A similar result holds for $\uvec{j}_T$.
\end{thm}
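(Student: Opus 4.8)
The plan is to condition on the intersection of the high-probability events of Lemmas~\ref{lem:exploration_init} and~\ref{lem:clustering_xor}, together with the analogous clustering statement for the filters $\uvec{j}_t$ and the auxiliary facts (established in the proofs of those lemmas) that the initial weights have small norm and that $a_i(0)=1$ for all $i$; a union bound over these events produces the failure probability in the statement. On this event the proof splits into two parts: showing that $\ell(W_t)=0$ — hence $W_t$ is a global minimum, since $k\ge 2$ — for some $t\le\frac{16\sqrt k}{\sqrt k-2}$, and then reading off the angular bound at that $t$.

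\emph{Convergence.} First I would record that $W_0^+(1)$ and $W_0^+(3)$ partition $[k]$ almost surely (because $\xx_3=-\xx_1$), and that the decomposition $\wvec{j}_t=a_i(t)\eta\xx_i+\vv_t$ with $\vv_t\cdot\xx_i>0$ of Lemma~\ref{lem:clustering_xor} forces $\wvec{j}_t\cdot\xx_i=2\eta a_i(t)+\vv_t\cdot\xx_i>0$, so $W_t^+(i)=W_0^+(i)$ for every $t$; the symmetric statements hold for $U_0^+(2),U_0^+(4)$. Using these I would write $y_i\net_t(\xx_i)$ explicitly in terms of $a_i(t)$: for instance $\net_t(\xx_1)=\sum_{j\in W_0^+(1)}(2\eta a_1(t)+\wvec{j}_0\cdot\xx_1)-\sum_{j}\sigma(\uvec{j}_t\cdot\xx_1)$, where each term of the last sum is at most $2\eta$ in absolute value because the component of $\uvec{j}_t$ orthogonal to the pattern it clusters around is exactly the $\xx_1$-direction, which the clustering lemma keeps below $2\eta$. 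Combined with $|W_0^+(1)|\ge\frac k2-2\sqrt k$ from Lemma~\ref{lem:exploration_init}, this yields $y_i\net_t(\xx_i)\ge 2\eta a_i(t)\big(\tfrac k2-2\sqrt k\big)-(\text{a cross term I would bound by }2\eta k)$, which exceeds $1$ once $a_i(t)$ crosses a threshold $t_0=O\!\big(\tfrac{1}{\eta(\frac k2-2\sqrt k)}\big)$. Since $a_i$ is nondecreasing it is then frozen at a value $\le t_0$, and since every non-converged iteration has at least one active point, $T\le\sum_{i=1}^{4}a_i(T)\le 4t_0$; a short computation using $\eta\le\tfrac{\ceta}{k}$ and $\ceta<\tfrac12$ turns this into the bound $\tfrac{16\sqrt k}{\sqrt k-2}$.

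\emph{Clustering at $T$.} Plugging $t=T$ into Lemma~\ref{lem:clustering_xor}, for $j\in W_0^+(1)$ we have $\wvec{j}_T=a_1(T)\eta\xx_1+\vv_T$ with $\vv_T\cdot\xx_1=\wvec{j}_0\cdot\xx_1\ge 0$ (negligible) and $|\vv_T\cdot\xx_2|<2\eta$. Writing $\wvec{j}_T$ in the orthonormal basis $\{\xx_1/\sqrt2,\xx_2/\sqrt2\}$, its coordinate along $\xx_1$ is at least $\sqrt2\,\eta\,a_1(T)$ while its coordinate along $\xx_2$ has magnitude below $\sqrt2\,\eta$, so $\tan\angle(\wvec{j}_T,\xx_1)<1/a_1(T)$. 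It therefore suffices that $a_1(T)$ be large enough that $\arctan(1/a_1(T))\le\arccos\frac{1-2\ceta}{1+\ceta}$, and the lower bound on $a_1(T)$ coming from the convergence part — it equals the first iteration at which the margin of $\xx_1$ reaches $1$, hence is at least the threshold $t_0$ above — delivers exactly this after simplifying the trigonometric inequality with $\eta\le\tfrac{\ceta}{k}$. The claim for $\uvec{j}_T$ follows by the same argument with $\xx_2,\xx_4$ replacing $\xx_1,\xx_3$.

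\emph{Main obstacle.} The delicate point is the bookkeeping in the convergence part: getting a clean lower bound on $y_i\net_t(\xx_i)$ requires tracking the activation and loss indicators carefully and bounding, uniformly over $t$, the contribution on $\xx_i$ of the filters that cluster around the other patterns; and then one must simultaneously make the threshold $t_0$ small enough for the iteration bound and large enough for the angular bound, which is precisely where the hypotheses $\eta\le\tfrac{\ceta}{k}$, $\ceta<\tfrac12$ and $\sigma_g\le\tfrac{\ceta}{16k^{3/2}}$ get spent and where checking that the inequalities go through is the real work. A minor additional step is justifying that $a_i(0)=1$ and that $W_0^+(i),U_0^+(i)$ are invariant under the dynamics, so that the clustering decompositions apply at every iteration.
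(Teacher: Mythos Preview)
Your plan matches the paper's proof almost exactly: condition on the events of Lemmas~\ref{lem:exploration_init} and~\ref{lem:clustering_xor} (and their $\uvec{j}$ analogues), use the decomposition $\wvec{j}_t=a_i(t)\eta\xx_i+\vv_t$ to show $W_t^+(i)=W_0^+(i)$, bound the cross contribution $\sum_j\sigma(\uvec{j}_t\cdot\xx_i)$ by $O(k\eta)$, deduce a per-point threshold $t_0$ on $a_i$, and conclude $T\le\sum_i a_i(T)\le 4t_0$. The paper packages these as a separate proposition and a lemma on the sums $S_t^\pm,R_t^\pm$, but the content is the same.

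There is one slip in the clustering step. You write that ``the lower bound on $a_1(T)$ coming from the convergence part \dots\ is at least the threshold $t_0$ above''. But the convergence argument gives only an \emph{upper} bound: your inequality $y_1\nett{t}(\xx_1)\ge 2\eta a_1(t)(\tfrac k2-2\sqrt k)-2\eta k$ shows the margin is met \emph{once} $a_1\ge t_0$, so $a_1$ stops growing at that point and $a_1(T)\le t_0$. The cross term could well be smaller than your worst case, in which case the margin is met with $a_1(T)<t_0$. To get the lower bound you need the reverse estimate: at the global minimum $\nett{T}(\xx_1)\ge 1$, and since the $\uu$-contribution is nonnegative and each $\wvec{j}_T\cdot\xx_1\le 2\eta(a_1(T)+1)$, one has $1\le 2\eta(a_1(T)+1)\,|W_0^+(1)|\le 2\eta k(a_1(T)+1)$, hence $a_1(T)\ge\tfrac{1}{2\ceta}-1$. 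Feeding this into your $\tan$ estimate gives exactly $\cos\angle\ge\frac{1-2\ceta}{1+\ceta}$; this is precisely how the paper closes the argument (computing the cosine directly rather than the tangent). So the fix is immediate, but as stated your angular bound rests on an inequality pointing the wrong way.
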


\subsection{Small Network Fail to Optimize}
\label{sec:xor_small}
In contrast to the case of large $k$, we show that for $k=2$, the initialization does not explore all directions, leading to convergence
to a suboptimal solution.
\begin{thm}
	\label{thm:xor_local_min}
	\textbf{Insufficient Exploration at Initialization.} With probability at least $0.75$, there exists $i \in \{1,3\}$ such that  $W_0^+(i) = \emptyset$ or $i \in \{2,4\}$ such that $U_0^+(i) = \emptyset$. As a result, with probability $\ge 0.75$, gradient descent converges to a model which errs on at least one input pattern.
\end{thm}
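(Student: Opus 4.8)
The plan is to separate the statement into (i) a probability estimate on the random initialization and (ii) a deterministic invariant on the gradient‑descent trajectory showing that one training point can never be fit.

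For (i), I would write the event as the union of the ``$\ww$‑event'' $A:=\{\exists\, i\in\{1,3\}:W_0^+(i)=\emptyset\}$ and the ``$\uu$‑event'' $B:=\{\exists\, i\in\{2,4\}:U_0^+(i)=\emptyset\}$. Since the rows $\wvec{j}_0$ and $\uvec{j}_0$ are initialized independently, $A$ and $B$ are independent. Because $\xx_3=-\xx_1$, for each $j\in\{1,2\}$ exactly one of $\wvec{j}_0\cdot\xx_1>0$ or $\wvec{j}_0\cdot\xx_1<0$ occurs (the boundary has probability $0$), each with probability $\tfrac12$ and independently over $j$; then $W_0^+(1)=\emptyset$ iff both inner products are negative and $W_0^+(3)=\emptyset$ iff both are positive --- disjoint events of probability $\tfrac14$ each --- so $\prob(A)=\tfrac12$, and likewise $\prob(B)=\tfrac12$ using $\xx_4=-\xx_2$. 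Hence $\prob(A\cup B)=1-\tfrac12\cdot\tfrac12=\tfrac34$.

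For (ii), condition on the event; by the symmetry among the four points and between the two layers it suffices to treat $W_0^+(1)=\emptyset$, i.e.\ $\wvec{j}_0\cdot\xx_1<0$ for $j=1,2$. The gradient‑descent update of a first‑layer filter on the hinge loss is
\[
\wvec{j}_{t+1}=\wvec{j}_t+\eta\!\!\sum_{(\xx,y)\in S:\ y\nett{t}(\xx)<1}\!\! y\,\mathbbm{1}[\wvec{j}_t\cdot\xx>0]\,\xx .
\]
Using $\xx_1\cdot\xx_1=2$, $\xx_1\cdot\xx_3=-2$, $\xx_1\cdot\xx_2=\xx_1\cdot\xx_4=0$, only $\xx_1$ (contributing $+2\eta$, but only while $\wvec{j}_t\cdot\xx_1>0$) and $\xx_3$ (contributing $-2\eta$, only while $\wvec{j}_t\cdot\xx_3>0$) can change $\wvec{j}_t\cdot\xx_1$. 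So by induction on $t$: if $\wvec{j}_t\cdot\xx_1<0$ then the $\xx_1$‑term is off and $\wvec{j}_{t+1}\cdot\xx_1\le\wvec{j}_t\cdot\xx_1<0$. Hence $\sigma(\wvec{j}_t\cdot\xx_1)=0$ for all $t$ and $j$, so $\nett{t}(\xx_1)=-\sum_j\sigma(\uvec{j}_t\cdot\xx_1)\le 0<1$: the model never fits $\xx_1$, keeps positive loss, and errs on $\xx_1$ at every step and hence at any limit of gradient descent. The cases $W_0^+(3)=\emptyset$, $U_0^+(2)=\emptyset$, $U_0^+(4)=\emptyset$ go through verbatim after relabelling (for the $\uu$‑cases the $-\sigma(\uvec{j}\cdot\xx)$ term flips the sign of the update but not the monotonicity), giving a persistent error on $\xx_3$, $\xx_2$, $\xx_4$ respectively.

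The counting in (i) is routine; the main obstacle is the invariant in (ii) --- namely checking that \emph{no} data point's gradient term can ever return $\wvec{j}_t\cdot\xx_1$ to the nonnegative side, which relies precisely on $\xx_1\perp\xx_2,\xx_4$, on $\xx_1\cdot\xx_3<0$, and on the fact that the $\xx_1$‑term activates only once $\wvec{j}_t\cdot\xx_1$ is already positive. I would also need to discard the probability‑zero events on which some relevant inner product vanishes along the trajectory (a countable union of Gaussian‑null sets); and if the literal word ``converges'' is required rather than ``errs at every iterate,'' a short extra argument that the iterates (or the relevant projections) stabilize would be needed, though the per‑iterate error statement already suffices for every accumulation point.
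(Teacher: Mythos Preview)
Your proposal is correct and follows essentially the same route as the paper. The probability computation in (i) is identical to the paper's (Lemma~\ref{lem:non_exploration_init}), just spelled out more explicitly: the paper simply notes that the complement event has probability $\tfrac14$ and stops. The invariant in (ii) is also the same: the paper records it as $S_t^+(1)=0$ for all $t$ (via the clustering dynamics lemma), which is exactly your inductive claim that $\wvec{j}_t\cdot\xx_1<0$ persists, yielding $\nett{t}(\xx_1)\le 0$.

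The one place the paper goes further than you do is precisely the point you flag at the end. The paper does prove literal convergence to a stationary point: it argues that for the \emph{nonempty} sets (e.g.\ $W_0^+(3)$, and whichever of $U_0^+(2),U_0^+(4)$ are nonempty) the same clustering dynamics used in the overparameterized case eventually drive those margins past $1$; once that happens, every remaining ``active'' data point (like $\xx_1$) has all its ReLU indicators off in both the $\ww$- and $\uu$-updates, so the gradient vanishes and the iterates freeze at a local minimum. Your per-iterate error statement already covers any accumulation point, so the gap is small, but if you want to match the theorem's wording you would need that extra step rather than leaving it as a remark.
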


\subsection{Experiments}
\label{sec:xor_experiments}
In this section we empirically demonstrate the theoretical results. We implemented the learning setting described in \secref{sec:xor_problem_formulation} and conducted two experiments: one with $k=50$ and one with $k=2$ We note that for $k=2$ the XOR function $f^*$ can be realized by the network in \eqref{eq:xor_network}.   
Figure \ref{fig:exp_xor} shows the results. It can be seen that our theory nicely predicts the behavior of gradient descent. For $k=50$ we see the effect of exploration at initialization and clustering which imply convergence to global minimum. In contrast, the small network does not explore all directions at initialization and therefore converges to a local minimum. This is despite the fact that it has sufficient expressive power to implement $f^*$.

\section{The XORD Problem}
\label{sec:xord_problem_formulation}
In the previous section we analyzed the XOR problem, showing that using a large number of channels allows gradient descent to learn the XOR function. This allowed us to understand the effect of overparameterization on optimization. However, it did not let us study generalization because in the learning setting all four examples were given, so that any model with zero training error also had zero test error.

In order to study the effect of overparameterization on generalization we consider a more general setting, which we refer to as the XOR Detection problem (XORD). As can be seen in \figref{fig:xor_over_param}, in the XORD problem large networks generalize better than smaller ones. This is despite the fact that small networks can reach zero training error. Our goal is to understand this phenomenon from a theoretical persepective.


In this section, we define the XORD problem. We begin with some notations and definitions. We consider a classification problem in the space $ \{\pm1\}^{2d}$, for $d \ge 1$. Given a vector $\xx \in \{\pm1\}^{2d}$, we consider its partition into $d$ sets of two coordinates as follows $\xx = (\xx_1,...,\xx_d)$ where $\xx_i \in  \{\pm1\}^2$. We refer to each such  $\xx_i$ as a \textit{pattern} in $\xx$. 
\paragraph{\underline{Neural Architecture:}} We consider learning with the following three-layer neural net model. The first layer is a convolutional layer with non-overlapping filters and multiple channels, the second layer is max pooling and the third layer is a fully connected layer with $2k$ hidden neurons and weights fixed to values $\pm 1$. Formally, for an input $\xx=(\xx_1,...,\xx_d) \in \reals^{2d}$ where $\xx_i \in \reals^2$, the output of the network is denoted by $\net(\xx)$ and is given by:
\begin{equation}
\label{eq:xord_network}
\sum_{i=1}^{k}\Big[\max_j\left\{\sigma\left(\wvec{i}\cdot \xx_j \right)\right\} - \max_j\left\{\sigma\left(\uvec{i}\cdot \xx_j \right)\right\}\Big] 
\end{equation}
where notation is as in the XOR problem.
\begin{remark}
	\label{rem:expr}
	Because there are only $4$ different patterns, the network is limited in terms of the number of rules it can implement. Specifically, it is easy to show that its VC dimension is at most $15$ (see supplementary material). Despite this limited expressive power, there is a generalization gap between small and large networks in this setting, as can be seen in \figref{fig:xor_over_param}, and in our analysis below.
\end{remark} 
    
\paragraph{\underline{Data Generating Distribution:}} Next we define the classification rule we will focus on. 
Define the four two-dimensional binary patterns $\pp_1=(1,1),  \pp_2=(1,-1), \pp_3=(-1,-1), \pp_4=(-1,1)$.
 Define $P_{pos} = \{\pp_1,\pp_3\}$ to be the set of positive patterns and $P_{neg} = \{\pp_2,\pp_4\}$ to be the set of negative patterns. Define the classification rule:
 \be
 f^*(\xx) = 
 \left\{
\begin{array}{cc}
1 & \exists i\in\{1,\ldots,d\} : \xx_i \in  P_{pos}\\
-1 & \mbox{otherwise}
\end{array}
 \right.
 \ee
 Namely, $f^*$ detects whether a positive pattern  appears in the input. For $d=1$, $f^*$ is the XOR classifier in \secref{sec:xor}.


Let $\md$ be a distribution over $\mx \times \{\pm1\}$ such that for all $(\xx,y) \sim \md$ we have $y = f^*(\xx)$. We say that a point $(\xx,y)$ is positive if $y = 1$ and negative otherwise. Let $\md_+$ be the marginal distribution over $\{\pm1\}^{2d}$ of positive points and $\md_-$ be the marginal distribution of negative points.

For each point $\xx \in \{\pm1\}^{2d}$, define $P_{\xx}$ to be the set of unique two-dimensional patterns that the point $\xx$ contains, namely  $P_{\xx} = \{i \mid \exists j, \xx_j = \pp_i \}$. In the following definition we introduce the notion of \textit{diverse} points, which will play a key role in our analysis.
\begin{definition}[\bf{Diverse Points}]
\label{def:diverse}
We say that a positive point $(\xx,1)$ is diverse if $P_{\xx} = \{1,2,3,4\}$.\footnote{This definition only holds in the case that $d \ge 4$.} We say that a negative point $(\xx,-1)$ is \textit{diverse} if $P_{\xx} = \{2,4\}$. 
\end{definition}

 For $\phi \in \{-,+\}$ define $p_{\phi}$ to be the probability that $\xx$ is diverse with respect to $\md_{\phi}$. For example, if both $D_+$ and $D_-$ are uniform, then by the inclusion-exclusion principle it follows that $p_+ = 1 - \frac{4\cdot3^d - 6\cdot 2^d + 4}{4^d}$ and $p_- = 1 - \frac{1}{2^{d-1}}$.



\paragraph{\underline{Learning Setup:}}
Our analysis will focus on the problem of learning $f^*$ from training data with the three layer neural net model in \eqref{eq:xord_network}. The learning algorithm will be gradient descent, randomly initialized.
 As in any learning task in practice, $f^*$ is unknown to the training algorithm. Our goal is to analyze the performance of gradient descent when given data that is labeled with $f^*$. We assume that we are given a training set $S=S_+ \cup S_- \subseteq \{\pm1\}^{2d} \times \{\pm1\}^2$ where $S_+$ consists of $m$ IID points drawn from $\md_+$ and $S_-$ consists of $m$ IID points drawn from $\md_-$.\footnote{For simplicity, we consider this setting of equal number of positive and negative points in the training set.}

Importantly, we note that the function $f^*$ can be realized by the above network with $k=2$. Indeed, the network $N_W$ with $\wvec{1} = 3\pp_1$, $\wvec{2} = 3\pp_3$, $\uvec{1} = \pp_2$, $\uvec{2} = \pp_4$ satisfies $\sign \left(\net(\xx)\right) = f^*(\xx)$ for all $\xx \in \{\pm 1\}^{2d}$. It can be seen that for $k=1$, $f^*$ cannot be realized. Therefore, any $k>2$ is an overparameterized setting.

\paragraph{\underline{Training Algorithm:}}
We will use gradient descent to optimize the following hinge-loss function.
\begin{align*}
\label{eq:loss_function}
\ell(W)&=\frac{1}{m}\sum_{(\xx_i,y_i) \in S_+:y_i = 1}\max\{\gamma-\net(\xx_i),0\} \\ &+ \frac{1}{m}\sum_{(\xx_i,y_i) \in S_-:y_i = -1}\max\{1+\net(\xx_i),0\} \numberthis
\end{align*}
for $\gamma \ge 1$.\footnote{In practice it is common to set $\gamma$ to $1$. In our analysis we will need $\gamma \ge 8$ to guarantee generalization. In the supplementary material we show empirically, that for this task, setting $\gamma$ to be larger than $1$ results in better test performance than setting $\gamma = 1$.} We assume that gradient descent runs with a constant learning rate $\eta$ and the weights are randomly initiliazed with IID Gaussian weights with mean $0$ and standard deviation $\sigma_g$. Furthermore, only the weights of the first layer, the convolutional filters, are trained.\footnote{Note that \cite{hoffer2018fix} show that fixing the last layer to $\pm1$ does not degrade performance in various tasks. This assumption also appeared in \citep{brutzkus2018sgd,li2017convergence}.} As in Section \ref{sec:xor}, we will use the notations $W_t$, $\wvec{i}_t$, $\uvec{i}_t$ for the weights at iteration $t$ of gradient descent. At each iteration (starting from $t=0$), gradient descent performs the update $W_{t+1} = W_t - \eta \frac{\partial \ell}{\partial W}\left(W_t\right)$.

\begin{figure*}[t]
	\begin{subfigure}{.24\textwidth}
		\centering
		\includegraphics[width=1.0\linewidth]{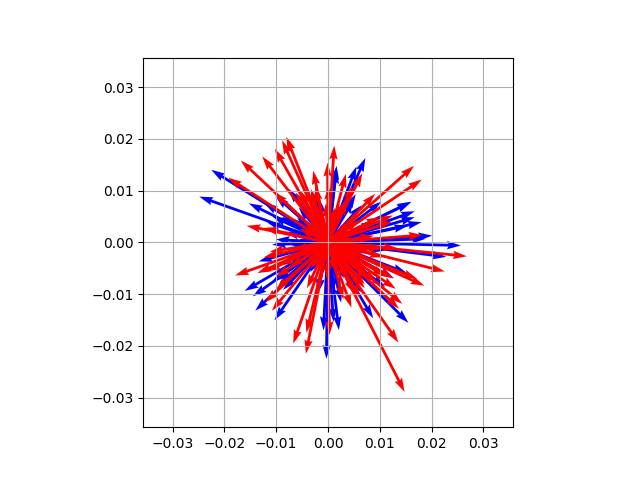}
		\caption{}
		\label{fig:exp_xord1}
	\end{subfigure}%
	\begin{subfigure}{.24\textwidth}
		\centering
		\includegraphics[width=1.0\linewidth]{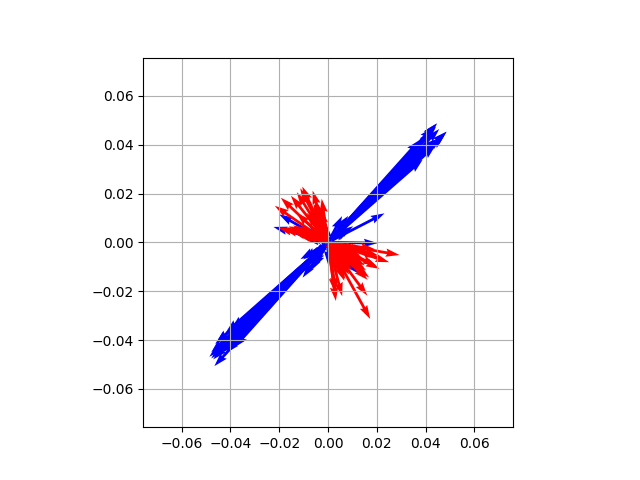}
		\caption{}
		\label{fig:exp_xord2}
	\end{subfigure}%
	\begin{subfigure}{.24\textwidth}
		\centering
		\includegraphics[width=1.0\linewidth]{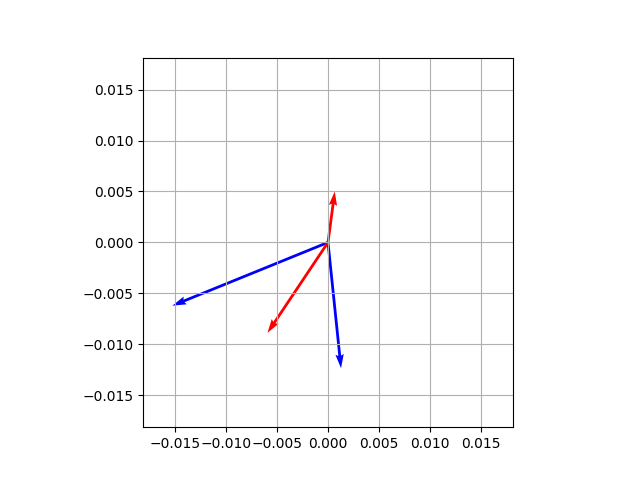}
		\caption{}
		\label{fig:exp_xord3}
	\end{subfigure}
	\begin{subfigure}{.24\textwidth}
		\centering
		\includegraphics[width=1.0\linewidth]{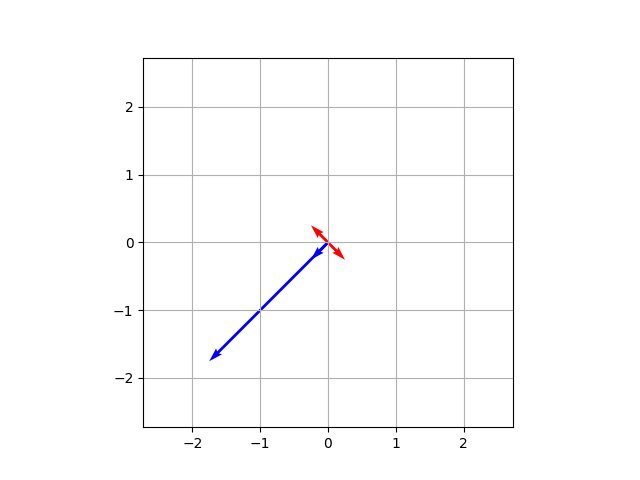}
		\caption{}
		\label{fig:exp_xord4}
	\end{subfigure}
	\caption{\small{Overparameterization and generalization in the XORD problem. The vectors in blue are the vectors $\wvec{i}_t$ and in red are the vectors $\uvec{i}_t$. (a) Exploration at initialization (t=0) for $k=100$ (b) Clustering and convergence to global minimum that recovers $f^*$ for $k=100$ (c) Non-sufficient exploration at initialization (t=0) for $k=2$. (d) Convergence to global minimum with non-zero test error for $k=2$.}}
	\label{fig:exp_xord}
\end{figure*}

\section{XORD on Decoy Sets}
\label{sec:xord_experiments}
In \figref{fig:xor_over_param} we showed that the XORD problem exhibits better generalization 
for overparameterized models. Here we will empirically show how this comes about due to the effects of clustering and exploration. We compare two networks as in \secref{sec:xord_problem_formulation}. The first has $k=2$  (i.e., four hidden neurons) and the second has $k=100$. As mentioned earlier, both these nets can achieve zero test error on the XORD problem. 

We consider a \textit{diverse} training set, namely, one which contains only diverse points. The set has 6 positive diverse points and 6 negative diverse points. Each positive point contains all the patterns $\{\pp_1,\pp_2,\pp_3,\pp_4\}$ and each negative point contains all the patterns $\{\pp_2,\pp_4\}$. Note that in order to achieve zero training error on this set, a network needs only to detect \textit{at least} one of the patterns $\pp_1$ or $\pp_3$, and \textit{at least} one of the patterns $\pp_2$ or $\pp_4$. For example, a network with $k=2$ and filters  $\wvec{1} = \wvec{2} = 3\pp_1$, $\uvec{1} = \uvec{2} = \pp_2$, has zero train loss. However, this network will not generalize to non-diverse points, where only a subset of the patterns appear. Thus we refer to it as a ``decoy'' training set.

\figref{fig:exp_xord} shows the results of training the $k=2$ and $k=100$ networks on the decoy training set. Both networks reach zero training error. However, the larger network learns the XORD function exactly, whereas the smaller network does not, and will therefore misclassify certain data points. As \figref{fig:exp_xord} clearly shows, the reason for the failure of the smaller network is that at initialization there is insufficient exploration of weight space. On the other hand, the larger network both explores well at initialization, and converges to clustered weights corresponding to all relevant patterns.

The above observations are for a training set that contains only diverse points. However, there are other decoy training sets which also contain non-diverse points (see supplementary for an example). We also note that in the experiments in \figref{fig:xor_over_param}, we trained gradient descent on various training sets which do not contain only diverse points. The generalization gap that we observe for 0 training error solutions, suggests the existence of other decoy training sets.

\ignore{
In this section we empirically show the clustering and feature exploration effects in the XORD problem. The experiments are similar to those in Section \ref{sec:xor}. However, in this case we show that exploration and clustering result in better \textit{generalization} performance for overparameterized networks.

We trained the network defined in \eqref{eq:xord_network} with $k=100$ and gradient descent on 6 positive and 6 negative points. We defer the details of the data-set to the supplementary material. Then, we performed the same experiment with $k=2$. We note that $f^*$ can be implemented 

In \figref{fig:exp_xord} we show the results. We see the exploration and clustering effects in the overparameterized case and that it converges to a global minimum which implements $f^*$. In contrast, the small network does not have sufficient exploration at initialization and it converges to a \textit{{global}} minimum of the loss in \eqref{eq:loss_function} that has non-zero test error. Thus it succeeds in optimization, yet fails in generalization.

At first glance, it may seem odd that the network in \figref{fig:exp_xord4} is a \textit{global} minimum for XORD. However, this network can be a global minimum for various training sets. Indeed, the network has two filters $\wvec{1}$ and $\wvec{2}$ approximately in the direction of $\pp_3$, one filter $\uvec{1}$ approximately in the direction of $\pp_2$ and the other $\uvec{2}$ approximately in the direction of $\pp_4$. Let $S^*$ be the set of all negative points, all diverse positive points and all positive points $\xx$ such that $1\notin P_{\xx}$. This network classifies correctly, with sufficient margin for zero hinge loss, all points in $S^*$. However, it does not classify correctly positive points such that $3\notin P_{\xx}$.  Therefore, if the sampled training set $S$ consists only of points in $S^*$, the network in \figref{fig:exp_xord4}, is a global minimum for the corresponding loss function in \eqref{eq:loss_function}. Furthermore, it has non-zero test error. 

We can think and formally define such training sets $S$ as \textit{decoy} training sets for the small network - they lead gradient descent to converge to a global minimum which does not generalize well.\footnote{In this short definition of decoy training set, we omit the formality which includes the randomization of gradient descent.} The global minimum in \figref{fig:exp_xord4} is a result of sampling such a decoy training set. However, as we see in \figref{fig:exp_xord2}, this decoy training set does not have the same effect on gradient descent when the network is large. In this case it explores all directions and converges to a global minimum with clustered weights and which generalizes well. In these experiments we have shown one example of a global minimum and decoy training set. In the supplementary we give another example.
}
\section{XORD Theoretical Analysis}
\label{sec:main_results}
In \secref{sec:xord_experiments} we saw a case where overparameterized networks generalize better than smaller ones. This was due to the fact that the training set was a ``decoy'' in the sense that it could be explained by a subset of the discriminative patterns. Due to the under-exploration of weights in the smaller model this led to zero training error but non-zero test error. 

We proceed to formulate this intuition. Our theoretical results will show that for diverse training sets, networks with $k\geq 120$ will converge with high probability to a solution with zero training error that recovers $f^*$ (\secref{sec:xord_overparam}). On the other hand,
networks with $k=2$ will converge with constant probability to zero training error solutions which do not recover $f^*$ (\secref{sec:xord_small}). Finally, we show that in a PAC setting these results imply a sample complexity gap between large and small networks (\secref{sec:generalization_gap}).

\ignore{
In the case that the data distribution has sufficiently large probability for diverse points, e.g., $p_+,p_- \ge 0.9$, we will show that the latter results imply a sample complexity gap between large and small networks.
}

We assume that the training set consists of $m$ positive diverse points and $m$ negative diverse points. For the analysis, without loss of generality, we can assume that the training set consists of one positive diverse point $\xx^+$ and one negative diverse point $\xx^-$. This follows since the network and its gradient have the same value for two different positive diverse points and two different negative diverse points. Therefore, this holds for the loss function in \eqref{eq:loss_function} as well.

For the analysis, we need a few more definitions. Define the following sets for each $1 \le i \le 4$:
\begin{align*}
\label{eq:sets}
W_t^+(i) = \left\{j \mid \argmax_{1 \le l \le 4} \wvec{j}_t\cdot \pp_l = i \right\} \\
U_t^+(i) = \left\{j \mid \argmax_{1 \le l \le 4} \uvec{j}_t\cdot \pp_l = i \right\} \\
\end{align*}
For each set of binary patterns $A \subseteq \{\pm1\}^2$ define $p_A$ to be the probability to sample a point $\xx$ such that $P_{\xx} = A$. Let $A_1 = \{2\}$, $A_2 = \{4\}$, $A_3 = \{2,4,1\}$ and $A_4 = \{2,4,3\}$. The following quantity will be useful in our analysis:
\begin{equation}
\label{eq:pstar}
p^* = \min_{1\le i\le 4}{p_{A_i}}
\end{equation}

Finally, we let $a^+(t)$ be the number of iterations $0 \le t' \le t$ such that $\nett{t'}(\xx^+) < \gamma$ and $c \le 10^{-10}$ be a negligible constant.
 
\subsection{Overparameterized Network}
\label{sec:xord_overparam}

As in \secref{sec:xor_overparam}, we will show that both exploration at initialization and clustering will imply good performance of overparameterized networks. Concretely, they will imply convergence to a global minimum that recovers $f^*$. However, the analysis in XORD is significantly more involved. 

We assume that $k \ge 120$ and gradient descent runs with parameters $\eta = \frac{\ceta}{k}$ where $\ceta \le \frac{1}{410}$, $\sigma_g \le \frac{\ceta}{16k^{\frac{3}{2}}}$ and $\gamma \ge 8$.

In the analysis there are several instances of exploration and clustering effects. Due to space limitations, here we will show one such instance. In the following lemma we show an example of exploration at initialization. The proof is a direct application of a concentration bound.

\begin{lem}
	\label{lem:init_num_disj_xord}
	\textbf{Exploration.} With probability at least $1-4e^{-8}$, it holds that $\abs{\abs{W_0^+(1) \cup W_0^+(3)} - \frac{k}{2}}\le 2\sqrt{k}$.
\end{lem}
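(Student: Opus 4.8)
The plan is to reduce the statement to a standard concentration inequality for the number of "successes" among $k$ independent trials. First I would observe that the weights $\wvec{j}_0$ are IID Gaussian vectors in $\reals^2$, so each direction $\wvec{j}_0 / \|\wvec{j}_0\|$ is uniform on the unit circle, independent across $j$. For a fixed $j$, membership in $W_0^+(1) \cup W_0^+(3)$ is determined by which of the four patterns $\pp_1,\ldots,\pp_4$ maximizes $\wvec{j}_0 \cdot \pp_l$; equivalently, by which of the four $45^\circ$-wide angular sectors (centered on the diagonal directions $\pp_1/\|\pp_1\|$ and $\pp_3/\|\pp_3\|$, versus $\pp_2,\pp_4$) the direction of $\wvec{j}_0$ falls into. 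By symmetry of the Gaussian (rotational invariance), $\prob[j \in W_0^+(1) \cup W_0^+(3)] = \tfrac12$: the two sectors "closest to $\pp_1$ or $\pp_3$" together cover exactly half the circle. Here I'd be slightly careful about ties (the event $\wvec{j}_0\cdot\pp_l = \wvec{j}_0\cdot\pp_{l'}$), which has probability zero, so $\argmax$ is well-defined almost surely and the indicator is genuinely Bernoulli$(1/2)$.

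**The concentration step.**

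Then $\abs{W_0^+(1) \cup W_0^+(3)} = \sum_{j=1}^k \mathbbm{1}[j \in W_0^+(1)\cup W_0^+(3)]$ is a sum of $k$ IID Bernoulli$(1/2)$ random variables, with mean $k/2$. I would apply Hoeffding's inequality:
\[
\prob\left[ \Big| \abs{W_0^+(1)\cup W_0^+(3)} - \tfrac{k}{2} \Big| > 2\sqrt{k} \right] \le 2\exp\left( - \frac{2(2\sqrt k)^2}{k} \right) = 2e^{-8}.
\]
This already gives the bound $1 - 2e^{-8}$, which is even stronger than the claimed $1 - 4e^{-8}$; the looser constant in the statement is presumably chosen for uniformity with the companion exploration lemmas (e.g.\ Lemma~\ref{lem:exploration_init}), where a union bound over several such events is taken, so I would simply note that $1-2e^{-8} \ge 1-4e^{-8}$ and conclude.

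**Where the difficulty lies.** This lemma is genuinely easy — the authors themselves call it "a direct application of a concentration bound." The only place requiring a moment's thought is the geometric claim that $\prob[\wvec{j}_0 \in W_0^+(1)\cup W_0^+(3)] = 1/2$: one must verify that the decision regions for the four $\argmax$ outcomes, under a $2$D Gaussian, are four sectors of equal angular measure $\pi/2$, so that any two of them cover half the mass. This follows because $\argmax_l \wvec{}\cdot\pp_l$ depends only on the direction of $\wvec{}$, the $\pp_l$ are at $45^\circ, 135^\circ, 225^\circ, 315^\circ$, and the bisectors between consecutive $\pp_l$ cut the circle into four equal quadrants (the axis-aligned ones). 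Once that is in hand, the rest is the Hoeffding bound above. If one prefers to avoid Hoeffding, the same tail can be obtained from a Chernoff/Binomial bound, but Hoeffding gives the clean constant $e^{-8}$ matching the statement.
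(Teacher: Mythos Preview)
Your proposal is correct and follows essentially the same approach as the paper: observe that $\prob[j \in W_0^+(1)\cup W_0^+(3)] = \tfrac12$ by the symmetry of the Gaussian initialization, then apply Hoeffding's inequality to the sum of $k$ IID Bernoulli$(1/2)$ indicators to get the $2e^{-8}$ tail. Your guess about the looser constant is also right --- in the appendix the lemma is stated jointly for $W_0^+(1)\cup W_0^+(3)$ and $U_0^+(1)\cup U_0^+(3)$, and the $4e^{-8}$ comes from a union bound over those two events. (One small slip: the four $\argmax$ sectors are $90^\circ$-wide, not $45^\circ$-wide, as you in fact say correctly later in the proposal.)
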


Next, we characterize the dynamics of filters in $W_0^+(1) \cup W_0^+(3)$ for all $t$.

\begin{lem}
	\label{lem:clustering_xord}
	\textbf{Clustering Dynamics.} Let $i \in \{1,3\}$. With probability $\ge 1-\frac{\sqrt{2k}}{\sqrt{\pi} e^{8k}}$, for all $t \ge 0$ and $j \in W_0^+(i)$ there exists a vector $\vv_t$ such that $v_t \cdot \pp_i > 0$, $\abs{v_t \cdot \pp_2} < 2\eta$ and $\wvec{j}_t = a^+(t)\eta\pp_i + \vv_t$.
\end{lem}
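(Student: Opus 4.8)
The claim is an induction on $t$. Fix $i \in \{1,3\}$ and $j \in W_0^+(i)$. The base case $t=0$ is where the probabilistic statement enters: I want $\wvec{j}_0 = a^+(0)\eta\pp_i + \vv_0$ with $\vv_0\cdot\pp_i>0$ and $|\vv_0\cdot\pp_2|<2\eta$. Since $a^+(0)$ is either $0$ or $1$ depending on whether $\nett{0}(\xx^+)<\gamma$, and since the initial weights are tiny Gaussians ($\sigma_g \le \ceta/(16k^{3/2})$), with the stated probability $\ge 1-\frac{\sqrt{2k}}{\sqrt\pi e^{8k}}$ every initial weight vector has norm much smaller than $\eta$ (a Gaussian tail bound: $\|\wvec{j}_0\|$ concentrates around $\sigma_g\sqrt 2$, and $\eta = \ceta/k$ dominates). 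Then I either absorb the whole of $\wvec{j}_0$ into $\vv_0$ (if $a^+(0)=0$) or write $\wvec{j}_0 = \eta\pp_i + (\wvec{j}_0 - \eta\pp_i)$ and set $\vv_0$ to be the remainder. In the latter case I must check $\vv_0\cdot\pp_i = \eta\|\pp_i\|^2 - (\text{small}) = 2\eta - (\text{small}) > 0$ and $|\vv_0\cdot\pp_2| = |\eta\,\pp_i\cdot\pp_2 - (\text{small})| = 0 + (\text{small}) < 2\eta$ since $\pp_1\cdot\pp_2 = \pp_3\cdot\pp_2 = 0$. The membership $j\in W_0^+(i)$ guarantees $\wvec{j}_0\cdot\pp_i$ is the largest among the four inner products, which in particular gives $\wvec{j}_0\cdot\pp_i > 0$ (at least once the norm is controlled), needed for consistency with $\vv_0\cdot\pp_i>0$ when $a^+(0)=0$.

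**The inductive step.**

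Assume the decomposition holds at time $t$. I need to understand the gradient descent update $\wvec{j}_{t+1} = \wvec{j}_t - \eta\,\partial\ell/\partial\wvec{j}(W_t)$. The key structural fact — which I would establish as a sub-claim or invoke from the companion lemmas — is that for $j\in W_0^+(i)$ with $i\in\{1,3\}$, the filter $\wvec{j}_t$ continues to attain its max-pooling maximum on the pattern $\pp_i$ at every iteration (this is essentially what "clustering" buys: once aligned with a positive pattern, the filter stays there because the positive diverse point $\xx^+$ contains $\pp_i$ and the gradient keeps pushing in that direction). Given this, the subgradient of the hinge loss contributes a term proportional to $\pp_i$ from the positive example (active only when $\nett{t}(\xx^+)<\gamma$, i.e., exactly when $a^+(t)$ increments), and I need to argue the negative example $\xx^-$ contributes nothing to this filter — because $\xx^-$ is a negative diverse point containing only $\pp_2,\pp_4$, and a filter whose maximum is at $\pp_i\in\{\pp_1,\pp_3\}$ will have its ReLU/max-pool behavior on $\xx^-$ contribute to the $\max_j\sigma(\wvec{i}\cdot\xx_j)$ term in a way that either is zero or, when it is active, I must rule out or absorb. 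So the update is $\wvec{j}_{t+1} = \wvec{j}_t + \mathbbm{1}[\nett{t}(\xx^+)<\gamma]\cdot\eta\pp_i + (\text{correction})$, which in terms of the decomposition reads $a^+(t+1)\eta\pp_i + \vv_{t+1}$ where $\vv_{t+1} = \vv_t + (\text{correction})$. I then need $\vv_{t+1}\cdot\pp_i>0$ (the correction should only help, or be small relative to the accumulated $\vv_t\cdot\pp_i$) and $|\vv_{t+1}\cdot\pp_2| < 2\eta$ — the latter is the delicate one: $\vv_t\cdot\pp_2$ must not drift. The reason it doesn't is that any gradient contribution is a multiple of $\pp_i$ (orthogonal to $\pp_2$) plus, from the negative point, possibly a multiple of $\pp_2$ or $\pp_4$; I need the $\pp_2$-component contributions to cancel or to be provably absent for these particular filters, keeping $\vv_t\cdot\pp_2$ pinned near its initial value, which was $<2\eta$ in absolute value — actually I'd want to show it stays $O(\sigma_g)$ or gets reset each step, not accumulating $\eta$ per iteration.

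**Main obstacle and how I'd handle it.**

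The hard part is the bookkeeping on the $\pp_2$-component of $\vv_t$ across the negative example's contribution: showing $|\vv_t\cdot\pp_2|<2\eta$ is maintained requires knowing precisely when and how $\xx^-$ activates filter $j$ through the max-pooling, and this depends on the configuration of all the other filters (since the max is over patterns and the loss couples everything through $\nett{t}$). I would isolate this by first proving a lemma that for $j\in W_0^+(i)$, $i\in\{1,3\}$, the index achieving $\max_l\wvec{j}_t\cdot\pp_l$ is $i$ for all $t$ (monotone alignment), from which $\sigma(\wvec{j}_t\cdot\pp_2)$ is either $0$ or at most $\wvec{j}_t\cdot\pp_i$ and, crucially, is not the max-pool winner on $\xx^-$ relative to the competing positive-aligned filters — or, if there is a contribution, it is a fixed multiple of $\eta\pp_2$ or $\eta\pp_4$ whose running sum I can bound by tracking $a^-$-type counters and showing they don't grow (because once the negative example is classified with margin, its hinge term switches off). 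I expect the cleanest route is to split $\vv_t = \vv_t^{\parallel} + \vv_t^{\perp}$ along $\pp_i$ and its orthogonal complement, track each separately, and use $\gamma\ge 8$ together with $\ceta\le 1/410$ and $k\ge 120$ to ensure the positive term $a^+(t)\eta\pp_i$ grows fast enough that $\nett{t}(\xx^+)$ reaches $\gamma$ in a bounded number of steps — so $a^+(t)$ stabilizes — before any drift in $\vv_t\cdot\pp_2$ could accumulate past $2\eta$; in fact I suspect the intended argument is that the $\pp_2$-component simply never receives a gradient contribution for these filters, so $\vv_t\cdot\pp_2 = \vv_0\cdot\pp_2$ exactly for all $t$, and the $<2\eta$ bound is just the (loose) base-case bound carried forward. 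I would verify that orthogonality claim carefully, since it is the linchpin, and only fall back to the quantitative drift bound if it fails.
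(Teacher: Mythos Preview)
Your overall induction scheme and the sub-lemma that $j\in W_0^+(i)$ implies $j\in W_t^+(i)$ for all $t$ (``monotone alignment'') are exactly right, and the base case via the Gaussian tail bound is what the paper does. However, your treatment of the negative example $\xx^-$ contains a genuine gap.

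Your primary conjecture --- that the $\pp_2$-component of $\wvec{j}_t$ never receives a gradient contribution --- is false. Since $\pp_4=-\pp_2$, for any filter $\wvec{j}_t$ exactly one of $\wvec{j}_t\cdot\pp_2$, $\wvec{j}_t\cdot\pp_4$ is positive, so the max-pool over $\xx^-$ (whose patterns are $\{\pp_2,\pp_4\}$) always activates, and whenever the negative hinge is active the update subtracts $\eta\pp_{l'}$ for some $l'\in\{2,4\}$. Your fallback --- bounding cumulative drift by the number of steps until $\nett{t}(\xx^+)\ge\gamma$ --- is also off target: the $\pp_2$-drift is driven by the \emph{negative} hinge $-\nett{t}(\xx^-)<1$, not the positive one, and there is no reason the negative hinge switches off within $O(1)$ steps.

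The actual mechanism, which the paper proves, is an oscillation. Let $l\in\{2,4\}$ be such that $j\in W_0^-(l)$, so $0<\wvec{j}_0\cdot\pp_l<\frac{\sqrt 2}{4}\eta$. One shows by induction that $\wvec{j}_t\cdot\pp_l\in\{\wvec{j}_0\cdot\pp_l,\ \wvec{j}_0\cdot\pp_l-2\eta\}$ for all $t$: if $\wvec{j}_t\cdot\pp_l=\wvec{j}_0\cdot\pp_l>0$ then $l'=l$ and the update subtracts $\eta\pp_l$ (contributing $-2\eta$ to the $\pp_l$-inner product); if $\wvec{j}_t\cdot\pp_l=\wvec{j}_0\cdot\pp_l-2\eta<0$ then $l'\ne l$, so subtracting $\eta\pp_{l'}=-\eta\pp_l$ restores $\wvec{j}_0\cdot\pp_l$. (If the negative hinge is inactive nothing changes.) Thus $\abs{\wvec{j}_t\cdot\pp_2}<2\eta$ for all $t$, and the decomposition $\wvec{j}_t=\wvec{j}_0+a^+(t)\eta\pp_i+\alpha_i^t\pp_2$ with $\alpha_i^t\in\{0,-\eta\}$ follows. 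You actually mention ``cancel'' as one possibility, which is the right word; the point is that the cancellation is automatic and step-by-step, not something that needs a separate convergence argument.

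A minor slip: in your $a^+(0)=1$ branch you compute $\vv_0\cdot\pp_i=\eta\norm{\pp_i}^2-(\text{small})$, but $\vv_0=\wvec{j}_0-\eta\pp_i$ gives $\vv_0\cdot\pp_i=\wvec{j}_0\cdot\pp_i-2\eta<0$, so that branch would violate the claim. The paper's convention has $a^+(0)=0$ (the count is over $t'<t$), so only your other branch is needed.
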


We note that $a^+(t)$ is a non-decreasing sequence such that $a^+(0) = 1$ with high probability. Therefore, the above lemma suggests that the weights in $W_0^+(1) \cup W_0^+(3)$ tend to get clustered as $t$ increases.  

By combining Lemma \ref{lem:init_num_disj_xord}, Lemma \ref{lem:clustering_xord} and other similar lemmas given in the supplementary (for  other sets $W_0^+(i), U_0^+(i)$), the following convergence theorem can be shown. The proof consists of a careful and lengthy analysis of the dynamics of gradient descent and is given in the supplementary. 

\begin{thm}
	\label{thm:main}
	 With probability at least $\left(1-c - 16e^{-8}\right)$ after running gradient descent for $T \ge \frac{28(\gamma + 1 + 8\ceta)}{\ceta}$ iterations, it converges to a global minimum which satisfies $\sign\left(\nett{T}(\xx)\right) = f^*(\xx)$ for all $\xx \in \{\pm1\}^{2d}$. Furthermore, for $i \in \{1,3\}$ and all $j \in W_0^+(i)$, the angle between  $\wvec{j}_T$ and $\pp_i$ is at most $\arccos\left(\frac{\gamma - 1-2\ceta}{\gamma-1+\ceta}\right)$. \footnote{We do not provide clustering guarantees at global minimum for other filters. However, we do characterize their dynamics similar to Lemma \ref{lem:clustering_xord}.}
\end{thm}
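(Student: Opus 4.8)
The plan is to run the whole argument on a ``good initialization'' event $\cG$ and then treat the gradient descent trajectory as deterministic, reduced (as the excerpt notes) to the single pair $\{\xx^+,\xx^-\}$. Let $\cG$ be the intersection of the exploration event of Lemma~\ref{lem:init_num_disj_xord}, the analogous exploration events for the other filter groups (so that $\abs{W_0^+(1)},\abs{W_0^+(3)}$ and the corresponding $\uu$-side groups are each $\tfrac k4\pm O(\sqrt k)$, hence nonempty since $k\ge120$), and the small-initial-norm event on which the clustering lemmas (Lemma~\ref{lem:clustering_xord} and its supplementary counterparts for the remaining groups) hold with $a^+(0)=1$. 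A union bound over these $O(1)$ events gives $\probarg{\cG}\ge1-c-16e^{-8}$, the probability in the statement; all of the following is on $\cG$.

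On $\cG$ the clustering lemmas collapse the trajectory to a few scalars. Writing each filter as a cluster center plus a controlled remainder and using $\pp_1=-\pp_3$, $\pp_2=-\pp_4$, $\pp_1,\pp_3\perp\pp_2,\pp_4$, one verifies: the filters in $W_0^+(1)\cup W_0^+(3)$ only ever grow in their own $\pp_i$-direction, by $\eta$ per iteration in which $\xx^+$ is active; the $\uu$-filters all end up aligned with $\pp_2$ or $\pp_4$ and grow there only in iterations where $\xx^-$ is active but $\xx^+$ is not (in any iteration where both are active their $\xx^+$- and $\xx^-$-contributions to the $\uu$-gradient cancel); and, crucially, the filters in $W_0^+(2)\cup W_0^+(4)$ see cancelling pushes while both points are active and, once $\xx^+$ is inactive, are only pushed towards the $\{\pp_2,\pp_4\}$-axis, so they never acquire more than an $O(\eta)$ response on any pattern. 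Together with Lemma~\ref{lem:init_num_disj_xord} and its $\uu$-side analogue this yields, up to $O(\ceta/\sqrt k)$ error: the positive part of $\nett{t}(\xx^+)$ equals $\ceta$ times the number of active-$\xx^+$ iterations so far, the $\uu$-part of $\nett{t}(\xx^+)$ and of $\nett{t}(\xx^-)$ equals $\ceta$ times the number of iterations so far that were active-$\xx^-$ but inactive-$\xx^+$, and the $\ww$-part of $\nett{t}(\xx^-)$ is $O(\ceta)$. An induction on $t$ shows the clustering-lemma invariants persist (no max-pooling argmax ever switches pattern), so the dynamics proceed in two phases: first both points stay active, $\nett{t}(\xx^+)$ rises by $\ceta$ per step while $\nett{t}(\xx^-)$ stays near $0$, until $\nett{t}(\xx^+)$ first reaches $\gamma$ after $\approx\gamma/\ceta$ steps; then $\xx^+$ alternates active/inactive so $\nett{t}(\xx^+)$ hovers near $\gamma$ while the $\uu$-part grows, driving $\nett{t}(\xx^-)$ monotonically down to $-1$ after $O(1/\ceta)$ more steps. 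Counting the flicker/idle iterations, this happens within $T\ge\frac{28(\gamma+1+8\ceta)}{\ceta}$ steps; then $\nett{T}(\xx^+)\ge\gamma$ and $\nett{T}(\xx^-)\le-1$, so $\ell(W_T)=0$, and since $\ell\ge0$ with vanishing gradient at a zero-loss point, $W_T$ is a global minimum.

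It remains to see $W_T$ realizes $f^*$ everywhere and to extract the angle bound. At time $T$ the $\ww$-mass forms clusters near $\pp_1$ and near $\pp_3$ (both nonempty on $\cG$) with total responses $A_1\approx A_3\approx\tfrac12(\gamma+1)$ on their own pattern and $O(\ceta)$ on any other, and the $\uu$-mass forms clusters near $\pp_2,\pp_4$ with total responses $B_2\approx B_4\approx\tfrac12$ on their own pattern (forced by $\nett{T}(\xx^-)\le-1$) and $O(\ceta)$ elsewhere. For positive $\xx$ (which contains $\pp_1$ or $\pp_3$) the positive part of $\nett{T}(\xx)$ is $\ge\min(A_1,A_3)-O(\ceta)\ge\tfrac\gamma2-O(\ceta)\ge4-O(\ceta)$ while the negative part is $\le B_2+B_4+O(\ceta)\le1+O(\ceta)$, so using $\gamma\ge8$, $\nett{T}(\xx)>0$; for negative $\xx$ (which contains only $\pp_2,\pp_4$) the positive part is $O(\ceta)$ by orthogonality while the negative part is $\ge\min(B_2,B_4)-O(\ceta)\ge\tfrac12-O(\ceta)$, so $\nett{T}(\xx)<0$. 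Hence $\sign(\nett{T}(\xx))=f^*(\xx)$ for all $\xx\in\{\pm1\}^{2d}$. Finally, for $i\in\{1,3\}$ and $j\in W_0^+(i)$, Lemma~\ref{lem:clustering_xord} gives $\wvec{j}_T=a^+(T)\eta\pp_i+\vv_T$ with $\vv_T\cdot\pp_i>0$ and $\abs{\vv_T\cdot\pp_2}<2\eta$; expanding $\vv_T$ in the orthogonal basis $\{\pp_i,\pp_2\}$ and inserting the (conservative) lower bound on $a^+(T)$ established in the convergence step gives $\cos\angle(\wvec{j}_T,\pp_i)\ge\frac{\gamma-1-2\ceta}{\gamma-1+\ceta}$, which is the claimed angle bound.

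\textbf{Main obstacle.} The heavy lifting is the inductive bookkeeping behind the second paragraph: showing, simultaneously for all groups $W_0^+(i),U_0^+(i)$ and across the $\Theta(\gamma/\ceta)$ iterations, that every max-pooling argmax stays where the clustering lemmas predict (so the delicate cancellation for $W_0^+(2)\cup W_0^+(4)$ never breaks), that the active/inactive status of $\xx^+,\xx^-$ follows the two-phase pattern, and that the accumulated $O(\ceta)$ errors never overtake the $\Theta(\gamma)$-sized signal. Carrying this out with explicit constants is precisely where $k\ge120$, $\ceta\le\frac1{410}$ and $\gamma\ge8$ enter, and is the lengthy case analysis deferred to the supplementary.
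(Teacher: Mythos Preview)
Your plan is correct and follows essentially the same route as the paper: condition on a good-initialization event (exploration plus small initial norms), use the clustering lemmas to reduce the trajectory to a handful of scalar quantities, track those through an interleaving-phase analysis to get convergence in $O(\gamma/\ceta)$ steps, and then combine the per-group cluster masses with the initialization concentration to verify $\sign(\nett{T})=f^*$ and extract the angle bound from $a^+(T)\gtrsim(\gamma-1)/(2\ceta)$. The one piece you leave implicit but which the paper singles out as separate lemmas is the \emph{proportion} step (that $A_1\approx A_3$ and $B_2\approx B_4$): this needs not only $|W_0^+(1)|\approx|W_0^+(3)|$ from $\cG$ but also that every filter in the same group receives literally the same increment per step, which the paper proves by a short induction for each of the four groups.
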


This result shows if the training set consists only of diverse points, then with high probability over the initialization, overparameterized networks converge to a global minimum which realizes $f^*$ in a constant number of iterations.

\subsection{Small Network \label{sec:xord_small}}
Next we consider the case of the small network $k=2$, and show that it has inferior generalization due to under-exploration. We assume that gradient descent runs with parameters values of $\eta$, $\sigma_g$ and $\gamma$ which are similar to the previous section but in a slightly broader set of values (see supplementary for details).
The main result of this section shows that with constant probability, gradient descent converges to a global minimum that does not recover $f^*$.

\begin{thm}
	\label{thm:lower_bound_specific}
	With probability at least $\left(1-c\right)\frac{33}{48}$, gradient descent converges to a global minimum that does not recover $f^*$. Furthermore, there exists $1 \le i \le 4$ such that the global minimum misclassifies all points $\xx$ such that $P_{\xx} = A_i$.
\end{thm}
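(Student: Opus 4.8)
The plan is to follow the template of Theorem~\ref{thm:xor_local_min}: isolate a constant-probability family of random initializations on which gradient descent is provably forced to a \emph{global} minimum of the training loss that nonetheless disagrees with $f^*$ on one of the sets $A_i$. The key simplification is that the training set collapses to the single diverse pair $(\xx^+,\xx^-)$, so $\nett{t}(\xx^+)$, $\nett{t}(\xx^-)$, and the gradient depend on a filter $\wvec{i}_t$ only through $\max_{1\le l\le 4}\sigma(\wvec{i}_t\cdot\pp_l)$ and $\max_{l\in\{2,4\}}\sigma(\wvec{i}_t\cdot\pp_l)$, and similarly for $\uvec{i}_t$; hence the evolution of each of the four filters $\wvec{1}_t,\wvec{2}_t,\uvec{1}_t,\uvec{2}_t$ is dictated by which pattern is its current arg-max. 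By rotational symmetry of the Gaussian initialization, the initial arg-max patterns of these four filters are i.i.d.\ uniform on $\{1,2,3,4\}$, up to a probability-$\le c$ event (a filter on a cone boundary or with atypically large norm), which I condition away.

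\textbf{Step 1: the bad configurations.}
I would enumerate the equally likely initial arg-max configurations and select those that are incompatible with recovering $f^*$ \emph{and} amenable to the convergence argument of Step~2. Two mechanisms generate such configurations. First, the two $\wvec{}$-filters may fail to cover a positive pattern (e.g.\ $W_0^+(1)=\emptyset$): then the positive branch stays weak on $\pp_1$, so a point with $P_{\xx}=A_i$ for the corresponding $i\in\{3,4\}$ is mis-scored; this comes either because both $\wvec{}$-filters concentrate on the other positive pattern (which is orthogonal to or anti-aligned with each pattern present in $A_i$) or because a $\wvec{}$-filter sits on a negative pattern while the $\uvec{}$-branch, forced to be large by the negative point, dominates. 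Second, the two $\uvec{}$-filters may over-concentrate on one negative pattern (e.g.\ $U_0^+(2)=\emptyset$) while some $\wvec{}$-filter has arg-max $\pp_2$: then the negative branch never detects $\pp_2$ but the positive branch keeps a strictly positive $\pp_2$-component, so a point with $P_{\xx}=A_1=\{2\}$ is scored strictly above $0$ although $f^*=-1$; the analogous statement holds for $U_0^+(4)=\emptyset$ and $A_2=\{4\}$. A careful count of the configurations covered by these mechanisms (and of which ones the Step~2 estimates can actually close) yields total probability $\tfrac{33}{48}$; together with the conditioning above this gives $(1-c)\tfrac{33}{48}$.

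\textbf{Step 2 (convergence to a global minimum) and Step 3 (misclassification).}
Fix a bad configuration. Using Lemma~\ref{lem:clustering_xord} together with its counterparts for the sets $U_0^+(i)$ and for filters whose arg-max is a ``wrong'' pattern (proved in the supplementary), I would maintain decompositions $\wvec{i}_t=a^+(t)\eta\,\pp_{r(i)}+\vv^{(i)}_t$, and an analogous one for $\uvec{i}_t$ with its own active-step counter, where the off-direction parts are $O(\eta)$ in every direction relevant to $\xx^{\pm}$. Summing the filter contributions shows $\nett{t}(\xx^+)$ increases by a fixed positive amount on every ``positive-active'' step and $\nett{t}(\xx^-)$ decreases by a fixed amount on every ``negative-active'' step, so after $O\!\big((\gamma+1)/\ceta\big)$ iterations both hinge terms are $0$ and $W_T$ is a global minimum of $\ell$ (here $k=2$ already suffices because only the two diverse points need to be fit). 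At $W_T$ these decompositions pin each filter near its initial arg-max with quantitatively controlled off-directions; substituting into $\nett{T}$ evaluated on any point with pattern set $A_i$ (the network value depends only on $P_{\xx}$) and using the zero-loss equalities at $\xx^{\pm}$ to bound the ``opposing'' branch, I would show $\sign(\nett{T})\ne f^*$ on every such point, for the index $i$ attached to the configuration in Step~1.

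\textbf{Main obstacle.}
The difficult part is making Steps~2 and~3 hold \emph{at the same time}: one must prove that gradient descent still drives both hinge losses to exactly zero on every configuration in the bad family (so the limit is a genuine global minimum, not a stuck point of positive loss), while guaranteeing that at that same iteration $T$ the imbalance between the two branches exceeds the $O(\eta)$ off-direction terms and (on positive test patterns) the margin $\gamma$, so the sign of $\nett{T}$ on the $A_i$ points is wrong. Getting the ``the winning branch is provably large'' and ``the losing branch is provably small'' estimates to be valid simultaneously is exactly where the parameter choices ($\ceta\le 1/410$, $\gamma\ge 8$) are needed, and it is the configurations for which these estimates cannot be closed --- e.g.\ $U_0^+(2)=\emptyset$ but with no $\wvec{}$-filter on $\pp_2$, so the winning branch need not be strictly positive --- that are left out of the bad family, which is why the probability is $\tfrac{33}{48}$ rather than the full $\tfrac{63}{64}$ of the ``insufficient exploration'' event.
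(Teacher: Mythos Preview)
Your proposal has the right overall shape but contains a genuine gap at the probability computation, and it misses the paper's key device.

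The paper does \emph{not} enumerate initial arg-max configurations. It defines the bad event $E$ at the \emph{convergence} time $T$: either $W_T^+(1)=\emptyset$, or $W_T^+(3)=\emptyset$, or both $\uvec{j}_T\cdot\pp_2>0$, or both $\uvec{j}_T\cdot\pp_4>0$. For the first two items Lemma~\ref{lem:same_w} gives $W_T^+(i)=W_0^+(i)$, so they do reduce to initialization. But the last two items concern the signs of the $\uvec{}$-filters \emph{after} the full trajectory, and those signs are \emph{not} determined by the initial arg-max: by Lemma~\ref{lem:u24} a filter $j\in U_0^+(2)\cup U_0^+(4)$ satisfies $\uvec{j}_t=\uvec{j}_0+m_t\eta\pp_2$ with $m_t\in\mathbb{Z}$, and whether $m_T$ is positive or negative depends on how many positive-active versus negative-active steps occur along the way, which is a global property of the four-filter system. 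Your Step~2 claim that at time $T$ ``each filter [is pinned] near its initial arg-max'' is therefore false for the $\uvec{}$-filters, and with it the configuration count.

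The paper sidesteps this with a symmetry lemma: replacing $\uvec{1}_0$ by $-\uvec{1}_0$ yields the trajectory with $\uvec{1}_t$ replaced by $-\uvec{1}_t$ for every $t$, while $\wvec{1}_t,\wvec{2}_t,\uvec{2}_t$ and the values $\nett{t}(\xx^{\pm})$ are unchanged (diverse points are invariant under negating any single filter). Since the Gaussian initialization is symmetric under this flip, the four terminal sign patterns of $(\uvec{1}_T\cdot\pp_2,\ \uvec{2}_T\cdot\pp_2)$ are equiprobable, giving $\probarg{A_3^c\cap A_4^c\mid A_1^c\cap A_2^c}=\tfrac12$. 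Combined with $\probarg{A_1^c\cap A_2^c}=\tfrac16$ (conditional on $W_0^+(1)\cup W_0^+(3)\neq\emptyset$, which holds with probability $\tfrac34$ and is exactly what guarantees convergence to a global minimum via Proposition~\ref{prop:opt}), one gets $\probarg{E}=\tfrac{11}{12}$ and hence $(1-c)\cdot\tfrac34\cdot\tfrac{11}{12}=(1-c)\tfrac{33}{48}$. Your explanation of $\tfrac{33}{48}$ as ``configurations on which the estimates cannot be closed are left out'' is not how the constant arises.

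A smaller point: your mechanism~2 adds an unnecessary hypothesis (``some $\wvec{}$-filter has arg-max $\pp_2$''). In the paper, if both $\uvec{j}_T\cdot\pp_2>0$ then on the all-$\pp_4$ point $\zz^-$ one has $\sigma(\uvec{j}_T\cdot\pp_4)=0$, so $N(\zz^-)=\sum_j\sigma(\wvec{j}_T\cdot\pp_4)\ge 0$ regardless of the $\wvec{}$-filters. Likewise, if $W_T^+(1)=\emptyset$ then for any positive $\zz^+$ with $P_{\zz^+}=\{1,2,4\}$ the $\wvec{}$-contributions coincide with those on $\xx^-$ while the $\uvec{}$-contributions on $\zz^+$ dominate those on $\xx^-$, yielding $N(\zz^+)\le N(\xx^-)\le -1$. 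No quantitative balance between branches (and no use of $\gamma\ge 8$ or $\ceta\le 1/410$; the restated theorem only needs $\gamma\ge 1$, $\ceta\le 1/41$) is required for the misclassification step.
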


The proof follows due to an \textit{under-exploration} effect. Concretely, let $\wvec{1}_T$, $\wvec{2}_T$, $\uvec{1}_T$ and $\uvec{2}_T$ be the filters of the network at the iteration $T$ in which gradient descent converges to a global minimum (convergence occurs with high constant probability). The proof shows that gradient descent will not learn $f^*$ if one of the following conditions is met: a) $W_T^+(1) = \emptyset$. b) $W_T^+(3) = \emptyset$. c) $\uvec{1}_T \cdot \pp_2 > 0$ and $\uvec{2}_T \cdot \pp_2 > 0$. d) $\uvec{1}_T \cdot \pp_4 > 0$ and $\uvec{2}_T \cdot \pp_4 > 0$.
Then by using a symmetry argument which is based on the symmetry of the initialization and the training data it can be shown that one of the above conditions is met with high constant probability.

\begin{figure*}[t]
	\begin{subfigure}{.5\textwidth}
		\centering
		\includegraphics[width=0.7\linewidth]{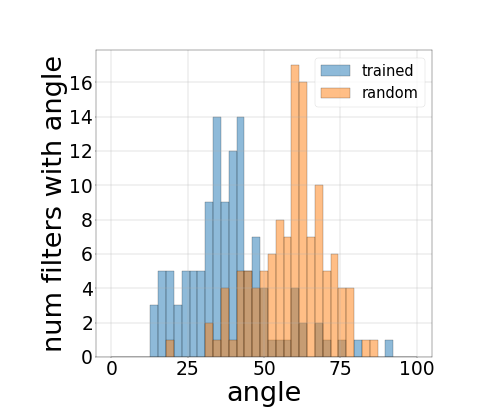}
		\caption{}
		\label{fig:exp_mnist1}
	\end{subfigure}%
	\begin{subfigure}{.5\textwidth}
		\centering
		\includegraphics[width=0.7\linewidth]{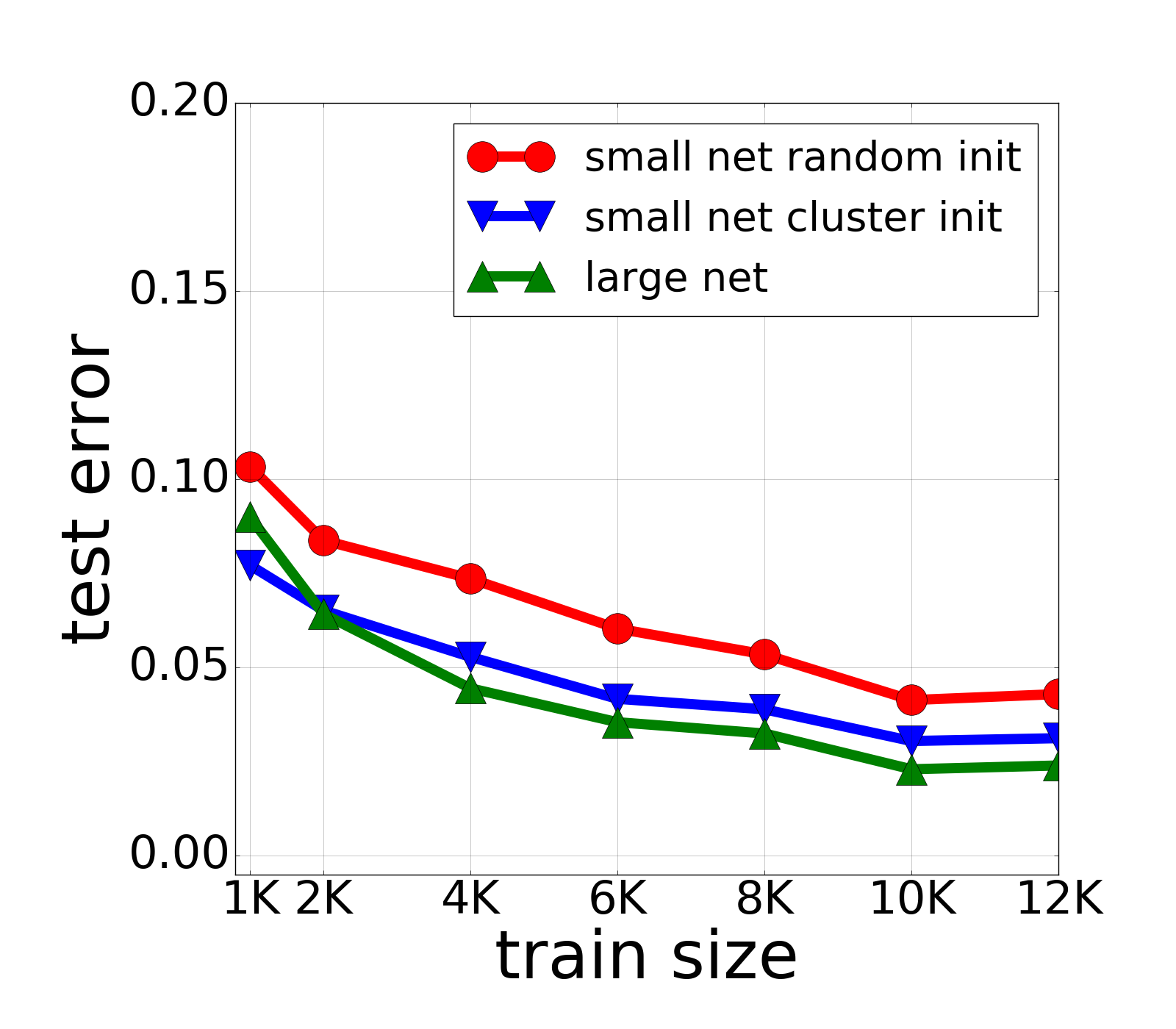}
		\caption{}
		\label{fig:exp_mnist2}
	\end{subfigure}%
	\caption{\small{Clustering and Exploration in MNIST (a) Distribution of angle to closest center in trained and random networks. (b) The plot shows the test error of the small network (4 channels) with standard training (red), the small network that uses clusters from the large network (blue), and the large network (120 channels) with standard training (green). It can be seen that the large network is effectively compressed without losing much accuracy.}}
	\label{fig:exp_mnist}
\end{figure*}

\subsection {A Sample Complexity Gap}
\label{sec:generalization_gap}
In the previous analysis we assumed that the training set was diverse. Here we consider the standard PAC setting of a distribution over inputs, and show that indeed overparameterized models enjoy better generalization. Recall that the sample complexity $m(\epsilon, \delta)$ of a learning algorithm is the minimal number of samples required for learning a model with test error at most $\epsilon$ with confidence greater than $1-\delta$ \citep{shalev2014understanding}.
\ignore{
generalization gap between overparameterized networks and networks with $k=2$. 
Define the generalization error to be the difference between the 0-1 test error and the 0-1 training error. For any $\epsilon$, $\delta$ and training algorithm let $m(\epsilon, \delta)$ be the sample complexity of a training algorithm, namely, the number of minimal samples the algorithm needs to get at most $\epsilon$ generalization error with probability at least $1-\delta$. We consider running gradient descent in two cases, when $k \ge 120$ and $k=2$ in the settings of \secref{sec:xord_overparam} and \secref{sec:xord_small}, respectively.
}

We are interested in the sample complexity of learning with $k\geq120$ and $k=2$. Denote these two functions by $m_1(\epsilon,\delta)$ and $m_2(\epsilon,\delta)$. 
The following result states that there is a gap between the sample complexity of the two models, where the larger model in fact enjoys better complexity.
\begin{thm}
	\label{thm:generalization_gap}
Let $\md$ be a distribution with paramaters $p_+$, $p_-$ and $p^*$ (see \eqref{eq:pstar}). Let $\delta \ge 1-p_+p_-(1-c-16e^{-8})$ and $0 \le \epsilon < p^*$. Then $m_1(\epsilon, \delta) \le 2$ whereas $m_2(\epsilon, \delta) \ge \frac{2\log\left(\frac{48\delta}{33(1-c)}\right)}{\log(p_+p_-)}$. \footnote{We note that this generalization gap holds for global minima (0 train error). Therefore, the theorem can be read as follows. For $k \ge 120$, given 2 samples, with probability at least $1-\delta$, gradient descent converges to a global minimum with at most $\epsilon$ test error. On the other hand, for $k=2$ and given number of samples less than $\frac{2\log\left(\frac{48\delta}{33(1-c)}\right)}{\log(p_+p_-)}$, with probability greater than $\delta$, gradient descent converges to a global minimum with error greater than $\epsilon$.}
\end{thm}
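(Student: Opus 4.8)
The plan is to read off both inequalities almost directly from the two convergence results already established, Theorem~\ref{thm:main} for $k\ge120$ and Theorem~\ref{thm:lower_bound_specific} for $k=2$, together with the elementary fact that the probability an i.i.d.\ positive (resp.\ negative) sample is diverse is exactly $p_+$ (resp.\ $p_-$). The one structural point that makes this possible is the reduction recorded just before \eqref{eq:sets}: since a diverse positive point contains all four patterns, the network output and its gradient evaluated at such a point are the \emph{same} function of $W$ for every diverse positive point (and likewise for negative diverse points), so the loss $\ell(W)$ — and hence the entire gradient descent trajectory — on a diverse training set with $m$ positive and $m$ negative points is identical to the trajectory on the canonical pair $\{\xx^+,\xx^-\}$, for every $m\ge1$.

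\textbf{Upper bound $m_1(\epsilon,\delta)\le 2$.} Take $m=1$, i.e.\ one sample $\xx^+\sim\md_+$ and one sample $\xx^-\sim\md_-$. By independence both are diverse with probability $p_+p_-$, and on that event Theorem~\ref{thm:main} applies to the resulting diverse training set: with probability at least $1-c-16e^{-8}$ over the initialization, gradient descent converges to a global minimum $W_T$ with $\sign(\nett{T}(\xx))=f^*(\xx)$ for all $\xx\in\{\pm1\}^{2d}$, hence with $0$--$1$ test error $0\le\epsilon$. Since the sample randomness and the initialization randomness are independent, gradient descent returns a model of test error at most $\epsilon$ with probability at least $p_+p_-(1-c-16e^{-8})\ge 1-\delta$, the last step being exactly the hypothesis $\delta\ge 1-p_+p_-(1-c-16e^{-8})$. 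Thus two samples already achieve $(\epsilon,\delta)$-learning.

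\textbf{Lower bound on $m_2(\epsilon,\delta)$.} Fix any $m\ge1$ and consider a training set of $m$ positive and $m$ negative points. Let $E$ be the event that all $2m$ points are diverse; by independence $\prob[E]=(p_+p_-)^m$. On $E$, by the reduction above the run of gradient descent coincides with a run on $\{\xx^+,\xx^-\}$, so Theorem~\ref{thm:lower_bound_specific} gives: with probability at least $(1-c)\frac{33}{48}$ over the initialization, gradient descent converges to a global minimum that misclassifies every point $\xx$ with $P_{\xx}=A_i$ for some $i$, and the mass of such points is $p_{A_i}\ge p^*>\epsilon$, so its test error exceeds $\epsilon$. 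Hence gradient descent produces a model of test error $>\epsilon$ with probability at least $(p_+p_-)^m(1-c)\frac{33}{48}$. Whenever this exceeds $\delta$, $2m$ samples do \emph{not} suffice for $(\epsilon,\delta)$-learning; solving $(p_+p_-)^m(1-c)\frac{33}{48}>\delta$ and using $\log(p_+p_-)<0$ yields $m<\log\!\left(\frac{48\delta}{33(1-c)}\right)/\log(p_+p_-)$, so every such $m$ is ruled out, giving $m_2(\epsilon,\delta)\ge \frac{2\log\left(\frac{48\delta}{33(1-c)}\right)}{\log(p_+p_-)}$.

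\textbf{Main obstacle.} The argument is short given the two convergence theorems, so there is no deep obstacle; the care required is entirely bookkeeping — keeping the sample draw and the initialization separate in the union bounds, checking that the ``reduce to one diverse point of each sign'' step is valid for general $m$ (it is, since it is a statement about the loss and its gradient, which are averages that collapse over identical diverse summands), and tracking the inequality direction when dividing through by the negative quantity $\log(p_+p_-)$. The boundary case $p_+p_-=1$ makes the lower bound vacuous, consistent with $m_1\le2$, so it needs no separate treatment.
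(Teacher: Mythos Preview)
Your proof is correct and follows essentially the same route as the paper: apply Theorem~\ref{thm:main} on the event that both samples are diverse to get $m_1\le2$, and apply Theorem~\ref{thm:lower_bound_specific} on the event that all $2m$ samples are diverse (using the reduction to $\{\xx^+,\xx^-\}$) to get the lower bound on $m_2$. Your write-up is in fact a bit more careful than the paper's, since you spell out the reduction for general $m$ and track the sign flip when dividing by $\log(p_+p_-)$.
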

The proof (see supplementary material) follows from Theorem \ref{thm:main} and Theorem \ref{thm:lower_bound_specific} and the fact that the probability to sample a training set with only diverse points is $(p_+p_-)^m$. 

We will illustrate the guarantee of Theorem \ref{thm:generalization_gap} with several numerical examples. Assume that for the distribution $\md$, the probability to sample a positive point is $\frac{1}{2}$ and $p^* = \min\left\{\frac{1-p_+}{4}, \frac{1-p_-}{4}\right\}$ (it is easy to construct such distributions). First, consider the case $p_+ = p_- = 0.98$ and $\delta = 1 - 0.98^2(1-c-16e^{-8}) \le 0.05$. Here we get 
that for any $0 \le \epsilon < 0.005$, $m_1(\epsilon, \delta) \le 2$ whereas $m_2(\epsilon, \delta) \ge 129$. Next, consider the case where $p_+=p_- = 0.92$.  It follows that for $\delta = 0.16$ and any $0 \le \epsilon < 0.02$ it holds that $m_1(\epsilon,\delta) \le 2$ and $m_2(\epsilon,\delta) \ge 17$. In contrast, for sufficiently small $p_+$ and $p_-$, e.g., in which $p_+, p_- \le 0.7$, our bound does not guarantee a generalization gap.


\section{Experiments on MNIST}
\label{sec:experiments}
We next demonstrate how our theoretical insights from the XORD problem are also manifest when learning a neural net on the MNIST dataset.
The network we use for learning is quite similar to the one use for XORD. It is a three layer network: the first layer is a convolution with $3\times 3$ filters and multiple channels (we vary the number of channels), followed by $2\times 2$ max pooling and then a fully connected layer. We use Adam \citep{kingma2014adam} for optimization. In the supplementary we show empirical results for other filter sizes. Further details of the experiments are given there. Below we show how our two main theoretical insights for XORD are clearly exhibited in the MNIST data.

We first check the clustering observation. Namely, that optimization tends
to converge to clusters of similar filters. We train the three layer network described above with 120 channels on 6000 randomly sampled MNIST images. Then, we normalize each filter of the trained network to have unit norm. We then cluster all 120 9-dimensional vectors using kmeans to four clusters. Finally, for each filter we calculate its angle with its closest cluster center. In the second experiment we perform exactly the same procedure, but with a network with randomly initialized weights. 

\figref{fig:exp_mnist1} shows the results for this experiment. It can be clearly seen that in the trained network, most of the 9-dimensional filters have a relatively small angle with their closest center. Furthermore, the distributions of angles to closest center are significantly different in the case of trained and random networks.  This suggests that there is an inductive bias towards solutions with clustered weights, as predicted by the theory.

We next explore the effect of exploration. Namely, to what degree do larger 
models explore useful regions in weight space. The observation in our theoretical analysis is that both small and large networks can find weights that arrive at zero training error. But large networks will find a wider variety of weights, which will also generalize better. 

Here we propose to test this via the following setup: first train a large network. Then cluster its weights into $k$ clusters and use the centers to initialize a smaller network with $k$ filters. If these $k$ filters generalize better than $k$ filters learned from random initialization, this would suggest that the larger network indeed explored weight space more effectively.

To apply this idea to MNIST, We trained an ``over-parameterized'' 3-layer network with 120 channels. We clustered its filters with k-means into 4 clusters and used the cluster centers as initialization for a small network with 4 channels. Then we trained only the fully connected layer and the bias of the first layer in the small network. In \figref{fig:exp_mnist2} we show that for various training set sizes, the performance of the small network improves with the new initialization and nearly matches the performance of the over-parameterized network. This suggests that the large network explored  better features in the convolutional layer than the smaller one.

\section{Conclusions}
\label{sec:conclusion}
In this paper we consider a simplified learning task on binary vectors to study generalization of overparameterized networks. In this setting, we prove that clustering of weights and exploration of the weight space, imply better generalization performance for overparameterized networks. We empirically verify our findings on the MNIST task.

 We believe that the approach of studying challenging theoretical problems in deep learning through simplified learning tasks can be fruitful. For future work, it would be interesting to consider more complex tasks, e.g., filters of higher dimension or non-binary data, to better understand overparameterization.

\subsubsection*{Acknowledgments}
This research is supported by the Blavatnik Computer Science Research Fund  and by the Yandex Initiative in Machine Learning.

\bibliography{xor}
\bibliographystyle{icml2017}

\appendix


\section{Experiment in Figure \ref{fig:xor_over_param}}
\label{sec:fig1_exp}

We tested the generalization performance in the setup of Section\ref{sec:problem_formulation}. We considered networks with number of channels 4,6,8,20,50,100 and 200. The distribution in this setting has $p_+ = 0.5$ and $p_- = 0.9$ and the training sets are of size 12 (6 positive, 6 negative). Note that in this case the training set contains non-diverse points with high probability. The ground truth network can be realized by a network with 4 channels. For each number of channels we trained a convolutional network 100 times and averaged the results.  In each run we sampled a new training set and new initialization of the weights according to a gaussian distribution with mean 0 and standard deviation 0.00001. For each number of channels $c$, we ran gradient descent with learning rate $\frac{0.04}{c}$ and stopped it if it did not improve the cost for 20 consecutive iterations or if it reached 30000 iterations. The last iteration was taken for the calculations. We plot both average test error over all 100 runs and average test error only over the runs that ended at 0\% train error. In this case, for each number of channels 4,6,8,20,50,100,200 the number of runs in which gradient descent converged to a $0\%$ train error solution is 62, 79, 94, 100, 100, 100, 100, respectively.

\section{Proofs for Section \ref{sec:xor}}
\label{sec:xor_appendix}

In the XOR problem, we are given a training set $S = \left\{(\xx_i,y_i)\right\}_{i=1}^4 \subseteq \{\pm1\}^2 \times \{\pm1\}^2$ consisting of points  $\xx_1 = (1,1)$, $\xx_2 = (-1,1)$, $\xx_3 = (-1,-1)$, $\xx_4 = (1,-1)$ with labels $y_1 = 1$, $y_2 = -1$, $y_3 = 1$ and $y_4 = -1$, respectively. Our goal is to learn the XOR function $f^*:\{\pm 1\}^2 \rightarrow \{\pm 1\}$, such that $f^*(\xx_i) = y_i$ for $1 \le i \le 4$, with a neural network and gradient descent.

\paragraph{\underline{Neural Architecture:}} For this task we consider the following two-layer fully connected network.

\begin{equation}
	\label{eq:xor_network}
	\net(\xx)=\sum_{i=1}^{k}\left[\sigma\left(\wvec{i}\cdot \xx \right) - \sigma\left(\uvec{i}\cdot \xx \right)\right]
\end{equation}
where $W \in \reals^{2k \times 2}$ is the weight matrix whose rows are the $\wvec{i}$ vectors followed by the $\uvec{i}$ vectors, and $\sigma(x)=\max\{0,x\}$ is the ReLU activation applied element-wise. We note that $f^*$ can be implemented with this network for $k=2$ and this is the minimal $k$ for which this is possible. Thus we refer to $k > 2$ as the overparameterized case.

\paragraph{\underline{Training Algorithm:}} The parameters of the network $\net(\xx)$ are learned using gradient descent on the hinge loss objective.
We use a constant learning rate $\eta \le \frac{\ceta}{k}$, where $\ceta < \frac{1}{2}$. The parameters $\net$ are initialized as IID Gaussians with zero mean and standard deviation $\sigma_g \le \frac{\ceta}{16k^{3/2}}$. We consider the hinge-loss objective:
$$\ell(W)=\sum_{(\xx,y)\in S}{\max\{1-y\net(\xx),0\}}$$
where optimization is only over the first layer of the network. We note that for $k \ge 2$ any global minimum $W$ of $\ell$ satisfies $\ell(W) = 0$ and $\sign(\net(\xx_i)) = f^*(\xx_i)$ for $1 \le i \le 4$.

\paragraph{\underline{Notations:}} We will need the following notations. Let $W_t$ be the weight matrix at iteration $t$ of gradient descent. For $1 \leq i \leq k$, denote by $\wvec{i}_t \in \reals^2$ the $i^{th}$ weight vector at iteration $t$. Similarly  we define $\uvec{i}_t \in \reals^2$ to be the $k + i$ weight vector at iteration $t$. 
For each point $\xx_i \in S$ define the following sets of neurons:
$$W_t^+(i) = \left\{j \mid \wvec{j}_t\cdot \xx_i > 0\right\}$$
$$W_t^-(i) = \left\{j \mid \wvec{j}_t\cdot \xx_i < 0\right\}$$
$$U_t^+(i) = \left\{j \mid \uvec{j}_t\cdot \xx_i > 0\right\}$$
$$U_t^-(i) = \left\{j \mid \uvec{j}_t\cdot \xx_i < 0\right\}$$

and for each iteration $t$, let $a_i(t)$ be the number of iterations $0 \le t' \le t$ such that $y_i\nett{t'}(\xx_i) < 1$.

\subsection{Overparameterized Network}

\begin{lem}
	\label{lem:init_num}
	\textbf{Exploration at initialization.} With probability at least $1-8e^{-8}$, for all $1 \le j \le 4$ $$\frac{k}{2} -2\sqrt{k}\le \abs{W_0^+(j)},\abs{U_0^+(j)} \le \frac{k}{2} +2\sqrt{k}$$
\end{lem}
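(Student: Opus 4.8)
The plan is to view each of $|W_0^+(j)|$ and $|U_0^+(j)|$ as a binomial random variable and apply a Hoeffding concentration bound, while exploiting the antipodal structure of the four data points so that the union bound is tight enough to give the stated constant.

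First I would fix $j$ and observe that, since the rows $\wvec{i}_0$ of $W_0$ are drawn IID from a centered Gaussian, each inner product $\wvec{i}_0\cdot\xx_j$ is a nondegenerate centered Gaussian, hence strictly positive with probability exactly $\tfrac12$ and equal to $0$ with probability $0$. Therefore, almost surely, $|W_0^+(j)|=\sum_{i=1}^{k}\mathbbm{1}\{\wvec{i}_0\cdot\xx_j>0\}$ is a sum of $k$ independent $\mathrm{Bernoulli}(\tfrac12)$ variables, i.e.\ it is $\mathrm{Bin}(k,\tfrac12)$. Hoeffding's inequality then yields
\[
\prob\!\left[\big||W_0^+(j)|-\tfrac{k}{2}\big|>2\sqrt{k}\right]\;\le\;2\exp\!\left(-\tfrac{2(2\sqrt{k})^2}{k}\right)\;=\;2e^{-8},
\]
and the identical estimate holds for $|U_0^+(j)|$ using the rows $\uvec{i}_0$.

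Second, I would use the relations $\xx_3=-\xx_1$ and $\xx_4=-\xx_2$, which give $\wvec{i}_0\cdot\xx_3=-\wvec{i}_0\cdot\xx_1$ for every $i$, and hence almost surely $|W_0^+(3)|=k-|W_0^+(1)|$, and likewise $|W_0^+(4)|=k-|W_0^+(2)|$, $|U_0^+(3)|=k-|U_0^+(1)|$, $|U_0^+(4)|=k-|U_0^+(2)|$. Thus whenever $|W_0^+(1)|\in[\tfrac{k}{2}-2\sqrt{k},\,\tfrac{k}{2}+2\sqrt{k}]$ the same holds for $|W_0^+(3)|$, and similarly for the other three pairs. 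It therefore suffices to control the four quantities $|W_0^+(1)|,|W_0^+(2)|,|U_0^+(1)|,|U_0^+(2)|$, and a union bound over these four events — each failing with probability at most $2e^{-8}$ — gives the claim with probability at least $1-8e^{-8}$.

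I do not expect a genuine obstacle here, as this is a routine measure-concentration argument. The two points deserving a line of care are: (i) justifying that the indicators are exactly $\mathrm{Bernoulli}(\tfrac12)$ and mutually independent, which follows from independence of the Gaussian rows together with the symmetry of a centered Gaussian about $0$; and (ii) invoking the antipodal identities $\xx_3=-\xx_1$, $\xx_4=-\xx_2$ so that the union bound runs over four events rather than eight — this is exactly what makes the final failure probability $8e^{-8}$ rather than $16e^{-8}$.
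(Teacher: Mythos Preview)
Your proposal is correct and follows essentially the same approach as the paper: the paper applies Hoeffding's inequality to $|W_0^+(1)|$ viewed as a sum of Bernoulli$(\tfrac12)$ indicators, then uses the antipodal relation $|W_0^+(1)|+|W_0^+(3)|=k$ (your $\xx_3=-\xx_1$) to get the bound for $j=3$ for free, and finishes ``by symmetry and the union bound'' over the four remaining quantities. Your write-up is in fact a bit more explicit about why only four events are needed, but the argument is the same.
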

\begin{proof}
	Without loss of generality consider $\abs{W_0^+(1)}$. Since the sign of a one dimensional Gaussian random variable is a Bernoulli random variable, we get by Hoeffding's inequality $$\prob\left(\left|\abs{W_0^+(1)}-\frac{k}{2}\right| < 2\sqrt{k}\right)\le 2 e^{-\frac{2(2^2 k)}{k}} = 2e^{-8}$$
	Since $\abs{W_0^+(1)} + \abs{W_0^+(3)} = k$ with probability 1, we get that if $\left|\abs{W_0^+(1)}-\frac{k}{2}\right| < 2\sqrt{k}$ then $\left|\abs{W_0^+(3)}-\frac{k}{2}\right| < 2\sqrt{k}$.
	The result now follows by symmetry and the union bound.
\end{proof}

\begin{lem}
	\label{lem:init_bound_disj_xor}
	With probability $\ge 1-\frac{\sqrt{2k}}{\sqrt{\pi} e^{8k}}$, for all $1 \le j \le k $ and $1 \le i \le 4$ it holds that  $\abs{\wvec{j}_0 \cdot \xx_i} \le \frac{\sqrt{2}\eta}{4}$ and $\abs{\uvec{j}_0 \cdot \xx_i} \le \frac{\sqrt{2}\eta}{4}$.
\end{lem}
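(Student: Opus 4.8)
The plan is to turn the statement into a one-line Gaussian tail bound per vector, and then take a union bound over all $2k$ first-layer vectors and the four patterns. First I would fix $1 \le j \le k$ and a pattern $\xx_i$. Since the two coordinates of $\wvec{j}_0$ are i.i.d.\ $\mathcal N(0,\sigma_g^2)$ and $\xx_i \in \{\pm1\}^2$ has $\norm{\xx_i}^2 = 2$, the scalar $\wvec{j}_0 \cdot \xx_i$ is a centered Gaussian with variance $2\sigma_g^2$; the same is true of $\uvec{j}_0 \cdot \xx_i$. Hence the whole lemma reduces to estimating $\prob\big(\abs{Z} > \tfrac{\sqrt2\,\eta}{4}\big)$ for $Z \sim \mathcal N(0,2\sigma_g^2)$.

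Next I would invoke the standard Gaussian tail bound $\prob\big(\abs{\mathcal N(0,1)} \ge x\big) \le \sqrt{\tfrac{2}{\pi}}\,\tfrac{1}{x}\,e^{-x^2/2}$, applied with $x = \tfrac{\sqrt2\,\eta/4}{\sqrt2\,\sigma_g} = \tfrac{\eta}{4\sigma_g}$. The assumed relation between the initialization scale and the step size (namely $\sigma_g \le \tfrac{\eta}{16\sqrt k}$, which the parameter choices $\eta = \tfrac{\ceta}{k}$, $\sigma_g \le \tfrac{\ceta}{16k^{3/2}}$ guarantee) gives $x \ge 4\sqrt k$, hence $\tfrac1x \le \tfrac{1}{4\sqrt k}$ and $e^{-x^2/2} \le e^{-8k}$. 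Plugging in yields $\prob\big(\abs{\wvec{j}_0\cdot\xx_i} > \tfrac{\sqrt2\,\eta}{4}\big) \le \sqrt{\tfrac{2}{\pi}}\cdot\tfrac{1}{4\sqrt k}\,e^{-8k} = \tfrac{1}{2\sqrt{2\pi k}}\,e^{-8k}$, and likewise for $\uvec{j}_0$.

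Finally I would take the union bound, using the observation that $\xx_3 = -\xx_1$ and $\xx_4 = -\xx_2$, so $\abs{\wvec{j}_0\cdot\xx_1} = \abs{\wvec{j}_0\cdot\xx_3}$ and $\abs{\wvec{j}_0\cdot\xx_2} = \abs{\wvec{j}_0\cdot\xx_4}$ (and the same for the $\uvec{j}_0$). Thus there are only $4k$ distinct bad events ($k$ indices, two weight families $\ww$ and $\uu$, and two patterns up to sign), and summing the per-event bound gives total failure probability at most $4k\cdot\tfrac{1}{2\sqrt{2\pi k}}\,e^{-8k} = \tfrac{\sqrt{2k}}{\sqrt\pi}\,e^{-8k} = \tfrac{\sqrt{2k}}{\sqrt\pi\,e^{8k}}$, which is exactly the claimed bound; passing to the complement event gives the lemma. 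I do not expect any genuine difficulty here — it is a routine measure-concentration argument — so the only thing to watch is the constant bookkeeping: substituting the $\sigma_g$–$\eta$ relation so that the exponent comes out to exactly $8k$, and exploiting the $\pm$-pairing of the four XOR patterns so the union bound does not cost an extra factor of $2$.
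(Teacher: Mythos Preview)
Your proposal is correct and follows essentially the same approach as the paper: a Gaussian tail bound on each dot product followed by a union bound. The only cosmetic difference is that you explicitly use the pairing $\xx_3=-\xx_1$, $\xx_4=-\xx_2$ to halve the number of events to $4k$, whereas the paper union-bounds over all $8k$ pairs with a slightly sharper stated per-event constant; either way the final probability comes out to $\frac{\sqrt{2k}}{\sqrt{\pi}\,e^{8k}}$.
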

\begin{proof}
	Let $Z$ be a random variable distributed as $\mathcal{N}(0,\sigma^2)$. Then by Proposition 2.1.2 in \cite{vershynin2017high}, we have  $$\probarg{\abs{Z} \ge t} \le \frac{2\sigma}{\sqrt{2\pi}t}e^{-\frac{t^2}{2\sigma^2}} $$
	Therefore, for all $1 \le j \le k $ and $1 \le i \le 4$, $$\probarg{\abs{\wvec{j}_0 \cdot \xx_i} \ge \frac{\sqrt{2}\eta}{4}} \le \frac{1}{\sqrt{32\pi k}}e^{-8k}$$
	and 
	$$\probarg{\abs{\uvec{j}_0 \cdot \xx_i} \ge \frac{\sqrt{2}\eta}{4}} \le \frac{1}{\sqrt{32\pi k}}e^{-8k}$$
	The result follows by applying a union bound over all $2k$ weight vectors and the four points $\xx_i$, $1 \le i \le 4$. 
\end{proof}

\begin{lem}
	\label{lem:clustering_xor_restated}
	\textbf{Clustering Dynamics. Lemma \ref{lem:clustering_xor} restated and extended.} With probability $\ge 1-\frac{\sqrt{2k}}{\sqrt{\pi} e^{8k}}$, for all $t \ge 0$ there exists $\alpha_i$, $1 \le i \le 4$ such that $\left|\alpha_i\right| \le \eta$ and the following holds:
	\begin{enumerate}
		\item For $i \in \{1,3\}$ and $j \in W_0^+(i)$, it holds that $\wvec{j}_t =  \wvec{j}_0 + a_i(t)\eta\xx_i + \alpha_i\xx_2$.
		\item For $i \in \{2,4\}$ and $j \in U_0^+(i)$, it holds that $\uvec{j}_t =  \uvec{j}_0 + a_i(t)\eta\xx_i + \alpha_i\xx_1$.
	\end{enumerate}
\end{lem}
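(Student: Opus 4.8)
The plan is to first condition on the event of Lemma~\ref{lem:init_bound_disj_xor} --- that $|\wvec{j}_0\cdot\xx_i|\le\tfrac{\sqrt2\eta}{4}$ and $|\uvec{j}_0\cdot\xx_i|\le\tfrac{\sqrt2\eta}{4}$ for every filter and every $i$ --- which holds with probability $\ge 1-\tfrac{\sqrt{2k}}{\sqrt\pi e^{8k}}$; the rest of the argument is deterministic (discarding a null set of initializations on which some $\wvec{j}_t\cdot\xx_i$ vanishes). I would use that $\{\xx_1,\xx_2\}$ is an orthogonal basis of $\reals^2$ with $\|\xx_1\|^2=\|\xx_2\|^2=2$, $\xx_3=-\xx_1$ and $\xx_4=-\xx_2$, so every weight vector decomposes in this basis and all inner products $\wvec{j}_t\cdot\xx_i$ are read off immediately. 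Differentiating the hinge loss gives the update
\[
\wvec{j}_{t+1}=\wvec{j}_t+\eta\sum_{(\xx,y)\in S}\mathbbm{1}\{y\nett{t}(\xx)<1\}\,y\,\mathbbm{1}\{\wvec{j}_t\cdot\xx>0\}\,\xx ,
\]
and the same for $\uvec{j}_t$ with an extra overall minus sign (since $\uvec{j}$ enters $\net$ negatively). Claims 1 and 2 are then proved by induction on $t$, separately for each fixed filter $j$, so that $\alpha_i$ is permitted to depend on $j$ and $t$.

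For claim 1, fix $j\in W_0^+(i)$ with $i\in\{1,3\}$; take $i=1$, the case $i=3$ being identical after swapping $\xx_1\leftrightarrow\xx_3$. I would carry the stronger invariant $\wvec{j}_t=\wvec{j}_0+a_1(t)\eta\,\xx_1+m_t\eta\,\xx_2$ with $m_t\in\mathbb{Z}$, $|m_t|\le1$, which yields the lemma with $\alpha_1=m_t\eta$. The base case $t=0$ is trivial. For the inductive step: from the invariant, $\wvec{j}_t\cdot\xx_1=\wvec{j}_0\cdot\xx_1+2a_1(t)\eta>0$ (using $\wvec{j}_0\cdot\xx_1>0$ and $a_1(t)\ge0$), hence $\wvec{j}_t\cdot\xx_3<0$; so among $\{\xx_1,\xx_3\}$ only $\xx_1$ ever contributes to the update, and summing these contributions shows the $\xx_1$-coefficient equals exactly $a_1(t)\eta$ at every $t$, by the definition of $a_1(t)$ as the running count of iterations at which the hinge on $\xx_1$ is active. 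Among $\{\xx_2,\xx_4\}=\{\xx_2,-\xx_2\}$ at most one firing indicator is $1$, so the $\xx_2$-coefficient changes by an element of $\{-\eta,0,\eta\}$ per step; since $|\wvec{j}_0\cdot\xx_2|<\eta$ and $|m_t|\le1$, the sign of $\wvec{j}_t\cdot\xx_2=\wvec{j}_0\cdot\xx_2+2m_t\eta$ equals the sign of $m_t$ whenever $m_t\ne0$, which forces the change to be $-\eta$ or $0$ when $m_t=1$, $+\eta$ or $0$ when $m_t=-1$, and anything in $\{-\eta,0,\eta\}$ when $m_t=0$; in every case $|m_{t+1}|\le1$, closing the induction. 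Claim 2 is the mirror image: for $j\in U_0^+(i)$ with $i\in\{2,4\}$ the extra minus in the $\uvec{j}$ update together with $y_i=-1$ makes $\uvec{j}$ accumulate exactly $a_i(t)\eta$ in the direction of $\xx_i$ while the drift stays confined to the orthogonal direction $\xx_1$ and bounded by $\eta$ via the identical band argument.

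The main obstacle is precisely the bound $|m_t|\le1$ on the orthogonal drift. This is where the tiny initialization is essential: because the per-step change in the $\xx_2$ direction is $\pm\eta$ or $0$ and $|\wvec{j}_0\cdot\xx_2|<\eta$, once $|m_t|=1$ the sign of $\wvec{j}_t\cdot\xx_2$ is pinned, which fixes the firing pattern on $\xx_2,\xx_4$ so that the next increment points back toward $0$; hence the drift oscillates within a width-$\eta$ band rather than accumulating, and since the $\xx_2$-coefficient lives in $\eta\mathbb{Z}$ this actually confines it to $\{-\eta,0,\eta\}$ rather than merely to $(-2\eta,2\eta)$. The accompanying bookkeeping point is to maintain $\wvec{j}_t\cdot\xx_1>0$ throughout so that the opposite pattern $\xx_3$ never activates the neuron, which is exactly what makes the $\xx_1$-coefficient equal to $a_1(t)\eta$ and not something larger.
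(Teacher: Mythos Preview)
Your proposal is correct and follows essentially the same approach as the paper: condition on the small-initialization event of Lemma~\ref{lem:init_bound_disj_xor}, then induct on $t$, tracking the $\xx_i$-component (which accumulates by exactly $a_i(t)\eta$ because the opposite pattern $-\xx_i$ never activates the neuron) and the orthogonal component (which oscillates in a band of width~$\eta$). Your explicit integer bookkeeping $m_t\in\{-1,0,1\}$ and the argument that once $|m_t|=1$ the sign of the orthogonal inner product is pinned, forcing the next increment back toward zero, is precisely the mechanism the paper uses, though the paper compresses this into a single ``assume without loss of generality $\alpha_1^t>0$'' and a one-line case check; your version spells it out more carefully. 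Your remark that $\alpha_i$ must be allowed to depend on $j$ (not only on $i$ and $t$) is also correct and matches the main-text formulation of Lemma~\ref{lem:clustering_xor}, where the residual vector $\vv_t$ is quantified per filter.
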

\begin{proof}
	By Lemma \ref{lem:init_bound_disj_xor}, with probability $\ge 1-\frac{\sqrt{2k}}{\sqrt{\pi} e^{8k}}$, for all $1 \le j \le k $ and $1 \le i \le 4$ it holds that  $\abs{\wvec{j}_0 \cdot \xx_i} \le \frac{\sqrt{2}\eta}{4}$ and $\abs{\uvec{j}_0 \cdot \xx_i} \le \frac{\sqrt{2}\eta}{4}$. It suffices to prove the claim for $W_t^+(1)$. The other cases follow by a symmetry. The proof is by induction. Assume that $j \in W_t^+(1)$. For $t=0$ the claim holds with $\alpha_1^t = 0$. For a point $(\xx,y)$ let $\ell_{(\xx,y)} = \max\{1-y\net(\xx),0\}$. Then it holds that $\frac{\partial \ell_{(\xx,y)}}{\partial \wvec{i}}\left(W\right) = -y\sigma'(\wvec{i}\cdot \xx)\xx\mathbbm{1}_{y\net(\xx) < 1}$.
	Assume without loss of generality that $\alpha_1^t > 0$. Define $\beta_1 = \mathbbm{1}_{\net(\xx_1) < 1}$ and $\beta_2 = \mathbbm{1}_{\net(\xx_2) > -1}$. Using these notations, we have 
	\begin{align*}
		\wvec{j}_{t+1} &=  \wvec{j}_t + \beta_1\eta\xx_1 - \beta_2\eta\xx_2 \\ &= \wvec{j}_0 + (a_i(t)+\beta_1)\xx_i + (\alpha_i-\beta_2\eta)\xx_2
	\end{align*}
	and for any values of $\beta_1, \beta_2 \in \{0,1\}$ the induction step follows.

\end{proof}

For each point $\xx_i$, define the following sums:
$$S_t^+(i) = \sum_{j\in W_t^+(i)}\sigma\left(\wvec{j}_t\cdot \xx_i \right)$$
$$S_t^-(i) = \sum_{j\in W_t^-(i)}\sigma\left(\wvec{j}_t\cdot \xx_i \right)$$
$$R_t^+(i) = \sum_{j\in U_t^+(i)}\sigma\left(\uvec{j}_t\cdot \xx_i \right)$$
$$R_t^-(i) = \sum_{j\in U_t^-(i)}\sigma\left(\uvec{j}_t\cdot \xx_i \right)$$

We will prove the following lemma regarding $S_t^+(1), S_t^-(1), R_t^+(1), R_t^-(1)$ for $i = 1$. By symmetry, analogous lemmas follow for $i \neq 1$.

\begin{lem}
	\label{lem:init_summations}
	The following holds with probability $\ge 1-\frac{\sqrt{2k}}{\sqrt{\pi} e^{8k}}$:
	\begin{enumerate}
		\item For all $t \ge 0$, $R_t^+(1) + R_t^-(1) \le k \eta$.
		\item Let $t \ge 0$ then $S_t^-(1)	= 0$. Furthermore, if $-y\nett{t}(\xx_1) < 1$, then $S_{t+1}^+(1) \ge S_t^+(1) + \abs{W_0^+(1)}\eta$. Otherwise, if $-y\nett{t}(\xx_1) \ge 1$ then $S_{t+1}^+(1) = S_t^+(1)$.
	\end{enumerate}
\end{lem}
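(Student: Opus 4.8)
The plan is to read off all three assertions from the explicit form of the hinge-loss gradient, working on the event of Lemma~\ref{lem:init_bound_disj_xor} — which has probability $\ge 1-\frac{\sqrt{2k}}{\sqrt{\pi}e^{8k}}$ (exactly the claimed probability) and on which Lemma~\ref{lem:clustering_xor_restated} also holds — together with the geometry $\xx_3=-\xx_1$, $\xx_4=-\xx_2$ and $\xx_1\cdot\xx_2=0$. The elementary fact used throughout is that $\frac{\partial}{\partial\wvec{j}}\max\{1-y\net(\xx),0\}=-y\,\sigma'(\wvec{j}\cdot\xx)\,\xx\,\mathbbm{1}_{y\net(\xx)<1}$ and $\frac{\partial}{\partial\uvec{j}}\max\{1-y\net(\xx),0\}=+y\,\sigma'(\uvec{j}\cdot\xx)\,\xx\,\mathbbm{1}_{y\net(\xx)<1}$, so in one gradient step the contributions of $\xx_1,\xx_3$ move $\wvec{j}$ and $\uvec{j}$ along $\pm\xx_1$, while the contributions of $\xx_2,\xx_4$ move them along $\pm\xx_2$, which is orthogonal to $\xx_1$.

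The $S_t^-(1)=0$ part, and the fact that $R_t^-(1)=0$, are immediate: if $j\in W_t^-(1)$ then $\wvec{j}_t\cdot\xx_1<0$, so $\sigma(\wvec{j}_t\cdot\xx_1)=0$, and likewise for $R_t^-(1)$; hence the first assertion is equivalent to $R_t^+(1)\le k\eta$. Since $U_0^+(2)\cup U_0^+(4)=[k]$ with probability one, I would substitute each $\uvec{j}_t$ into Lemma~\ref{lem:clustering_xor_restated}: for the relevant $i\in\{2,4\}$, $\uvec{j}_t=\uvec{j}_0+a_i(t)\eta\xx_i+\alpha_i\xx_1$ with $\abs{\alpha_i}\le\eta$, and since $\xx_i\cdot\xx_1=0$ for $i\in\{2,4\}$ this gives $\uvec{j}_t\cdot\xx_1=\uvec{j}_0\cdot\xx_1+\alpha_i\|\xx_1\|^2$, hence $\sigma(\uvec{j}_t\cdot\xx_1)=O(\eta)$ using $\abs{\uvec{j}_0\cdot\xx_1}\le\frac{\sqrt2}{4}\eta$. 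Summing over the at most $k$ indices of $U_t^+(1)$ bounds $R_t^+(1)+R_t^-(1)=R_t^+(1)$; obtaining precisely $k\eta$ rather than a larger multiple is the one place the sharp constants inside Lemma~\ref{lem:clustering_xor_restated} matter.

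For the second assertion the structural point is that $W_t^+(1)=W_0^+(1)$ for all $t$. For $j\in W_0^+(1)$ I would argue by induction on $t$ that $\wvec{j}_t\cdot\xx_1>0$ and that $\wvec{j}_{t+1}\cdot\xx_1=\wvec{j}_t\cdot\xx_1+2\eta$ if $\xx_1$ is hinge-active at step $t$ and $\wvec{j}_{t+1}\cdot\xx_1=\wvec{j}_t\cdot\xx_1$ otherwise: while $\wvec{j}_t\cdot\xx_1>0$ one has $\sigma'(\wvec{j}_t\cdot\xx_1)=1$ and $\sigma'(\wvec{j}_t\cdot\xx_3)=\sigma'(-\wvec{j}_t\cdot\xx_1)=0$, so $\xx_1$ adds $+\eta\xx_1$ to the update exactly when it is active, $\xx_3$ adds nothing, and $\xx_2,\xx_4$ add a multiple of $\xx_2\perp\xx_1$; dotting with $\xx_1$ and using $\|\xx_1\|^2=2$ gives the increment. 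Symmetrically, for $j\in W_0^+(3)=[k]\setminus W_0^+(1)$ (a probability-one complement) the same reasoning shows $\wvec{j}_t\cdot\xx_1$ is non-increasing and stays $<0$. Consequently $S_t^+(1)=\sum_{j\in W_0^+(1)}\wvec{j}_t\cdot\xx_1$, and summing the per-neuron identity over $W_0^+(1)$ yields $S_{t+1}^+(1)=S_t^+(1)+2\eta\abs{W_0^+(1)}$ when $\xx_1$ is hinge-active at iteration $t$ (the case covered by the hypothesis ``$-y\nett{t}(\xx_1)<1$'', equivalently $a_1(t+1)=a_1(t)+1$), which in particular implies the stated $S_{t+1}^+(1)\ge S_t^+(1)+\abs{W_0^+(1)}\eta$, and $S_{t+1}^+(1)=S_t^+(1)$ otherwise. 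Running the identical computation against $W_0^+(3)$, $U_0^+(2)$ and $U_0^+(4)$ produces the analogous lemmas referred to after the statement.

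There is no deep obstacle here — everything is a one-line induction plus orthogonality bookkeeping — but the fiddly step is pinning down the constant in $R_t^+(1)\le k\eta$: with only $\abs{\alpha_i}\le\eta$ in hand the naive sum over $U_t^+(1)$ is a constant multiple of $k\eta$ larger than $1$, so one must use the tighter control on the orthogonal drift produced inside the proof of Lemma~\ref{lem:clustering_xor_restated}, or the observation that $\uvec{j}_t\cdot\xx_1$ never rises appreciably above its negligible initial value, to bring the constant down to $1$.
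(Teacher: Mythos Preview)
Your proposal is correct and takes essentially the same approach as the paper: the paper's proof is just two sentences invoking Lemma~\ref{lem:clustering_xor_restated} (part~1 by contradiction --- if the sum exceeded $k\eta$ some $\sigma(\uvec{j}_t\cdot\xx_1)$ would exceed $\eta$ --- and part~2 as ``direct consequences''), and you unpack exactly those invocations via the explicit form $\uvec{j}_t=\uvec{j}_0+a_i(t)\eta\xx_i+\alpha_i\xx_1$ and the orthogonality $\xx_i\cdot\xx_1=0$ for $i\in\{2,4\}$. Your remark about the constant in $R_t^+(1)\le k\eta$ is well taken --- the stated bound $|\alpha_i|\le\eta$ alone gives a slightly larger multiple of $k\eta$, and one needs the finer two-value invariant from inside the proof of Lemma~\ref{lem:clustering_xor_restated}; the paper's own one-line proof glosses over this, but the exact constant is immaterial for the downstream Proposition~\ref{prop:convergence}.
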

\begin{proof}
	\begin{enumerate}
		\item Assume by contradiction that there exists $t > 0$, such that $R_t^+(1) + R_t^-(1) > k\eta$. It follows that, without loss of generality, there exists $j \in  U_t^+(1)$ such that $\sigma\left(\uvec{j}_t\cdot \xx_1 \right) > \eta$. However, this contradicts Lemma \ref{lem:clustering_xor_restated}.
		\item All of the claims are direct consequences of Lemma \ref{lem:clustering_xor_restated}.
	\end{enumerate}	
\end{proof}

\begin{prop}
	\label{prop:convergence}
	Assume that $k > 16$. With probability $\ge 1 - \frac{\sqrt{2k}}{\sqrt{\pi} e^{8k}} - 8e^{-8}$, for all $i$, if until iteration $T$ there were at least $l \ge \frac{4\sqrt{k}}{\sqrt{k}-2}$ iterations, in which $-y\nett{t}(\xx_i) < 1$, then it holds that $-y\nett{t}(\xx_i) \ge 1$ for all $t \ge T$.
\end{prop}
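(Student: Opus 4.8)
The plan is to fix, once and for all, two high-probability events that we already have in hand: the exploration event of Lemma~\ref{lem:init_num} (so that $|W_0^+(j)|,|U_0^+(j)|\in[\tfrac k2-2\sqrt k,\tfrac k2+2\sqrt k]$ for every $j$), and the small-norm initialization event of Lemma~\ref{lem:init_bound_disj_xor}, which is exactly what drives both the clustering description of Lemma~\ref{lem:clustering_xor_restated} and the summation estimates of Lemma~\ref{lem:init_summations}. A union bound over the complements of these two events gives precisely the probability $1-\frac{\sqrt{2k}}{\sqrt\pi e^{8k}}-8e^{-8}$ claimed in the proposition, and on the good event everything below is deterministic. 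By the symmetry of the training set and the architecture (the four analogues of Lemma~\ref{lem:init_summations} hold), it suffices to argue for a single point; take $\xx_1$, a positive point. Since the only neurons contributing to $\nett{t}(\xx_1)=\sum_j\sigma(\wvec{j}_t\cdot\xx_1)-\sum_j\sigma(\uvec{j}_t\cdot\xx_1)$ are those in $W_t^+(1)$ and $U_t^+(1)$, we may write $\nett{t}(\xx_1)=S_t^+(1)-R_t^+(1)$.

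Next I would feed in the two one-sided estimates of Lemma~\ref{lem:init_summations}. On the subtracted side, $R_t^+(1)\le k\eta$ for every $t$. On the added side, $S_t^+(1)$ is non-decreasing in $t$, it is unchanged at any iteration where the loss term of $\xx_1$ is inactive, and it increases by at least $|W_0^+(1)|\eta$ at every iteration where that term is active; this is because, by Lemma~\ref{lem:clustering_xor_restated}, each $j\in W_0^+(1)$ satisfies $\wvec{j}_t\cdot\xx_1=\wvec{j}_0\cdot\xx_1+2a_1(t)\eta$, so $W_0^+(1)$ stays inside $W_t^+(1)$ once $a_1(t)\ge 1$ and each summand grows. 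Therefore, if the loss term of $\xx_1$ has been active on at least $l$ iterations up to time $T$, then for every $t\ge T$ (where $a_1(t)\ge l$ by monotonicity of $a_1$) we get $S_t^+(1)\ge l\,|W_0^+(1)|\eta$ and hence $\nett{t}(\xx_1)\ge l\,|W_0^+(1)|\eta-k\eta$. Substituting $|W_0^+(1)|\ge \tfrac k2-2\sqrt k$ together with the hypothesis $l\ge\frac{4\sqrt k}{\sqrt k-2}$, and using the parameter constraints $\eta\le\frac{\ceta}{k}$, $\ceta<\tfrac12$ to pin down the size of $k\eta$, a short computation yields $\nett{t}(\xx_1)\ge1$ for all $t\ge T$ — i.e.\ the loss term of $\xx_1$ is inactive from iteration $T$ onward, which is exactly the assertion for $i=1$.

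The identical three-step argument — decompose $\nett{t}(\xx_i)$, bound the opposite-label sum, drive the same-label sum up — applies to $\xx_3$ (positive, with $W_0^+(3)$ replacing $W_0^+(1)$), and to the negative points $\xx_2,\xx_4$, where the roles of the $\ww$-filters and $\uu$-filters swap ($R_t^+(2)$ now grows while $S_t^+(2)$ stays bounded) and all inequalities on $\nett{t}$ are reversed; the symmetric versions of Lemmas~\ref{lem:clustering_xor_restated} and~\ref{lem:init_summations} supply what is needed, giving the proposition for all $i$. The step I expect to be the real obstacle is the persistence claim — that once the loss term of $\xx_i$ shuts off at $T$ it never turns back on — since gradient descent keeps updating after $T$ on account of the other three (still-active) points. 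The clustering description is what makes this go through: the component of $\wvec{j}_t$ ($j\in W_0^+(1)$) along $\xx_1$ is $\wvec{j}_0\cdot\xx_1+2a_1(t)\eta$ with $a_1$ non-decreasing, so $S_t^+(1)$ can only increase, while the updates caused by $\xx_2,\xx_3,\xx_4$ move the $\uu$-filters only in directions essentially orthogonal to $\xx_1$, keeping $R_t^+(1)\le k\eta$; hence $\xx_1$'s margin cannot be eroded. As a consistency check on the constant, four points each needing at most $\frac{4\sqrt k}{\sqrt k-2}$ active iterations reproduces the bound $T\le\frac{16\sqrt k}{\sqrt k-2}$ of Theorem~\ref{thm:xor_overparam}.
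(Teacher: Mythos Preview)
Your proposal is correct and follows essentially the same route as the paper's proof: condition on the events of Lemma~\ref{lem:init_num} and Lemma~\ref{lem:init_bound_disj_xor} (giving the stated probability by a union bound), reduce by symmetry to $i=1$, decompose $\nett{t}(\xx_1)$ into the $S_t^+$- and $R_t^+$-type sums, and then invoke Lemma~\ref{lem:init_summations} to bound $R_t^+(1)\le k\eta$ for all $t$ and to push $S_t^+(1)$ up by at least $|W_0^+(1)|\eta\ge(\tfrac{k}{2}-2\sqrt k)\eta$ at each active iteration. Your extra discussion of the persistence issue and the consistency check with Theorem~\ref{thm:xor_overparam} are sound elaborations of points the paper leaves implicit, but the skeleton of the argument is the same.
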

\begin{proof}
	Without loss of generality assume that $i = 1$. By Lemma \ref{lem:init_summations} and Lemma \ref{lem:init_bound_disj}, with probability $\ge 1 - \frac{\sqrt{2k}}{\sqrt{\pi} e^{8k}} - 8e^{-8}$, if $-y\nett{t}(\xx_1) < 1$ then $S_{t+1}^+(1) \ge S_t^+(1) + \left(\frac{k}{2}-2\sqrt{k}\right)\eta$. Therefore, by Lemma \ref{lem:init_summations}, for all $t \ge T$
	\begin{align*}
		\nett{t}(\xx_1) &= S_{t}^+(1) + S_{t}^-(1) - R_{t}^+(1) - R_{t}^-(1) \\ &\ge \left(\frac{k}{2}-2\sqrt{k}\right)l\eta - k\eta \\ &\ge 1
	\end{align*}
	where the last ineqaulity follows by the assumption on $l$.
\end{proof}

\begin{thm} 
	\label{thm:xor_overparam_restated}
	\textbf{Convergence and clustering. Theorem \ref{thm:xor_overparam} restated. }Assume that $k > 16$. With probability $\ge 1 - \frac{\sqrt{2k}}{\sqrt{\pi} e^{8k}} - 8e^{-8}$, after at most $T \le \frac{16\sqrt{k}}{\sqrt{k}-2}$ iterations, gradient descent converges to a global minimum. Furthermore, for $i \in \{1,3\}$ and all $j \in W_0^+(i)$, the angle between  $\wvec{j}_T$ and $\xx_i$ is at most $\arccos\left(\frac{1-2\ceta}{1+\ceta}\right)$. Similarly, for $i \in \{2,4\}$ and all $j \in U_0^+(i)$, the angle between  $\uvec{j}_T$ and $\xx_i$ is at most $\arccos\left(\frac{1-2\ceta}{1+\ceta}\right)$.
\end{thm}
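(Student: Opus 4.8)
The plan is to assemble the theorem from Lemma~\ref{lem:init_num} (exploration), Lemma~\ref{lem:clustering_xor_restated} (clustering dynamics), Lemma~\ref{lem:init_summations} and Proposition~\ref{prop:convergence}, all invoked as given. First I would condition on the intersection of the events of Lemma~\ref{lem:init_num} and Lemma~\ref{lem:init_bound_disj_xor} (small initial weights); by the union bound this event has probability at least $1-\frac{\sqrt{2k}}{\sqrt{\pi}e^{8k}}-8e^{-8}$, and on it the conclusions of Lemmas~\ref{lem:clustering_xor_restated}, \ref{lem:init_summations} and Proposition~\ref{prop:convergence} all hold; everything below is deterministic on this event. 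Since the four points and the two layers are symmetric, I would run the argument for $i\in\{1,3\}$ with the filters $\wvec{j}_t$, invoking the already-stated versions of Lemmas~\ref{lem:clustering_xor_restated} and~\ref{lem:init_summations} for the remaining indices and for $\uvec{j}_t$.

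\noindent\textbf{Step 1 (number of iterations).} Call $\xx_i$ active at iteration $t$ when $y_i\nett{t}(\xx_i)<1$; then $a_i(t)$ counts the active iterations of $\xx_i$ up to $t$, and $a_i(0)=1$ because $\abs{\nett{0}(\xx_i)}<1$ on our event. Proposition~\ref{prop:convergence} says that once $\xx_i$ has been active at least $\frac{4\sqrt k}{\sqrt k-2}$ times it is classified with margin $\ge1$ from then on, so each of the four points is active in only $O\!\left(\frac{\sqrt k}{\sqrt k-2}\right)$ iterations in total. As long as $\ell(W_t)>0$ at least one point is active, so, summing the per-point bound of Proposition~\ref{prop:convergence} over $i=1,\dots,4$, the first iteration $T$ with $\ell(W_T)=0$ obeys $T\le\frac{16\sqrt k}{\sqrt k-2}$; and since the hinge-loss gradient vanishes once $\ell=0$, $W_T$ is a global minimum at which gradient descent stays.

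\noindent\textbf{Step 2 (angle bound).} Fix $i\in\{1,3\}$ and $j\in W_0^+(i)$. Lemma~\ref{lem:clustering_xor_restated} gives $\wvec{j}_T=a_i(T)\eta\,\xx_i+\wvec{j}_0+\alpha_i\xx_2$ with $\abs{\alpha_i}\le\eta$, and Lemma~\ref{lem:init_bound_disj_xor} gives $\abs{\wvec{j}_0\cdot\xx_l}\le\frac{\sqrt2\eta}{4}$, so $\norm{\wvec{j}_0}\le\frac{\eta}{2\sqrt2}$. Using $\xx_i\perp\xx_2$ we get $\wvec{j}_T\cdot\xx_i=2a_i(T)\eta+\wvec{j}_0\cdot\xx_i>0$ (since $a_i(T)\ge1$) and $\norm{\wvec{j}_T}\le\sqrt2\,a_i(T)\eta+O(\eta)$. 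It remains to lower bound $a_i(T)$. From $\nett{T}(\xx_i)\ge1$ (global minimum) and $\nett{T}(\xx_i)=S_T^+(i)-R_T^+(i)-R_T^-(i)\le S_T^+(i)$ (using $S_T^-(i)=0$ from Lemma~\ref{lem:init_summations} and $R_T^\pm(i)\ge0$), together with the observation that any $j'\notin W_0^+(i)$ lies in $W_0^+(i')$ for the other index $i'\in\{1,3\}$ and hence, by Lemma~\ref{lem:clustering_xor_restated}, satisfies $\wvec{j'}_T\cdot\xx_i=-(2a_{i'}(T)\eta+\wvec{j'}_0\cdot\xx_{i'})<0$, one gets $W_T^+(i)=W_0^+(i)$ and therefore
\[
1\ \le\ S_T^+(i)\ =\ \sum_{j'\in W_0^+(i)}\big(2a_i(T)\eta+\wvec{j'}_0\cdot\xx_i\big)\ \le\ \abs{W_0^+(i)}\left(2a_i(T)\eta+\frac{\sqrt2\eta}{4}\right).
\]
Plugging $\abs{W_0^+(i)}\le\frac{k}{2}+2\sqrt k$, $\eta\le\frac{\ceta}{k}$ and $k>16$ into this yields $a_i(T)\ge\frac{1}{2\ceta}-O(1)=\Omega(1/\ceta)$. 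Finally, in $\cos\angle(\wvec{j}_T,\xx_i)=\frac{\wvec{j}_T\cdot\xx_i}{\norm{\wvec{j}_T}\norm{\xx_i}}$ the numerator is at least $2a_i(T)\eta-\frac{\sqrt2\eta}{4}$ and $\norm{\wvec{j}_T}\le\sqrt2\,a_i(T)\eta+O(\eta)$, so the $O(\eta)$ errors are negligible next to $a_i(T)\eta=\Omega(\eta/\ceta)$, and a direct computation gives $\cos\angle(\wvec{j}_T,\xx_i)\ge\frac{1-2\ceta}{1+\ceta}$ for every $\ceta<\frac12$. The bound for $\uvec{j}_T$, $i\in\{2,4\}$, is obtained the same way, using $\nett{T}(\xx_i)\le-1$ (so $R_T^+(i)+R_T^-(i)\ge1$) in place of $S_T^+(i)\ge1$.

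\noindent\textbf{Main obstacle.} With the cited lemmas in hand the proof is essentially assembly, and the genuinely delicate point is the last step: turning the qualitative statement ``$W_T$ is a global minimum'' into the quantitative $a_i(T)=\Omega(1/\ceta)$ that produces the exact constant $\frac{1-2\ceta}{1+\ceta}$. This relies on the precise scalings $\eta\le\ceta/k$ and $\abs{W_0^+(i)}=\frac{k}{2}\pm2\sqrt k$ and, crucially, on the fact inherited from the clustering lemma that the active set $W_t^+(i)$ never changes, so that $S_t^+(i)$ has the clean closed form above; one then verifies that these combine with enough slack uniformly over $\ceta<\frac12$. The per-point iteration count of Step~1 and the reduction by symmetry are routine bookkeeping.
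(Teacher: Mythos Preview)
Your proposal is correct and follows essentially the same route as the paper: condition on the intersection of the events of Lemma~\ref{lem:init_num} and Lemma~\ref{lem:init_bound_disj_xor}, sum the per-point active-iteration bound of Proposition~\ref{prop:convergence} over the four inputs to get $T\le\frac{16\sqrt k}{\sqrt k-2}$, and then at the global minimum use $S_T^+(i)\ge 1$ together with the clustering formula of Lemma~\ref{lem:clustering_xor_restated} to lower bound $a_i(T)$ and read off the cosine. One small tightening for the ``direct computation'': in the final cosine estimate use the numerator bound $\wvec{j}_T\cdot\xx_i\ge 2a_i(T)\eta$ (available since $\wvec{j}_0\cdot\xx_i>0$ for $j\in W_0^+(i)$, as you already noted) rather than $2a_i(T)\eta-\frac{\sqrt2\eta}{4}$; with this and the paper's slightly looser bound $|W_0^+(i)|\le k$ one gets $a_i(T)\ge\frac{1}{2\ceta}-1$ and then $\cos\angle(\wvec{j}_T,\xx_i)\ge\frac{2a_i(T)}{2a_i(T)+3}\ge\frac{1-2\ceta}{1+\ceta}$ on the nose.
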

\begin{proof}
	Proposition \ref{prop:convergence} implies that there are at most $\frac{16\sqrt{k}}{\sqrt{k}-2}$ iterations in which there exists $(\xx_i,y_i)$ such that $y_i\nett{t}(\xx_i) < 1$. After at most that many iterations, gradient descent converges to a global minimum.
	
	Without loss of generality, we prove the clustering claim for $i = 1$ and all $j \in W_0^+(1)$. At a global minimum, $\nett{T}(\xx_1) \ge 1$. Therefore, by Lemma \ref{lem:clustering_xor_restated} and Lemma \ref{lem:init_summations} it follows that $$2\eta(a_i(T) + 1) \abs{W^+_0(1)} \ge S_t^+(1) \ge 1$$
	and thus $a_i(T) \ge \frac{1}{2\ceta} - 1$. Therefore, for any $j \in W^+_0(1) $, the cosine of the angle between $\wvec{j}_T$ and $\xx_1$ is at least
	$$\frac{ (\wvec{j}_0 + a_1(T)\eta\xx_1 + \alpha_1^t\xx_2) \cdot \xx_1}{\sqrt{2}(\norm{\wvec{j}_0} + \sqrt{2}a_i(T)\eta + \sqrt{2}\eta)} \ge \frac{2a_1(T)}{2a_1(T) + 3} \ge \frac{1-2\ceta}{1+\ceta}$$
	
	where we used the triangle inequality and Lemma \ref{lem:clustering_xor_restated}. The claim follows.
\end{proof}

\subsection{Small Network}

\begin{lem}
	\label{lem:non_exploration_init}
	\textbf{Non-exploration at initialization.} With probability at least $0.75$, there exists $i \in \{1,3\}$ such that  $W_0^+(i) = \emptyset$ or $i \in \{2,4\}$ such that $U_0^+(i) = \emptyset$.
\end{lem}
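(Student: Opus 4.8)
The plan is to reduce the claim to an elementary computation about the signs of two independent Gaussian vectors, exploiting the antipodal structure of the data. First I would record that $\xx_3 = -\xx_1$ and $\xx_4 = -\xx_2$, so that $W_0^+(3) = \{j \mid \wvec{j}_0 \cdot \xx_1 < 0\}$ and $U_0^+(4) = \{j \mid \uvec{j}_0 \cdot \xx_2 < 0\}$. Consequently, for the network with $k = 2$: the event $W_0^+(1) = \emptyset$ is exactly ``$\wvec{j}_0 \cdot \xx_1 < 0$ for both $j \in \{1,2\}$'', the event $W_0^+(3) = \emptyset$ is ``$\wvec{j}_0 \cdot \xx_1 > 0$ for both $j$'', and analogously $U_0^+(2) = \emptyset$ (resp.\ $U_0^+(4) = \emptyset$) means that $\uvec{j}_0 \cdot \xx_2$ is negative (resp.\ positive) for both $j$.

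Next I would use that each weight vector is initialized with IID $\mathcal{N}(0,\sigma_g^2)$ coordinates, so $\wvec{j}_0 \cdot \xx_1$ is a nondegenerate centered Gaussian; hence $\wvec{j}_0 \cdot \xx_1 > 0$ and $\wvec{j}_0 \cdot \xx_1 < 0$ each have probability $\tfrac12$ (the event of exact equality having probability $0$), and the two signs for $j = 1$ and $j = 2$ are independent because $\wvec{1}_0$ and $\wvec{2}_0$ are independent. Therefore $\probarg{W_0^+(1) = \emptyset} = \probarg{W_0^+(3) = \emptyset} = \tfrac14$, and since these two events are disjoint, $\probarg{W_0^+(1) = \emptyset \text{ or } W_0^+(3) = \emptyset} = \tfrac12$. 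The identical argument applied to $\uvec{1}_0, \uvec{2}_0$ and the point $\xx_2$ gives $\probarg{U_0^+(2) = \emptyset \text{ or } U_0^+(4) = \emptyset} = \tfrac12$.

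Finally I would combine the two halves: the first event is a function of $(\wvec{1}_0, \wvec{2}_0)$ only and the second of $(\uvec{1}_0, \uvec{2}_0)$ only, and these pairs are independent, so the probability that \emph{neither} occurs equals $\tfrac12 \cdot \tfrac12 = \tfrac14$; hence the probability that at least one occurs --- which is precisely the event in the statement --- is $\tfrac34 \ge 0.75$. There is essentially no obstacle here: the only points requiring care are tracking the antipodal identifications correctly and keeping straight that the two ``both weights on the same side'' events within a block are \emph{disjoint}, whereas the $\ww$-block and $\uu$-block events are \emph{independent}; the measure-zero events on which some inner product vanishes exactly are harmless.
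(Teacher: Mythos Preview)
Your proposal is correct and follows essentially the same approach as the paper: compute the probability of the complementary event (all four sets nonempty) by reducing to independent fair coin flips for the signs of $\wvec{j}_0\cdot\xx_1$ and $\uvec{j}_0\cdot\xx_2$, obtaining $\tfrac12\cdot\tfrac12=\tfrac14$. The paper's proof is just a one-line compression of the same computation; your version merely spells out the disjointness/independence structure more explicitly.
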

\begin{proof}
	Since the sign of a one dimensional Gaussian random variable is a Bernoulli random variable, the probability that $W_0^+(i) \neq \emptyset$ for $i \in \{1,3\}$ and $U_0^+(i) \neq \emptyset$ for $i \in \{2,4\}$ is $\frac{1}{4}$. The claim follows.
\end{proof}

\begin{thm}
	Assume that $k = 2$. With probability $\ge 0.75$, gradient descent converges to a local minimum.
\end{thm}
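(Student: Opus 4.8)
The plan is to build directly on Lemma~\ref{lem:non_exploration_init}, which already supplies the ``bad'' initialization event with probability at least $0.75$. Conditioning on that event, the rest is essentially deterministic (up to the measure-zero exclusion making all inner products of the random weights with the four patterns nonzero). By relabeling it suffices to treat the case $W_0^+(1)=\emptyset$, i.e.\ $\wvec{1}_0\cdot\xx_1<0$ and $\wvec{2}_0\cdot\xx_1<0$ (equivalently $\wvec{j}_0\cdot\xx_3>0$): the configuration $W_0^+(3)=\emptyset$ is identical after the $\xx_1\leftrightarrow\xx_3$ symmetry, and the configurations $U_0^+(2)=\emptyset$, $U_0^+(4)=\emptyset$ reduce to it via the symmetry obtained by swapping the $\ww$- and $\uu$-neurons and negating all labels (which fixes the hinge objective and the architecture up to sign).

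The first substantive step is to show, by induction on $t$, that $\wvec{j}_t\cdot\xx_1<0$ for $j\in\{1,2\}$ and all $t$, so that $\xx_1$ is never fit. The gradient step adds to $\wvec{j}_t$ a sum of terms $\eta\,y_i\xx_i$ over those $i$ with $\wvec{j}_t\cdot\xx_i>0$ and $y_i\nett{t}(\xx_i)<1$. Since $\xx_1\cdot\xx_1=2$, $\xx_1\cdot\xx_3=-2$ and $\xx_1\cdot\xx_2=\xx_1\cdot\xx_4=0$, the quantity $\wvec{j}_{t+1}\cdot\xx_1$ equals $\wvec{j}_t\cdot\xx_1$, plus possibly $+2\eta$ from $\xx_1$ -- but that contribution is gated by $\wvec{j}_t\cdot\xx_1>0$, which never holds -- plus possibly $-2\eta$ from $\xx_3$. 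Hence $\wvec{j}_{t+1}\cdot\xx_1\le\wvec{j}_t\cdot\xx_1<0$, so $\sigma(\wvec{j}_t\cdot\xx_1)=0$ and $\nett{t}(\xx_1)=-\sigma(\uvec{1}_t\cdot\xx_1)-\sigma(\uvec{2}_t\cdot\xx_1)\le 0<1$ for every $t$. Thus $\ell(W_t)\ge 1-\nett{t}(\xx_1)\ge 1>0$ always, and gradient descent can never reach a global minimum.

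It then remains to show the iterates actually converge to a fixed point of $\ell$, which by the previous step is a (sub-optimal) local minimum on which gradient descent stagnates and which errs on $\xx_1$. For this I would establish $k=2$ analogues of Lemma~\ref{lem:clustering_xor_restated} and Proposition~\ref{prop:convergence}: a clustering-dynamics lemma writing each $\wvec{j}_t$ as $\wvec{j}_0 + a\,\eta\,\xx_3 + (\text{term orthogonal-ish to }\xx_3, \text{ of size }O(\eta)\text{ along }\xx_2)$ and each $\uvec{j}_t$ similarly, which keeps all $\uu$-contributions to every $\nett{t}(\xx_i)$ bounded and keeps $\sigma(\wvec{j}_t\cdot\xx_2),\sigma(\wvec{j}_t\cdot\xx_4)=O(\eta)$; together with a monotone-summation argument showing that for $\xx_3$ (always) and for whichever of $\xx_2,\xx_4$ has a $\uu$-neuron responding to it at $t=0$, the relevant sum grows by a fixed positive amount each iteration it is violated, so its margin is met after $O(1/\eta)$ steps and stays met, while for the remaining point(s) -- $\xx_1$, and possibly a negative pattern no $\uu$-neuron responds to at $t=0$ -- the active-ReLU pattern settles after finitely many steps and the corresponding gradient terms vanish. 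Once all of this has occurred no weight moves, so the iterates have converged.

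The main obstacle is this last step. Unlike the overparameterized analysis, the bad event constrains only the $\ww$-neurons, so the behavior on the two negative patterns hinges on the random signs of $\uvec{1}_0\cdot\xx_2$ and $\uvec{2}_0\cdot\xx_2$; one must case-split on how many $\uu$-neurons respond to each of $\xx_2,\xx_4$ and, in the sub-case where a negative pattern is never fit, verify that this is a stable plateau rather than a source of unbounded drift (the relevant ReLUs being off prevents further updates). Proving the $k=2$ clustering-dynamics lemma and the summation bound is routine given the two-dimensional geometry ($\xx_3=-\xx_1$, $\xx_4=-\xx_2$, $\xx_1\perp\xx_2$), but bookkeeping all four ReLU activation patterns for both families of neurons simultaneously is the tedious part.
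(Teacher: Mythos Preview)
Your proposal is correct and follows essentially the same approach as the paper: invoke the non-exploration lemma, reduce by symmetry to $W_0^+(1)=\emptyset$, show inductively that $\wvec{j}_t\cdot\xx_1<0$ persists so $\xx_1$ is never fit, and then use the clustering dynamics to argue that the remaining patterns either get their margin satisfied or have all relevant ReLUs inactive, yielding a stationary (sub-optimal) point. The paper's own proof is considerably terser on the convergence step---it simply cites the overparameterized analysis for the ``good'' patterns and asserts in one line that at iteration $t'+1$ all neurons are inactive on the unfit patterns---so your explicit flagging of the case-split on the $\uu$-neurons' signs and the stability check when a negative pattern is also never fit is, if anything, more careful than what the paper writes down.
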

\begin{proof}
	As in the proof of Theorem \ref{thm:xor_overparam}, for $i \in \{1,3\}$ if $W_0^+(i) \neq \emptyset$, then eventually, $y_i\nett{t}(\xx_i) \ge 1$. Similarly, for $i \in \{2,4\}$ if $U_0^+(i) \neq \emptyset$, then eventually, $y_i\nett{t}(\xx_i) \ge 1$. However, if without loss of generality $W_0^+(1) = \emptyset$, then for all $t$,
	\begin{align*}
		\nett{t}(\xx_1) = S_{t}^+(1) + S_{t}^-(1) - R_{t}^+(1) - R_{t}^-(1) \le 0
	\end{align*}
	Furthermore, there exists the first iteration $t'$ such that $y_i\nett{t'}(\xx_i) \ge 1$ for $i=3$ (since $W_0^+(3) \neq \emptyset$) and any $i \in \{2,4\}$ such that $U_0^+(i) \neq \emptyset$. Then, in iteration $t'+1$ for all $1 \le j \le 2$ it holds that $\uvec{j}_{t'+1}\xx_i < 0$ and $\wvec{j}_{t'+1}\xx_i < 0$ for $i=1$ or $i \in \{2,4\}$ such that $U_0^+(i) = \emptyset$. Therefore at $t'+1$ we are at a local minimum.
\end{proof}

\section{Proofs and Experiments for Section \ref{sec:xord_problem_formulation}}
\label{sec:vc}

\subsection{VC Dimension}
As noted in Remark \ref{rem:expr}, the VC dimension of the model we consider is at most $15$. 
To see this, we first define for any $\zz \in \{\pm 1\}^{2d}$ the set $P_{\zz} \subseteq \{\pm1\}^2$ which contains all the distinct two dimensional binary patterns that $\zz$ has. For example, for a positive diverse point $\zz$ it holds that $P_{\zz} = \{\pm1\}^2$. Now, for any points $\zz^{(1)}, \zz^{(2)} \in \{\pm 1\}^{2d}$ such that $P_{\zz^{(1)}}=P_{\zz^{(2)}}$ and for any filter $\ww \in \reals^{2}$ it holds that $\max_{j}\sigma\left(\ww\cdot \zz^{(1)}_j \right) = \max_{j}\sigma\left(\ww\cdot \zz^{(2)}_j \right)$.
Therefore, for any $W$, $\net(\zz^{(1)}) = \net(\zz^{(2)})$. Specifically, this implies that if both $\zz^{(1)}$ and $\zz^{(2)}$ are diverse then $\net(\zz^{(1)}) = \net(\zz^{(2)})$. Since there are 15 non-empty subsets of $\{\pm 1\}^2$, it follows that for any $k$ the network can shatter a set of at most 15 points, or equivalently, its VC dimension is at most 15. Despite these expressive power limitations,  there is a generalization gap between small and large networks in this setting, as can be seen in Figure \ref{fig:xor_over_param}.

\subsection{Hinge Loss Confidence}
\label{sec:hinge}

Figure \ref{fig:hinge} shows that setting $\gamma = 5$ gives better performance than setting $\gamma = 1$ in the XORD problem. The setting is similar to the setting of Section \ref{sec:fig1_exp}. Each point is an average test error of 100 runs.

\begin{figure}	
	
	\centering
	\includegraphics[width=0.4\linewidth]{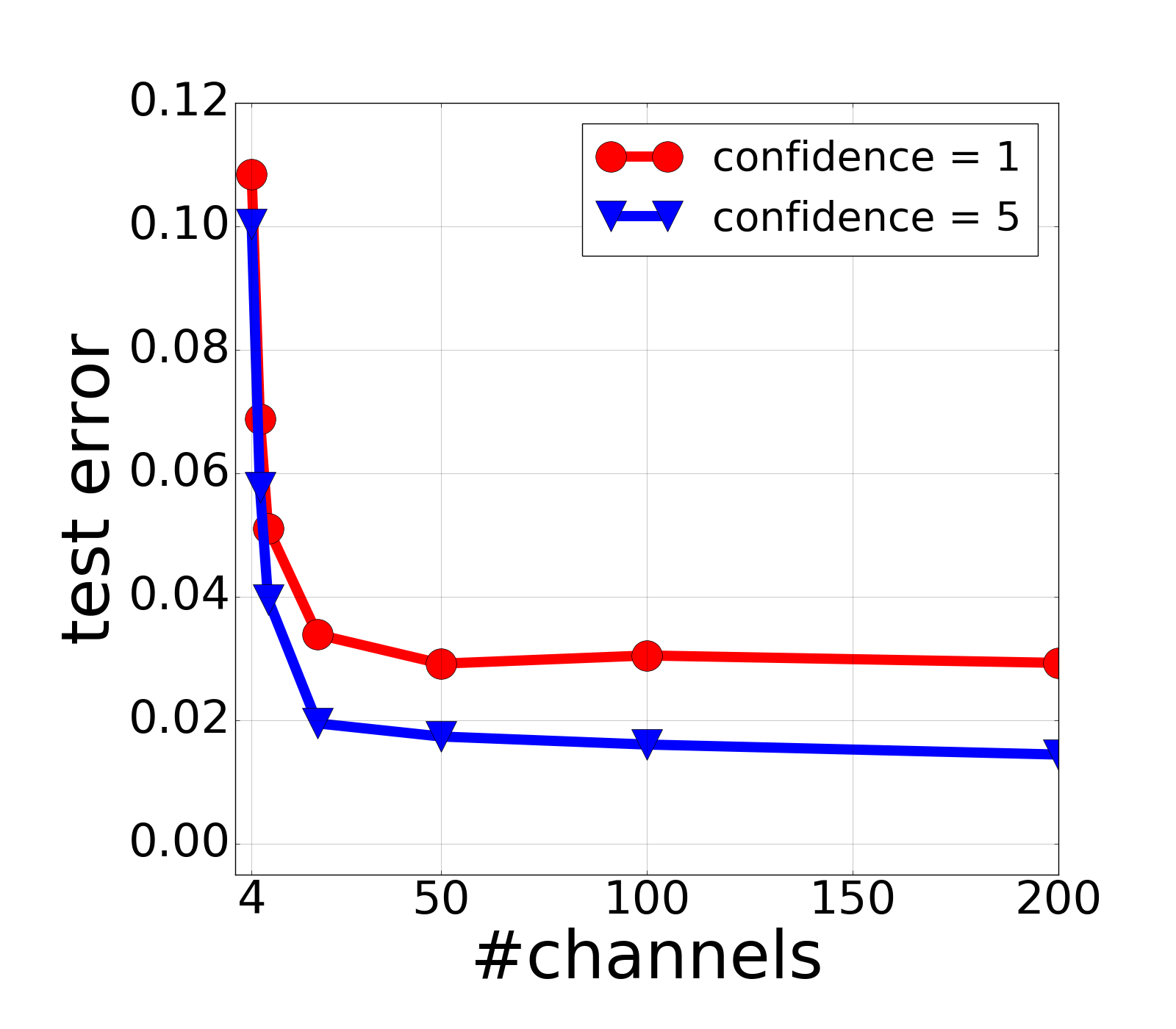}
	\caption{\small{Higher confidence of hinge-loss results in better performance in the XORD problem.}}
	\label{fig:hinge}
\end{figure}

\section{Experiments for Section \ref{sec:xord_experiments}}

Here we show an example of a training set that contains a non-diverse negative point. In total, the training set has 6 positive points and 6 negative points. We implemented the setting of Section \ref{sec:xord_problem_formulation} and ran gradient descent on this training set. In Figure \ref{fig:exp_xordnd} we show the results. The large network recovers $f^*$, while the small does not. This is despite the fact that both networks achieve zero training error.

\begin{figure*}[th]
	\begin{subfigure}{.24\textwidth}
		\centering
		\includegraphics[width=1.0\linewidth]{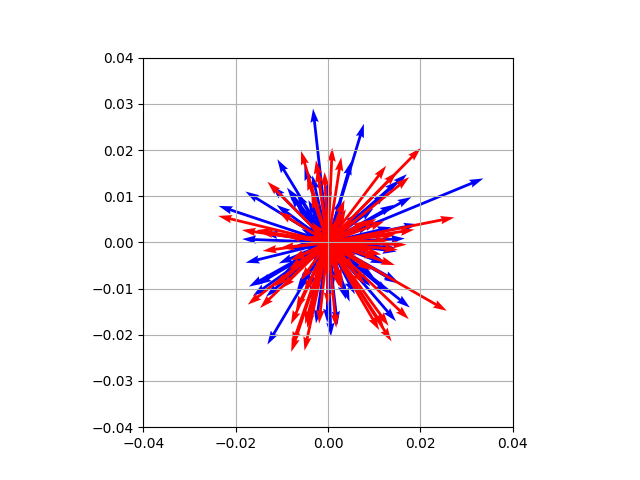}
		\caption{}
		\label{fig:exp_xordnd1}
	\end{subfigure}%
	\begin{subfigure}{.24\textwidth}
		\centering
		\includegraphics[width=1.0\linewidth]{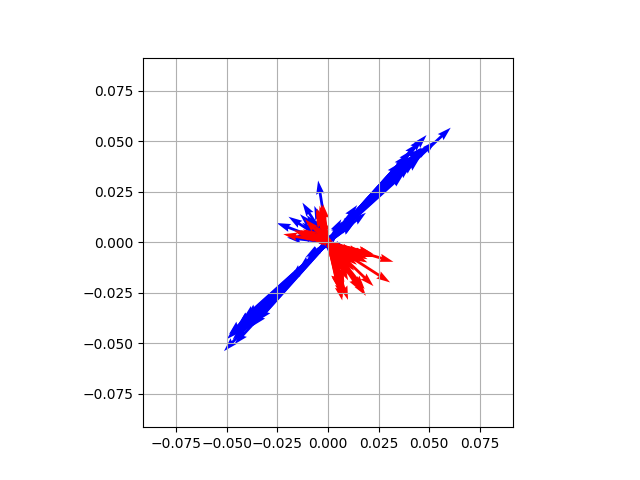}
		\caption{}
		\label{fig:exp_xordnd2}
	\end{subfigure}%
	\begin{subfigure}{.24\textwidth}
		\centering
		\includegraphics[width=1.0\linewidth]{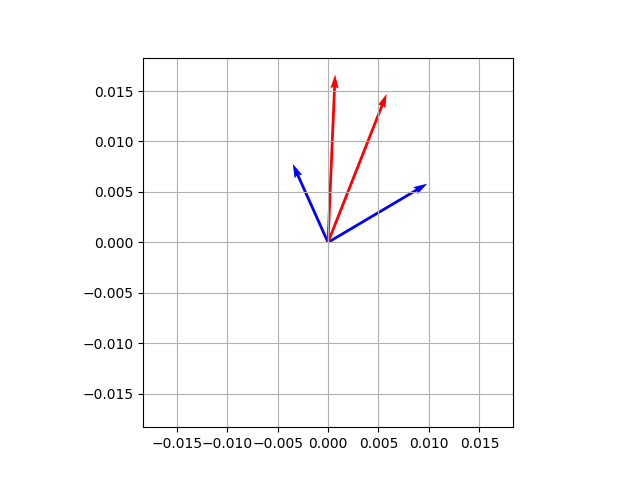}
		\caption{}
		\label{fig:exp_xordnd3}
	\end{subfigure}
	\begin{subfigure}{.24\textwidth}
		\centering
		\includegraphics[width=1.0\linewidth]{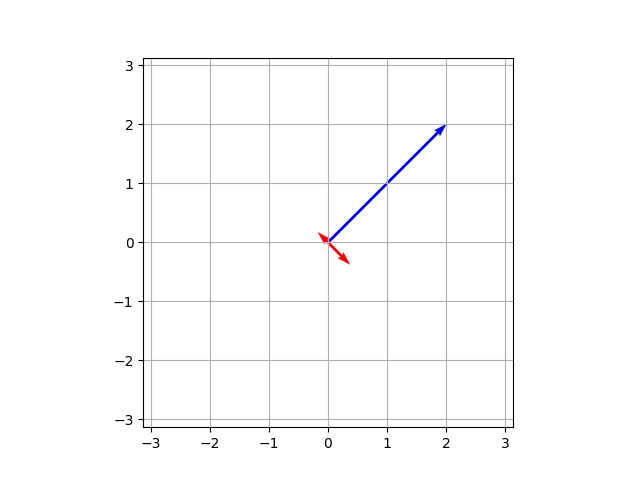}
		\caption{}
		\label{fig:exp_xordnd4}
	\end{subfigure}
	\caption{\small{Overparameterization and generalization in XORD problem. The vectors in blue are the vectors $\wvec{i}_t$ and in red are the vectors $\uvec{i}_t$. (a) Exploration at initialization (t=0) for $k=100$ (b) Clustering and convergence to global minimum that recovers $f^*$ for $k=100$ (c) Non-sufficient exploration at initialization (t=0) for $k=2$. (d) Convergence to global minimum with non-zero test error for $k=2$.}}
	\label{fig:exp_xordnd}
\end{figure*}

\section{Proof of Theorem \ref{thm:main}}
\label{sec:proof_overparam}

We first restate the theorem.

\begin{thm}
	\label{thm:main_restated}
	(\textbf{Theorem \ref{thm:main} restated and extended.) }With probability at least $\left(1-c - 16e^{-8}\right)$ after running gradient descent for $T \ge \frac{28(\gamma + 1 + 8\ceta)}{\ceta}$ iterations, it converges to a global minimum which satisfies $\sign\left(\nett{T}(\xx)\right) = f^*(\xx)$ for all $\xx \in \{\pm1\}^{2d}$. Furthermore, for $i \in \{1,3\}$ and all $j \in W_0^+(i)$, the angle between  $\wvec{j}_T$ and $\pp_i$ is at most $\arccos\left(\frac{\gamma - 1-2\ceta}{\gamma-1+\ceta}\right)$.
\end{thm}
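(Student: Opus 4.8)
The plan is to reuse the two-phase scheme of the XOR warm-up — exploration at initialization, then clustering dynamics — but now carried out simultaneously for all eight groups $W_0^+(i),U_0^+(i)$ ($i\in\{1,2,3,4\}$), followed by a final step that promotes ``global minimum on $\{\xx^+,\xx^-\}$'' to ``$\sign(\nett{T}(\cdot))=f^*$ on all of $\{\pm1\}^{2d}$''. First I would fix a high-probability event $E$ of probability $\ge 1-c-16e^{-8}$ asserting: (i) each union of antipodal groups, e.g.\ $W_0^+(1)\cup W_0^+(3)$ and $U_0^+(2)\cup U_0^+(4)$, has size in $[\tfrac{k}{2}-2\sqrt{k},\tfrac{k}{2}+2\sqrt{k}]$ — Lemma~\ref{lem:init_num_disj_xord} together with its three symmetric counterparts, each a Hoeffding bound contributing $4e^{-8}$; and (ii) $\abs{\wvec{j}_0\cdot\pp_l},\abs{\uvec{j}_0\cdot\pp_l}\le\tfrac{\sqrt{2}}{4}\eta$ for all $j,l$ (a Gaussian tail estimate, failure probability absorbed into $c$), which in particular gives $a^+(0)=1$. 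Since $\pp_1=-\pp_3$, $\pp_2=-\pp_4$, and $\pp_1\perp\pp_2$, every filter's argmax pattern is determined by the larger of its two nonnegative pattern-coordinates, so $E$ fixes the initial state of all groups at once.

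Next I would prove, by induction on $t$, a dynamics lemma for each group, extending Lemma~\ref{lem:clustering_xord}. The basic mechanism: because of max-pooling, an active step on $\xx^+$ moves $\wvec{j}$ (resp.\ $\uvec{j}$) by $+\eta\pp_l$ (resp.\ $-\eta\pp_l$) with $\pp_l$ its current argmax over the four patterns, and an active step on $\xx^-$ moves it by $-\eta\pp_m$ (resp.\ $+\eta\pp_m$) with $m\in\{2,4\}$. Hence: for $j\in W_0^+(i)$, $i\in\{1,3\}$, the $\xx^+$-step reinforces $\pp_i$ while the $\xx^-$-step only perturbs the $\{\pp_2,\pp_4\}$-component, which is orthogonal to $\pp_i$, so $\pp_i$ stays the argmax forever and $\wvec{j}_t=a^+(t)\eta\pp_i+\vv_t$ with $\abs{\vv_t\cdot\pp_2}<2\eta$ (maintained because an active $\xx^-$-step decreases whichever of $\wvec{j}_t\cdot\pp_2,\wvec{j}_t\cdot\pp_4$ is larger). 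For $j\in W_0^+(2)\cup W_0^+(4)$ the two gradients cancel while both hinges are active, and the $\xx^-$-gradient \emph{depletes} this filter once the positive hinge is temporarily satisfied, so it never accumulates more than an $O(\eta+\tfrac{\gamma}{k})$ response to a negative pattern; symmetrically, $U_0^+(2)\cup U_0^+(4)$ filters are frozen while both hinges are active and then \emph{grow} a clustered $\pp_2$- or $\pp_4$-component during the phase where only the negative hinge is active, while $U_0^+(1)\cup U_0^+(3)$ filters have their positive-pattern components driven to $\le 0$ by the $\xx^+$-gradient and thereafter oscillate with $O(\eta)$ amplitude. The hypothesis $k\ge120$ is essential here: it keeps the accumulated drift of every ``wrong-group'' filter at $O(\gamma/k)$, hence negligible even after summing over the $O(k)$ channels.

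Writing $s_t=\sum_{i=1}^k\max\{\sigma(\uvec{i}_t\cdot\pp_2),\sigma(\uvec{i}_t\cdot\pp_4)\}$ for the total negative-pattern response, the group lemmas give: the contribution of $W_0^+(1)\cup W_0^+(3)$ to $\nett{t}(\xx^+)$ is monotone, gaining $2\eta\abs{W_0^+(1)\cup W_0^+(3)}\ge(\tfrac{k}{2}-2\sqrt{k})\tfrac{2\ceta}{k}$ on each iteration where the positive hinge is active, while the rest of $\nett{t}(\xx^+)$ stays in an $O(1)$ band controlled by $s_t$; symmetrically $s_t$ is monotone and gains $\ge(\tfrac{k}{2}-2\sqrt{k})\tfrac{2\ceta}{k}$ on each iteration where only the negative hinge is active; and the positive hinge can re-activate essentially only through $s_t$ increasing. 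A telescoping of these facts shows both hinges become satisfied — i.e.\ $W_T$ is a global minimum with $\nett{T}(\xx^+)\ge\gamma$ and $\nett{T}(\xx^-)\le-1$ — after at most $T=\frac{28(\gamma+1+8\ceta)}{\ceta}$ iterations, the constant $28$ giving ample slack. To get $\sign(\nett{T}(\xx))=f^*(\xx)$ everywhere I use that $\nett{T}(\xx)$ depends only on $P_\xx$ (15 classes): if $\pp_1\in P_\xx$ or $\pp_3\in P_\xx$, one clustered positive group fires with value $\gtrsim\gamma$, dominating the $\uvec$-response (bounded by $s_T+O(\tfrac{1}{k})=O(1)$) — here $\gamma\ge8$ supplies the slack, also covering the residual firing of $U_0^+(1)\cup U_0^+(3)$ filters on positive patterns; and if $P_\xx\subseteq\{2,4\}$ the $\wvec$-response is $O(\eta+\tfrac{\gamma}{k})$ while the $\uvec$-response is at least $1$, so $\nett{T}(\xx)<0$. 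Finally, the angle bound is a direct computation: $\nett{T}(\xx^+)\ge\gamma$ forces $a^+(T)$ to be large, and substituting $\wvec{j}_T=a^+(T)\eta\pp_i+\vv_T$ with $\vv_T$ controlled by the clustering and initialization estimates into $\cos\angle(\wvec{j}_T,\pp_i)$ and simplifying yields $\arccos\left(\tfrac{\gamma-1-2\ceta}{\gamma-1+\ceta}\right)$.

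The bulk of the difficulty — and why the supplementary proof is lengthy — is the interleaving of active/inactive phases of the two hinge terms together with the bookkeeping needed to keep all invariants (the argmax pattern of every filter at every step, monotonicity of the positive-pattern response, boundedness of every wrong-group filter) mutually consistent through a single induction over all groups and all iterations. The subtle structural point is that the negative hinge is satisfied only \emph{after} the positive-pattern $\wvec$-filters have grown large, which is simultaneously what produces the large margin needed for generalization and what depletes the spurious $W_0^+(2)\cup W_0^+(4)$ filters; and the trickiest quantitative estimate is ruling out that any wrong-group filter develops a pattern response large enough to flip one of the 15 pattern-classes, which is precisely where $k\ge120$ enters.
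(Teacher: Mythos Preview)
Your high-level plan matches the paper's: exploration at initialization, clustering dynamics by induction, convergence via tracking $S_t^+$, and then promoting the two-point global minimum to all $15$ pattern classes. Most of your per-group dynamics claims are essentially correct. But there is a real gap in the generalization step, and it stems from a misidentification of where $k\ge120$ enters.

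When you write ``if $\pp_1\in P_\xx$ or $\pp_3\in P_\xx$, one clustered positive group fires with value $\gtrsim\gamma$'', you are assuming that each of $X_T^+(1)=\sum_{j\in W_0^+(1)}\max_l\sigma(\wvec{j}_T\cdot\pp_l)$ and $X_T^+(3)$ is individually $\gtrsim\gamma/2$. Convergence only gives $X_T^+(1)+X_T^+(3)\gtrsim\gamma$. Your event $E$ constrains only the \emph{unions} $|W_0^+(1)\cup W_0^+(3)|$, so nothing you assume rules out $|W_0^+(1)|=1$, $|W_0^+(3)|\approx k/2$; since every filter in $W_0^+(i)$ accumulates the \emph{same} coefficient $a^+(T)$ on $\pp_i$, the per-group responses scale with the group sizes, and in that unbalanced scenario a positive $\xx$ containing only $\pp_1$ fires a response of order $\gamma/k$, far too small to dominate the $\uu$-response of order $1$. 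The paper closes this by (i) adding to the exploration event the \emph{individual} concentrations $\bigl||W_0^+(i)|-k/4\bigr|\le 2\sqrt{k}$ for $i\in\{1,3\}$, and (ii) proving a proportion lemma $\frac{X_T^+(1)-X_0^+(1)}{|W_0^+(1)|}=\frac{X_T^+(3)-X_0^+(3)}{|W_0^+(3)|}$; together these give $X_T^+(3)\ge\frac{\gamma+1-\frac{5}{4}\ceta}{1+\alpha(k)}$ with $\alpha(k)=\frac{k/4+2\sqrt{k}}{k/4-2\sqrt{k}}$. The analogous balance is needed on the negative side for points with $P_\xx=\{2\}$ or $\{4\}$: one must split the $\uu$-response between $\pp_2$ and $\pp_4$, which requires concentration of $|U_0^+(2)|$, $|U_0^+(4)|$, \emph{and} of the partition of $U_0^+(1)\cup U_0^+(3)$ by $U_0^-(2)$ versus $U_0^-(4)$ (these filters do develop large $\pp_2$- or $\pp_4$-components, contrary to your ``oscillate with $O(\eta)$ amplitude''). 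These individual-group bounds are exactly what account for the $16e^{-8}$ in the theorem's probability; your four union bounds are not the right ones.

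This is also the true role of $k\ge120$: it forces $\alpha(k)$ close enough to $1$ that the per-group lower bound $X_T^+(i)\gtrsim\gamma/(1+\alpha(k))$ exceeds the $\uu$-response bound $1+O(\ceta)$ (the conditions $\gamma\ge8$ and $\ceta\le\frac{1}{410}$ are tuned jointly with $k\ge120$ to make this inequality go through). Your statement that $k\ge120$ is what keeps wrong-group drift at $O(\gamma/k)$ is not the mechanism the paper uses: the bound on $\sum_{j\in W_0^+(2)\cup W_0^+(4)}\max_l\sigma(\wvec{j}_T\cdot\pp_l)$ is $34\ceta$, obtained by counting the iterations in which only the positive hinge is active---at most $\frac{11\ceta}{\eta|W_0^+(1)\cup W_0^+(3)|}$ of them, because $S_t^+$ increases by $\eta|W_0^+(1)\cup W_0^+(3)|$ in each such step yet is sandwiched between $\gamma+1-3\ceta$ and $\gamma+1+8\ceta$---and this already works for $k\ge64$ with no $\gamma/k$ scaling.
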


We will first need a few notations. Define $\pp_1=(1,1),  \xx_2=(1,-1), \pp_3=(-1,-1), \pp_4=(-1,1)$ and the following sets:
$$W_t^+(i) = \left\{j \mid \argmax_{1 \le l \le 4} \wvec{j}_t\cdot \pp_l = i \right\},\,\,U_t^+(i) = \left\{j \mid \argmax_{1 \le l \le 4} \uvec{j}_t\cdot \pp_l = i \right\}$$
$$W_t^-(i) = \left\{j \mid \argmax_{l \in \{2,4\}} \wvec{j}_t\cdot \pp_l = i \right\},\,\,U_t^-(i) = \left\{j \mid \argmax_{l \in \{2,4\}} \uvec{j}_t\cdot \pp_l = i \right\}$$

We can use these definitions to express more easily the gradient updates. Concretely, let $j \in W_t^+(i_1)\cap W_t^-(i_2)$ then the gradient update is given as follows:\footnote{Note that with probability 1, $\sigma'(\wvec{j}_t\cdot\pp_{i_1})=1$, $\sigma'(\wvec{j}_t\cdot\pp_{i_2})=1$ for all $t$, and therefore we omit these from the gradient update. This follows since $\sigma'(\wvec{j}_t\cdot\pp_{i_1})=0$ for some $t$ if and only if $\wvec{j}_0\cdot\pp_{i_1}$ is an integer multiple of $\eta$.}
\begin{equation}
	\label{eq:w_gradient}
	\wvec{j}_{t+1} = \wvec{j}_{t} + \eta \pp_{i_1}\mathbbm{1}_{\net(\xx^+) < \gamma} - \eta \pp_{i_2}\mathbbm{1}_{\net(\xx^-) < 1}
\end{equation}
Similarly, for $j \in U_t^+(i_1)\cap U_t^-(i_2)$ the gradient update is given by:
\begin{equation}
	\label{eq:u_gradient}
	\uvec{j}_{t+1} = \uvec{j}_{t} - \eta \pp_{i_1}\mathbbm{1}_{\net(\xx^+) < \gamma} + \eta \pp_{i_2}\mathbbm{1}_{\net(\xx^-) < 1}
\end{equation}

We denote by $\xx^+$ a positive diverse point and $\xx^-$ a negative diverse point. Define the following sums for $\phi\in\{+,-\}$:
$$S_t^\phi = \sum_{j \in W_t^+(1) \cup W_t^+(3)}\left[\max\left\{\sigma\left(\wvec{j}\cdot \xx_1^\phi \right),...,\sigma\left(\wvec{j}\cdot \xx_d^\phi \right)\right\}\right]$$
$$P_t^\phi = \sum_{j \in U_t^+(1) \cup U_t^+(3)}\left[\max\left\{\sigma\left(\uvec{j}\cdot \xx_1^\phi \right),...,\sigma\left(\uvec{j}\cdot \xx_d^\phi \right)\right\}\right]$$

\begin{align*}
	R_t^\phi &= \sum_{j \in W_t^+(2) \cup W_t^+(4)}\left[\max\left\{\sigma\left(\wvec{j}\cdot \xx_1^\phi \right),...,\sigma\left(\wvec{j}\cdot \xx_d^\phi \right)\right\}\right] \\ &- \sum_{j \in U_t^+(2) \cup U_t^+(4)}\left[\max\left\{\sigma\left(\uvec{i}\cdot \xx_1^\phi \right),...,\sigma\left(\uvec{i}\cdot \xx_d^\phi \right)\right\}\right]
\end{align*}

Note that $R_t^+ = R_t^-$ since for $\zz \in \{\xx^+, \xx^-\}$ there exists $i_1,i_2$ such that $\zz_{i_1} = \pp_2$, $\zz_{i_2} = \pp_4$.

Without loss of generality, we can assume that the training set consists of one positive diverse point $\xx^+$ and one negative diverse point $\xx^-$. This follows since the network and its gradient have the same value for two different positive diverse points and two different negative points. Therefore, this holds for the loss function defined in \eqref{eq:loss_function} as well.

We let $a^+(t)$ be the number of iterations $0 \le t' \le t$ such that $\nett{t'}(\xx^+) < \gamma$. 

We will now proceed to prove the theorem. In Section \ref{sec:init_guarantees} we prove results on the filters at initialization. In Section \ref{sec:auxiliary} we prove several lemmas that exhibit the clustering dynamics. In Section \ref{sec:bounds1} we prove upper bounds on $S_t^-$, $P_t^+$ and $P_t^-$ for all iterations $t$. In Section \ref{sec:dynamics} we characterize the dynamics of $S_t^+$ and in Section \ref{sec:bounds2} we prove an upper bound on it together with upper bounds on $\nett{t}(\xx^+)$ and $-\nett{t}(\xx^-)$ for all iterations $t$.

We provide an optimization guarantee for gradient descent in Section \ref{sec:opt}. We prove generalization guarantees for the points in the positive class and negative class in Section \ref{sec:positive_class} and Section \ref{sec:negative_class}, respectively. We complete the proof of the theorem in Section \ref{sec:finishing_proof} with proofs for the clustering effect at the global minimum.

\subsubsection{Initialization Guarantees}
\label{sec:init_guarantees}

\begin{lem}
	\label{lem:init_num_disj}
	\textbf{Exploration. Lemma \ref{lem:init_num_disj_xord} restated and extended.} With probability at least $1-4e^{-8}$, it holds that $$\abs{\abs{W_0^+(1) \cup W_0^+(3)} - \frac{k}{2}}\le 2\sqrt{k}$$ and $$\abs{\abs{U_0^+(1) \cup U_0^+(3)} - \frac{k}{2}}\le 2\sqrt{k}$$
\end{lem}
\begin{proof}
	Without loss of generality consider $\abs{W_0^+(1) \cup W_0^+(3)} $. Since $\probarg{j \in W_0^+(1) \cup W_0^+(3)} = \frac{1}{2}$, we get by Hoeffding's inequality $$\probarg{\left|\abs{W_0^+(1) \cup W_0^+(3)} - \frac{k}{2}\right| < 2\sqrt{k}}\le 2 e^{-\frac{2(2^2 k)}{k}} = 2e^{-8}$$
	The result now follows by the union bound.
\end{proof}

\begin{lem}
	\label{lem:init_bound_disj}
	With probability $\ge 1-\frac{\sqrt{2k}}{\sqrt{\pi} e^{8k}}$, for all $1 \le j \le k $ and $1 \le i \le 4$ it holds that  $\abs{\wvec{j}_0 \cdot \pp_i} \le \frac{\sqrt{2}\eta}{4}$ and $\abs{\uvec{j}_0 \cdot \pp_i} \le \frac{\sqrt{2}\eta}{4}$.
\end{lem}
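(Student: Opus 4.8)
The plan is to prove this exactly as its XOR counterpart, Lemma~\ref{lem:init_bound_disj_xor}: as points of $\{\pm1\}^2$ the patterns $\pp_1,\dots,\pp_4$ are just a relabelling of the four XOR inputs $\xx_1,\dots,\xx_4$, and the filters are initialized the same way, so the same concentration argument applies verbatim. The one fact to record is that for a fixed filter $\wvec{j}_0$, whose two coordinates are IID $\mathcal{N}(0,\sigma_g^2)$, and a fixed pattern $\pp_i\in\{\pm1\}^2$, the inner product $\wvec{j}_0\cdot\pp_i$ is a $\pm1$-signed sum of the two coordinates and hence a mean-zero Gaussian with variance $2\sigma_g^2$; likewise for $\uvec{j}_0\cdot\pp_i$.

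First I would apply the standard Gaussian tail bound (e.g.\ Proposition~2.1.2 in \cite{vershynin2017high}): for $Z\sim\mathcal{N}(0,\sigma^2)$ and $t>0$, $\probarg{\abs{Z}\ge t}\le\frac{2\sigma}{\sqrt{2\pi}\,t}e^{-t^2/(2\sigma^2)}$. Taking $\sigma^2=2\sigma_g^2$ and $t=\frac{\sqrt2\,\eta}{4}$, and then plugging in the hypotheses $\eta=\frac{\ceta}{k}$ and $\sigma_g\le\frac{\ceta}{16k^{3/2}}$ — which give $\frac{\eta}{\sigma_g}\ge16\sqrt{k}$ — the exponent becomes $\frac{t^2}{2\sigma^2}=\frac{\eta^2}{32\sigma_g^2}\ge8k$ while the polynomial prefactor is $O(1/\sqrt{k})$. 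This yields $\probarg{\abs{\wvec{j}_0\cdot\pp_i}\ge\frac{\sqrt2\,\eta}{4}}\le\frac{1}{\sqrt{32\pi k}}e^{-8k}$, and the same bound for $\uvec{j}_0\cdot\pp_i$.

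Finally I would take a union bound over the $2k$ filters $\wvec{1}_0,\dots,\wvec{k}_0,\uvec{1}_0,\dots,\uvec{k}_0$ and the four patterns $\pp_1,\dots,\pp_4$ — a total of $8k$ events — giving a failure probability of at most $8k\cdot\frac{1}{\sqrt{32\pi k}}e^{-8k}=\frac{\sqrt{2k}}{\sqrt\pi}e^{-8k}$, which is exactly the complement of the probability claimed. There is no real obstacle here: the statement is a routine measure-concentration estimate, and the only thing that needs care is bookkeeping the constants so the exponent lands on precisely $8k$ (this is what forces the scaling $\sigma_g\lesssim k^{-3/2}$ relative to $\eta\sim k^{-1}$) and so the prefactor is absorbed into $\sqrt{2k}/\sqrt\pi$.
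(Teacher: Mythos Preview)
Your proposal is correct and matches the paper's proof essentially line for line: the paper likewise invokes the Gaussian tail bound from \cite{vershynin2017high} with $\sigma^2=2\sigma_g^2$ and $t=\frac{\sqrt{2}\eta}{4}$ to obtain the per-event bound $\frac{1}{\sqrt{32\pi k}}e^{-8k}$, and then union-bounds over the $2k$ filters and four patterns. There is nothing to add.
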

\begin{proof}
	Let $Z$ be a random variable distributed as $\mathcal{N}(0,\sigma^2)$. Then by Proposition 2.1.2 in \cite{vershynin2017high}, we have  $$\probarg{\abs{Z} \ge t} \le \frac{2\sigma}{\sqrt{2\pi}t}e^{-\frac{t^2}{2\sigma^2}} $$
	Therefore, for all $1 \le j \le k $ and $1 \le i \le 4$, $$\probarg{\abs{\wvec{j}_0 \cdot \pp_i} \ge \frac{\sqrt{2}\eta}{4}} \le \frac{1}{\sqrt{32\pi k}}e^{-8k}$$
	and 
	$$\probarg{\abs{\uvec{j}_0 \cdot \pp_i} \ge \frac{\sqrt{2}\eta}{4}} \le \frac{1}{\sqrt{32\pi k}}e^{-8k}$$
	The result follows by applying a union bound over all $2k$ weight vectors and the four points $\pp_i$, $1 \le i \le 4$. 
\end{proof}

From now on we assume that the highly probable event in Lemma \ref{lem:init_bound_disj} holds.

\begin{lem}
	\label{lem:iteration_2}
	$\nett{t}(\xx^+) < 1$ and $-\nett{t}(\xx^-) < 1$ for $0 \le t \le 2$.
\end{lem}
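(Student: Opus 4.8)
The plan is to show that for $t\le 2$ the weights have barely moved from their (tiny) random initialization, so that $\bigl|\nett{t}(\xx^+)\bigr|$ and $\bigl|\nett{t}(\xx^-)\bigr|$ are both bounded by a constant far smaller than $1$; this immediately yields both $\nett{t}(\xx^+)<1$ and $-\nett{t}(\xx^-)<1$. Throughout I work on the probability-$\ge 1-\frac{\sqrt{2k}}{\sqrt\pi e^{8k}}$ event of Lemma \ref{lem:init_bound_disj} (which I absorb into the error terms, as the excerpt already does), where $\bigl|\wvec{j}_0\cdot\pp_i\bigr|\le\frac{\sqrt2\eta}{4}$ and $\bigl|\uvec{j}_0\cdot\pp_i\bigr|\le\frac{\sqrt2\eta}{4}$ for all $j$ and all $1\le i\le 4$.

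The first step is a uniform bound on the per-iteration displacement of each filter. By the gradient formulas \eqref{eq:w_gradient} and \eqref{eq:u_gradient}, for every filter index $j$ and every iteration $t$ there are indices $i_1,i_2$ (depending on $j$ and $t$) such that $\wvec{j}_{t+1}-\wvec{j}_{t}$ equals $\eta\pp_{i_1}\mathbbm{1}_{\nett{t}(\xx^+)<\gamma}-\eta\pp_{i_2}\mathbbm{1}_{\nett{t}(\xx^-)<1}$, and similarly for $\uvec{j}$; this uses only that the max-pooling always selects a coordinate equal to some $\pp_l$ and that $\sigma'$ is well-defined there with probability $1$ (the footnote to \eqref{eq:w_gradient}). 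Since $\pp_l\cdot\pp_i\in\{-2,0,2\}$ for all $l,i$, this gives $\bigl|\wvec{j}_{t+1}\cdot\pp_i-\wvec{j}_{t}\cdot\pp_i\bigr|\le 4\eta$ and $\bigl|\uvec{j}_{t+1}\cdot\pp_i-\uvec{j}_{t}\cdot\pp_i\bigr|\le 4\eta$, regardless of the values of the two indicators.

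Telescoping from initialization then gives $\bigl|\wvec{j}_{t}\cdot\pp_i\bigr|\le\frac{\sqrt2\eta}{4}+4t\eta$ and the same bound for $\uvec{j}_t$, for every $j$, every $1\le i\le 4$, and every $t\ge 0$. Because both $\xx^+$ and $\xx^-$ are diverse they contain only patterns among $\pp_1,\dots,\pp_4$, so each pooled term $\max_j\sigma(\wvec{i}_t\cdot\xx^\phi_j)$ and $\max_j\sigma(\uvec{i}_t\cdot\xx^\phi_j)$ is at most $\frac{\sqrt2\eta}{4}+4t\eta$. Summing the $k$ channels in \eqref{eq:xord_network} and using $\eta=\frac{\ceta}{k}$ yields $\bigl|\nett{t}(\xx^\phi)\bigr|\le k\bigl(\frac{\sqrt2\eta}{4}+4t\eta\bigr)=\bigl(\frac{\sqrt2}{4}+4t\bigr)\ceta$ for $\phi\in\{+,-\}$. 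For $t\le 2$ the right-hand side is at most $\bigl(\frac{\sqrt2}{4}+8\bigr)\ceta\le 9\ceta<1$ since $\ceta\le\frac1{410}$, which finishes the proof.

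There is no genuinely hard step here: the only thing requiring care is the bookkeeping for the gradient update — confirming that the displacement bound $4\eta$ per iteration holds uniformly over all filters and all iterations (independently of which sets $W_t^+(\cdot)\cap W_t^-(\cdot)$ or $U_t^+(\cdot)\cap U_t^-(\cdot)$ a filter currently lies in), and that the ReLU derivative at the pooled coordinate is $1$ with probability $1$. Once that is in place the rest is a two-line telescoping estimate.
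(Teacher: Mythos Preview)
Your proof is correct, but it takes a different route from the paper's. The paper proceeds by explicit case-by-case computation: it first bounds $\nett{0}(\xx^+)$ and $-\nett{0}(\xx^-)$ using Lemma~\ref{lem:init_bound_disj}, then writes out exactly what each filter becomes at $t=1$ (e.g., $\wvec{j}_{1}=\wvec{j}_0-\eta\pp_l+\eta\pp_i$ for $j\in W_0^+(i)\cap W_0^-(l)$, $i\in\{1,3\}$, $l\in\{2,4\}$), re-bounds the network, and repeats for $t=2$, obtaining formulas such as $\wvec{j}_{2}=\wvec{j}_0+2\eta\pp_i$. Your argument replaces this with a single uniform displacement bound $\bigl|\wvec{j}_{t+1}\cdot\pp_i-\wvec{j}_{t}\cdot\pp_i\bigr|\le 4\eta$ and a telescoping estimate, which is cleaner and avoids all case analysis. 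The trade-off is that the paper's explicit formulas are reused later: Lemma~\ref{lem:stplus_lower_bound} begins ``By continuing the calculation of Lemma~\ref{lem:iteration_2}'' and relies on knowing the exact form of $\wvec{j}_{T_1}$ and $\uvec{j}_{T_1}$ under the two-indicator regime. So your approach establishes the stated lemma more simply, but the paper's version does double duty as a base case for the subsequent inductive dynamics.
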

\begin{proof}
	By Lemma \ref{lem:init_bound_disj} we have 
	\begin{align*}
		\nett{0}(\xx^+) &= \sum_{i=1}^{k}\left[\max\left\{\sigma\left(\wvec{i}_0\cdot \xx_1^+ \right),...,\sigma\left(\wvec{i}_0\cdot \xx_d^+ \right)\right\} - \max\left\{\sigma\left(\uvec{i}_0\cdot \xx_1^+ \right),...,\sigma\left(\uvec{i}_0\cdot \xx_d^+ \right)\right\}\right] \\ &\le \frac{\eta k}{4} < \gamma
	\end{align*}
	and similarly $-\nett{0}(\xx^-) < 1$. Therefore, by \eqref{eq:w_gradient} and \eqref{eq:u_gradient} we get:
	\begin{enumerate}
		\item For $i \in \{1,3\}$, $l \in \{2,4\}$, $j \in W_0^+(i) \cap W_0^-(l)$, it holds that $\wvec{j}_{1} = \wvec{j}_0 -\eta \pp_l + \eta \pp_i$.
		\item For $i \in \{2,4\} $ and $j \in W_0^+(i)$, it holds that $\wvec{j}_{1} = \wvec{j}_0$.
		\item For $i \in \{1,3\}$, $l \in \{2,4\}$, $j \in U_0^+(i) \cap U_0^-(l)$, it holds that $\uvec{j}_{1} = \uvec{j}_0  -\eta\pp_i + \eta \pp_l $.
		\item For $i \in \{2,4\} $ and $j \in U_0^+(i)$, it holds that $\uvec{j}_{2} = \uvec{j}_0$.
	\end{enumerate}
	Applying Lemma \ref{lem:init_bound_disj} again and using the fact that $\eta \le \frac{1}{8k}$ we have $\nett{1}(\xx^+) < \gamma$ and $-\nett{1}(\xx^-) < 1$. Therefore we get,
	\begin{enumerate}
		\item For $i \in \{1,3\}$, $l \in \{2,4\}$, $j \in W_0^+(i) \cap W_0^-(l)$, it holds that $\wvec{j}_{2} = \wvec{j}_0 + 2\eta \pp_i$.
		\item For $i \in \{2,4\} $ and $j \in W_0^+(i)$, it holds that $\wvec{j}_{2} = \wvec{j}_0$.
		\item For $i \in \{1,3\}$, $l \in \{2,4\}$, $j \in U_0^+(i) \cap U_0^-(l)$, it holds that $\uvec{j}_{2} = \uvec{j}_0  -\eta\pp_i + \eta \pp_l$.
		\item For $i \in \{2,4\} $ and $j \in U_0^+(i)$, it holds that $\uvec{j}_{2} = \uvec{j}_0$.
	\end{enumerate}
	As before, by Lemma \ref{lem:init_bound_disj} we have $\nett{2}(\xx^+) < \gamma$ and $-\nett{2}(\xx^-) < 1$.
\end{proof}

\subsubsection{Clustering Dynamics Lemmas}
\label{sec:auxiliary}

In the following lemmas we assume that the highly probable event in Lemma \ref{lem:init_bound_disj} holds. We therefore do not mention the probability in the statements of the lemmas.

\begin{lem}
	\label{lem:same_w}
	\textbf{Clusetering. Lemma \ref{lem:clustering_xord} restated and extended.} For all $t \ge 0$ there exists $\alpha_i^t$, $i \in \{1,3\}$ such that $\left|\alpha_i^t\right| \le \eta$ and the following holds:
	\begin{enumerate}
		\item For $i \in \{1,3\}$ and $j \in W_0^+(i)$, it holds that $\wvec{j}_t =  \wvec{j}_0 + a^+(t)\eta\pp_i + \alpha_i^t\pp_2$.
		\item For $i \in \{2,4\}$ and $j \in W_0^+(i)$, it holds that $\wvec{j}_{t} = \wvec{j}_{0} + m \pp_2$ for $m \in \mathbb{Z}$.
		\item $W_t^+(i) = W_0^+(i)$ for $i \in \{1,3\}$.
	\end{enumerate}
\end{lem}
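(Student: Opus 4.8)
The plan is to argue by induction on $t$, simultaneously maintaining all three claims, and to use the gradient update formulas \eqref{eq:w_gradient} and \eqref{eq:u_gradient} together with the structure of the four patterns $\pp_1,\ldots,\pp_4$. The key algebraic fact I would exploit is that $\pp_1 = -\pp_3$ and $\pp_2 = -\pp_4$, and that $\{\pp_1,\pp_2\}$ forms a basis of $\reals^2$ with $\pp_1\cdot\pp_2 = 0$, so that adding integer multiples of $\eta\pp_2$ (or $\eta\pp_4 = -\eta\pp_2$) to a weight never changes which of $\pp_1,\pp_3$ it correlates with more, and shifts its correlations with $\pp_2,\pp_4$ in a controlled way. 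Lemma \ref{lem:init_bound_disj} is assumed to hold throughout, giving $|\wvec{j}_0\cdot\pp_i| \le \frac{\sqrt2\eta}{4}$ for all $i$, which is what keeps the ``remainder'' coefficient $\alpha_i^t$ bounded by $\eta$ and pins down $W_0^+(i)$ unambiguously (the initial correlations with $\pp_1$ and $\pp_3$ differ, so the $\argmax$ is well-defined and stable under small perturbations).

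The base case $t=0$: claim 1 holds with $\alpha_i^0 = 0$; claim 2 holds with $m=0$; claim 3 is trivial. For the inductive step, suppose the three claims hold at time $t$. Consider $j \in W_0^+(i)$ with $i \in \{1,3\}$. By claim 3 at time $t$, $j \in W_t^+(i)$, so $j$ belongs to some $W_t^+(i)\cap W_t^-(l)$ with $l \in \{2,4\}$; by the induction hypothesis $\wvec{j}_t = \wvec{j}_0 + a^+(t)\eta\pp_i + \alpha_i^t\pp_2$, and since $|\wvec{j}_0\cdot\pp_l| \le \frac{\sqrt2\eta}{4}$ and $|\alpha_i^t|\le\eta$, the index $l$ is determined by the sign of $\wvec{j}_0\cdot\pp_2$ (or is whichever of $\pp_2,\pp_4$ is closer to $-\wvec{j}_0$ within that tiny window). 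Now apply \eqref{eq:w_gradient}: $\wvec{j}_{t+1} = \wvec{j}_t + \eta\pp_i\mathbbm{1}_{\nett{t}(\xx^+)<\gamma} - \eta\pp_l\mathbbm{1}_{-\nett{t}(\xx^-)<1}$. The first indicator increments $a^+(t)$ to $a^+(t+1)$ (by definition of $a^+$), accounting for the $a^+(t+1)\eta\pp_i$ term; the second indicator subtracts $\eta\pp_l = \pm\eta\pp_2$, which I absorb into $\alpha_i^{t+1} := \alpha_i^t \mp \eta\mathbbm{1}_{-\nett{t}(\xx^-)<1}$. I then need $|\alpha_i^{t+1}| \le \eta$: this is the one genuinely nontrivial bookkeeping point — one must show $\alpha_i^t$ and the sign of the increment are such that the running sum stays in $[-\eta,\eta]$, using that when $\alpha_i^t$ has already drifted to one extreme the relevant loss indicator turns off (i.e., a monotonicity/self-correcting argument on $\nett{t}(\xx^-)$). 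Finally, since $\wvec{j}_{t+1} - \wvec{j}_0$ is a combination of $\pp_i$ and $\pp_2$ with the $\pp_i$-coefficient strictly dominating (because $a^+(t+1)\eta$ is much larger than $\frac{\sqrt2\eta}{4} + \eta$ once $a^+ \ge 1$, which holds for $t \ge $ a small constant by Lemma \ref{lem:iteration_2}), we get $\argmax_l \wvec{j}_{t+1}\cdot\pp_l = i$, i.e., $j \in W_{t+1}^+(i)$, and conversely no new index enters $W_{t+1}^+(i)$ — giving claim 3 at $t+1$. For $i\in\{2,4\}$ and $j\in W_0^+(i)$, the update \eqref{eq:w_gradient} only ever adds $\pm\eta\pp_l$ with $l\in\{2,4\}$ (since such a $j$ has $\argmax$ over $\pp_1,\ldots,\pp_4$ equal to $2$ or $4$, it contributes $\pp_2$ or $\pp_4 = -\pp_2$ to the positive-point part, and $\pm\pp_2$ to the negative-point part), so $\wvec{j}_{t+1} - \wvec{j}_0$ remains an integer multiple of $\eta\pp_2$, giving claim 2.

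The main obstacle I anticipate is the $|\alpha_i^{t+1}|\le\eta$ bound together with the claim that $W_t^+(i)$ never changes: these two are coupled, because controlling $\alpha_i^t$ requires knowing the loss indicator $\mathbbm{1}_{-\nett{t}(\xx^-)<1}$ behaves, which in turn depends on $R_t^\phi$ and hence on the dynamics of the $W_0^+(2), W_0^+(4), U_0^+(2), U_0^+(4)$ filters — so strictly speaking this lemma should be proved jointly with (or after) the analogous clustering lemmas for those sets, as the text hints in ``other similar lemmas given in the supplementary.'' In a clean write-up I would either (a) phrase the induction hypothesis to include just enough about $\nett{t}(\xx^-)$ to close the $\alpha$-bound, or (b) observe that the $\pp_2$-coefficient of $\wvec{j}_t$ is $(\wvec{j}_0\cdot\pp_2)/\|\pp_2\|^2$ plus an integer multiple of $\eta$ that only moves when the negative-loss indicator fires, and that whenever that coefficient would leave $[-\eta/\text{const},\,\eta/\text{const}]$ in the ``wrong'' direction the indicator is necessarily off — a short case analysis on the sign of $\wvec{j}_0\cdot\pp_2$. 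Everything else is routine linear algebra with the four fixed patterns and repeated application of Lemma \ref{lem:init_bound_disj}.
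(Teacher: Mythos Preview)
Your overall plan (simultaneous induction on the three claims, using the update \eqref{eq:w_gradient} and the orthogonal basis $\{\pp_1,\pp_2\}$) matches the paper's. Your handling of claim 2 and the partition argument for claim 3 is fine. But you misdiagnose the one step you correctly flag as nontrivial --- the bound $|\alpha_i^{t+1}|\le\eta$ --- and your proposed fixes (a) and (b) both rest on the wrong mechanism.

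You write that keeping $\alpha_i^t$ in $[-\eta,\eta]$ requires knowing when the indicator $\mathbbm{1}_{-\nett{t}(\xx^-)<1}$ fires, and hence couples this lemma to the dynamics of the $U$-filters. It does not. The paper's argument is purely local to the single filter $\wvec{j}$. The key observation you are missing is that the index $l'\in\{2,4\}$ with $j\in W_t^-(l')$ is \emph{not} fixed by $\sign(\wvec{j}_0\cdot\pp_2)$ as you assert; it switches. Concretely, let $l\in\{2,4\}$ be such that $j\in W_0^-(l)$, so $0<\wvec{j}_0\cdot\pp_l<\tfrac{\sqrt2\eta}{4}$. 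One strengthens the induction hypothesis to say that $\wvec{j}_t\cdot\pp_l$ equals either $\wvec{j}_0\cdot\pp_l$ or $\wvec{j}_0\cdot\pp_l-\eta$. In the first case $\wvec{j}_t\cdot\pp_l>0$, so $l'=l$ and the negative-point term (if it fires) subtracts $\eta\pp_l$, sending us to the second case. In the second case $\wvec{j}_t\cdot\pp_l\in(-\eta,0)$, so the argmax over $\{2,4\}$ flips to $l'\neq l$, and since $\pp_{l'}=-\pp_l$ the negative-point term (if it fires) \emph{adds} $\eta\pp_l$, returning us to the first case. Either way $\wvec{j}_{t+1}\cdot\pp_l\in\{\wvec{j}_0\cdot\pp_l,\wvec{j}_0\cdot\pp_l-\eta\}$, and in particular $|\wvec{j}_{t+1}\cdot\pp_l|<\eta$. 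The value of the loss indicator never enters: the self-correction is the geometric flip of $l'$, not the indicator turning off. Once you build this two-state oscillation into the induction hypothesis (together with $\wvec{j}_t\cdot\pp_i>\eta$, which you also need to maintain claim 3), the lemma closes without any appeal to $\nett{t}(\xx^-)$ or to the $U$-filter lemmas.
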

\begin{proof}
	By Lemma \ref{lem:init_bound_disj}, with probability $\ge 1-\frac{\sqrt{2k}}{\sqrt{\pi} e^{8k}}$, for all $1 \le j \le k $ and $1 \le i \le 4$ it holds that  $\abs{\wvec{j}_0 \cdot \xx_i} \le \frac{\sqrt{2}\eta}{4}$ and $\abs{\uvec{j}_0 \cdot \xx_i} \le \frac{\sqrt{2}\eta}{4}$. We will first prove the first claim and that $W_0^+(i) \subseteq W_t^+(i)$ for all $t \ge 1$. To prove this, we will show by induction on $t \ge 1$, that for all $j \in W_0^+(i) \cap W_0^+(l)$, where $l \in \{2,4\}$ the following holds:
	\begin{enumerate}
		\item $j \in W_t^+(i)$.
		\item $\wvec{j}_{t} \cdot \pp_l = \wvec{j}_{0} \cdot \pp_l - \eta$ or $\wvec{j}_{t} \cdot \pp_l = \wvec{0}_{t} \cdot \pp_l$.
		\item  $\wvec{j}_t =  \wvec{j}_0 + a^+(t)\eta\pp_i + \alpha_i\pp_2$
		\item $\wvec{j}_{t} \cdot \pp_i > \eta$.
	\end{enumerate}
	
	The claim holds for $t=1$ by the proof of Lemma \ref{lem:iteration_2}. Assume it holds for $t=T$. By the induction hypothesis there exists an $l' \in \{2,4\}$ such that $j \in W_{T}^+(i)\cap W_{T}^-(l')$. By \eqref{eq:w_gradient} we have,
	\begin{equation}
		\label{eq:gradient_13}
		\wvec{j}_{T+1} = \wvec{j}_{T} + a\eta\pp_i + b\eta\pp_{l'}
	\end{equation}
	where $a = a^+(t+1)-a^+(t)$ and $b \in \{-1,0\}$. From this follows the third claim of the induction proof and the first claim of the lemma. 
	
	If $\wvec{j}_{T} \cdot \pp_l = \wvec{j}_{0} \cdot \pp_l$ then $l' = l$ and either $\wvec{j}_{T+1} \cdot \pp_l = \wvec{j}_{0} \cdot \pp_l$ if $b=0$ or $\wvec{j}_{T+1} \cdot \pp_l = \wvec{j}_{0} \cdot \pp_l - \eta$ if $b=-1$. Otherwise, assume that $\wvec{j}_{T} \cdot \pp_l = \wvec{j}_{0} \cdot \pp_l - \eta$. By Lemma \ref{lem:init_bound_disj} we have $0 < \wvec{j}_{0} \cdot \pp_l < \frac{\sqrt{2}\eta}{4}$.  Therefore $-\eta < \wvec{j}_{T} \cdot \pp_l < 0$ and $l' \neq l$. It follows that either $\wvec{j}_{T+1} \cdot \pp_l = \wvec{j}_{0} \cdot \pp_l - \eta$ if $b=0$ or $\wvec{j}_{T+1} \cdot \pp_l = \wvec{j}_{0} \cdot \pp_l$ if $b=-1$. In both cases, we have $\abs{\wvec{j}_{T+1} \cdot \pp_l} < \eta$.
	Furthermore, by \eqref{eq:gradient_13}, $\wvec{j}_{T+1} \cdot \pp_i \ge \wvec{j}_{T} \cdot \pp_i > \eta$. Hence, $\argmax_{1 \le l \le 4} \wvec{j}_{T+1}\cdot \pp_l = i$ which by definition implies that $j \in  W_{T+1}^+(i)$. This concludes the proof by induction which shows that $W_0^+(i) \subseteq W_{t}^+(i)$ for all $t \ge 1$.
	
	In order to prove the lemma, it suffices to show that $W_0^+(2) \cup W_0^+(4) \subseteq W_t^+(2) \cup W_t^+(4)$ and prove the second claim. This follows since $\bigcup_{i=1}^4{W_t^+(i)} = \{1,2,...,k\}$.
	We will show by induction on $t \ge 1$, that for all $j \in W_0^+(2) \cup W_0^+(4)$, the following holds:
	\begin{enumerate}
		\item $j \in W_{t}^+(2)\cap W_{t}^+(4)$.
		\item $\wvec{j}_{t} = \wvec{j}_{0} + m \pp_2$ for $m \in \mathbb{Z}$.
	\end{enumerate}
	The claim holds for $t=1$ by the proof of Lemma \ref{lem:iteration_2}. Assume it holds for $t=T$. By the induction hypothesis $j \in W_{T}^+(2)\cap W_{T}^+(4)$. Assume without loss of generality that $j \in W_{T}^+(2)$. This implies that $j \in W_{T}^-(2)$ as well. Therefore, by \eqref{eq:w_gradient} we have
	\begin{equation}
		\label{eq:gradient_2}
		\wvec{j}_{T+1} = \wvec{j}_{T} + a\eta\pp_2 + b\eta\pp_{2}
	\end{equation}
	where $a \in \{0,1\}$ and $b \in \{0,-1\}$. By the induction hypothesis, $\wvec{j}_{T+1} = \wvec{j}_{0} + m \pp_2$ for $m \in \mathbb{Z}$. If $a = 1$ or $b = 0$ we have for $i \in \{1,3\}$, $$\wvec{j}_{T+1} \cdot \pp_2 \ge \wvec{j}_{T} \cdot \pp_2 > \wvec{j}_{T} \cdot \pp_i = \wvec{j}_{T+1} \cdot \pp_i$$
	where the first inequality follows since $j \in W_{T}^+(2)$ and the second by \eqref{eq:gradient_2}.
	This implies that $j \in W_{T+1}^+(2)\cap W_{T+1}^+(4)$.
	
	Otherwise, assume that $a = 0$ and $b = -1$. By Lemma \ref{lem:init_bound_disj} we have $\wvec{j}_{0} \cdot \pp_2 < \frac{\sqrt{2}\eta}{4}$. Since $j \in W_{T}^+(2)$, it follows by the induction hypothesis that $\wvec{j}_{T} = \wvec{j}_{0} + m  \pp_2$, where $m \in \mathbb{Z}$ and $m \ge 0$. To see this, note that if $m < 0$, then $\wvec{j}_{T} \cdot \pp_2 < 0$ and $j \notin W_{T}^+(2)$, which is a contradiction. Let $i \in \{1,3\}$. If $m=0$, then $\wvec{j}_{T+1} = \wvec{j}_{0} - \pp_2$, $\wvec{j}_{T+1} \cdot \pp_4 > \frac{\eta}{2}$ and $\wvec{j}_{T+1} \cdot \pp_i = \wvec{j}_{0} \cdot \pp_i < \frac{\sqrt{2}\eta}{4}$ by Lemma \ref{lem:init_bound_disj}. Therefore, $j \in W_{T+1}^+(4)$. 
	
	Otherwise, if $m > 0$, then $\wvec{j}_{T+1} \cdot \pp_2 \ge \wvec{j}_{0} \cdot \pp_2 >  \wvec{j}_{0} \cdot \pp_i = \wvec{j}_{T+1} \cdot \pp_i$. Hence, $j \in W_{T+1}^+(2)$, which concludes the proof.
\end{proof}

\begin{lem}
	\label{lem:u24}
	For all $t \ge 0$ we have 
	\begin{enumerate}
		\item $\uvec{j}_t = \uvec{j}_0 + m\eta\pp_2$ for $m \in \mathbb{Z}$.
		\item $U_0^+(2)\cup U_0^+(4) \subseteq U_t^+(2)\cup U_t^+(4)$. 
	\end{enumerate}
	.
\end{lem}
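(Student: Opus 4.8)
The plan is to establish this lemma as the $\uu$-filter mirror of the second induction inside the proof of Lemma~\ref{lem:same_w} (the one handling $W_0^+(2)\cup W_0^+(4)$). Here the index $j$ ranges over $U_0^+(2)\cup U_0^+(4)$, and the whole argument leans on three ingredients: the update rule~\eqref{eq:u_gradient}; the initialization bound $\abs{\uvec{j}_0\cdot\pp_i}\le\frac{\sqrt 2\eta}{4}$ from Lemma~\ref{lem:init_bound_disj} (which we assume holds); and the pattern geometry $\pp_2\cdot\pp_1=\pp_2\cdot\pp_3=0$, $\norm{\pp_2}^2=2$, $\pp_4=-\pp_2$. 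Throughout I would work on the full-measure event on which the $\argmax$ over $\pp_1,\dots,\pp_4$ is always attained uniquely, so that the sets $U_t^+(i)$ partition $[k]$.

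I would prove by joint induction on $t$ that, for every $j\in U_0^+(2)\cup U_0^+(4)$, (i) $\uvec{j}_t=\uvec{j}_0+m\eta\pp_2$ for some $m\in\mathbb{Z}$, and (ii) $j\in U_t^+(2)\cup U_t^+(4)$. The base case $t\le 2$ is immediate from Lemma~\ref{lem:iteration_2}, which gives $\uvec{j}_t=\uvec{j}_0$ there (so $m=0$), whence $\argmax_{1\le l\le 4}\uvec{j}_t\cdot\pp_l=\argmax_{1\le l\le 4}\uvec{j}_0\cdot\pp_l\in\{2,4\}$. For the inductive step at $t=T+1$, I would assume the claim at $T$ and, without loss of generality, take $j\in U_T^+(2)$ (the case $j\in U_T^+(4)$ is symmetric through $\pp_4=-\pp_2$). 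Then $\uvec{j}_T\cdot\pp_2>\uvec{j}_T\cdot\pp_l$ for all $l\neq 2$; in particular $\uvec{j}_T\cdot\pp_2>\uvec{j}_T\cdot\pp_4$, so $\argmax_{l\in\{2,4\}}\uvec{j}_T\cdot\pp_l=2$ as well, i.e.\ $i_1=i_2=2$ in~\eqref{eq:u_gradient}. Substituting yields $\uvec{j}_{T+1}=\uvec{j}_T+(b-a)\eta\pp_2$ where $a,b\in\{0,1\}$ are the two loss-active indicators of~\eqref{eq:u_gradient}, and combining with the induction hypothesis gives claim (i) with the new integer $m'=m+b-a$.

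It then remains to verify (ii) at $T+1$. Using the orthogonality relations one gets $\uvec{j}_{T+1}\cdot\pp_1=\uvec{j}_0\cdot\pp_1$ and $\uvec{j}_{T+1}\cdot\pp_3=\uvec{j}_0\cdot\pp_3$, both of absolute value at most $\frac{\sqrt 2\eta}{4}$ by Lemma~\ref{lem:init_bound_disj}, whereas $\uvec{j}_{T+1}\cdot\pp_2=\uvec{j}_0\cdot\pp_2+2m'\eta$ and $\uvec{j}_{T+1}\cdot\pp_4=-\uvec{j}_0\cdot\pp_2-2m'\eta$. A three-way split on $m'$ then closes the argument: if $m'\ge 1$ then $\uvec{j}_{T+1}\cdot\pp_2\ge (2-\tfrac{\sqrt2}{4})\eta$ strictly exceeds $\uvec{j}_{T+1}\cdot\pp_l$ for every $l\neq 2$, so $j\in U_{T+1}^+(2)$; if $m'\le -1$ the symmetric estimate gives $j\in U_{T+1}^+(4)$; and if $m'=0$ then $\uvec{j}_{T+1}=\uvec{j}_0$, so $j$ lies in $U_{T+1}^+(2)\cup U_{T+1}^+(4)$ precisely because $j\in U_0^+(2)\cup U_0^+(4)$. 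The only step requiring genuine care is this final case analysis — it is what certifies that the accumulated displacement along $\pp_2$ never drags the argmax off $\{2,4\}$ — but it is entirely routine and parallel to the corresponding part of Lemma~\ref{lem:same_w}; everything else is bookkeeping with the update rule and the initialization bound.
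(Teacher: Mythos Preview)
Your proposal is correct and follows essentially the same approach as the paper. The paper packages the argument slightly differently---it first observes that (1) implies (2) via the inequalities $\abs{\uvec{j}_0\cdot\pp_1}<\abs{\uvec{j}_0\cdot\pp_2}\le\frac{\sqrt2\eta}{4}$, and then proves (1) by a first-failure contradiction rather than a forward induction---but the underlying mechanics (the update~\eqref{eq:u_gradient} only moves along $\pp_2$ as long as $j\in U_t^+(2)\cup U_t^+(4)$, and this displacement never pushes the argmax off $\{2,4\}$) are identical to yours.
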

\begin{proof}
	Let $j \in U_0^+(2)\cup U_0^+(4)$. It suffices to prove that $\uvec{j}_t = \uvec{j}_0 + \alpha_t\eta\pp_2$ for $\alpha_t \in \mathbb{Z}$. This follows since the inequalities $\abs{\uvec{j}_0 \cdot \pp_1} < \abs{\uvec{j}_0 \cdot \pp_2} \le \frac{\sqrt{2}\eta}{4}$ imply that in this case $j \in U_t^+(2)\cup U_t^+(4)$. Assume by contradiction that there exist an iteration $t$ for which $\uvec{j}_t = \uvec{j}_0 + \alpha_t\eta\pp_2 + \beta_t\eta\pp_i$ where $\beta_t \in \{-1,1\}$, $\alpha_t \in \mathbb{Z}$, $i \in \{1,3\}$ and $\uvec{j}_{t-1} = \uvec{j}_0 + \alpha_{t-1}\eta\pp_2$ where $\alpha_{t-1} \in \mathbb{Z}$. \footnote{Note that in each iteration $\beta_t$ changes by at most $\eta$.} Since the coefficient of $\pp_i$ changed in iteration $t$, we have $j \in U_{t-1}^+(1)\cup U_{t-1}^+(3)$. However, this contradicts the claim above which shows that if $\uvec{j}_{t-1} = \uvec{j}_0 + \alpha_{t-1}\eta\pp_2$, then $j \in U_{t-1}^+(2)\cup U_{t-1}^+(4)$.
\end{proof}

\begin{lem}
	\label{lem:u13}
	Let $i \in \{1,3\}$ and $l \in \{2,4\}$. For all $t \ge 0$, if $j \in U_0^+(i) \cap U_0^-(l)$, then there exists $a_t \in \{0,-1\}$, $b_t \in \mathbb{N}$ such that $\uvec{j}_t = \uvec{j}_0 + a_t\eta\pp_i + b_t\eta\pp_l$.
\end{lem}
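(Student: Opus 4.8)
The plan is to prove this by induction on $t$, tracking not only the claimed decomposition of $\uvec{j}_t$ but also which of the sets $U_t^+(\cdot)$ and $U_t^-(\cdot)$ the index $j$ currently belongs to, since that is what selects the form of the gradient update \eqref{eq:u_gradient}. I work under the standing high-probability event of Lemma~\ref{lem:init_bound_disj}, so $\abs{\uvec{j}_0\cdot\pp_m}\le\frac{\sqrt{2}\eta}{4}<\frac{\eta}{2}$ for every $m$; combining this with the hypothesis $j\in U_0^+(i)\cap U_0^-(l)$ and the identities $\pp_3=-\pp_1$, $\pp_4=-\pp_2$, I get $0<\uvec{j}_0\cdot\pp_l<\uvec{j}_0\cdot\pp_i<\frac{\eta}{2}$. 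I will also use repeatedly the inner products $\pp_i\cdot\pp_i=2$, $\pp_i\cdot\pp_{i'}=-2$ and $\pp_i\cdot\pp_l=0$ for $i\in\{1,3\}$, $l\in\{2,4\}$, where $i'$ (resp.\ $l'$) denotes the other element of $\{1,3\}$ (resp.\ $\{2,4\}$). Analogously to the footnote following \eqref{eq:w_gradient}, with probability $1$ no exact ties occur in the relevant $\argmax$ operations, so I may assume the argmax is always uniquely attained; and since $\pp_i,\pp_l$ are linearly independent, the coefficients $a_t,b_t$ in any decomposition $\uvec{j}_t=\uvec{j}_0+a_t\eta\pp_i+b_t\eta\pp_l$ are unique, so the induction hypothesis pins them down exactly.

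For the inductive step I assume $\uvec{j}_t=\uvec{j}_0+a_t\eta\pp_i+b_t\eta\pp_l$ with $a_t\in\{0,-1\}$ and $b_t\in\mathbb{N}$, and compute $\uvec{j}_t\cdot\pp_m$ for each $m$ from the identities above. Comparing these four numbers places $j$ in exactly one of four regimes: (i) $(a_t,b_t)=(0,0)$, where $j\in U_t^+(i)\cap U_t^-(l)$; (ii) $a_t=0$, $b_t\ge1$, where the $\pp_l$-coordinate dominates and $j\in U_t^+(l)\cap U_t^-(l)$; (iii) $a_t=-1$, $b_t=0$, where the $\pp_i$-coordinate has turned negative, the $\pp_{i'}$-coordinate dominates, and $j\in U_t^+(i')\cap U_t^-(l)$; and (iv) $a_t=-1$, $b_t\ge1$, where again $j\in U_t^+(l)\cap U_t^-(l)$. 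Plugging the appropriate pair $i_1,i_2$ into \eqref{eq:u_gradient}, and using $\pp_{i'}=-\pp_i$ to rewrite the $-\eta\pp_{i'}$ term arising in regime (iii) as $+\eta\pp_i$, a direct computation gives: in regimes (i) and (iii), $a_{t+1}\in\{0,-1\}$ and $b_{t+1}\in\{0,1\}$; in regimes (ii) and (iv), $a_{t+1}=a_t$ and $b_{t+1}=b_t+c$ with $c\in\{-1,0,1\}$, hence $b_{t+1}\ge0$ since $b_t\ge1$ there. In every case the claimed form is preserved, and the base case $t=0$ holds trivially with $a_0=b_0=0$.

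The main obstacle is the region-tracking inside the inductive step: one must check, for each reachable pair $(a_t,b_t)$, precisely which of the sets $U_t^+(\cdot)$, $U_t^-(\cdot)$ contains $j$, and this rests on the quantitative fact that the initial perturbations are bounded by $\frac{\eta}{2}$, so they never overturn the ordering of the $\eta$-scale coordinates. The sign-flip identity $\pp_{i'}=-\pp_i$ is exactly what prevents regime (iii) from pushing $a_{t+1}$ down to $-2$, and verifying that this, together with the fact that $b_t\ge1$ in regimes (ii) and (iv), is precisely what keeps $b_{t+1}\in\mathbb{N}$, is the delicate bookkeeping.
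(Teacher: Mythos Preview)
Your proof is correct. It follows the same inductive skeleton as the paper's but organizes the case analysis differently: the paper decouples the two claims, first arguing (by a short standalone induction that does not refer to $b_t$) that $a_t\in\{0,-1\}$ for all $t$, and then using this to rule out the first transition $b_{t-1}=0\to b_t=-1$ by contradiction (showing that when $b_{t-1}=0$ the neuron is necessarily in $U_{t-1}^+(1)\cup U_{t-1}^+(3)$, so the $-\eta\pp_l$ step cannot fire). You instead run a single joint induction on $(a_t,b_t)$, explicitly classifying the four reachable states and reading off the region membership in each. Your version is more explicit and makes the region bookkeeping visible, whereas the paper's version is shorter but leaves more to the reader; in particular, your regime (iv) check that $\uvec{j}_t\cdot\pp_l>\uvec{j}_t\cdot\pp_{i'}$ when $a_t=-1,b_t\ge1$ is the one delicate comparison, and you have it right since the difference equals $\uvec{j}_0\cdot\pp_l+\uvec{j}_0\cdot\pp_i+2(b_t-1)\eta>0$.
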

\begin{proof}
	First note that by \eqref{eq:u_gradient} we generally have $\uvec{j}_t = \uvec{j}_0 + \alpha\eta\pp_i + \beta\eta\pp_l$ where $\alpha, \beta \in \mathbb{Z}$. Since $\abs{\uvec{j}_0 \cdot \pp_1} \le \frac{\sqrt{2}\eta}{4}$, by the gradient update in \eqref{eq:u_gradient} it holds that $a_t \in \{0,-1\}$. Indeed, $a_0=0$ and by the gradient update if $a_{t-1} = 0$ or $a_{t-1} = -1$ then $a_{t} \in \{-1,0\}$.
	
	Assume by contradiction that there exists an iteration $t > 0$ such that $b_t = -1$ and $b_{t-1} = 0$. Note that by \eqref{eq:u_gradient} this can only occur if $j \in U_{t-1}^+(l)$. We have $\uvec{j}_{t-1} = \uvec{j}_0 + a_{t-1} \eta \pp_i$ where $a_{t-1} \in \{0,-1\}$. Observe that $\abs{\uvec{j}_{t-1} \cdot \pp_i} \ge \abs{\uvec{j}_{0} \cdot \pp_i}$ by the fact that $\abs{\uvec{j}_{0} \cdot \pp_i} \le \frac{\sqrt{2}\eta}{4}$. Since $\uvec{j}_{0} \cdot \pp_i > \uvec{j}_{0} \cdot \pp_l = \uvec{j}_{t-1} \cdot \pp_l$ we have $j \in U_{t-1}^+(1)\cup U_{t-1}^+(3)$, a contradiction.
\end{proof}

\subsubsection{Bounding $P_t^+$, $P_t^-$ and $S_t^-$}
\label{sec:bounds1}

\begin{lem}
	\label{lem:small_sums}
	The following holds
	\begin{enumerate}
		\item $S_t^- \le \abs{W_t^+(1) \cup W_t^+(3)}\eta$ for all $t \ge 1$.
		\item $P_t^+ \le \abs{U_t^+(1) \cup U_t^+(3)}\eta$ for all $t \ge 1$.
		\item $P_t^- \le \abs{U_t^+(1) \cup U_t^+(3)}\eta$ for all $t \ge 1$.
	\end{enumerate}
\end{lem}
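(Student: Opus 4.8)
The plan is to dispatch the three inequalities by a single mechanism: each bounds a sum of max‑pooled ReLU activations over a prescribed family of filters by the size of that family times $\eta$, and in each case I would (i) use the clustering‑dynamics lemmas to fix the index set and write each relevant weight vector at iteration $t$ in its clustered form; (ii) exploit the geometry of the four patterns — $\pp_3=-\pp_1$, $\pp_4=-\pp_2$ and $\pp_i\cdot\pp_2=0$ for $i\in\{1,3\}$ — together with the fact that the diverse point $\xx^+$ contains all four patterns while $\xx^-$ contains only $\pp_2,\pp_4$, to collapse each max‑pooled term to a single inner product; (iii) bound that inner product by $\eta$, absorbing the negligible initial‑correlation terms $|\wvec{j}_0\cdot\pp_i|,|\uvec{j}_0\cdot\pp_i|\le\frac{\sqrt2\eta}{4}$ from Lemma~\ref{lem:init_bound_disj}; and (iv) sum over the index set.

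For part~1, Lemma~\ref{lem:same_w} gives $W_t^+(1)\cup W_t^+(3)=W_0^+(1)\cup W_0^+(3)$ for all $t$, and for $i\in\{1,3\}$ and $j\in W_0^+(i)$ it gives $\wvec{j}_t=\wvec{j}_0+a^+(t)\eta\pp_i+\alpha_i^t\pp_2$, and moreover (from the case analysis in its proof) keeps the $\pp_l$‑component of each such filter, $l\in\{2,4\}$, equal to $\wvec{j}_0\cdot\pp_l$ or $\wvec{j}_0\cdot\pp_l-\eta$. Since $\xx^-$ is a diverse negative point, $P_{\xx^-}=\{2,4\}$, so $\max_m\sigma(\wvec{j}_t\cdot\xx_m^-)=\max\{\sigma(\wvec{j}_t\cdot\pp_2),\sigma(\wvec{j}_t\cdot\pp_4)\}=|\wvec{j}_t\cdot\pp_2|$ because $\pp_4=-\pp_2$. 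By the previous sentence this is at most $\eta$, and summing over $j\in W_t^+(1)\cup W_t^+(3)$ yields $S_t^-\le|W_t^+(1)\cup W_t^+(3)|\eta$.

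For parts~2 and~3, I would first invoke Lemma~\ref{lem:u24} to get $U_t^+(1)\cup U_t^+(3)\subseteq U_0^+(1)\cup U_0^+(3)$, so every $j$ occurring in $P_t^+$ or $P_t^-$ falls under Lemma~\ref{lem:u13}: $\uvec{j}_t=\uvec{j}_0+a_t\eta\pp_i+b_t\eta\pp_l$ with $a_t\in\{0,-1\}$ (so its $\pp_i$‑component never exceeds $\uvec{j}_0\cdot\pp_i$) and $b_t\in\mathbb{N}$. For $P_t^+$, where $\xx^+$ contains all four patterns, the max‑pooled term is $\max_{1\le m\le4}\sigma(\uvec{j}_t\cdot\pp_m)$; for $j\in U_t^+(1)\cup U_t^+(3)$ the $\pp_l$‑direction cannot realize this maximum (by definition of that set the argmax over the four patterns lies in $\{1,3\}$) and $\pp_{l'}=-\pp_l$ contributes a non‑positive value, so the maximum is attained on $\{\pp_1,\pp_3\}$, where it is at most $\uvec{j}_0\cdot\pp_i$ plus a negligible term, hence $\le\eta$. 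For $P_t^-$, where $\xx^-$ contains only $\pp_2,\pp_4$, the term equals $\uvec{j}_t\cdot\pp_l$ when positive; membership in $U_t^+(1)\cup U_t^+(3)$ forces $\uvec{j}_t\cdot\pp_l$ to lie strictly below the already‑bounded value on $\{\pp_1,\pp_3\}$, so again $\le\eta$. Summing over $j$ gives both bounds.

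The step I expect to be the main obstacle is the bookkeeping behind (i) and (iii): one must track how individual filters migrate among the sets $W_t^+(i),U_t^+(i)$ as $t$ grows and verify that the ``small'' coordinates — the $\pp_2$‑component of the $\wvec{j}$ clustered toward $\pp_1/\pp_3$, and the $\pp_1,\pp_3$‑components of those $\uvec{j}$ that end up in $U_t^+(1)\cup U_t^+(3)$ — stay $\le\eta$ for every $t\ge1$. This is precisely what the clustering lemmas~\ref{lem:same_w},~\ref{lem:u24},~\ref{lem:u13} (with the explicit first iterations from Lemma~\ref{lem:iteration_2} serving as the induction base) are built to deliver; the remaining work is to apply them with the right indices and to use membership in $U_t^+(1)\cup U_t^+(3)$ to rule out the one coordinate ($\pp_l$) that could otherwise grow without bound.
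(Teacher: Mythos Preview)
Your proposal is correct and follows essentially the same route as the paper's own proof: part~1 is exactly the paper's argument (invoke Lemma~\ref{lem:same_w} to bound $\abs{\wvec{j}_t\cdot\pp_2}\le\eta$ for $j\in W_t^+(1)\cup W_t^+(3)$), and parts~2 and~3 likewise combine Lemma~\ref{lem:u24} (to get $U_t^+(1)\cup U_t^+(3)\subseteq U_0^+(1)\cup U_0^+(3)$) with Lemma~\ref{lem:u13} (to bound the $\pp_i$-component), with part~3 using membership in $U_t^+(1)\cup U_t^+(3)$ to dominate the $\pp_2,\pp_4$-components by the already-bounded $\pp_1,\pp_3$-component --- which is precisely the contrapositive of the paper's short contradiction argument for that part. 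Your write-up is more explicit about the geometry ($\pp_3=-\pp_1$, $\pp_4=-\pp_2$, orthogonality) and about collapsing the max-pool, but the logical skeleton is identical.
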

\begin{proof}
	In Lemma \ref{lem:same_w} we showed that for all $t \ge 0$ and $j \in W_t^+(1) \cup W_t^+(3)$ it holds that $\abs{\wvec{j}_t \cdot \pp_2} \le \eta$ . This proves the first claim. The second claim follows similarly. Without loss of generality, let $j \in U_t^+(1)$. By Lemma \ref{lem:u24} it holds that $U_{t'}^+(1) \subseteq U_0^+(1) \cup U_0^+(3)$ for all $t' \le t$. Therefore, by Lemma \ref{lem:u13} we have $\abs{\uvec{j}_t\pp_1} < \eta$, from which the claim follows.
	
	For the third claim, without loss of generality, assume by contradiction that for $j \in U_t^+(1)$ it holds that $\abs{\uvec{j}_t \cdot \pp_2} > \eta$. Since $\abs{\uvec{j}_t \cdot \pp_1} < \eta$ by Lemma \ref{lem:u13}, it follows that $j \in U_t^+(2)\cup U_t^+(4)$, a contradiction. Therefore, $\abs{\uvec{j}_t \cdot \pp_2} \le \eta$ for all $j \in U_t^+(1) \cup U_t^+(3)$, from which the claim follows.
\end{proof}

\subsubsection{Dynamics of $S_t^+$}
\label{sec:dynamics}

\begin{lem}
	\label{lem:proportion_13}
	Let $$X_t^+ =  \sum_{j \in W_t^+(1)}\left[\max\left\{\sigma\left(\wvec{i}\cdot \xx_1^+ \right),...,\sigma\left(\wvec{i}\cdot \xx_d^+ \right)\right\}\right] $$
	and 
	$$Y_t^+ =  \sum_{j \in W_t^+(3)}\left[\max\left\{\sigma\left(\wvec{i}\cdot \xx_1^+ \right),...,\sigma\left(\wvec{i}\cdot \xx_d^+ \right)\right\}\right] $$
	Then for all $t$, $\frac{X_{t}^+ - X_0^+}{\abs{W_{t}^+(1)}} = \frac{Y_{t}^+ - Y_0^+}{\abs{W_{t}^+(3)}}$.
\end{lem}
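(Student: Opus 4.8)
The plan is to collapse both sides of the identity onto the single global scalar $2a^+(t)\eta$, using the clustering description already established in Lemma~\ref{lem:same_w}. First I would extract the two ingredients from that lemma: its third claim gives $W_t^+(1)=W_0^+(1)$ and $W_t^+(3)=W_0^+(3)$, so the denominators $\abs{W_t^+(1)}$ and $\abs{W_t^+(3)}$ do not change with $t$; and its first claim gives, for every $i\in\{1,3\}$ and $j\in W_0^+(i)$, a scalar $\alpha_i^t$ with $\abs{\alpha_i^t}\le\eta$ such that $\wvec{j}_t=\wvec{j}_0+a^+(t)\eta\pp_i+\alpha_i^t\pp_2$, where the coefficient $a^+(t)$ is one and the same quantity for all such $j$ and for both $i=1$ and $i=3$.

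Next I would rewrite the summands of $X_t^+$ and $Y_t^+$ in closed form. Since $\xx^+$ is a diverse positive point it contains all four patterns, so $\max_l\sigma(\wvec{j}_t\cdot\xx^+_l)=\max_{1\le m\le 4}\sigma(\wvec{j}_t\cdot\pp_m)$; and for $j\in W_t^+(i)$ the index $i$ attains this inner-product maximum by definition. Membership $j\in W_0^+(i)$ forces $\wvec{j}_0\cdot\pp_i>0$ almost surely (the argmax pattern among $\pp_1,\dots,\pp_4$ must have positive inner product, because $\pp_3=-\pp_1$), and since $a^+(t)\ge 0$ we get $\wvec{j}_t\cdot\pp_i>0$ as well, so $\max_l\sigma(\wvec{j}_t\cdot\xx^+_l)=\wvec{j}_t\cdot\pp_i$. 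Now I would plug in the clustering formula: using $\pp_1\cdot\pp_1=\pp_3\cdot\pp_3=2$ and, crucially, $\pp_2\cdot\pp_1=\pp_2\cdot\pp_3=0$, the perturbation $\alpha_i^t\pp_2$ drops out and $\wvec{j}_t\cdot\pp_i=\wvec{j}_0\cdot\pp_i+2a^+(t)\eta$.

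Summing over the (fixed) sets $W_0^+(1)$ and $W_0^+(3)$ then yields $X_t^+=X_0^++2a^+(t)\eta\abs{W_0^+(1)}$ and $Y_t^+=Y_0^++2a^+(t)\eta\abs{W_0^+(3)}$ (the case $t=0$ being the trivial $0=0$). Dividing the first by $\abs{W_t^+(1)}=\abs{W_0^+(1)}$ and the second by $\abs{W_t^+(3)}=\abs{W_0^+(3)}$, both quotients equal $2a^+(t)\eta$, which is the assertion. There is no deep obstacle; the only delicate point is the max-pooling bookkeeping — confirming that for $j\in W_t^+(i)$ the pooled activation on $\xx^+$ is exactly the scalar $\wvec{j}_t\cdot\pp_i$, and that the clustering drift $\alpha_i^t\pp_2$ is orthogonal to $\pp_i$ so it contributes nothing to the pooled value — and this is precisely where the orthogonality $\pp_2\perp\pp_1$ and $\pp_2\perp\pp_3$ enters.
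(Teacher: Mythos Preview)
Your proposal is correct. Both your argument and the paper's ultimately rest on the same structural fact — that for $j\in W_0^+(i)$ with $i\in\{1,3\}$ the filter evolves as $\wvec{j}_t=\wvec{j}_0+a^+(t)\eta\pp_i+\alpha_i^t\pp_2$ with the \emph{same} counter $a^+(t)$ for both $i=1$ and $i=3$, and that the drift $\alpha_i^t\pp_2$ is orthogonal to $\pp_i$ so it does not affect the pooled activation $\wvec{j}_t\cdot\pp_i$. The difference is one of packaging: the paper re-proves this by a fresh induction on $t$, computing the per-step increment of the pooled value (which is $2a\eta$ with $a\in\{0,1\}$ common to both groups) and summing; you instead invoke the closed-form expression already available in Lemma~\ref{lem:same_w} and read off $X_t^+-X_0^+=2a^+(t)\eta\,|W_0^+(1)|$ and $Y_t^+-Y_0^+=2a^+(t)\eta\,|W_0^+(3)|$ in one shot. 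Your route is shorter and makes the role of the orthogonality $\pp_2\perp\pp_1,\pp_3$ completely explicit; the paper's inductive route is self-contained and does not require unpacking the closed form, but duplicates work already done in Lemma~\ref{lem:same_w}. Your handling of the max-pooling bookkeeping (diversity of $\xx^+$, positivity of $\wvec{j}_t\cdot\pp_i$ via $\pp_3=-\pp_1$) is also correct and in fact more carefully spelled out than in the paper.
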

\begin{proof}
	We will prove the claim by induction on $t$. For $t=0$ this clearly holds. Assume it holds for $t = T$. Let $j_1 \in W_T^+(1)$ and $j_2 \in W_T^+(3)$. By \eqref{eq:w_gradient}, the gradient updates of the corresponding weight vector are given as follows:
	$$\wvec{j_1}_{T+1} = \wvec{j_1}_{T} + a \eta \pp_1 + b_1 \eta \pp_2$$
	and 
	$$\wvec{j_2}_{T+1} = \wvec{j_2}_{T} + a \eta \pp_3 + b_2 \eta \pp_2$$
	where $a \in \{0,1\}$ and $b_1,b_2 \in \{-1,0,1\}$.
	By Lemma \ref{lem:same_w}, $j_1 \in W_{T+1}^+(1)$ and $j_2 \in W_{T+1}^+(3)$. Therefore, $$\max\left\{\sigma\left(\wvec{j_1}_{T+1}\cdot \xx_1^+ \right),...,\sigma\left(\wvec{j_1}_{T+1}\cdot \xx_d^+ \right)\right\} = \max\left\{\sigma\left(\wvec{j_1}_{T}\cdot \xx_1^+ \right),...,\sigma\left(\wvec{j_1}_{T}\cdot \xx_d^+ \right)\right\} + a\eta$$
	and 
	$$\max\left\{\sigma\left(\wvec{j_2}_{T+1}\cdot \xx_1^+ \right),...,\sigma\left(\wvec{j_2}_{T+1}\cdot \xx_d^+ \right)\right\} = \max\left\{\sigma\left(\wvec{j_2}_{T}\cdot \xx_1^+ \right),...,\sigma\left(\wvec{j_2}_{T}\cdot \xx_d^+ \right)\right\} + a\eta$$
	By Lemma $\ref{lem:same_w}$ we have $\abs{W_t^+(1)} = \abs{W_0^+(1)}$ and $\abs{W_t^+(3)} = \abs{W_0^+(3)}$ for all $t$. It follows that 
	\begin{align*}
		\frac{X_{T+1}^+ - X_0^+}{\abs{W_{T+1}^+(1)}} &= \frac{a\eta\abs{W_{0}^+(1)} + X_{T}^+ - X_0^+}{\abs{W_{0}^+(1)}} \\ &= a\eta + \frac{Y_{T}^+ - Y_0^+}{\abs{W_{0}^+(3)}} \\ &= \frac{a\eta\abs{W_{0}^+(3)} + Y_{T}^+ - Y_0^+}{\abs{W_{0}^+(3)}} \\ &= \frac{Y_{T+1}^+ - Y_0^+}{\abs{W_{T+1}^+(3)}}
	\end{align*}
	where the second equality follows by the induction hypothesis. This proves the claim.
\end{proof}

\begin{lem}
	\label{lem:dynamics}
	The following holds:
	\begin{enumerate}
		\item If $\nett{t}(\xx^+) < \gamma$ and $-\nett{t}(\xx^-) < 1$, then 
		$S_{t+1}^+ = S_{t}^+ + \eta\abs{W_t^+(1) \cup W_t^+(3)}$.
		\item If $\nett{t}(\xx^+) \ge \gamma$ and $-\nett{t}(\xx^-) < 1$, then $S_{t+1}^+ = S_t^+$.
		\item If $\nett{t}(\xx^+) < \gamma$ and $-\nett{t}(\xx^-) \ge 1$, then $S_{t+1}^+ = S_{t}^+ + \eta\abs{W_t^+(1) \cup W_t^+(3)}$.
		
	\end{enumerate}
\end{lem}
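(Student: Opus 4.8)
The statement is an almost immediate consequence of the clustering description in Lemma~\ref{lem:same_w}, so the plan is to push everything through it. Recall that without loss of generality the training set is $\{\xx^+,\xx^-\}$ with $\xx^+$ a positive diverse point and $\xx^-$ a negative diverse point. Fix $i\in\{1,3\}$ and $j\in W_0^+(i)$. By Lemma~\ref{lem:same_w} we have $W_t^+(i)=W_0^+(i)$ for every $t$ --- in particular $W_t^+(1)\cup W_t^+(3)=W_0^+(1)\cup W_0^+(3)$ has a cardinality that does not change with $t$ --- and $\wvec{j}_t=\wvec{j}_0+a^+(t)\eta\pp_i+\alpha_i^t\pp_2$ with $\abs{\alpha_i^t}\le\eta$. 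Since $\xx^+$ is diverse it contains the pattern $\pp_i$, and since $\argmax_{1\le l\le4}\wvec{j}_t\cdot\pp_l=i$, the max-pooling in channel $j$ over the $d$ coordinate-patterns of $\xx^+$ is attained at $\pp_i$; moreover $\wvec{j}_t\cdot\pp_i>0$ (using $\abs{\wvec{j}_0\cdot\pp_i}\le\frac{\sqrt2\eta}{4}$ from Lemma~\ref{lem:init_bound_disj} together with $a^+(t)\ge1$, which holds since $\nett{0}(\xx^+)<\gamma$ by Lemma~\ref{lem:iteration_2}), so the ReLU acts as the identity there. Hence the $j$-th summand of $S_t^+$ is exactly $\wvec{j}_t\cdot\pp_i$.

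The next step is to compute how this summand evolves. A direct computation using $\pp_2\cdot\pp_i=\pp_4\cdot\pp_i=0$ for $i\in\{1,3\}$ shows that the drift term $\alpha_i^t\pp_2$ (and, in the cases below, any update in a $\pp_2$- or $\pp_4$-direction coming from the negative point) contributes nothing to $\wvec{j}_t\cdot\pp_i$; therefore $\wvec{j}_t\cdot\pp_i-\wvec{j}_0\cdot\pp_i$ is proportional to $a^+(t)$. Summing over $j\in W_t^+(1)\cup W_t^+(3)$, whose size is constant in $t$, gives that $S_{t+1}^+-S_t^+$ equals $\eta\abs{W_t^+(1)\cup W_t^+(3)}$ times $\bigl(a^+(t+1)-a^+(t)\bigr)$. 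Finally, by the definition of $a^+$ we have $a^+(t+1)-a^+(t)=1$ when $\nett{t}(\xx^+)<\gamma$ and $a^+(t+1)-a^+(t)=0$ when $\nett{t}(\xx^+)\ge\gamma$.

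It now only remains to read off the three cases. In cases 1 and 3 we have $\nett{t}(\xx^+)<\gamma$, so $a^+$ increments and $S_{t+1}^+=S_t^+ +\eta\abs{W_t^+(1)\cup W_t^+(3)}$; the two cases differ only in whether the filter additionally moves by a multiple of $\pp_2$ or $\pp_4$ (the update coming from the $-\nett{t}(\xx^-)<1$ indicator), but this displacement is orthogonal to $\pp_i$ and hence leaves each summand of $S_t^+$ untouched, which is exactly why they collapse to the same conclusion. In case 2 we have $\nett{t}(\xx^+)\ge\gamma$, so $a^+$ is frozen; the filters in $W_t^+(1)\cup W_t^+(3)$ may still drift along $\pp_2$, but since this drift is orthogonal to $\pp_i$ their outputs on $\xx^+$ --- and therefore $S_t^+$ --- are unchanged. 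I do not expect a genuine obstacle in this lemma: all of the substantive content (the membership invariant $W_t^+(i)=W_0^+(i)$ and the explicit decomposition of $\wvec{j}_t$) is already established in Lemma~\ref{lem:same_w}, and the only points requiring care are invoking precisely the clause of that lemma that prevents the argmax pattern of each relevant filter from switching, and using that $\xx^+$ is diverse so that $\pp_i$ is one of its coordinate-patterns.
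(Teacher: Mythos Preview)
Your proposal is correct and follows essentially the same route as the paper: both arguments rest on Lemma~\ref{lem:same_w} to fix the index set $W_t^+(1)\cup W_t^+(3)$ and to decompose each relevant filter as a $\pp_i$-component governed by $a^+(t)$ plus a drift in the $\pp_2$-direction, and then use the orthogonality $\pp_i\cdot\pp_l=0$ for $i\in\{1,3\}$, $l\in\{2,4\}$ to conclude that only the $a^+$ increment affects $S_t^+$. The paper merely phrases this via the one-step update \eqref{eq:w_gradient} in each case rather than through the cumulative form, which is a cosmetic difference.
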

\begin{proof}
	\begin{enumerate}
		\item The equality follows since for each $i \in \{1,3\}$, $l \in \{2,4\}$ and $j \in W_t^+(i) \cap W_t^-(l)$ we have $\wvec{j}_{t+1} = \wvec{j}_{t} + \eta \pp_i - \eta \pp_l$ and $W_{t+1}^+(1) \cup W_{t+1}^+(3) = W_{t}^+(1) \cup W_{t}^+(3)$ by Lemma \ref{lem:same_w}.
		\item In this case for each $i \in \{1,3\}$, $l \in \{2,4\}$ and $j \in W_t^+(i) \cap W_t^-(l)$ we have $\wvec{j}_{t+1} = \wvec{j}_{t} - \eta \pp_l$ and $W_{t+1}^+(1) \cup W_{t+1}^+(3) = W_{t}^+(1) \cup W_{t}^+(3)$ by Lemma \ref{lem:same_w}.
		\item This equality follows since for each $i \in \{1,3\}$, $l \in \{2,4\}$ and $j \in W_t^+(i) \cap W_t^-(l)$ we have $\wvec{j}_{t+1} = \wvec{j}_{t} + \eta \pp_i$ and $W_{t+1}^+(1) \cup W_{t+1}^+(3) = W_{t}^+(1) \cup W_{t}^+(3)$ by Lemma \ref{lem:same_w}.
	\end{enumerate}

\end{proof}

\ignore{
	\begin{lem}
		\label{lem:phase2}
		If $\nett{t}(\xx^+) \ge \gamma$ and $-\nett{t}(\xx^-) < 1$, then $S_{t+1}^+ = S_t^+$.
	\end{lem}
	\begin{proof}
		
		For the second equality, first note that by Lemma \ref{lem:same_w} we have $W_{t+1}^+(2) \cup W_{t+1}^+(4) = W_t^+(2) \cup W_t^+(4)$. Next, for any $l \in \{2,4\}$ and $j \in W_t^+(l)$ we have $\wvec{j}_{t+1} = \wvec{j}_{t} - \eta \pp_l$. Therefore, 
		\begin{align*}
			&\sum_{j \in W_{t+1}^+(2) \cup W_{t+1}^+(4)}\left[\max\left\{\sigma\left(\wvec{j}_{t+1}\cdot \xx_1^+ \right),...,\sigma\left(\wvec{j}_{t+1}\cdot \xx_d^+ \right)\right\}\right] \\ &-  \sum_{j \in W_t^+(2) \cup W_t^+(4)}\left[\max\left\{\sigma\left(\wvec{j}_t\cdot \xx_1^+ \right),...,\sigma\left(\wvec{j}_{t+1}\cdot \xx_d^+ \right)\right\}\right] \\  &= \sum_{j \in W_{t+1}^+(2) \cup W_{t+1}^+(4)}\left[\max\left\{\sigma\left(\wvec{j}_{t+1}\cdot \xx_1^- \right),...,\sigma\left(\wvec{j}_{t+1}\cdot \xx_d^- \right)\right\}\right] \\ &-  \sum_{j \in W_t^+(2) \cup W_t^+(4)}\left[\max\left\{\sigma\left(\wvec{j}_t\cdot \xx_1^- \right),...,\sigma\left(\wvec{j}_{t+1}\cdot \xx_d^- \right)\right\}\right]  \numberthis \label{eq:rplus1}
		\end{align*}
		
		Now, for $l \in \{2,4\}$ let $j \in U_{t+1}^+(l)$. Then, either $j \in U_{t}^+(2) \cup U_{t}^+(4)$ or $j \in U_{t}^+(1) \cup U_{t}^+(3)$. In the first case, $\uvec{j}_{t+1} = \uvec{j}_{t} + \eta \pp_l$. Note that this implies that $ U_{t}^+(2) \cup U_{t}^+(4) \subseteq U_{t+1}^+(2) \cup U_{t+1}^+(4)$ (since $\pp_l$ will remain the maximal direction). Therefore,
		\begin{align*}
			&\sum_{j \in \left(U_{t+1}^+(2) \cup U_{t+1}^+(4)\right)\bigcap\left(U_{t}^+(2) \cup U_{t}^+(4)\right)}\left[\max\left\{\sigma\left(\uvec{j}_{t+1}\cdot \xx_1^+ \right),...,\sigma\left(\uvec{j}_{t+1}\cdot \xx_d^+ \right)\right\}\right] \\ &-  \sum_{j \in U_t^+(2) \cup U_t^+(4)}\left[\max\left\{\sigma\left(\uvec{j}_t\cdot \xx_1^+ \right),...,\sigma\left(\uvec{j}_{t+1}\cdot \xx_d^+ \right)\right\}\right] \\  &= \sum_{j \in \left(U_{t+1}^+(2) \cup U_{t+1}^+(4)\right)\bigcap\left(U_{t}^+(2) \cup U_{t}^+(4)\right)}\left[\max\left\{\sigma\left(\uvec{j}_{t+1}\cdot \xx_1^- \right),...,\sigma\left(\uvec{j}_{t+1}\cdot \xx_d^- \right)\right\}\right] \\ &-  \sum_{j \in U_t^+(2) \cup U_t^+(4)}\left[\max\left\{\sigma\left(\uvec{j}_t\cdot \xx_1^- \right),...,\sigma\left(\uvec{j}_{t+1}\cdot \xx_d^- \right)\right\}\right] \\ &= \eta\abs{\left(U_{t+1}^+(2) \cup U_{t+1}^+(4)\right)\bigcap\left(U_{t}^+(2) \cup U_{t}^+(4)\right)}  \\ &= \eta\abs{U_{t}^+(2) \cup U_{t}^+(4)} \numberthis \label{eq:rplus2}
		\end{align*}
		
		In the second case we have 
		\begin{align*}
			&\sum_{j \in \left(U_{t+1}^+(2) \cup U_{t+1}^+(4)\right)\bigcap\left(U_{t}^+(1) \cup U_{t}^+(3)\right)}\left[\max\left\{\sigma\left(\uvec{j}_{t+1}\cdot \xx_1^+ \right),...,\sigma\left(\uvec{j}_{t+1}\cdot \xx_d^+ \right)\right\}\right] \\ &= \sum_{j \in \left(U_{t+1}^+(2) \cup U_{t+1}^+(4)\right)\bigcap\left(U_{t}^+(1) \cup U_{t}^+(3)\right)}\left[\max\left\{\sigma\left(\uvec{j}_{t+1}\cdot \xx_1^- \right),...,\sigma\left(\uvec{j}_{t+1}\cdot \xx_d^- \right)\right\}\right] \\& \ge \eta \abs{\left(U_{t+1}^+(2) \cup U_{t+1}^+(4)\right)\bigcap\left(U_{t}^+(1) \cup U_{t}^+(3)\right)} \numberthis \label{eq:rplus3}
		\end{align*}
		where the inequality follows by the fact that if $j \in \left(U_{t+1}^+(2) \cup U_{t+1}^+(4)\right)\bigcap\left(U_{t}^+(1) \cup U_{t}^+(3)\right)$ then $\abs{\uvec{j}_{t+1}\cdot \pp_2} \ge \eta$.
		By \eqref{eq:rplus1}, \eqref{eq:rplus2} and \eqref{eq:rplus3} we get $R_{T+1}^+ -R_{T}^+$ = $R_{T+1}^- -R_{T}^-$.
		
	\end{proof}
}

\ignore{
	\begin{lem}
		\label{lem:phase3}
		If $\nett{T}(\xx^+) < \gamma$ and $-\nett{T}(\xx^-) \ge 1$, then $M_{T+1} \ge M_{T} + \eta\abs{W_0^+(1) \cup W_0^+(3)}$.
	\end{lem}
	\begin{proof}
		We will show that the following holds:
		\begin{enumerate}
			\item $S_{T+1}^+ = S_{T}^+ + \eta\abs{W_0^+(1) \cup W_0^+(3)}$.
			\item $R_{T+1}^+ -R_{T+1}^-$ = $R_{T}^+ -R_{T}^-$.
		\end{enumerate}

		For the second equality, first note that by Lemma \ref{lem:same_w} we have $W_{t+1}^+(2) \cup W_{t+1}^+(4) = W_t^+(2) \cup W_t^+(4)$. Next, for any $l \in \{2,4\}$ and $j \in W_t^+(l)$ we have $\wvec{j}_{t+1} = \wvec{j}_{t} + \eta \pp_l$. Therefore, 
		\begin{align*}
			&\sum_{j \in W_{t+1}^+(2) \cup W_{t+1}^+(4)}\left[\max\left\{\sigma\left(\wvec{j}_{t+1}\cdot \xx_1^+ \right),...,\sigma\left(\wvec{j}_{t+1}\cdot \xx_d^+ \right)\right\}\right] \\ &-  \sum_{j \in W_t^+(2) \cup W_t^+(4)}\left[\max\left\{\sigma\left(\wvec{j}_t\cdot \xx_1^+ \right),...,\sigma\left(\wvec{j}_{t+1}\cdot \xx_d^+ \right)\right\}\right] \\  &= \sum_{j \in W_{t+1}^+(2) \cup W_{t+1}^+(4)}\left[\max\left\{\sigma\left(\wvec{j}_{t+1}\cdot \xx_1^- \right),...,\sigma\left(\wvec{j}_{t+1}\cdot \xx_d^- \right)\right\}\right] \\ &-  \sum_{j \in W_t^+(2) \cup W_t^+(4)}\left[\max\left\{\sigma\left(\wvec{j}_t\cdot \xx_1^- \right),...,\sigma\left(\wvec{j}_{t+1}\cdot \xx_d^- \right)\right\}\right] \\ &= \eta\abs{W_t^+(2) \cup W_t^+(4)} \numberthis \label{eq:rplus4}
		\end{align*}
		
		Note that if $j \in U_{t}^+(i)$ for $i \in \{1,3\}$ then by Lemma \ref{lem:u24} we have $j \in U_{0}^+(1) \cup U_{0}^+(3)$.  Therefore, by Lemma \ref{lem:u13} it holds that $\uvec{j}_t = \uvec{j}_0 + a_t \eta \pp_i$ where $a_t \in \{0,-1\}$. Therefore, $\uvec{j}_{t+1} = \uvec{j}_0 + a_{t+1} \eta \pp_i$ where $a_{t+1} \in \{0,-1\}$ which implies that $j \in U_{t+1}^+(1) \cup U_{t+1}^+(3)$. It follows that $ U_{t+1}^+(2) \cup U_{t+1}^+(4) \subseteq U_{t}^+(2) \cup U_{t}^+(4)$. Thus,
		\begin{align*}
			&\sum_{j \in U_{t+1}^+(2) \cup U_{t+1}^+(4)}\left[\max\left\{\sigma\left(\uvec{j}_{t+1}\cdot \xx_1^+ \right),...,\sigma\left(\uvec{j}_{t+1}\cdot \xx_d^+ \right)\right\}\right] \\ &-  \sum_{j \in \left(U_{t+1}^+(2) \cup U_{t+1}^+(4)\right)\bigcap\left(U_{t}^+(2) \cup U_{t}^+(4)\right)}\left[\max\left\{\sigma\left(\uvec{j}_t\cdot \xx_1^+ \right),...,\sigma\left(\uvec{j}_{t+1}\cdot \xx_d^+ \right)\right\}\right] \\  &= \sum_{j \in U_{t+1}^+(2) \cup U_{t+1}^+(4)}\left[\max\left\{\sigma\left(\uvec{j}_{t+1}\cdot \xx_1^- \right),...,\sigma\left(\uvec{j}_{t+1}\cdot \xx_d^- \right)\right\}\right] \\ &-  \sum_{j \in \left(U_{t+1}^+(2) \cup U_{t+1}^+(4)\right)\bigcap\left(U_{t}^+(2) \cup U_{t}^+(4)\right)}\left[\max\left\{\sigma\left(\uvec{j}_t\cdot \xx_1^- \right),...,\sigma\left(\uvec{j}_{t+1}\cdot \xx_d^- \right)\right\}\right] \\ &= -\eta\abs{\left(U_{t+1}^+(2) \cup U_{t+1}^+(4)\right)\bigcap\left(U_{t}^+(2) \cup U_{t}^+(4)\right)}  \\ &= -\eta\abs{U_{t+1}^+(2) \cup U_{t+1}^+(4)} \numberthis \label{eq:rplus5}
		\end{align*}
		
		In addition we have 
		\begin{align*}
			&\sum_{j \in \left(U_{t+1}^+(1) \cup U_{t+1}^+(3)\right)\bigcap\left(U_{t}^+(2) \cup U_{t}^+(4)\right)}\left[\max\left\{\sigma\left(\uvec{j}_{t}\cdot \xx_1^+ \right),...,\sigma\left(\uvec{j}_{t}\cdot \xx_d^+ \right)\right\}\right] \\ &= \sum_{j \in \left(U_{t+1}^+(1) \cup U_{t+1}^+(3)\right)\bigcap\left(U_{t}^+(2) \cup U_{t}^+(4)\right)}\left[\max\left\{\sigma\left(\uvec{j}_{t}\cdot \xx_1^- \right),...,\sigma\left(\uvec{j}_{t}\cdot \xx_d^- \right)\right\}\right] \\& \ge 0 \numberthis \label{eq:rplus6}
		\end{align*}
		By \eqref{eq:rplus4}, \eqref{eq:rplus5} and \eqref{eq:rplus6} we get $R_{T+1}^+ -R_{T}^+$ = $R_{T+1}^- -R_{T}^-$.
		
	\end{proof}
}

\subsubsection{Upper Bounds on $\nett{t}(\xx^+)$, $-\nett{t}(\xx^-)$ and $S_t^+$}
\label{sec:bounds2}

\begin{lem}
	\label{lem:phase2_bound}
	Assume that $\nett{t}(\xx^+) \ge \gamma$ and $-\nett{t}(\xx^-) < 1$ for $T \le t < T+b$ where $b \ge 2$. Then $\nett{T+b}(\xx^+) \le \nett{T}(\xx^+) - (b-1)\ceta + \eta\abs{W_{0}^+(2) \cup W_{0}^+(4)}$.
\end{lem}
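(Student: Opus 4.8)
The plan is to track $\nett{t}(\xx^+)$ by splitting the $2k$ filters into the four groups $W_t^+(i), U_t^+(i)$ and following each group through the iterations $T,\dots,T+b$, all of which satisfy the ``phase 2'' conditions $\nett{t}(\xx^+)\ge\gamma$, $-\nett{t}(\xx^-)<1$. Since $\xx^+$ is diverse, for each filter max pooling selects the pattern attaining $\max_{1\le l\le 4}\wvec{j}_t\cdot\pp_l$ (resp. $\max_l\uvec{j}_t\cdot\pp_l$), so one can write $\nett{t}(\xx^+)=S_t^+ + A_t - P_t^+ - B_t$, where $A_t=\sum_{j\in W_t^+(2)\cup W_t^+(4)}\max_l\sigma(\wvec{j}_t\cdot\pp_l)$, $B_t=\sum_{j\in U_t^+(2)\cup U_t^+(4)}\max_l\sigma(\uvec{j}_t\cdot\pp_l)$, and $A_t-B_t=R_t^+$. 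As all four sums are nonnegative, it suffices to upper bound $S_T^+ + A_{T+b} - B_{T+b}$.

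First I would dispose of the $S$, $B$ and $P$ terms. By Lemma~\ref{lem:dynamics}(2) the phase-2 conditions give $S_{t+1}^+=S_t^+$, so $S_{T+b}^+=S_T^+$. For $B$: by \eqref{eq:u_gradient} every $\uu$-filter is updated by $+\eta\pp_{i_2}$ with $i_2\in\{2,4\}$ at each phase-2 step, and using Lemma~\ref{lem:u13} one checks that after one such step any filter originally in $U_0^+(1)\cup U_0^+(3)$ has its $\pp_l$-coordinate pushed above $\eta$ while its $\pp_i$-coordinate stays below $\tfrac{\sqrt2}{4}\eta$ (Lemma~\ref{lem:init_bound_disj}), hence it lies in $U_t^+(2)\cup U_t^+(4)$ and, since the $\pp_l$-coordinate only keeps growing, stays there. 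Therefore $U_{T+1}^+(2)\cup U_{T+1}^+(4)=[k]$, giving $P_{T+b}^+=0$ for $b\ge1$, and, combining with Lemma~\ref{lem:u24}, from iteration $T+1$ onwards every one of the $k$ $\uu$-filters contributes $\sigma(\uvec{j}_t\cdot\pp_l)$, which grows by exactly $2\eta$ per step; hence $B_{T+b}\ge B_{T+1}+2(b-1)k\eta\ge B_T+2(b-1)\ceta$. Finally $P_T^+$ is tiny: by Lemma~\ref{lem:u13} each summand of $P_T^+$ is $\sigma(\uvec{j}_T\cdot\pp_i)\le\max(\uvec{j}_0\cdot\pp_i,0)<\tfrac{\sqrt2}{4}\eta$ (Lemma~\ref{lem:init_bound_disj}), so $P_T^+<\tfrac{\sqrt2}{4}\eta\,\abs{U_0^+(1)\cup U_0^+(3)}$.

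The remaining and most delicate step is to show $A_{T+b}\le A_T+2\eta\,\abs{W_0^+(2)\cup W_0^+(4)}$. Here $W_t^+(2)\cup W_t^+(4)$ is constant in $t$ (Lemma~\ref{lem:same_w}(3)), and by Lemma~\ref{lem:same_w}(2) each such filter satisfies $\wvec{j}_t=\wvec{j}_0+m_t\eta\pp_2$; under the phase-2 update $-\eta\pp_{l_t(j)}$ the integer $m_t$ moves monotonically towards the interval where $\abs{\wvec{j}_t\cdot\pp_2}<2\eta$, and once there it oscillates between two consecutive values whose contributions lie in $(0,2\eta)$ and sum to $2\eta$. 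Consequently each filter's contribution either decreases monotonically or is permanently confined to $(0,2\eta)$, so its value at $T+b$ is at most $\max(\text{its value at }T,\,2\eta)$, which yields the stated bound on $A_{T+b}$. Putting everything together gives $\nett{T+b}(\xx^+)-\nett{T}(\xx^+)\le 2\eta\abs{W_0^+(2)\cup W_0^+(4)}+\tfrac{\sqrt2}{4}\eta\abs{U_0^+(1)\cup U_0^+(3)}-2(b-1)\ceta$, and the claimed estimate follows for every $b\ge2$ from Lemma~\ref{lem:init_num_disj} (which gives $\abs{W_0^+(2)\cup W_0^+(4)},\abs{U_0^+(1)\cup U_0^+(3)}\le\tfrac k2+2\sqrt k$), together with $\eta=\ceta/k$ and $k\ge120$; concretely these make $2\eta\abs{W_0^+(2)\cup W_0^+(4)}+\tfrac{\sqrt2}{4}\eta\abs{U_0^+(1)\cup U_0^+(3)}$ no larger than $(b-1)\ceta+\eta\abs{W_0^+(2)\cup W_0^+(4)}$, the extra factor $2$ in $2(b-1)\ceta$ providing exactly the needed slack. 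I expect the main obstacle to be the max-pooling bookkeeping for the $\ww$-filters in the ``$2,4$'' group — tracking how the active pattern switches between $\pp_2$ and $\pp_4$ under the gradient step, staying inside the $2\eta$-oscillation band — together with checking that the numerical constants close under $k\ge120$.
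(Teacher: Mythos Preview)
Your decomposition $\nett{t}(\xx^+)=S_t^+ + A_t - P_t^+ - B_t$ and the treatment of $S_t^+$ and $A_t$ are essentially the paper's approach (the paper calls your $A_t,B_t$ by $Y_t^+,Z_t^+$). The real difference is in how you handle the $\uu$-side: the paper does \emph{not} track $Z_t^+$ and $P_t^+$ separately but instead bounds their sum, showing
\[
Z_{t+1}^+ + P_{t+1}^+ \;\ge\; Z_t^+ + P_t^+ + \eta\,\abs{U_t^+(2)\cup U_t^+(4)},
\]
and then uses that $\abs{U_{T+i}^+(2)\cup U_{T+i}^+(4)}=k$ for every $i\ge 1$ to obtain $(Z+P)_{T+b}-(Z+P)_T\ge (b-1)\ceta$. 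Since $\nett{t}(\xx^+)=S_t^+ + Y_t^+ - (Z_t^+ + P_t^+)$, this combines with $S_{T+b}^+=S_T^+$ and $Y_{T+b}^+-Y_T^+\le\eta\abs{W_0^+(2)\cup W_0^+(4)}$ to give the stated inequality \emph{directly}, with no appeal to Lemma~\ref{lem:init_num_disj}.

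Your separate treatment of $B_t$ and $P_t^+$ introduces a genuine leak: you end up with an extra $+P_T^+$ (equivalently $+\tfrac{\sqrt2}{4}\eta\abs{U_0^+(1)\cup U_0^+(3)}$) on the right-hand side, and you then invoke Lemma~\ref{lem:init_num_disj} and the numerics $k\ge120$ to absorb it. That lemma is \emph{not} among the hypotheses here --- the statement is meant to hold deterministically on the event of Lemma~\ref{lem:init_bound_disj} alone --- and indeed without the concentration bound your inequality fails at $b=2$ (the required slack $(b-1)\ceta=\ceta$ is smaller than the worst-case $\eta\abs{W_0^+(2)\cup W_0^+(4)}+\tfrac{\sqrt2}{4}\eta\abs{U_0^+(1)\cup U_0^+(3)}\le(1+\tfrac{\sqrt2}{4})\ceta$). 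The fix is exactly the paper's move: track the total $\uu$-contribution $B_t+P_t^+=\sum_{j=1}^k\max_l\sigma(\uvec{j}_t\cdot\pp_l)$ as a single quantity; filters that migrate from $U_t^+(1)\cup U_t^+(3)$ into $U_{t+1}^+(2)\cup U_{t+1}^+(4)$ then carry their mass with them, and the $P_T^+$ term never appears.
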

\begin{proof}
	Define $R_t^+=Y_t^+ - Z_t^+$ where $$Y_t^+ =  \sum_{j \in W_t^+(2) \cup W_t^+(4)}\left[\max\left\{\sigma\left(\wvec{i}\cdot \xx_1^+ \right),...,\sigma\left(\wvec{i}\cdot \xx_d^+ \right)\right\}\right] $$
	and 
	$$Z_t^+ = \sum_{j \in U_t^+(2) \cup U_t^+(4)}\left[\max\left\{\sigma\left(\uvec{i}\cdot \xx_1^+ \right),...,\sigma\left(\uvec{i}\cdot \xx_d^+ \right)\right\}\right] $$

	Let $l \in \{2,4\}$, $t=T$ and $j \in U_{t+1}^+(l)$. Then, either $j \in U_{t}^+(2) \cup U_{t}^+(4)$ or $j \in U_{t}^+(1) \cup U_{t}^+(3)$. In the first case, $\uvec{j}_{t+1} = \uvec{j}_{t} + \eta \pp_l$. Note that this implies that $ U_{t}^+(2) \cup U_{t}^+(4) \subseteq U_{t+1}^+(2) \cup U_{t+1}^+(4)$ (since $\pp_l$ will remain the maximal direction). Therefore,
	\begin{align*}
		&\sum_{j \in \left(U_{t+1}^+(2) \cup U_{t+1}^+(4)\right)\bigcap\left(U_{t}^+(2) \cup U_{t}^+(4)\right)}\left[\max\left\{\sigma\left(\uvec{j}_{t+1}\cdot \xx_1^+ \right),...,\sigma\left(\uvec{j}_{t+1}\cdot \xx_d^+ \right)\right\}\right] \\ &-  \sum_{j \in U_t^+(2) \cup U_t^+(4)}\left[\max\left\{\sigma\left(\uvec{j}_t\cdot \xx_1^+ \right),...,\sigma\left(\uvec{j}_{t+1}\cdot \xx_d^+ \right)\right\}\right] \\   &= \eta\abs{\left(U_{t+1}^+(2) \cup U_{t+1}^+(4)\right)\bigcap\left(U_{t}^+(2) \cup U_{t}^+(4)\right)}  \\ &= \eta\abs{U_{t}^+(2) \cup U_{t}^+(4)} \numberthis \label{eq:zplus1}
	\end{align*}
	
	In the second case, where we have $j \in U_{t}^+(1) \cup U_{t}^+(3)$, it holds that $\uvec{j}_{t+1} = \uvec{j}_{t} + \eta \pp_l$, $j \in U_{t}^-(l)$ and $\uvec{j}_{t+1} \cdot \pp_l > \eta$. Furthermore, by Lemma \ref{lem:u13}, $\uvec{j}_{t} \cdot \pp_i < \eta$ for $i \in \{1,3\}$. Note that by Lemma \ref{lem:u13}, any $j_1 \in U_{t}^+(1) \cup U_{t}^+(3) $ satisfies $j_1 \in U_{t+1}^+(2) \cup U_{t+1}^+(4) $. By all these observations, we have
	
	\begin{align*}
		&\sum_{j \in \left(U_{t+1}^+(2) \cup U_{t+1}^+(4)\right)\bigcap\left(U_{t}^+(1) \cup U_{t}^+(3)\right)}\left[\max\left\{\sigma\left(\uvec{j}_{t+1}\cdot \xx_1^+ \right),...,\sigma\left(\uvec{j}_{t+1}\cdot \xx_d^+ \right)\right\}\right] \\ &-  \sum_{j \in U_t^+(1) \cup U_t^+(3)}\left[\max\left\{\sigma\left(\uvec{j}_t\cdot \xx_1^+ \right),...,\sigma\left(\uvec{j}_{t+1}\cdot \xx_d^+ \right)\right\}\right] \\ &\ge 0 \numberthis \label{eq:zplus2}
	\end{align*}
	
	By \eqref{eq:zplus1} and \eqref{eq:zplus2}, it follows that, $Z_{t+1}^+ + P_{t+1}^+ \ge Z_{t+1}^+ \ge Z_t^+ + P_t^+ + \eta \abs{U_{t}^+(2) \cup U_{t}^+(4)}$. By induction we have $Z_{t+b}^+ + P_{t+b}^+ \ge Z_t^+ + P_{t}^+ +  \sum_{i=0}^{b-1}\eta \abs{U_{t+i}^+(2) \cup U_{t+i}^+(4)}$. By Lemma \ref{lem:u13} for any $1 \le i \le b-1$ we have $\abs{U_{t+i}^+(2) \cup U_{t+i}^+(4)} = \{1,...,k\}$. Therefore, $Z_{t+b}^+ + P_{t+b}^+ \ge Z_t^+ + P_{t}^+ + (b-1)\ceta$. 
	
	Now, assume that $j \in W_T^+(l)$ for $l \in \{2,4\}$. Then $\wvec{j}_{T+1} = \wvec{j}_{T} - \eta \pp_l$. 
	Thus either  $$\max\left\{\sigma\left(\wvec{j}_{T+1}\cdot \xx_1^+ \right),...,\sigma\left(\wvec{j}_{T+1}\cdot \xx_d^+ \right)\right\} - \max\left\{\sigma\left(\wvec{j}_{T}\cdot \xx_1^+ \right),...,\sigma\left(\wvec{j}_{T}\cdot \xx_d^+ \right)\right\} = -\eta $$
	in the case that $j \in W_{T+1}^+(l)$, or 
	$$\max\left\{\sigma\left(\wvec{j}_{T+1}\cdot \xx_1^+ \right),...,\sigma\left(\wvec{j}_{T+1}\cdot \xx_d^+ \right)\right\} \le \eta$$
	if $j \notin W_{T+1}^+(l)$.
	
	Applying these observations $b$ times, we see that $Y_{T+b}^+ - Y_{T}^+$ is at most $\eta \abs{W_{T+b}^+(2) \cup W_{T+b}^+(4)} =\eta \abs{W_{0}^+(2) \cup W_{0}^+(4)}$ where the equality follows by Lemma \ref{lem:same_w}. By Lemma \ref{lem:dynamics}, we have $S_{T+b}^+ = S_{T}^+$.
	
	Hence, we can conclude that 
	\begin{align*}
		\nett{T+b}(\xx^+) - \nett{T}(\xx^+) &= S_{T+b}^+ + R_{T+b}^+ - P_{T+b}^+ - S_{T}^- - R_{T}^+ + P_{T}^+ \\ &=  Y_{T+b}^+ - Z_{T+b}^+ - P_{T+b}^+   - Y_{T}^+ + Z_{T}^+ + P_{T}^+\\ &\le - (b-1)\ceta + \eta\abs{W_{0}^+(2) \cup W_{0}^+(4)}
	\end{align*}

\end{proof}

\begin{lem}
	\label{lem:phase3_bound}
	Assume that $\nett{t}(\xx^+) < \gamma$ and $-\nett{t}(\xx^-) \ge 1$ for $T \le t < T+b$ where $b \ge 1$. Then $-\nett{T+b}(\xx^-) \le -\nett{T}(\xx^-) - b\eta \abs{W_{0}^+(2) \cup W_{0}^+(4)} + \ceta$.
\end{lem}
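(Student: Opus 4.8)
The plan is to imitate the proof of Lemma~\ref{lem:phase2_bound}, now tracking $-\nett{t}(\xx^-)$ over the phase $T\le t<T+b$ in which the negative point is on the wrong side of its margin ($-\nett{t}(\xx^-)\ge 1$) while the positive point is active ($\nett{t}(\xx^+)<\gamma$). Set $A_t := \sum_{j=1}^{k}\max\{\sigma(\wvec{j}_t\cdot\xx_1^-),\dots,\sigma(\wvec{j}_t\cdot\xx_d^-)\}$ and $B_t := \sum_{j=1}^{k}\max\{\sigma(\uvec{j}_t\cdot\xx_1^-),\dots,\sigma(\uvec{j}_t\cdot\xx_d^-)\}$, so that $-\nett{t}(\xx^-)=B_t-A_t$ directly from \eqref{eq:xord_network}. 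Since $\xx^-$ is a diverse negative point, $P_{\xx^-}=\{2,4\}$, hence for any filter $\vv$ we have $\max_l\sigma(\vv\cdot\xx_l^-)=\max\{\sigma(\vv\cdot\pp_2),\sigma(\vv\cdot\pp_4)\}$; this is the only way the structure of $\xx^-$ enters. In this phase the hinge term of $\xx^-$ contributes nothing to the gradient, so by \eqref{eq:w_gradient} and \eqref{eq:u_gradient} each step is $\wvec{j}_{t+1}=\wvec{j}_t+\eta\pp_{i}$ and $\uvec{j}_{t+1}=\uvec{j}_t-\eta\pp_{i}$, where $i=\argmax_{1\le l\le4}\wvec{j}_t\cdot\pp_l$ (resp.\ $\argmax_{1\le l\le4}\uvec{j}_t\cdot\pp_l$).

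Next I would lower bound the increase of $A_t$. A filter $j\in W_t^+(1)\cup W_t^+(3)$ moves by a multiple of $\pp_1$ or $\pp_3$, both orthogonal to $\pp_2$ and $\pp_4$, so its contribution to $A_t$ is unchanged; and by Lemma~\ref{lem:same_w} the set $W_t^+(1)\cup W_t^+(3)$ — hence its complement $W_t^+(2)\cup W_t^+(4)$, of size $\abs{W_0^+(2)\cup W_0^+(4)}$ — is invariant. A filter $j\in W_t^+(l)$ with $l\in\{2,4\}$ receives $+\eta\pp_l$, which raises its (nonnegative, maximal, and still maximal) correlation $\wvec{j}_t\cdot\pp_l$ by $\eta\norm{\pp_l}^2\ge\eta$, so its contribution to $A_t$ grows by at least $\eta$. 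Therefore $A_{t+1}\ge A_t+\eta\abs{W_0^+(2)\cup W_0^+(4)}$ throughout the phase, and iterating, $A_{T+b}\ge A_T+b\eta\abs{W_0^+(2)\cup W_0^+(4)}$.

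The remaining step — which I expect to be the main obstacle — is an upper bound $B_{T+b}-B_T\le\ceta$ on the growth of the $\uu$-side sum while $\xx^-$ is inactive; this is the mirror image of the $Z^+$ bookkeeping in Lemma~\ref{lem:phase2_bound}. A filter $j\in U_t^+(1)\cup U_t^+(3)$ moves by a multiple of $\pp_1$ or $\pp_3$, so its $\pp_2,\pp_4$-components — hence its contribution to $B_t$ — stay fixed while it is in this group; and by Lemmas~\ref{lem:u13} and~\ref{lem:u24} such a filter has the form $\uvec{j}_t=\uvec{j}_0+a_t\eta\pp_i$ with $\pp_2,\pp_4$-components of size at most $\frac{\sqrt{2}\eta}{4}$, so even if it later enters $U_t^+(2)\cup U_t^+(4)$ it carries a negligible contribution. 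A filter $j\in U_t^+(l)$ with $l\in\{2,4\}$ receives $-\eta\pp_l$: if $\uvec{j}_t\cdot\pp_l$ exceeds $\uvec{j}_t\cdot\pp_{l'}$ (with $\pp_{l'}$ the other of $\pp_2,\pp_4$) by more than a step, its contribution strictly decreases; otherwise $\uvec{j}$ oscillates with period two between two values differing by at most a step, so it never produces a net increase larger than $O(\eta)$. Summing these $O(\eta)$ per-filter bounds over the at most $k$ filters and using $\eta=\ceta/k$ gives $B_{T+b}-B_T\le\ceta$. Combining the two estimates, $-\nett{T+b}(\xx^-)=B_{T+b}-A_{T+b}\le(B_T+\ceta)-(A_T+b\eta\abs{W_0^+(2)\cup W_0^+(4)})=-\nett{T}(\xx^-)-b\eta\abs{W_0^+(2)\cup W_0^+(4)}+\ceta$, which is the claim. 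The crux is the $B_t$ bound: showing that pushing the $\pp_2,\pp_4$-direction $\uu$-filters in the $-\pp_l$ direction cannot inflate $\max\{\sigma(\uu\cdot\pp_2),\sigma(\uu\cdot\pp_4)\}$, handling both the possible flips of which of $\pp_2,\pp_4$ dominates and the migration of filters from the $\pp_1,\pp_3$ group.
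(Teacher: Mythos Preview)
Your approach is correct and essentially identical to the paper's: the paper writes $\nett{t}(\xx^-)=S_t^-+Y_t^--Z_t^-$ with locally defined $Y_t^-,Z_t^-$, which is precisely your $A_t-B_t$ (with $A_t=S_t^-+Y_t^-$, $B_t=Z_t^-$), and it bounds each piece by the same per-filter case analysis you sketch---Lemma~\ref{lem:same_w} for the $\ww$-side growth of $A_t$, and Lemmas~\ref{lem:u24}, \ref{lem:u13} for the $\uu$-side bound $Z_{T+b}^--Z_T^-\le\eta\,|U_{T+b}^+(2)\cup U_{T+b}^+(4)|\le\ceta$. One small note: in this phase filters do not migrate from $U^+(1)\cup U^+(3)$ into $U^+(2)\cup U^+(4)$ (only the reverse migration can occur), so that final worry is moot.
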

\begin{proof}
	Define $$Y_t^- =  \sum_{j \in W_t^+(2) \cup W_t^+(4)}\left[\max\left\{\sigma\left(\wvec{i}\cdot \xx_1^+ \right),...,\sigma\left(\wvec{i}\cdot \xx_d^+ \right)\right\}\right] $$
	and 
	$$Z_t^- = \sum_{j = 1}^k\left[\max\left\{\sigma\left(\uvec{j}\cdot \xx_1^+ \right),...,\sigma\left(\uvec{j}\cdot \xx_d^+ \right)\right\}\right] $$
	
	First note that by Lemma \ref{lem:same_w} we have $W_{t+1}^+(2) \cup W_{t+1}^+(4) = W_t^+(2) \cup W_t^+(4)$. Next, for any $l \in \{2,4\}$ and $j \in W_t^+(l)$ we have $\wvec{j}_{t+1} = \wvec{j}_{t} + \eta \pp_l$. Therefore, 
	$$Y_{T+b}^- \ge Y_T^- + b \eta \abs{W_{T}^+(2) \cup W_{T}^+(4)} = Y_T^- + b \eta \abs{W_{0}^+(2) \cup W_{0}^+(4)} $$
	where the second equality follows by Lemma \ref{lem:same_w}.

	Assume that $j \in U_T^+(l)$ for $l \in \{1,3\}$. Then $\uvec{j}_{T+1} = \uvec{j}_{T} - \eta \pp_l$ and 
	\begin{equation}
		\label{eq:u13_phase3}
		\max\left\{\sigma\left(\uvec{j}_{T+1}\cdot \xx_1^- \right),...,\sigma\left(\uvec{j}_{T+1}\cdot \xx_d^- \right)\right\} - \max\left\{\sigma\left(\uvec{j}_{T}\cdot \xx_1^- \right),...,\sigma\left(\uvec{j}_{T}\cdot \xx_d^- \right)\right\} = 0
	\end{equation}
	To see this, note that by Lemma \ref{lem:u13} and Lemma \ref{lem:u24} it holds that $\uvec{j}_T =  \uvec{j}_0 + a_T\eta\pp_l$ where $a_T \in \{-1,0\}$. Hence, $\uvec{j}_{T+1} =  \uvec{j}_0 + a_{T+1}\eta\pp_l$ where $a_{T+1} \in \{-1,0\}$. Since $\abs{\uvec{j}_0 \cdot \pp_2} < \frac{\sqrt{2}\eta}{4}$ it follows that $\uvec{j}_{T+1} \cdot \pp_2 = \uvec{j}_{T} \cdot \pp_2 = \uvec{j}_{0} \cdot \pp_2$ and thus \eqref{eq:u13_phase3} holds.
	
	Now assume that $j \in U_T^+(l)$ for $l \in \{2,4\}$. Then
	$$\max\left\{\sigma\left(\uvec{j}_{T+1}\cdot \xx_1^- \right),...,\sigma\left(\uvec{j}_{T+1}\cdot \xx_d^- \right)\right\} - \max\left\{\sigma\left(\uvec{j}_{T}\cdot \xx_1^- \right),...,\sigma\left(\uvec{j}_{T}\cdot \xx_d^- \right)\right\} = -\eta $$
	if $l\in \{2,4\}$ and $j \in U_{T+1}^+(l)$, or 
	$$\max\left\{\sigma\left(\uvec{j}_{T+1}\cdot \xx_1^- \right),...,\sigma\left(\uvec{j}_{T+1}\cdot \xx_d^- \right)\right\} \le \eta$$
	if $l\in \{2,4\}$ and $j \notin U_{T+1}^+(l)$.
	
	Applying these observations $b$ times, we see that $Z_{T+b}^- - Z_{T}^-$ is at most $\eta \abs{U_{T+b}^+(2) \cup U_{T+b}^+(4)}$. 
	Furthermore, for $j \in W_{T}^+(l)$, $l \in \{1,3\}$, it holds that $\wvec{j}_{T+1} = \wvec{j}_{T} + \eta \pp_l$. Therefore
	$$\max\left\{\sigma\left(\wvec{j}_{T+1}\cdot \xx_1^- \right),...,\sigma\left(\wvec{j}_{T+1}\cdot \xx_d^- \right)\right\} = \max\left\{\sigma\left(\wvec{j}_{T}\cdot \xx_1^- \right),...,\sigma\left(\wvec{j}_{T}\cdot \xx_d^- \right)\right\} $$ and since $W_{T+1}^+(1) \cup W_{T+1}^+(3)=W_{T}^+(1) \cup W_{T}^+(3)$ by Lemma \ref{lem:same_w}, we get $S_{T+b}^- = S_{T}^-$.
	Hence, we can conclude that 
	\begin{align*}
		-\nett{T+b}(\xx^-) + \nett{T}(\xx^-) &= -S_{T+b}^- - Y_{T+b}^- + Z_{T+b}^- + S_{T}^- + Y_{T}^- - Z_{T}^- \\ &\le - b\eta \abs{W_{0}^+(2) \cup W_{0}^+(4)} + \eta \abs{U_{T+b}^+(2) \cup U_{T+b}^+(4)} \\ &\le - b\eta \abs{W_{0}^+(2) \cup W_{0}^+(4)} + \ceta
	\end{align*}

\end{proof}
\begin{lem}
	\label{lem:function_value_bound}
	For all $t$, $\nett{t}(\xx^+) \le \gamma + 3\ceta$, $-\nett{t}(\xx^-) \le 1 + 3\ceta$ and $S_t^+ \le \gamma + 1 +  8 \ceta$.
\end{lem}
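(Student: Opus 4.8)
The plan is to establish all three inequalities simultaneously by (strong) induction on $t$, exploiting that they are coupled through the decompositions $\nett{t}(\xx^+) = S_t^+ + R_t^+ - P_t^+$ and $-\nett{t}(\xx^-) = -S_t^- - R_t^+ + P_t^-$ (recall $R_t^- = R_t^+$, since $\xx^+$ and $\xx^-$ agree on the patterns $\pp_2,\pp_4$ that drive the $\{2,4\}$-argmax blocks). The bound on $S_t^+$ is in fact a purely algebraic consequence of the other two together with Lemma~\ref{lem:small_sums}: the second identity gives $R_t^+ = \nett{t}(\xx^-) - S_t^- + P_t^- \ge -(1+3\ceta) - \ceta$, and hence $S_t^+ = \nett{t}(\xx^+) - R_t^+ + P_t^+ \le (\gamma+3\ceta) + (1+4\ceta) + \ceta = \gamma + 1 + 8\ceta$. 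So it suffices to prove $\nett{t}(\xx^+)\le\gamma+3\ceta$ and $-\nett{t}(\xx^-)\le 1+3\ceta$ for all $t$; throughout I would freely use $\eta k = \ceta$ and $\eta|W_0^+(2)\cup W_0^+(4)|\le\ceta$ to turn the cardinality bounds of Lemma~\ref{lem:init_num_disj} into $\ceta$-bounds.

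The base case $t\le 2$ is Lemma~\ref{lem:iteration_2} ($\nett{t}(\xx^+)<1\le\gamma$ and $-\nett{t}(\xx^-)<1$). For the inductive step I would case-split on the active/inactive status of the two examples at iteration $t$, namely whether $\nett{t}(\xx^+)<\gamma$ and whether $-\nett{t}(\xx^-)<1$, since this determines the gradient updates \eqref{eq:w_gradient} and \eqref{eq:u_gradient}. To control $\nett{t+1}(\xx^+)$: if the positive example is inactive and the negative example is also inactive the gradient vanishes and $\nett{t+1}(\xx^+)=\nett{t}(\xx^+)$; if the positive example is inactive but the negative example is active then $S_{t+1}^+=S_t^+$ by Lemma~\ref{lem:dynamics}(2), and a direct bookkeeping on the $\{1,3\}$- and $\{2,4\}$-argmax blocks of the $\wvec{j}$'s and $\uvec{j}$'s shows $R_{t+1}^+ - P_{t+1}^+ \le R_t^+ - P_t^+$ (the $W$-filters in the $\{2,4\}$-block move toward $-\pp_l$ while the $U$-filters there move toward $+\pp_l$, so $R_t^+$ cannot increase, and any change in $P_t^+$ comes from filters leaving its block, which is offset by the simultaneous change in $R_t^+$), so $\nett{t+1}(\xx^+)\le\nett{t}(\xx^+)$. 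In both of these subcases the induction hypothesis yields $\nett{t+1}(\xx^+)\le\gamma+3\ceta$. Finally, if the positive example is active at $t$ then $\nett{t}(\xx^+)<\gamma$, and it is enough to bound the one-step increase by $3\ceta$: $\Delta S_t^+ \le \eta|W_t^+(1)\cup W_t^+(3)|$ by Lemma~\ref{lem:dynamics}, $\Delta R_t^+ \le \eta|W_t^+(2)\cup W_t^+(4)| + \eta|U_t^+(2)\cup U_t^+(4)|$ from the gradient updates, and $-\Delta P_t^+ \le P_t^+ \le \ceta$ by Lemma~\ref{lem:small_sums}; since $|W_t^+(1)\cup W_t^+(3)| + |W_t^+(2)\cup W_t^+(4)| = k$, these sum to at most $3\ceta$.

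The argument for $-\nett{t}(\xx^-)\le1+3\ceta$ is dual, with threshold $1$ in place of $\gamma$ and the two hinge terms swapped. When the negative example is inactive (so $-\nett{t}(\xx^-)\ge1$) I would apply Lemma~\ref{lem:phase3_bound} over the maximal run of consecutive iterations in the ``positive active, negative inactive'' regime to conclude that $-\net(\xx^-)$ is non-increasing along the run up to a single $\ceta$-sized overshoot; when both examples are inactive the iterate is frozen; and when the negative example is active I would bound the one-step increase by $O(\ceta)$ using $S_t^-,P_t^-\le\ceta$ (Lemma~\ref{lem:small_sums}) and the gradient updates, exactly as in the $\xx^+$ case.

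The main obstacle, and the genuinely delicate part, is this per-step bookkeeping of $R_t^+$ and $P_t^{\pm}$ across migrations of filters between the $\{1,3\}$- and $\{2,4\}$-argmax blocks. Unlike the $\wvec{j}$'s in the $\{1,3\}$-block, which cluster around $\pp_1,\pp_3$ (Lemma~\ref{lem:same_w}), the $\uvec{j}$'s feeding the $U$-part of $R_t^+$ do not cluster and grow by $\eta$ along $\pp_l$ whenever the negative example is active (Lemma~\ref{lem:u24}), so their contribution is only controlled indirectly, via the feedback that once $R_t^+$ is sufficiently negative the negative example deactivates and the growth halts. Making this feedback quantitative is what forces the intricate phase analysis of Lemmas~\ref{lem:phase2_bound} and~\ref{lem:phase3_bound}, and reassembling those phase bounds into the ``for all $t$'' statement is the last step.
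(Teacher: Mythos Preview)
Your derivation of the $S_t^+$ bound from the other two inequalities is exactly the paper's argument, and your high-level plan (handle ``below threshold'' and ``above threshold'' separately, using Lemmas~\ref{lem:phase2_bound} and~\ref{lem:phase3_bound} for the latter) is right. The gap is in the per-step monotonicity you claim for the ``positive inactive, negative active'' case.

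You assert that in this regime $R_{t+1}^+-P_{t+1}^+ \le R_t^+-P_t^+$, arguing that ``the $W$-filters in the $\{2,4\}$-block move toward $-\pp_l$ \dots\ so $R_t^+$ cannot increase.'' That is false. Take $j\in W_t^+(2)$ with $\wvec{j}_t\cdot\pp_2$ small and positive (this is consistent with Lemma~\ref{lem:same_w}(2) at $m=0$). After the update $\wvec{j}_{t+1}=\wvec{j}_t-\eta\pp_2$ we have $\wvec{j}_{t+1}\cdot\pp_4=-\wvec{j}_t\cdot\pp_2+2\eta$, so the argmax flips to $\pp_4$ and $\max_l\sigma(\wvec{j}_{t+1}\cdot\pp_l)$ is strictly larger than $\max_l\sigma(\wvec{j}_t\cdot\pp_l)$. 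Thus the $W$-part of $R_t^+$ can go up in a single step, and your induction does not close. This is exactly the ``migration'' issue you flag at the end, but it already breaks the per-step inequality you rely on earlier.

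The paper sidesteps this entirely by not attempting per-step monotonicity. Its argument is two lines: (i) whenever $\nett{T-1}(\xx^+)<\gamma$, a crude Lipschitz bound gives $\nett{T}(\xx^+)\le \gamma+2\ceta$; (ii) over any maximal run $T\le t\le T+b$ with $\nett{t}(\xx^+)\ge\gamma$, Lemma~\ref{lem:phase2_bound} bounds the \emph{total} change along the run by $-(b-1)\ceta+\eta|W_0^+(2)\cup W_0^+(4)|\le\ceta$ (the point of that lemma is precisely that the $W$-side overshoot from flips like the one above is at most $\ceta$ over the entire run, not per step). Combining (i) and (ii) yields $\nett{t}(\xx^+)\le\gamma+3\ceta$. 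The bound on $-\nett{t}(\xx^-)$ is obtained symmetrically from Lemma~\ref{lem:phase3_bound}. So the fix is not more bookkeeping but less: drop the per-step claim, use the crude crossing bound, and invoke the phase lemmas once over each above-threshold run. Note also that your one-step increase of $3\ceta$ in the ``positive active'' case is weaker than what is needed here; the paper uses a $2\ceta$ crossing bound so that the additional $\ceta$ from the phase lemma lands exactly at $\gamma+3\ceta$.
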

\begin{proof}
	The claim holds for $t=0$. Consider an iteration $T$. If $\nett{T}(\xx^+) < \gamma$ then $\nett{T+1}(\xx^+) \le \nett{T}(\xx^+) + 2\eta k \le \gamma + 2\ceta$. Now assume that $\nett{t}(\xx^+) \ge \gamma$ for $T \le t \le T+b$ and $\nett{T-1}(\xx^+) < \gamma$. By Lemma \ref{lem:phase2_bound}, it holds that $\nett{T+b}(\xx^+) \le \nett{T}(\xx^+) + \eta k \le \nett{T}(\xx^+) + \ceta \le \gamma + 3\ceta$, where the last inequality follows from the previous observation. Hence, $\nett{t}(\xx^+) \le \gamma + 3\ceta$ for all $t$.
	
	The proof of the second claim follows similarly. It holds that $-\nett{T+1}(\xx^-) < 1 + 2\ceta$ if $-\nett{T}(\xx^-) < 1$. Otherwise if $-\nett{t}(\xx^-) \ge 1$ for $T \le t \le T+b$ and $-\nett{T-1}(\xx^-) < 1$ then $-\nett{T+b}(\xx^-) \le 1+ 3\ceta$ by Lemma \ref{lem:phase3_bound}.
	
	The third claim holds by the following identities and bounds $\nett{T}(\xx^+)-\nett{T}(\xx^-) = S_T^+ - P_T^+ +  P_T^- - S_T^-$, $P_T^- \ge 0$,  $\abs{P_T^+} \le \ceta$, $\abs{S_T^-}\le \ceta$ and $\nett{T}(\xx^+)-\nett{T}(\xx^-) \le \gamma + 1 + 6\ceta$ by the previous claims.
\end{proof}

\subsubsection{Optimization}
\label{sec:opt}

We are now ready to prove a global optimality guarantee for gradient descent.

\begin{prop}
	\label{prop:opt}
	Let $k > 16$ and $\gamma \ge 1$. With probabaility at least $1-\frac{\sqrt{2k}}{\sqrt{\pi} e^{8k}} - 4e^{-8}$, after $T=\frac{7(\gamma + 1 + 8\ceta)}{\left(\frac{k}{2} - 2\sqrt{k}\right)\eta}$ iterations, gradient descent converges to a global minimum.
\end{prop}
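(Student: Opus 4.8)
The plan is to use $S_t^+$ as a one-dimensional progress measure and to handle separately the iterations in which only the negative constraint fails. Throughout, condition on the intersection of the events of Lemma~\ref{lem:init_num_disj} and Lemma~\ref{lem:init_bound_disj}, which holds with probability at least $1-\frac{\sqrt{2k}}{\sqrt{\pi}e^{8k}}-4e^{-8}$; on this event the clustering lemmas (Lemma~\ref{lem:same_w}, Lemma~\ref{lem:dynamics}, Lemma~\ref{lem:function_value_bound}, Lemma~\ref{lem:phase2_bound}) hold deterministically. Since the loss of $\xx^+$ vanishes iff $\nett{t}(\xx^+)\ge\gamma$ and the loss of $\xx^-$ vanishes iff $-\nett{t}(\xx^-)\ge1$, and the gradient is zero once both hold, it suffices to exhibit an iteration $\le T$ at which both hold. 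Call a positive-loss iteration \emph{type A} if $\nett{t}(\xx^+)<\gamma$ and \emph{type B} if $\nett{t}(\xx^+)\ge\gamma$ and $-\nett{t}(\xx^-)<1$; every positive-loss iteration is of exactly one type, since $\nett{t}(\xx^+)\ge\gamma$ forces the $\xx^-$ term to carry the positive loss.

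For type A iterations, Lemma~\ref{lem:dynamics} (cases 1 and 3) gives $S_{t+1}^+=S_t^++\eta\abs{W_t^+(1)\cup W_t^+(3)}$, and by Lemma~\ref{lem:same_w} together with Lemma~\ref{lem:init_num_disj}, $\abs{W_t^+(1)\cup W_t^+(3)}=\abs{W_0^+(1)\cup W_0^+(3)}\ge\frac{k}{2}-2\sqrt{k}>0$ (this is where $k>16$ enters). At all other positive-loss iterations (type B is case 2 of Lemma~\ref{lem:dynamics}), $S_t^+$ does not decrease. Combined with $S_0^+\ge0$ and the uniform bound $S_t^+\le\gamma+1+8\ceta$ of Lemma~\ref{lem:function_value_bound}, the total number of type A iterations over the whole run is at most $N:=\frac{\gamma+1+8\ceta}{(\frac{k}{2}-2\sqrt{k})\eta}$.

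For type B iterations, first bound the length of a maximal block of consecutive ones: if such a block is $T,\dots,T+b$ with $b\ge2$, then Lemma~\ref{lem:phase2_bound} gives $\nett{T+b}(\xx^+)\le\nett{T}(\xx^+)-(b-1)\ceta+\eta\abs{W_0^+(2)\cup W_0^+(4)}\le\gamma+3\ceta-(b-1)\ceta+\ceta$, using $\nett{T}(\xx^+)\le\gamma+3\ceta$ and $\eta\abs{W_0^+(2)\cup W_0^+(4)}\le\eta k=\ceta$; since $\nett{T+b}(\xx^+)\ge\gamma$ this forces $b\le5$, so each block has length at most $6$. Moreover, every type B block is immediately preceded by a non-type-B iteration (none can start at iteration $0$, where $\nett{0}(\xx^+)<\gamma$ by Lemma~\ref{lem:iteration_2}), and before convergence that predecessor has positive loss, hence is type A; sending each block to its predecessor is injective, so there are at most $N$ blocks and hence at most $6N$ type B iterations.

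Combining, the total number of positive-loss iterations over the whole run is at most $N+6N=7N=T$, so among iterations $0,1,\dots,T$ at least one has zero loss, and since that state is a fixed point, iteration $T$ is a global minimum. I expect the main obstacle to be the type B accounting: Lemma~\ref{lem:phase2_bound} only says that a phase in which the positive margin is met but the negative one is violated cannot persist beyond a handful of steps (the slack being measured by $\ceta$), so one must convert this local statement into a global count by charging each such phase to a distinct type A iteration; the type A bound itself is the routine monotone-potential argument on $S_t^+$.
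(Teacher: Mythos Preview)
Your proposal is correct and follows essentially the same route as the paper: condition on Lemmas~\ref{lem:init_num_disj} and~\ref{lem:init_bound_disj}, use $S_t^+$ as a monotone potential that increases by at least $(\tfrac{k}{2}-2\sqrt{k})\eta$ at every iteration with $\nett{t}(\xx^+)<\gamma$, invoke Lemma~\ref{lem:phase2_bound} together with the cap $\nett{t}(\xx^+)\le\gamma+3\ceta$ from Lemma~\ref{lem:function_value_bound} to bound each ``type B'' run by at most $6$ steps, and combine with the upper bound $S_t^+\le\gamma+1+8\ceta$. The only cosmetic difference is that the paper phrases the final step as a contradiction (if $T$ nonconverged steps then $S_T^+>\gamma+1+8\ceta$), whereas you count directly, charging each type~B block to its preceding type~A iteration; both yield the same factor $7$.
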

\begin{proof}
	First note that with probability at least $1-\frac{\sqrt{2k}}{\sqrt{\pi} e^{8k}} - 4e^{-8}$ the claims of Lemma \ref{lem:init_num_disj} and Lemma \ref{lem:init_bound_disj} hold. Now, if gradient descent has not reached a global minimum at iteration $t$ then either $\nett{t}(\xx^+) < \gamma$ or $-\nett{t}(\xx^-) < 1$. If $-\nett{t}(\xx^+) < \gamma$ then by Lemma \ref{lem:dynamics} it holds that 
	\begin{equation}
		\label{eq:opt_increase}
		S_{t+1}^+ \ge S_t^+ + \eta\abs{W_0^+(1) \cup W_0^+(3)} \ge S_t^+ + \left(\frac{k}{2} - 2\sqrt{k}\right)\eta
	\end{equation}
	where the last inequality follows by Lemma \ref{lem:init_num_disj}. 
	
	If $\nett{t}(\xx^+) \ge \gamma$ and $-\nett{t}(\xx^-) < 1$ we have $S_{t+1}^+=S_t^+$ by Lemma \ref{lem:dynamics}. However, by Lemma \ref{lem:phase2_bound}, it follows that after 5 consecutive iterations $t < t' < t+6$ in which $\nett{t'}(\xx^+) \ge \gamma$ and $-\nett{t'}(\xx^-) < 1$, we have $\nett{t+6}(\xx^+) < \gamma$.  To see this, first note that for all $t$, $\nett{t}(\xx^+) \le \gamma + 3\ceta$ by Lemma \ref{lem:function_value_bound}. Then, by Lemma \ref{lem:phase2_bound}  we have 
	\begin{align*}
		\nett{t+6}(\xx^+) &\le \nett{t}(\xx^+) - 5\ceta  + \eta\abs{W_{0}^+(2) \cup W_{0}^+(4)} \\ &\le \gamma + 3\ceta - 5\ceta + \ceta \\ &< \gamma
	\end{align*}
	where the second inequality follows by Lemma \ref{lem:init_num_disj} and the last inequality by the assumption on $k$.
	
	Assume by contradiction that GD has not converged to a global minimum after $T=\frac{7(\gamma + 1 + 8\ceta)}{\left(\frac{k}{2} - 2\sqrt{k}\right)\eta}$ iterations. Then, by the above observations, and the fact that $S_0^+ > 0$ with probability $1$, we have
	\begin{align*}
		S_T^+ &\ge S_0^+ + \left(\frac{k}{2} - 2\sqrt{k}\right)\eta \frac{T}{7}\\ &> \gamma + 1 + 8\ceta
	\end{align*}
	However, this contradicts Lemma \ref{lem:function_value_bound}.
\end{proof}

\subsubsection{Generalization on Positive Class}
\label{sec:positive_class}

We will first need the following three lemmas.

\begin{lem}
	\label{lem:init_num_disj_W1W3}
	With probability at least $1-4e^{-8}$, it holds that $$\abs{\abs{W_0^+(1)} - \frac{k}{4}}\le 2\sqrt{k}$$ and $$\abs{\abs{W_0^+(3)} - \frac{k}{4}}\le 2\sqrt{k}$$
\end{lem}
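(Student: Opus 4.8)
The plan is to reduce the statement to a binomial concentration bound, exactly as in the proof of Lemma~\ref{lem:init_num_disj}. First I would give an explicit description of the event $\{j \in W_0^+(1)\}$ in terms of the two coordinates of $\wvec{j}_0$. Writing $\wvec{j}_0 = (w_1,w_2)$, one has $\wvec{j}_0 \cdot \pp_1 = w_1 + w_2$, $\wvec{j}_0 \cdot \pp_2 = w_1 - w_2$, $\wvec{j}_0 \cdot \pp_3 = -(w_1+w_2)$ and $\wvec{j}_0 \cdot \pp_4 = -(w_1 - w_2)$, so that $\max_{1 \le l \le 4} \wvec{j}_0 \cdot \pp_l = \max\{|w_1+w_2|,\, |w_1-w_2|\}$, and (almost surely) this maximum is attained uniquely at $l=1$ iff $w_1 + w_2 > 0$ and $(w_1+w_2)^2 > (w_1-w_2)^2$, i.e. $w_1 w_2 > 0$; together these two conditions are equivalent to $w_1 > 0$ and $w_2 > 0$. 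This refines the identity $W_0^+(1) \cup W_0^+(3) = \{j : w_1^{(j)} w_2^{(j)} > 0\}$ underlying Lemma~\ref{lem:init_num_disj}. Symmetrically, $j \in W_0^+(3)$ iff $w_1 < 0$ and $w_2 < 0$.

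Since the coordinates of each $\wvec{j}_0$ are IID $\mathcal N(0,\sigma_g^2)$, each is positive with probability $\tfrac12$, independently of each other and across the $k$ filters; hence $\probarg{j \in W_0^+(1)} = \tfrac14$ for each $j$, independently. Therefore $\abs{W_0^+(1)}$ is a $\mathrm{Binomial}(k,\tfrac14)$ random variable, and Hoeffding's inequality gives
$$\probarg{\left|\abs{W_0^+(1)} - \tfrac{k}{4}\right| \ge 2\sqrt{k}} \le 2 e^{-\frac{2 (2\sqrt{k})^2}{k}} = 2e^{-8}.$$
The identical argument applied to $\abs{W_0^+(3)}$ gives the same bound, and a union bound over the two events yields probability at least $1 - 4e^{-8}$, as claimed.

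I do not expect any real obstacle here: the only slightly nontrivial point is the short computation reducing the $\argmax$ condition to the sign pattern of the two coordinates of the filter (and noting that ties occur with probability zero), after which the conclusion is immediate from Hoeffding's inequality, mirroring Lemma~\ref{lem:init_num_disj} and Lemma~\ref{lem:init_num}.
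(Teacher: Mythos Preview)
Your proposal is correct and follows essentially the same approach as the paper: the paper's proof simply says ``similar to the proof of Lemma~\ref{lem:init_num_disj}'', i.e.\ identify $\probarg{j\in W_0^+(1)}=\tfrac14$ and apply Hoeffding plus a union bound. You supply the explicit computation (reducing the $\argmax$ condition to the sign pattern $w_1>0,\,w_2>0$) that the paper leaves implicit, but the argument is the same.
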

\begin{proof}
	The proof is similar to the proof of Lemma \ref{lem:init_num_disj}.
\end{proof}

\begin{lem}
	\label{lem:stplus_lower_bound}
	Assume that gradient descent converged to a global minimum at iteration $T$. Then there exists an iteration $T_2 < T$ for which $S_{t}^+ \ge \gamma + 1 - 3 \ceta$ for all $t \ge T_2$ and for all $t < T_2$, $-\nett{t}(\xx^-) < 1$.
\end{lem}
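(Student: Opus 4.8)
The plan is to take $T_2$ to be the first iteration at which $S_t^+$ reaches $\gamma+1-3\ceta$, to show this happens strictly before convergence, and then to show that at the first iteration where the negative constraint becomes tight, $S_t^+$ has already reached that value. I would carry this out in three steps.

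\textbf{Step 1: monotonicity of $S_t^+$ and its value at $T$.} Each of the three cases of Lemma~\ref{lem:dynamics} gives $S_{t+1}^+\ge S_t^+$, and the weights are frozen once gradient descent has converged, so $\{S_t^+\}$ is non-decreasing. At a global minimum the hinge loss vanishes, so $\nett{T}(\xx^+)\ge\gamma$ and $-\nett{T}(\xx^-)\ge 1$. Subtracting the decompositions $\nett{t}(\xx^\pm)=S_t^\pm+R_t^\pm-P_t^\pm$ and using $R_t^+=R_t^-$ yields the identity $S_t^+=\bigl(\nett{t}(\xx^+)-\nett{t}(\xx^-)\bigr)+S_t^-+P_t^+-P_t^-$; at $t=T$, together with $S_T^-,P_T^+\ge 0$ and $P_T^-\le\ceta$ from Lemma~\ref{lem:small_sums}, this gives $S_T^+\ge\gamma+1-\ceta$. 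Since $S_{t+1}^+-S_t^+\le\eta\abs{W_t^+(1)\cup W_t^+(3)}\le\eta k=\ceta$, we obtain $S_{T-1}^+\ge\gamma+1-2\ceta$. Hence $T_2:=\min\{t:S_t^+\ge\gamma+1-3\ceta\}$ satisfies $T_2\le T-1<T$, and by monotonicity $S_t^+\ge\gamma+1-3\ceta$ for all $t\ge T_2$, which is the first assertion.

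\textbf{Step 2: reduction.} Let $t^\ast:=\min\{t:-\nett{t}(\xx^-)\ge 1\}$; by Lemma~\ref{lem:iteration_2} we have $t^\ast\ge 3$, and clearly $t^\ast\le T$. It suffices to prove $S_{t^\ast}^+\ge\gamma+1-3\ceta$: then $T_2\le t^\ast$, so every $t<T_2$ has $-\nett{t}(\xx^-)<1$ by the definition of $t^\ast$, which is the second assertion. Applying the identity of Step 1 at $t^\ast$, with $S_{t^\ast}^-,P_{t^\ast}^+\ge 0$ and $P_{t^\ast}^-\le\ceta$, gives $S_{t^\ast}^+\ge\nett{t^\ast}(\xx^+)+1-\ceta$. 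If $\nett{t^\ast}(\xx^+)\ge\gamma$ this already exceeds $\gamma+1-3\ceta$; otherwise it remains to prove $\nett{t^\ast}(\xx^+)\ge\gamma-2\ceta$ even when $\nett{t^\ast}(\xx^+)<\gamma$.

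\textbf{Step 3: the positive margin has stabilized near $\gamma$ by time $t^\ast$, and why this is the hard part.} From $-\nett{t^\ast}(\xx^-)=-S_{t^\ast}^--R_{t^\ast}^++P_{t^\ast}^-\ge 1$ and Lemma~\ref{lem:small_sums} we get $R_{t^\ast}^+\le-1+\ceta$, while $\abs{R_0^+}=O(\ceta)$ by Lemma~\ref{lem:init_bound_disj}. A computation from \eqref{eq:w_gradient}, \eqref{eq:u_gradient} together with Lemmas~\ref{lem:same_w}, \ref{lem:u24}, \ref{lem:u13} shows that on any iteration with $\nett{t}(\xx^+)<\gamma$ and $-\nett{t}(\xx^-)<1$ the filters indexed by $W_t^+(2)\cup W_t^+(4)$ and $U_t^+(2)\cup U_t^+(4)$ do not move, so before $t^\ast$ the quantity $R_t^+$ changes only on iterations with $\nett{t}(\xx^+)\ge\gamma$; consequently some iteration $t<t^\ast$ has $\nett{t}(\xx^+)\ge\gamma$, and taking $t_1$ the last such one, the iterations $t_1<t\le t^\ast$ all have $\nett{t}(\xx^+)<\gamma$. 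On that range Lemma~\ref{lem:dynamics} shows $S_t^+$ is increasing, $R_t^+$ is fixed, and (using Lemma~\ref{lem:u24}, since no filter enters $U_t^+(1)\cup U_t^+(3)$) $P_t^+$ is non-increasing, so $\nett{t}(\xx^+)$ is non-decreasing there; combined with $\nett{t_1}(\xx^+)\ge\gamma$ and the fact that a single iteration moves $\nett{\cdot}(\xx^+)$ by $O(\ceta)$ (the per-step increments of $R_t^+$ and $P_t^+$ are $O(\eta k)=O(\ceta)$, by Lemmas~\ref{lem:u13} and \ref{lem:small_sums}), this yields $\nett{t^\ast}(\xx^+)\ge\gamma-2\ceta$, completing Step 2. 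The delicate point — and the main obstacle — is exactly this step: it rests on the observation that $-\nett{\cdot}(\xx^-)$ (which up to $O(\ceta)$ equals $-R_t^+$) can move only once the positive constraint $\nett{\cdot}(\xx^+)\ge\gamma$ is already satisfied, so that by the time the negative constraint becomes tight the positive margin has stabilized near $\gamma$; making this rigorous requires combining the clustering lemmas (\ref{lem:same_w}, \ref{lem:u24}, \ref{lem:u13}) with the phase bounds (Lemma~\ref{lem:phase2_bound} and Lemma~\ref{lem:function_value_bound}, which control how far $\nett{\cdot}(\xx^+)$ can fall below $\gamma$ during a $\nett{\cdot}(\xx^+)\ge\gamma$ stretch) and carefully tracking the many $\ceta$-sized error terms.
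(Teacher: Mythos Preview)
Your proposal lands on the same mechanism as the paper: let $t^\ast$ be the first iteration with $-\nett{t}(\xx^-)\ge 1$, show $\nett{t^\ast}(\xx^+)\ge\gamma-2\ceta$, and then apply the identity $S_t^+=(\nett{t}(\xx^+)-\nett{t}(\xx^-))+S_t^-+P_t^+-P_t^-$ at $t^\ast$ together with Lemma~\ref{lem:small_sums} to get $S_{t^\ast}^+\ge\gamma+1-3\ceta$; monotonicity of $S_t^+$ then finishes. The packaging differs: the paper simply takes $T_2:=t^\ast$, whereas you take $T_2:=\min\{t:S_t^+\ge\gamma+1-3\ceta\}$ and prove $T_2\le t^\ast$. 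Your choice has the small bonus that Step~1 gives $T_2<T$ directly (the paper's choice does not obviously rule out $t^\ast=T$).

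The substantive difference is in Step~3. The paper establishes the first $\ge\gamma$ iteration $T_1$ by explicitly computing the filter dynamics during the initial ``both open'' phase (extending Lemma~\ref{lem:iteration_2}), and then proves a clean two-clause invariant on $[T_1,t^\ast]$: if $\nett{t}(\xx^+)\ge\gamma$ the one-step drop is at most $2\ceta$; if $\nett{t}(\xx^+)<\gamma$, then at the most recent $\ge\gamma$ iteration all $\uu$ filters were pushed into $U^+(2)\cup U^+(4)$ (by the update $\uvec{j}\mapsto\uvec{j}+\eta\pp_l$), and during the subsequent ``both open'' steps every $\uu$ filter is frozen (its two gradient contributions cancel), so $\nett{\cdot}(\xx^+)$ is non-decreasing. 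Your indirect $R_t^+$ route reaches the same conclusion, but two justifications need tightening: citing only Lemma~\ref{lem:u24} for ``no filter enters $U_t^+(1)\cup U_t^+(3)$'' is incomplete, since that lemma controls only filters starting in $U_0^+(2)\cup U_0^+(4)$ --- you also need Lemma~\ref{lem:u13} and the specific form of the update at $t_1$ to handle filters from $U_0^+(1)\cup U_0^+(3)$; and the claim that $R_t^+$ changes before $t^\ast$ only on $\ge\gamma$ iterations misses the first step ($U_0^+(1)\cup U_0^+(3)$ empties into $U_1^+(2)\cup U_1^+(4)$), though this $O(\ceta)$ discrepancy does not affect the conclusion that some $t_1<t^\ast$ has $\nett{t_1}(\xx^+)\ge\gamma$. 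These are the loose ends you already flagged; once tightened, your argument is equivalent to the paper's.
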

\begin{proof}
	Assume that for all $0 \le t \le T_1$ it holds that $\nett{t}(\xx^+) < \gamma$ and $-\nett{t}(\xx^-) < 1$. By continuing the calculation of Lemma \ref{lem:iteration_2} we have the following:
	\begin{enumerate}
		\item For $i \in \{1,3\}$, $l \in \{2,4\}$, $j \in W_0^+(i) \cap W_0^-(l)$, it holds that $\wvec{j}_{T_1} = \wvec{j}_0 + T_1\eta \pp_i- \frac{1}{2}(1-(-1)^{T_1})\eta\pp_l$ .
		\item For $i \in \{2,4\} $ and $j \in W_0^+(i)$, it holds that $\wvec{j}_{T_1} = \wvec{j}_0$.
		\item For $i \in \{1,3\}$, $l \in \{2,4\}$, $j \in U_0^+(i) \cap U_0^-(l)$, it holds that $\uvec{j}_{T_1} = \uvec{j}_0  -\eta\pp_i + \eta \pp_l$.
		\item For $i \in \{2,4\} $ and $j \in U_0^+(i)$, it holds that $\uvec{j}_{T_1} = \uvec{j}_0$.
	\end{enumerate}
	
	Therefore, there exists an iteration $T_1$ such that $\nett{T_1}(\xx^+) \ge \gamma$ and $-\nett{T_1}(\xx^-) < 1$ and for all $t<T_1$, $\nett{t}(\xx^+) < \gamma$ and $-\nett{t}(\xx^-) < 1$. 
	Let $T_2 \le T$ be the first iteration such that $-\nett{T_2}(\xx^-) \ge 1$. We claim that for all $T_1 \le t \le T_2$ we have $\nett{T_1}(\xx^+) \ge \gamma - 2\ceta$. It suffices to show that for all $T_1 \le t < T_2$ the following holds:
	\begin{enumerate}
		\item If $\nett{t}(\xx^+) \ge \gamma$ then $\nett{t+1}(\xx^+) \ge \gamma -2\ceta$.
		\item If $\nett{t}(\xx^+) < \gamma$ then $\nett{t+1}(\xx^+) \ge \nett{t}(\xx^+)$.
	\end{enumerate}
	
	The first claim follows since at any iteration $\nett{t}(\xx^+)$ can decrease by at most $2\eta k = 2\ceta$. For the second claim, let $t' < t$ be the latest iteration such that $\nett{t'}(\xx^+) \ge \gamma$. Then at iteration $t'$ it holds that  $-\nett{t'}(\xx^-) < 1$ and $\nett{t'}(\xx^+) \ge \gamma$. Therefore, for all $i \in \{1,3\}$, $l \in \{2,4\}$ and $j \in U_0^+(i) \cap U_0^+(l)$ it holds that $\uvec{j}_{t'+1} = \uvec{j}_{t'} + \eta\pp_l$. Hence, by Lemma \ref{lem:u24} and Lemma \ref{lem:u13} it holds that $U_{t'+1}^+(1) \cup U_{t'+1}^+(3) = \emptyset$. Therefore, by the gradient update in \eqref{eq:u_gradient}, 
	for all $1 \le j \le k$, and all $t' < t'' \le t$ we have $\uvec{j}_{t''+1} = \uvec{j}_{t''}$, which implies that $\nett{t''+1}(\xx^+) \ge \nett{t''}(\xx^+)$. For $t'' = t$ we get $\nett{t+1}(\xx^+) \ge \nett{t}(\xx^+)$.
	
	The above argument shows that $\nett{T_2}(\xx^+) \ge \gamma -2\ceta$ and $-\nett{T_2}(\xx^-) \ge 1$.
	Since  $\nett{T_2}(\xx^+)-\nett{T_2}(\xx^-) = S_{T_2}^+ - P_{T_2}^+ +  P_{T_2}^- - S_{T_2}^-$, $P_{T_2}^-,S_{T_2}^- \ge 0$ and  $\abs{P_{T_2}^-} \le \ceta$ it follows that $S_{T_2}^+ \ge \gamma + 1 - 3\ceta$. Finally, by Lemma \ref{lem:dynamics} we have $S_{t}^+ \ge \gamma + 1 - 3\ceta$ for all $t \ge T_2$.
\end{proof}

\begin{lem}
	\label{lem:bounds24}
	Let $$X^+_t = \sum_{j \in W_t^+(2)\cup W_t^+(4)}\left[\max\left\{\sigma\left(\wvec{j}\cdot \xx_1^+ \right),...,\sigma\left(\wvec{j}\cdot \xx_d^+ \right)\right\}\right]$$
	and 
	$$Y^+_t = \sum_{j \in U_t^+(2)\cup U_t^+(4)}\left[\max\left\{\sigma\left(\uvec{j}\cdot \xx_1^+ \right),...,\sigma\left(\uvec{j}\cdot \xx_d^+ \right)\right\}\right]$$
	Assume that $k \ge 64$ and gradient descent converged to a global minimum at iteration $T$. Then, $X^+_T \le 34\ceta$ and $Y^+_T \le 1 + 38\ceta$.
\end{lem}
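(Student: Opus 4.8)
The bound on $Y^+_T$ will be an easy consequence of the bound on $X^+_T$, so the real work is showing $X^+_T \le 34\ceta$. The plan is to follow $X^+_t$ through the phases of gradient descent, where I call an iteration $t$ a \emph{$P$-step} if $\nett{t}(\xx^+) < \gamma$ and $-\nett{t}(\xx^-) \ge 1$, an \emph{$N$-step} if $\nett{t}(\xx^+) \ge \gamma$ and $-\nett{t}(\xx^-) < 1$, a \emph{$B$-step} if both hinge terms are active, and a \emph{$G$-step} (global minimum) if neither is. By Lemma~\ref{lem:same_w}, $W_t^+(2) \cup W_t^+(4)$ equals the fixed set $W_0^+(2) \cup W_0^+(4)$, and every such filter satisfies $\wvec{j}_t = \wvec{j}_0 + m_j^{(t)}\eta\pp_2$ with $m_j^{(t)} \in \mathbb{Z}$; since $\xx^+$ is diverse and $\pp_4 = -\pp_2$ is orthogonal to $\pp_1,\pp_3$, its contribution to $\xx^+$ is exactly $\abs{\wvec{j}_0 \cdot \pp_2 + 2 m_j^{(t)}\eta}$, so that $X^+_t = \sum_{j \in W_0^+(2) \cup W_0^+(4)} \abs{\wvec{j}_0 \cdot \pp_2 + 2 m_j^{(t)}\eta}$. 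Reading the update \eqref{eq:w_gradient} off this expression (using that for $j \in W_t^+(i)\cap W_t^-(i')$ with $i\in\{2,4\}$ one has $i'=i$) shows that $X^+_t$ is unchanged on $B$- and $G$-steps, grows by exactly $2\eta\abs{W_0^+(2)\cup W_0^+(4)}$ on each $P$-step, and on an $N$-step decreases by $2\eta\abs{W_0^+(2)\cup W_0^+(4)}$ except for filters whose $\pp_2$-coordinate has magnitude below $\eta$, which only contribute a bounded correction.

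Next I would localize and offset the $P$- and $N$-contributions. Let $T_0$ be the iteration from Lemma~\ref{lem:stplus_lower_bound}: for all $t < T_0$ one has $-\nett{t}(\xx^-) < 1$, so no $P$-step occurs before $T_0$ and $X^+_{T_0} \le X^+_0 \le \abs{W_0^+(2)\cup W_0^+(4)}\tfrac{\sqrt 2\eta}{4} < \ceta$ by Lemma~\ref{lem:init_bound_disj}; and for $t \ge T_0$ one has $\gamma + 1 - 3\ceta \le S_t^+ \le \gamma + 1 + 8\ceta$ by Lemmas~\ref{lem:stplus_lower_bound} and \ref{lem:function_value_bound}. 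Since $S_t^+$ is non-decreasing and, by Lemma~\ref{lem:dynamics}, increases by at least $(\tfrac k2 - 2\sqrt k)\eta \ge \tfrac{\ceta}{4}$ (using $k \ge 64$) on every $P$- and $B$-step, only $O(1)$ $P$-steps (and $B$-steps) occur after $T_0$; moreover each maximal run of $P$-steps is short, since by Lemma~\ref{lem:phase3_bound} a long one would drive $-\nett{t}(\xx^-)$ below $1$, and similarly Lemma~\ref{lem:phase2_bound} bounds the $N$-runs. Tallying these counts against the signed increments from the first paragraph --- so that each $N$-run roughly cancels the growth of a preceding $P$-run --- and inserting $\abs{W_0^+(2)\cup W_0^+(4)} \le \tfrac k2 + 2\sqrt k$ (Lemma~\ref{lem:init_num_disj}) and $\eta = \ceta/k$, one bounds the total net increase of $X^+$ after $T_0$ by a small multiple of $\ceta$, and the explicit accounting yields $X^+_T \le 34\ceta$.

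Finally, $Y^+_T$ is controlled directly: since $R_t^- = R_t^+$ (noted earlier), the definitions give $\nett{t}(\xx^-) = S_t^- + R_t^- - P_t^- = S_t^- + (X^+_t - Y^+_t) - P_t^-$, hence $Y^+_t = -\nett{t}(\xx^-) + S_t^- + X^+_t - P_t^-$. Evaluating at $t = T$ and using $-\nett{T}(\xx^-) \le 1 + 3\ceta$ (Lemma~\ref{lem:function_value_bound}), $P_T^- \ge 0$, and $0 \le S_T^- \le \abs{W_T^+(1)\cup W_T^+(3)}\eta \le \ceta$ (Lemma~\ref{lem:small_sums}), we get $Y^+_T \le 1 + 3\ceta + \ceta + 34\ceta = 1 + 38\ceta$. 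The straightforward parts of this plan are the reduction in the first paragraph and this final deduction; I expect the main obstacle to be the accounting in the second paragraph, since the four phases can recur as gradient descent cycles among them, so controlling the net change of $X^+$ after $T_0$ requires carefully combining the narrow band for $S_t^+$, its guaranteed per-step increment, and the run-length bounds of Lemmas~\ref{lem:phase3_bound} and \ref{lem:phase2_bound}, and then checking that the constants close for every $k \ge 64$.
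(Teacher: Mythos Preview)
Your deduction for $Y^+_T$ is exactly the paper's argument. For $X^+_T$, the increment analysis on the four step types is correct, and the two key ingredients you isolate --- that no $P$-steps occur before $T_0$ (Lemma~\ref{lem:stplus_lower_bound}) and that the total number of $P$-steps after $T_0$ is bounded via the narrow $S_t^+$-band $[\gamma+1-3\ceta,\gamma+1+8\ceta]$ together with the per-step increment $\eta|W_0^+(1)\cup W_0^+(3)|$ from Lemma~\ref{lem:dynamics} --- are precisely what the paper uses.

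Where you diverge from the paper is the offsetting scheme (``each $N$-run roughly cancels the growth of a preceding $P$-run''), and this is also where you correctly anticipate the main obstacle. The paper avoids this entirely by a monotonicity observation: $N$-steps never push $X^+_t$ above a fixed baseline. Indeed, for $j\in W_0^+(2)\cup W_0^+(4)$ an $N$-step sends $|\wvec{j}_t\cdot\pp_2|$ to $\bigl||\wvec{j}_t\cdot\pp_2|-2\eta\bigr|\le\max\{|\wvec{j}_t\cdot\pp_2|,\,2\eta\}$, so inductively each filter's contribution is bounded by a small baseline plus the number of $P$-steps times the per-$P$-step increment. Summing over $j$ and inserting your bound on the number of $P$-steps, together with $|W_0^+(2)\cup W_0^+(4)|\le\tfrac{k}{2}+2\sqrt{k}$ and $|W_0^+(1)\cup W_0^+(3)|\ge\tfrac{k}{2}-2\sqrt{k}$, gives $X^+_T\le C\ceta$ directly. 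No appeal to Lemmas~\ref{lem:phase2_bound} or~\ref{lem:phase3_bound}, no run-length bounds, and no pairing of runs is needed. In short: you already have all the pieces; drop the cancellation idea and use only that $X^+_t$ can grow on $P$-steps alone and that there are $O(1)$ of them.
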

\begin{proof}
	Notice that by the gradient update in \eqref{eq:w_gradient} and Lemma \ref{lem:init_bound_disj}, $X^+_t$ can be strictly larger than $\max\left\{X^+_{t-1},\eta\abs{W_{t}^+(2)\cup W_{t}^+(4)}\right\}$ only if $\nett{t-1}(\xx^+) < \gamma$ and $-\nett{t-1}(\xx^-) \ge 1$. Furthermore, in this case $X^+_t - X^+_{t-1} = \eta\abs{W_t^+(2)\cup W_t^+(4)}$. By Lemma \ref{lem:dynamics}, $S_t^+$ increases in this case by $\eta\abs{W_t^+(1)\cup W_t^+(3)}$. We know by Lemma \ref{lem:stplus_lower_bound} that there exists $T_2 < T$ such that $S_{T_2}^+ \ge \gamma  + 1 -  3\ceta$ and that $\nett{t}(\xx^+) < \gamma$ and $-\nett{t}(\xx^-) \ge 1$ only for $t > T_2$. Since $S_t^+ \le \gamma + 1 + 8\ceta$ for all $t$ by Lemma \ref{lem:function_value_bound}, there can only be at most $\frac{11\ceta}{\eta\abs{W_T^+(1)\cup W_T^+(3)}}$ iterations in which $\nett{t}(\xx^+) < \gamma$ and $-\nett{t}(\xx^-) \ge 1$. It follows that 
	\begin{align*}
		X^+_t &\le \eta\abs{W_T^+(2)\cup W_T^+(4)} + \frac{11\ceta\eta\abs{W_T^+(2)\cup W_T^+(4)}}{\eta\abs{W_T^+(1)\cup W_T^+(3)}} \\ &\le \ceta + 11\ceta\frac{\left(\frac{k}{2}+2\sqrt{k}\right)}{\left(\frac{k}{2}-2\sqrt{k}\right)}
		\\ &\le 34\ceta
	\end{align*}
	where the second inequality follows by Lemma \ref{lem:init_num_disj} and the third inequality by the assumption on $k$.
	
	At convergence we have $\nett{T}(\xx^-) = S_T^- + X_T^+ - Y_T^+ - P_T^- \ge -1-3\ceta$ by Lemma \ref{lem:function_value_bound} (recall that $R_t^- = R_t^+ = X_t^+ - Y_t^+$). Furthermore, $P_T^- \ge 0$ and by Lemma \ref{lem:small_sums} we have $S_T^- \le \ceta$. Therefore, we get $Y_T^+ \le1 + 38\ceta$.
\end{proof}

We are now ready to prove the main result of this section.

\begin{prop}
	\label{prop:positive_gen}
	Define $\beta(\gamma) = \frac{\gamma -40\frac{1}{4}\ceta}{39\ceta + 1}$. Assume that $\gamma \ge 2$ and $k \ge 64\left(\frac{\beta(\gamma)+1}{\beta(\gamma) - 1}\right)^2$. Then with probability at least $1-\frac{\sqrt{2k}}{\sqrt{\pi} e^{8k}} - 8e^{-8}$, gradient descent converges to a global minimum which classifies all positive points correctly.
\end{prop}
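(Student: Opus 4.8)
The plan is to fix an arbitrary positive point $\xx\in\{\pm1\}^{2d}$ and show $\nett{T}(\xx)>0$, where $T$ is the iteration at which gradient descent reaches a global minimum. Since $f^*(\xx)=1$, at least one of $\pp_1,\pp_3$ lies in $P_{\xx}$, and by the symmetry of the construction we may assume $1\in P_{\xx}$. We condition on the intersection of the high‑probability events of Lemma \ref{lem:init_bound_disj}, Lemma \ref{lem:init_num_disj} and Lemma \ref{lem:init_num_disj_W1W3}, which has probability at least $1-\frac{\sqrt{2k}}{\sqrt{\pi}e^{8k}}-8e^{-8}$ and on which Proposition \ref{prop:opt} already guarantees convergence to a global minimum at some iteration $T$ (so that the ``at a global minimum'' lemmas below apply).

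First I would split $\nett{T}(\xx)$ according to the filter groups $W_0^+(1)$, $W_0^+(3)$, $W_0^+(2)\cup W_0^+(4)$, $U_0^+(1)\cup U_0^+(3)$, $U_0^+(2)\cup U_0^+(4)$. By Lemma \ref{lem:same_w} the sets $W_t^+(1),W_t^+(3)$ are frozen, and for $j\in W_0^+(1)$ the pattern $\pp_1$ is the $\arg\max$ direction of $\wvec{j}_T$ with $\wvec{j}_T\cdot\pp_1>0$; since $\pp_1\in P_{\xx}$, the max‑pooling over the patterns actually present in $\xx$ selects exactly $\sigma(\wvec{j}_T\cdot\pp_1)$, so this group contributes exactly $X_T^+$ in the notation of Lemma \ref{lem:proportion_13}. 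The $W_0^+(3)$ and $W_0^+(2)\cup W_0^+(4)$ groups contribute nonnegatively and can be dropped. For the $\uu$‑filters, the max‑pool over a subset of the four patterns is at most the max‑pool over all four, i.e. at most the value attained on a diverse positive point; hence the group $U_0^+(1)\cup U_0^+(3)$ subtracts at most $P_T^+\le\ceta$ (Lemma \ref{lem:small_sums}) and the group $U_0^+(2)\cup U_0^+(4)$ subtracts at most $Y_T^+\le 1+38\ceta$ (Lemma \ref{lem:bounds24}, whose hypothesis $k\ge 64$ follows from our assumption). Combining, $\nett{T}(\xx)\ge X_T^+-(1+39\ceta)$.

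It then remains to lower bound $X_T^+$. Here I would use Lemma \ref{lem:stplus_lower_bound}, which gives $S_t^+ = X_t^+ + (\text{the }W_0^+(3)\text{ part}) \ge \gamma+1-3\ceta$ for $t$ close to $T$, together with the exact proportionality of Lemma \ref{lem:proportion_13}, to write $X_T^+ \ge \frac{|W_0^+(1)|}{|W_0^+(1)\cup W_0^+(3)|}\,(S_T^+ - Y_0^+)$, where $Y_0^+$ is the (tiny) value of the $W_0^+(3)$ part at initialization. Using $|W_0^+(1)|\ge \frac k4-2\sqrt k$ and $|W_0^+(1)\cup W_0^+(3)|\le \frac k2+2\sqrt k$ from Lemmas \ref{lem:init_num_disj} and \ref{lem:init_num_disj_W1W3}, and $Y_0^+=O(\ceta)$ from Lemma \ref{lem:init_bound_disj}, this gives $X_T^+\ge \frac{\sqrt k-8}{2(\sqrt k+4)}\,(\gamma+1-O(\ceta))$. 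The hypothesis $k\ge 64\left(\frac{\beta(\gamma)+1}{\beta(\gamma)-1}\right)^2$, with $\beta(\gamma)$ as defined, is exactly the bookkeeping statement that this lower bound exceeds $1+39\ceta$ when $\gamma\ge2$; hence $X_T^+>1+39\ceta$ and $\nett{T}(\xx)>0$, i.e. $\xx$ is classified correctly. Taking a union bound over the three initialization events completes the claim.

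The main obstacle is the max‑pooling: on a non‑diverse positive point one cannot simply compare $\nett{T}(\xx)$ with $\nett{T}(\xx^+)$, so we must argue that whichever of $\pp_1,\pp_3$ happens to occur already drives a sufficiently large positive response on its own. This is what forces us to lower bound the contribution of a single group $W_0^+(1)$ rather than $S_T^+$ as a whole — which is precisely the role of the proportionality Lemma \ref{lem:proportion_13} together with the near‑equipartition of the filters at initialization (Lemmas \ref{lem:init_num_disj}, \ref{lem:init_num_disj_W1W3}), the latter explaining why overparameterization (large $k$) is needed. The remaining work is the routine but delicate accounting of the $O(\ceta)$ error terms, which is what pins down the exact form of the condition on $k$.
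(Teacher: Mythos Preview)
Your approach is essentially the paper's: isolate the contribution of the single group $W_0^+(i)$ corresponding to a positive pattern present in $\xx$, lower bound it via the proportionality Lemma \ref{lem:proportion_13} and the near‑equipartition of filters, and upper bound all negative contributions by $P_T^+\le\ceta$ and $Y_T^+\le 1+38\ceta$ from Lemmas \ref{lem:small_sums} and \ref{lem:bounds24}. The one discrepancy is in the constants: the paper obtains $X_T^+(1)+X_T^+(3)\ge\gamma+1-\ceta$ directly from the global‑minimum inequalities $\nett{T}(\xx^+)\ge\gamma$, $\nett{T}(\xx^-)\le-1$ together with $R_T^+=R_T^-$ (rather than through Lemma \ref{lem:stplus_lower_bound}, which costs an extra $2\ceta$), and it bounds the ratio $|W_0^+(1)|/|W_0^+(3)|$ by $\alpha(k)=\frac{k/4+2\sqrt k}{k/4-2\sqrt k}$ using only Lemma \ref{lem:init_num_disj_W1W3} (rather than your $|W_0^+(1)|/|W_0^+(1)\cup W_0^+(3)|$, which loses a $+2\sqrt k$ in the denominator); with your slightly looser inequalities the threshold you actually derive is a bit larger than $64\bigl(\tfrac{\beta(\gamma)+1}{\beta(\gamma)-1}\bigr)^2$, so the ``exactly'' in your final paragraph is not quite right.
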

\begin{proof}
	With probability at least $1-\frac{\sqrt{128k}}{\sqrt{\pi} e^{\frac{k}{2}}} - 8e^{-8}$ Proposition \ref{prop:opt}, and Lemma \ref{lem:init_num_disj_W1W3} hold. It suffices to show generalization on positive points. Assume that gradient descent converged to a global minimum at iteration $T$. Let $(\zz,1)$ be a positive point. Then there exists $\zz_i \in \{(1,1),(-1,-1)\}$. Assume without loss of generality that $\zz_i=(-1,-1) = \pp_3$.
	Define $$X^+_t(i) = \sum_{j \in W_T^+(i)}\left[\max\left\{\sigma\left(\wvec{j}\cdot \xx_1^+ \right),...,\sigma\left(\wvec{j}\cdot \xx_d^+ \right)\right\}\right]$$
	$$Y^+_t(i) = \sum_{j \in U_T^+(i)}\left[\max\left\{\sigma\left(\uvec{j}\cdot \xx_1^+ \right),...,\sigma\left(\uvec{j}\cdot \xx_d^+ \right)\right\}\right]$$
	for $i \in [4]$.
	
	Notice that 
	\begin{align*}
		\nett{T}(\xx^+) &=  X_T^+(1) + X_T^+(3) - P_T^+ + R_T^+  \\ &= X_T^+(1) + X_T^+(3) - P_T^+ + R_T^- \\ &= X_T^+(1) + X_T^+(3) - P_T^+ + \nett{T}(\xx^-) - S_T^- + P_T^-
	\end{align*}
	Since $\nett{T}(\xx^+) \ge \gamma$, $-\nett{T}(\xx^-) \ge 1$, $\abs{P_T^-} \le  \ceta$ by Lemma \ref{lem:small_sums} and  $P_T^+,S_T^- \ge 0$ , we obtain 
	\begin{equation}
		\label{eq:X1+X3}
		X_T^+(1) + X_T^+(3) \ge \gamma + 1 - \ceta
	\end{equation}
	Furthermore, by Lemma \ref{lem:proportion_13} we have 
	\begin{equation}
		\label{eq:X1X3}
		\frac{X_T^+(1) - X_0^+(1)}{\abs{W_T^+(1)}} = \frac{X_T^+(3) - X_0^+(3)}{\abs{W_T^+(3)}}
	\end{equation}	
	and by Lemma \ref{lem:init_num_disj_W1W3},
	\begin{equation}
		\label{eq:proportion_W1W3}
		\frac{\frac{k}{4}-2\sqrt{k}}{\frac{k}{4}+2\sqrt{k}} \le \frac{\abs{W_T^+(1)}}{\abs{W_T^+(3)}} \le \frac{\frac{k}{4}+2\sqrt{k}}{\frac{k}{4}-2\sqrt{k}}
	\end{equation}
	Let $\alpha(k) = \frac{\frac{k}{4}+2\sqrt{k}}{\frac{k}{4}-2\sqrt{k}}$. By Lemma \ref{lem:init_bound_disj} we have $\abs{X^+_0(1)} \le \frac{\eta k}{4} \le \frac{\ceta}{4}$.
	Combining this fact with \eqref{eq:X1X3} and \eqref{eq:proportion_W1W3} we get  $$X^+_T(1) \le \alpha(k)X^+_T(3) + X^+_0(1) \le \alpha(k)X^+_T(3)+\frac{\ceta}{4}$$ which implies together with \eqref{eq:X1+X3} that $X^+_T(3) \ge \frac{\gamma + 1- \frac{5\ceta}{4}}{1+\alpha(k)}$.
	Therefore, 
	\begin{align*}
		\nett{T}(\zz) &\ge X_T^+(3)- P_T^+ - Y_T^+(2) - Y_T^+(4) \\ &\ge \frac{\gamma + 1- \frac{5\ceta}{4}}{1+\alpha(k)} -\ceta -1 -3(8\ceta) -14\ceta \\ &= \frac{\gamma + 1- \frac{5\ceta}{4}}{1+\alpha(k)} -39\ceta - 1 > 0 \numberthis \label{eq:gen_positive}
	\end{align*}
	where the first inequality is true because
	\begin{align}
		\sum_{j = 1}^k\left[\max\left\{\sigma\left(\uvec{j}\cdot \zz_1 \right),...,\sigma\left(\uvec{j}\cdot \zz_d \right)\right\}\right] &\le \sum_{j = 1}^k\left[\max\left\{\sigma\left(\uvec{j}\cdot \xx_1^+ \right),...,\sigma\left(\uvec{j}\cdot \xx_d^+ \right)\right\}\right] \\ &= P_T^+ + Y_T^+(2) + Y_T^+(4) 
	\end{align}
	The second inequality in \eqref{eq:gen_positive} follows since $P_T^+ \le \ceta$ and by appyling Lemma \ref{lem:bounds24}. Finally, the last inequality in \eqref{eq:gen_positive} follows by the assumption on $k$. \footnote{The inequality $\frac{\gamma + 1- \frac{5\ceta}{4}}{1+\alpha(k)} -39\ceta - 1 > 0$ is equivalent to $\alpha(k) < \beta(\gamma)$ which is equivalent to $k > 64\left(\frac{\beta(\gamma)+1}{\beta(\gamma) - 1}\right)^2$.} Hence, $\zz$ is classified correctly.
\end{proof}

\subsubsection{Generalization on Negative Class}
\label{sec:negative_class}

We will need the following lemmas.
\begin{lem}
	\label{lem:init_num_disj_U}
	With probability at least $1-8e^{-8}$, it holds that $$\abs{\abs{U_0^+(2)} - \frac{k}{4}}\le 2\sqrt{k}$$ 
	$$\abs{\abs{U_0^+(4)} - \frac{k}{4}}\le 2\sqrt{k}$$
	$$\abs{\abs{ \left(U_0^+(1) \cup U_0^+(3)\right)\cap U_0^-(2)} - \frac{k}{4}}\le 2\sqrt{k} $$
	$$\abs{\abs{ \left(U_0^+(1) \cup U_0^+(3)\right)\cap U_0^-(4)} - \frac{k}{4}}\le 2\sqrt{k} $$
\end{lem}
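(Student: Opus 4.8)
The plan is to recognize each of the four cardinalities as a sum of $k$ i.i.d.\ Bernoulli$(\tfrac14)$ random variables and then invoke Hoeffding's inequality and a union bound, exactly as in the proof of Lemma~\ref{lem:init_num_disj}. So the only real content is computing the single-neuron probabilities.

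First I would note that $\uvec{j}_0$ is a two-dimensional mean-zero Gaussian with i.i.d.\ coordinates, so its direction is uniform on the circle and, with probability $1$, is not aligned with any $\pm\pp_l$. The patterns $\pp_1,\dots,\pp_4$ point in the four diagonal directions, and rotation by $90^\circ$ cyclically permutes the argmax regions $\{\uu:\argmax_{1\le l\le4}\uu\cdot\pp_l=i\}$ while preserving the Gaussian law; hence $\probarg{j\in U_0^+(i)}=\tfrac14$ for every $i$, in particular for $i\in\{2,4\}$, and also $\probarg{j\in U_0^+(1)\cup U_0^+(3)}=\tfrac12$ (the two regions being a.s.\ disjoint). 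For the two intersection events I would use the reflection $\rho:(a,b)\mapsto(b,a)$ across the line $y=x$: it fixes $\pp_1,\pp_3$, swaps $\pp_2\leftrightarrow\pp_4$, and leaves $\mathcal{N}(0,\sigma_g^2 I)$ invariant. Therefore $\rho$ maps the region $U_0^+(1)\cup U_0^+(3)$ to itself and swaps the conditions $\uvec{j}_0\cdot\pp_2>0$ and $\uvec{j}_0\cdot\pp_4>0$, giving
$$\probarg{j\in(U_0^+(1)\cup U_0^+(3))\cap U_0^-(2)}=\probarg{j\in(U_0^+(1)\cup U_0^+(3))\cap U_0^-(4)}.$$
Since $\pp_2=-\pp_4$, with probability $1$ exactly one of $\uvec{j}_0\cdot\pp_2,\uvec{j}_0\cdot\pp_4$ is positive, so these two events are a.s.\ disjoint and their union is $\{j\in U_0^+(1)\cup U_0^+(3)\}$, which has probability $\tfrac12$; hence each has probability $\tfrac14$. (Alternatively one can just write the relevant arcs on the circle and check their lengths sum to $90^\circ$.)

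Finally, for each of the four quantities $Q$ in the statement we have $Q=\sum_{j=1}^k B_j$ with $B_j\in\{0,1\}$ i.i.d.\ and $\expectn{Q}=\tfrac k4$, so Hoeffding's inequality yields
$$\probarg{\,\abs{Q-\tfrac k4}\ge 2\sqrt k\,}\le 2e^{-\frac{2(2^2k)}{k}}=2e^{-8}.$$
A union bound over the four quantities shows all four bounds hold simultaneously with probability at least $1-8e^{-8}$, which is the claim. The only step needing any care is confirming that the two intersection events each have probability exactly $\tfrac14$ — i.e.\ setting up the reflection symmetry (or the arc-length bookkeeping) correctly; everything else is a verbatim repetition of the concentration argument already used for Lemmas~\ref{lem:init_num_disj} and~\ref{lem:init_num_disj_W1W3}.
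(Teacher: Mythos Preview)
Your proposal is correct and follows essentially the same approach as the paper: the paper's proof is a one-liner that asserts each of the four single-neuron probabilities equals $\tfrac14$ and then refers back to the Hoeffding argument of Lemma~\ref{lem:init_num_disj}. Your version simply fills in the symmetry justification (rotation for $U_0^+(2),U_0^+(4)$; the reflection swapping $\pp_2\leftrightarrow\pp_4$ for the two intersection events) that the paper leaves implicit, and then applies the identical Hoeffding-plus-union-bound step.
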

\begin{proof}
	The proof is similar to the proof of Lemma \ref{lem:init_num_disj} and follows from the fact that 
	\begin{align*}
		\probarg{j \in U_0^+(2)} &= \probarg{j \in U_0^+(4)} \\ &= \probarg{j \in \left(U_0^+(1) \cup U_0^+(3)\right)\cap U_0^-(2)} \\ &=\probarg{j \in \left(U_0^+(1) \cup U_0^+(3)\right)\cap U_0^-(4)} \\ &=\frac{1}{4}
	\end{align*}
\end{proof}

\begin{lem}
	\label{lem:proportion_u24}
	Let $$X_t^- =  \sum_{j \in U_0^+(2)}\left[\max\left\{\sigma\left(\uvec{j}_t\cdot \xx_1^- \right),...,\sigma\left(\uvec{j}_t\cdot \xx_d^- \right)\right\}\right]$$
	and 
	$$Y_t^- =  \sum_{j \in U_0^+(4)}\left[\max\left\{\sigma\left(\uvec{j}_t\cdot \xx_1^- \right),...,\sigma\left(\uvec{j}_t\cdot \xx_d^- \right)\right\}\right] $$
	Then for all $t$, there exists $X,Y \ge 0$ such that $\abs{X} \le \eta\abs{U_{0}^+(2)}$, $\abs{Y} \le \eta\abs{U_{0}^+(4)}$ and $\frac{X_{t}^- - X}{\abs{U_{0}^+(2)}} = \frac{Y_{t}^- - Y}{\abs{U_{0}^+(4)}}$.
\end{lem}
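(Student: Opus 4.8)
The plan is to reduce the whole statement to tracking a single integer per filter. By Lemma~\ref{lem:u24}, every $j \in U_0^+(2) \cup U_0^+(4)$ satisfies $\uvec{j}_t = \uvec{j}_0 + m_j(t)\eta\pp_2$ for some $m_j(t) \in \mathbb{Z}$, and stays in $U_t^+(2)\cup U_t^+(4)$ for all $t$. Since $\pp_4 = -\pp_2$ and the diverse negative point $\xx^-$ contains only the patterns $\pp_2$ and $\pp_4$, the max-pooled output of such a filter on $\xx^-$ equals $\max\{\sigma(\uvec{j}_t\cdot\pp_2),\sigma(\uvec{j}_t\cdot\pp_4)\} = |\uvec{j}_t\cdot\pp_2| = |\uvec{j}_0\cdot\pp_2 + 2m_j(t)\eta|$. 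Hence $X_t^- = \sum_{j\in U_0^+(2)}|\uvec{j}_0\cdot\pp_2 + 2m_j(t)\eta|$ and likewise for $Y_t^-$, so everything is determined by the counts $m_j(t)$.

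The crucial step is an induction on $t$ showing that (i) $m_j(t)$ takes a common value $m^{(2)}(t)$ for all $j\in U_0^+(2)$ and a common value $m^{(4)}(t)$ for all $j \in U_0^+(4)$, and (ii) $m^{(4)}(t) = -m^{(2)}(t)$. Two observations drive this. First, by \eqref{eq:u_gradient}, using that $\xx^+$ is diverse (so the argmax direction of a filter is also the pattern achieving its max-pool on $\xx^+$) and similarly for $\xx^-$, a filter currently in $U_t^+(2)$ has its count changed by $+\delta_t$ and one in $U_t^+(4)$ by $-\delta_t$, where $\delta_t := \mathbbm{1}_{\nett{t}(\xx^-)<1} - \mathbbm{1}_{\nett{t}(\xx^+)<\gamma} \in \{-1,0,1\}$ is a global quantity. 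Second, because $|\uvec{j}_0\cdot\pp_i| \le \tfrac{\sqrt2\eta}{4}$ for all $i$ (Lemma~\ref{lem:init_bound_disj}) while $\uvec{j}_0\cdot\pp_2 = -\uvec{j}_0\cdot\pp_4$ and $\uvec{j}_0\cdot\pp_1 = -\uvec{j}_0\cdot\pp_3$, a short computation shows that a filter of $U_0^+(2)$ sits in $U_t^+(2)$ precisely when $m_j(t)\ge 0$ and in $U_t^+(4)$ precisely when $m_j(t)\le -1$ (and symmetrically, a $U_0^+(4)$ filter sits in $U_t^+(4)$ iff $m_j(t)\le 0$); crucially this threshold falls strictly between two consecutive integers and is independent of the precise small value $\uvec{j}_0\cdot\pp_2$. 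Therefore the update rule for $m_j(t)$ depends only on $m_j(t)$ itself and the global $\delta_t$, so all filters of a given type — which all start at count $0$ — evolve together; and if $m^{(2)}(t)\ge 0$ the $U_0^+(2)$ filters lie in region $2$ and advance by $+\delta_t$ while the $U_0^+(4)$ filters (count $-m^{(2)}(t)\le 0$) lie in region $4$ and advance by $-\delta_t$, preserving $m^{(4)} = -m^{(2)}$; the case $m^{(2)}(t)\le -1$ is symmetric.

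Given (i)–(ii), the conclusion is elementary. Write $n_2 = |U_0^+(2)|$, $n_4 = |U_0^+(4)|$, $m = m^{(2)}(t)$. If $m\ge 0$ every summand in $X_t^-$ is positive, so $X_t^- = 2m\eta\, n_2 + \sum_{j\in U_0^+(2)}\uvec{j}_0\cdot\pp_2$; if $m\le -1$ every summand is negative, so $X_t^- = 2|m|\eta\, n_2 - \sum_{j\in U_0^+(2)}\uvec{j}_0\cdot\pp_2$. Either way $\bigl|\tfrac{1}{n_2}X_t^- - 2|m|\eta\bigr| \le \tfrac{\sqrt2\eta}{4}$, and the analogous bound (with $|m^{(4)}| = |m|$) holds for $\tfrac{1}{n_4}Y_t^-$. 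Subtracting, $D := \tfrac{1}{n_2}X_t^- - \tfrac{1}{n_4}Y_t^-$ satisfies $|D| \le \tfrac{\sqrt2\eta}{2} < \eta$. Finally take $(X,Y) = (n_2 D,\, 0)$ if $D\ge 0$ and $(X,Y) = (0,\, -n_4 D)$ if $D<0$: in both cases $X,Y\ge 0$, $|X|\le \eta n_2$, $|Y|\le \eta n_4$, and $\tfrac{X_t^- - X}{n_2} = \tfrac{Y_t^- - Y}{n_4}$, as required.

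I expect the main obstacle to be the induction of the third paragraph — specifically verifying cleanly that each filter's region boundary (the value of its count at which its argmax flips between $\pp_2$ and $\pp_4$) always lies strictly between $-1$ and $0$ (resp.\ between $0$ and $1$), so that a common count is maintained even through such flips, and that the relation $m^{(4)}(t) = -m^{(2)}(t)$ is preserved by each of the two possible update directions. The arithmetic in the last paragraph is routine.
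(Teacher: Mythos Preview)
Your proposal is correct and follows essentially the same route as the paper: both arguments hinge on proving by induction that all filters in $U_0^+(2)$ share a common integer displacement $a_t$ along $\pp_2$ while all filters in $U_0^+(4)$ share the displacement $-a_t$, using that the small initialization (Lemma~\ref{lem:init_bound_disj}) forces the argmax-flip threshold to lie strictly between consecutive integers. The only cosmetic difference is in the last step: the paper splits into the cases $a_t\ge 0$ and $a_t<0$ and exhibits explicit $X,Y$ in each, whereas you bound $D=\tfrac{X_t^-}{n_2}-\tfrac{Y_t^-}{n_4}$ directly and set one of $X,Y$ to zero --- both choices work, and your $\delta_t$ should read $\mathbbm{1}_{-\nett{t}(\xx^-)<1}$ rather than $\mathbbm{1}_{\nett{t}(\xx^-)<1}$.
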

\begin{proof}
	
	First, we will prove that for all $t$ there exists $a_t \in \mathbb{Z}$ such that for $j_1 \in U_0^-(2)$ and $j_2 \in U_0^-(4)$ it holds that $\uvec{j_1}_{t} = \uvec{j_1}_{0} + a_t\eta \pp_2$ and $\uvec{j_2}_{t} = \uvec{j_2}_{0} - a_t\eta \pp_2$. \footnote{Recall that by Lemma \ref{lem:u24} we know that $U_0^+(2) \cup U_0^+(4) \subseteq U_t^+(2) \cup U_t^+(4)$.} We will prove this by induction on $t$. 
	
	For $t=0$ this clearly holds. Assume it holds for an iteration $t$. Let $j_1 \in U_0^-(2)$ and $j_2 \in U_0^-(4)$. By the induction hypothesis, there exists $a_T \in \mathbb{Z}$ such that $\uvec{j_1}_{t} = \uvec{j_1}_{0} + a_t\eta \pp_2$ and $\uvec{j_2}_{t} = \uvec{j_2}_{0} - a_t\eta \pp_2$. Since for all $1 \le j \le k$ it holds that $\abs{\uvec{j}_0 \cdot \pp_2} < \frac{\sqrt{2}\eta}{4}$, it follows that either $U_0^-(2) \subseteq U_t^-(2)$ and $U_0^-(4) \subseteq U_t^-(4)$ or $U_0^-(2) \subseteq U_t^-(4)$ and $U_0^-(4) \subseteq U_t^-(2)$.
	In either case, by \eqref{eq:u_gradient}, we have the following update at iteration $t+1$:
	$$\uvec{j_1}_{t+1} = \uvec{j_1}_{t} + a\eta \pp_2 $$
	and 
	$$\uvec{j_2}_{t+1} = \uvec{j_2}_{t} - a\eta \pp_2 $$
	where $a \in \{-1,0,1\}$. Hence, $\uvec{j_1}_{t+1} = \uvec{j_1}_{0} + (a_t+a)\eta \pp_2$ and $\uvec{j_2}_{t} = \uvec{j_2}_{0} - (a_t+a)\eta \pp_2$. This concludes the proof by induction.
	
	Now, consider an iteration $t$, $j_1 \in U_0^+(2)$, $j_2 \in U_0^+(4)$ and the integer $a_t$ defined above. If $a_t \ge 0$ then 
	\begin{align*}
		\max\left\{\sigma\left(\uvec{j_1}_{t}\cdot \xx_1^- \right),...,\sigma\left(\uvec{j_1}_{t}\cdot \xx_d^- \right)\right\} - \max\left\{\sigma\left(\uvec{j_1}_{0}\cdot \xx_1^- \right),...,\sigma\left(\uvec{j_1}_{0}\cdot \xx_d^- \right)\right\} = \eta a_t	
	\end{align*}
	and 
	\begin{align*}
		\max\left\{\sigma\left(\uvec{j_2}_{t}\cdot \xx_1^- \right),...,\sigma\left(\uvec{j_2}_{t}\cdot \xx_d^- \right)\right\} - \max\left\{\sigma\left(\uvec{j_2}_{0}\cdot \xx_1^- \right),...,\sigma\left(\uvec{j_2}_{0}\cdot \xx_d^- \right)\right\} = \eta a_t
	\end{align*}
	Define $X = X_0^-$ and $Y = Y_0^-$ then $\abs{X} \le \eta\abs{U_{0}^-(2)}$, $\abs{Y} \le \eta\abs{U_{0}^-(4)}$ and
	$$\frac{X_{t}^- - X}{\abs{U_{0}^-(2)}} = \frac{\abs{U_{0}^-(2)}\eta a_t}{\abs{U_{0}^-(2)}} = \eta a_t = \frac{\abs{U_{0}^-(4)}\eta a_t}{\abs{U_{0}^-(4)}} = \frac{Y_{t}^- - Y}{\abs{U_{0}^-(4)}}$$
	which proves the claim in the case that $a_t \ge 0$.
	
	If $a_t < 0$ it holds that
	
	\begin{align*}
		\max\left\{\sigma\left(\uvec{j_1}_{t}\cdot \xx_1^- \right),...,\sigma\left(\uvec{j_1}_{t}\cdot \xx_d^- \right)\right\} - \max\left\{\sigma\left(\left(\uvec{j_1}_{0} - \pp_2\right)\cdot \xx_1^- \right),...,\sigma\left(\left(\uvec{j_1}_{0} - \pp_2\right)\cdot \xx_d^- \right)\right\} = \eta (-a_t-1)	
	\end{align*}
	and 
	\begin{align*}
		\max\left\{\sigma\left(\uvec{j_2}_{t}\cdot \xx_1^- \right),...,\sigma\left(\uvec{j_2}_{t}\cdot \xx_d^- \right)\right\} - \max\left\{\sigma\left(\left(\uvec{j_2}_{0} + \pp_2\right)\cdot \xx_1^- \right),...,\sigma\left(\left(\uvec{j_2}_{0} + \pp_2\right)\cdot \xx_d^- \right)\right\} = \eta (-a_t-1)	
	\end{align*}
	
	Define 
	
	$$X =  \sum_{j \in U_0^+(2)}\left[\max\left\{\sigma\left(\left(\uvec{j}_{0} - \pp_2\right)\cdot \xx_1^- \right),...,\sigma\left(\left(\uvec{j}_{0} - \pp_2\right)\cdot \xx_d^- \right)\right\}\right]$$
	and 
	$$Y =  \sum_{j \in U_0^+(4)}\left[\max\left\{\sigma\left(\left(\uvec{j}_{0} + \pp_2\right)\cdot \xx_1^- \right),...,\sigma\left(\left(\uvec{j}_{0} + \pp_2\right)\cdot \xx_d^- \right)\right\}\right]$$
	
	Since for all $1 \le j \le k$ it holds that $\abs{\uvec{j}_0 \cdot \pp_2} < \frac{\sqrt{2}\eta}{4}$, we have $\abs{X} \le \eta\abs{U_{0}^-(2)}$, $\abs{Y} \le \eta\abs{U_{0}^-(4)}$. Furthermore, 
	$$\frac{X_{t}^- - X}{\abs{U_{0}^-(2)}} = \frac{\abs{U_{0}^-(2)}\eta (-a_t-1)}{\abs{U_{0}^-(2)}} = \eta (-a_t-1) = \frac{\abs{U_{0}^-(4)}\eta (-a_t-1)}{\abs{U_{0}^-(4)}} = \frac{Y_{t}^- - Y}{\abs{U_{0}^-(4)}}$$
	which concludes the proof.
\end{proof}

\begin{lem}
	\label{lem:proportion_u13}
	Let $$X_t^- =  \sum_{j \in \left(U_0^+(1) \cup U_0^+(3)\right)\cap U_0^-(2)}\left[\max\left\{\sigma\left(\uvec{j}_t\cdot \xx_1^- \right),...,\sigma\left(\uvec{j}_t\cdot \xx_d^- \right)\right\}\right]$$
	and 
	$$Y_t^- =  \sum_{j \in \left(U_0^+(1) \cup U_0^+(3)\right)\cap U_0^-(4)}\left[\max\left\{\sigma\left(\uvec{j}_t\cdot \xx_1^- \right),...,\sigma\left(\uvec{j}_t\cdot \xx_d^- \right)\right\}\right] $$
	Then for all $t$, $\frac{X_{t}^- - X_{0}^-}{\abs{\left(U_0^+(1) \cup U_0^+(3)\right)\cap U_0^-(2)}} = \frac{Y_{t}^- - Y_{t}^-}{\abs{\left(U_0^+(1) \cup U_0^+(3)\right)\cap U_0^-(4)}}$.
\end{lem}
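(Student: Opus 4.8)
The plan is to reduce the identity to a statement about a single integer sequence. I work on the high-probability event of Lemma~\ref{lem:init_bound_disj} (as is standing in this part of the appendix). First note that if $l\in\{2,4\}$ and $j\in U_0^-(l)$ then, writing $\{l,l'\}=\{2,4\}$ and using $\pp_{l'}=-\pp_l$, the definition of $U_0^-(l)$ forces $\uvec{j}_0\cdot\pp_l>\uvec{j}_0\cdot\pp_{l'}=-\uvec{j}_0\cdot\pp_l$, i.e.\ $\uvec{j}_0\cdot\pp_l>0>\uvec{j}_0\cdot\pp_{l'}$. Now take $j$ in $\left(U_0^+(1)\cup U_0^+(3)\right)\cap U_0^-(l)$, say $j\in U_0^+(i)$ with $i\in\{1,3\}$. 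By Lemma~\ref{lem:u13}, $\uvec{j}_t=\uvec{j}_0+a_t^{j}\eta\pp_i+b_t^{j}\eta\pp_l$ with $a_t^j\in\{0,-1\}$ and $b_t^j\in\mathbb{N}$, and since $\pp_i\perp\pp_2,\pp_4$, $\norm{\pp_l}^2=2$ and $\pp_2\cdot\pp_4=-2$, we get $\uvec{j}_t\cdot\pp_l=\uvec{j}_0\cdot\pp_l+2b_t^j\eta>0$ and $\uvec{j}_t\cdot\pp_{l'}=\uvec{j}_0\cdot\pp_{l'}-2b_t^j\eta<0$. Because $\xx^-$ is a negative diverse point, $P_{\xx^-}=\{2,4\}$, so $\max\{\sigma(\uvec{j}_t\cdot\xx_1^-),\ldots,\sigma(\uvec{j}_t\cdot\xx_d^-)\}=\uvec{j}_0\cdot\pp_l+2b_t^j\eta$, which at $t=0$ equals $\uvec{j}_0\cdot\pp_l$; hence the $j$-th summand of $X_t^-$ (or of $Y_t^-$) increases from time $0$ to time $t$ by exactly $2b_t^j\eta$.

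It therefore suffices to show that $b_t^j$ does not depend on $j$ over $j\in\left(U_0^+(1)\cup U_0^+(3)\right)\cap\left(U_0^-(2)\cup U_0^-(4)\right)$; call the common value $b_t$. Granting this, summing the previous display over the two index sets gives $X_t^- - X_0^- = 2b_t\eta\,\abs{\left(U_0^+(1)\cup U_0^+(3)\right)\cap U_0^-(2)}$ and $Y_t^- - Y_0^- = 2b_t\eta\,\abs{\left(U_0^+(1)\cup U_0^+(3)\right)\cap U_0^-(4)}$, so both ratios in the statement equal $2b_t\eta$ (note the right-hand side of the displayed identity in the statement should read $Y_t^- - Y_0^-$). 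To prove the claim I would induct on $t$, strengthening Lemma~\ref{lem:u13} so that the coefficients $a_t^j,b_t^j$ are independent of $j$; the base cases $t\le2$ are supplied by Lemma~\ref{lem:iteration_2} (which gives $a_t=-1$, $b_t=1$ for $t\in\{1,2\}$). For the inductive step, the update~\eqref{eq:u_gradient} of such a $j$ adds $-\eta\pp_{m_1}\mathbbm{1}_{\nett{t}(\xx^+)<\gamma}+\eta\pp_{m_2}\mathbbm{1}_{\nett{t}(\xx^-)<1}$ where $j\in U_t^+(m_1)\cap U_t^-(m_2)$; the two indicator functions are global (the same for all neurons), and from Lemma~\ref{lem:u13} together with the induction hypothesis one reads that $m_2=l$ and $m_1\in\{i,l\}$ with $m_1=i$ exactly when $a_t=b_t=0$ — in every case $(m_1,m_2)$, and hence the increment applied to $(a_t^j,b_t^j)$, is the same for all $j$ in the two index sets, producing a common $(a_{t+1},b_{t+1})$.

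The step I expect to be the main obstacle is this uniform control of the $\argmax$ pattern $m_1$ selected for $\xx^+$ at each iteration — equivalently, showing that the auxiliary coefficient $a_t^j$ is also common across the neurons of $\left(U_0^+(1)\cup U_0^+(3)\right)\cap\left(U_0^-(2)\cup U_0^-(4)\right)$ despite their differing initializations. This is the same delicate bookkeeping that already underlies the proof of Lemma~\ref{lem:u13} and the induction in Lemma~\ref{lem:proportion_u24} (comparing shifted copies of the weights when a coefficient is negative); once it is carried out, the identity follows from the short computation above by summation and division.
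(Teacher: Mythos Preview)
Your plan is essentially the paper's: induct to show that the shift of each neuron in its ``negative'' direction $\pp_l$ is the same integer multiple of $\eta$ for every $j$ in the two index sets, then sum. Two remarks. First, the paper does \emph{not} try to show that $a_t^j$ is common; it only tracks $\uvec{j}_t\cdot\pp_l$, and since any contribution along $\pp_1$ or $\pp_3$ is orthogonal to $\pp_l$, the increment of $b_t^j$ is determined solely by whether $b_t=0$ or $b_t\ge 1$ together with the two global indicators --- so what you flag as ``the main obstacle'' disappears. Second, your case split ``$m_1\in\{i,l\}$ with $m_1=i$ exactly when $a_t=b_t=0$'' is incomplete: when $(a_t,b_t)=(-1,0)$ one has $\uvec{j}_t\cdot\pp_{i'}=2\eta-\uvec{j}_0\cdot\pp_i>\uvec{j}_0\cdot\pp_l=\uvec{j}_t\cdot\pp_l$, so $m_1=i'$ (the other element of $\{1,3\}$); this state is reachable once $b_t$ decreases back to $0$. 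It does not break your argument --- the $\xx^+$ update is still orthogonal to $\pp_l$ and the increment to $b_t$ is still common --- but the case analysis needs that extra branch.
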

\begin{proof}
	
	We will first prove that for all $t$ there exists an integer $a_t \ge 0$ such that for $j_1 \in \left(U_0^+(1) \cup U_0^+(3)\right)\cap U_0^-(2)$ and $j_2 \in \left(U_0^+(1) \cup U_0^+(3)\right)\cap U_0^-(4)$ it holds that $\uvec{j_1}_{t}\cdot \pp_2 = \uvec{j_1}_{0} \cdot \pp_2+ \eta a_t$ and $\uvec{j_2}_{t}\cdot \pp_4 = \uvec{j_2}_{0}\cdot \pp_4 + \eta a_t$. We will prove this by induction on $t$.

	For $t=0$ this clearly holds. Assume it holds for an iteration $t$. Let $j_1 \in \left(U_0^+(1) \cup U_0^+(3)\right)\cap U_0^-(2)$ and $j_2 \in \left(U_0^+(1) \cup U_0^+(3)\right)\cap U_0^-(4)$. By the induction hypothesis, there exists an integer $a_t \ge 0$ such that $\uvec{j_1}_{t}\cdot \pp_2 = \uvec{j_1}_{0} \cdot \pp_2+ \eta a_t$ and $\uvec{j_2}_{t}\cdot \pp_4 = \uvec{j_2}_{0}\cdot \pp_4 + \eta a_t$. Since for all $1 \le j \le k$ it holds that $\abs{\uvec{j}_0 \cdot \pp_1} < \frac{\sqrt{2}\eta}{4}$, it follows that if $a_t \ge 1$ we have the following update at iteration $T+1$:
	$$\uvec{j_1}_{t+1} = \uvec{j_1}_{t} + a\eta \pp_2 $$
	and 
	$$\uvec{j_2}_{t+1} = \uvec{j_2}_{t} + a\eta \pp_4 $$
	where $a \in \{-1,0,1\}$. Hence, $\uvec{j_1}_{t+1}\cdot \pp_2 = \uvec{j_1}_{0} \cdot \pp_2+ \eta (a_t+a)$ and $\uvec{j_2}_{t+1}\cdot \pp_4 = \uvec{j_2}_{0}\cdot \pp_4 + \eta (a_t+a)$.
	
	Otherwise, if $a_t = 0$ then 
	$$\uvec{j_1}_{t+1} = \uvec{j_1}_{t} + a\eta \pp_2 + b_1\pp_1$$
	and 
	$$\uvec{j_2}_{t+1} = \uvec{j_2}_{t} + a\eta \pp_4 + b_2\pp_1$$
	such that $a \in \{0,1\}$ and $b_1,b_2 \in \{-1,0,1\}$. Hence, $\uvec{j_1}_{t+1}\cdot \pp_2 = \uvec{j_1}_{0} \cdot \pp_2+ \eta (a_t+a)$ and $\uvec{j_2}_{t+1}\cdot \pp_4 = \uvec{j_2}_{0}\cdot \pp_4 + \eta (a_t+a)$. This concludes the proof by induction.
	
	Now, consider an iteration $t$, $j_1 \in \left(U_0^+(1) \cup U_0^+(3)\right)\cap U_0^-(2)$ and $j_2 \in \left(U_0^+(1) \cup U_0^+(3)\right)\cap U_0^-(4)$ and the integer $a_t$ defined above. We have,
	
	\begin{align*}
		\max\left\{\sigma\left(\uvec{j_1}_{t}\cdot \xx_1^- \right),...,\sigma\left(\uvec{j_1}_{t}\cdot \xx_d^- \right)\right\} - \max\left\{\sigma\left(\uvec{j_1}_{0}\cdot \xx_1^- \right),...,\sigma\left(\uvec{j_1}_{0}\cdot \xx_d^- \right)\right\} = \eta a_t	
	\end{align*}
	and 
	\begin{align*}
		\max\left\{\sigma\left(\uvec{j_2}_{t}\cdot \xx_1^- \right),...,\sigma\left(\uvec{j_2}_{t}\cdot \xx_d^- \right)\right\} - \max\left\{\sigma\left(\uvec{j_2}_{0}\cdot \xx_1^- \right),...,\sigma\left(\uvec{j_2}_{0}\cdot \xx_d^- \right)\right\} = \eta a_t
	\end{align*}
	It follows that
	\begin{align*}
		\frac{X_{t}^- - X_{0}^-}{\abs{\left(U_0^+(1) \cup U_0^+(3)\right)\cap U_0^-(2)}} &= \frac{\abs{\left(U_0^+(1) \cup U_0^+(3)\right)\cap U_0^-(2)}\eta a_t}{\abs{\left(U_0^+(1) \cup U_0^+(3)\right)\cap U_0^-(2)}} \\ &= \eta a_t \\ &= \frac{\abs{\left(U_0^+(1) \cup U_0^+(3)\right)\cap U_0^-(4)}\eta a_t}{\abs{\left(U_0^+(1) \cup U_0^+(3)\right)\cap U_0^-(4)}} \\ &= \frac{Y_{t}^- - Y_{0}^-}{\abs{\left(U_0^+(1) \cup U_0^+(3)\right)\cap U_0^-(4)}}	
	\end{align*}
	$$$$
	which concludes the proof.
\end{proof}

We are now ready to prove the main result of this section.

\begin{prop}
	\label{prop:negative_gen}
	Define $\beta = \frac{1- 36\frac{1}{4}\ceta}{35\ceta}$. Assume that $k > 64\left(\frac{\beta+1}{\beta - 1}\right)^2$. Then with probability at least $1-\frac{\sqrt{2k}}{\sqrt{\pi} e^{8k}} - 8e^{-8}$, gradient descent converges to a global minimum which classifies all negative points correctly.
\end{prop}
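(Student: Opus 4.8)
The plan is to mirror the proof of Proposition~\ref{prop:positive_gen}, now on the negative class. First I would condition on the high‑probability events of Proposition~\ref{prop:opt} (so that gradient descent reaches a global minimum at some iteration $T$), of Lemma~\ref{lem:init_bound_disj} (small initial inner products), of Lemma~\ref{lem:init_num_disj}, and of Lemma~\ref{lem:init_num_disj_U} (all four of $\abs{U_0^+(2)}$, $\abs{U_0^+(4)}$, $\abs{(U_0^+(1)\cup U_0^+(3))\cap U_0^-(2)}$, $\abs{(U_0^+(1)\cup U_0^+(3))\cap U_0^-(4)}$ lie within $2\sqrt{k}$ of $\frac{k}{4}$); a union bound yields the stated probability $1-\frac{\sqrt{2k}}{\sqrt{\pi} e^{8k}}-8e^{-8}$. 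Then fix an arbitrary negative point $(\zz,-1)$. Since $f^*(\zz)=-1$, every pattern of $\zz$ lies in $\{\pp_2,\pp_4\}$, so $P_\zz\in\{\{2\},\{4\},\{2,4\}\}$. If $P_\zz=\{2,4\}$ then $\nett{T}(\zz)=\nett{T}(\xx^-)\le-1<0$; so assume $P_\zz=\{\pp_2\}$, the case $P_\zz=\{\pp_4\}$ being symmetric. It then suffices to show $\nett{T}(\zz)=\sum_{j=1}^{k}\sigma(\wvec{j}_T\cdot\pp_2)-\sum_{j=1}^{k}\sigma(\uvec{j}_T\cdot\pp_2)<0$.

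For the $W$-sum (the ``bad'' positive contribution) I would split the $k$ filters according to the sets $W_T^+(i)$. By Lemma~\ref{lem:same_w} together with Lemma~\ref{lem:small_sums}, the block $W_T^+(1)\cup W_T^+(3)$ contributes at most $S_T^-\le\abs{W_T^+(1)\cup W_T^+(3)}\eta=O(\ceta)$; by Lemma~\ref{lem:same_w} a filter in $W_T^+(4)$ moves only along $\pp_2$, and remaining in $W_T^+(4)$ forces $\wvec{j}_T\cdot\pp_2<\frac{\sqrt2\eta}{4}$, so this block contributes at most $\frac{\sqrt2}{4}\ceta$; and the block $W_T^+(2)$ contributes at most $X_T^+\le34\ceta$ by Lemma~\ref{lem:bounds24} (which is where $k\ge64$ is used). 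Hence the $W$-sum on $\zz$ is bounded by a constant multiple of $\ceta$. For the $U$-sum (the ``good'' negative contribution) I would lower bound it by $Y_T^+(2):=\sum_{j\in U_T^+(2)}\sigma(\uvec{j}_T\cdot\pp_2)$, noting that on $U_T^+(2)$ the direction $\pp_2$ is the $\argmax$ so $Y_T^+(2)=\sum_{j\in U_T^+(2)}\max_l\sigma(\uvec{j}_T\cdot\xx^-_l)$; define $Y_T^+(4)$ analogously over $U_T^+(4)$. As in the proof of Lemma~\ref{lem:bounds24}, at the global minimum $-\nett{T}(\xx^-)\ge1$, and combining this with $S_T^-\le\ceta$, $P_T^-\le\ceta$ (Lemma~\ref{lem:small_sums}) and $\sum_{j\in W_T^+(2)\cup W_T^+(4)}\max_l\sigma(\wvec{j}_T\cdot\xx^-_l)\ge0$ gives $Y_T^+(2)+Y_T^+(4)\ge1-\ceta$.

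The final step is to show the $\pp_2$-share $Y_T^+(2)$ is at least about half of $Y_T^+(2)+Y_T^+(4)$. I would split the mass in $U_T^+(2)$ into the filters originally in $U_0^+(2)$ (which, by Lemma~\ref{lem:u24}, stay inside $U_0^+(2)\cup U_0^+(4)$) and the filters that migrated in from $U_0^+(1)\cup U_0^+(3)$ (controlled by Lemma~\ref{lem:u13}), and likewise for $U_T^+(4)$. Then Lemma~\ref{lem:proportion_u24} equates, up to additive $O(\ceta)$, the $U_0^+(2)$- and $U_0^+(4)$-contributions in ratio $\abs{U_0^+(2)}:\abs{U_0^+(4)}$, and Lemma~\ref{lem:proportion_u13} does the same for the $(U_0^+(1)\cup U_0^+(3))\cap U_0^-(2)$- and $(U_0^+(1)\cup U_0^+(3))\cap U_0^-(4)$-contributions. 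Since all four sizes lie within $2\sqrt k$ of $\frac k4$ by Lemma~\ref{lem:init_num_disj_U}, this yields $Y_T^+(2)\ge\frac{Y_T^+(2)+Y_T^+(4)}{1+\alpha(k)}-O(\ceta)\ge\frac{1-\ceta}{1+\alpha(k)}-O(\ceta)$ with $\alpha(k)=\frac{\frac k4+2\sqrt k}{\frac k4-2\sqrt k}$. Combining with the $W$-bound, $\nett{T}(\zz)\le O(\ceta)-\frac{1-O(\ceta)}{1+\alpha(k)}$; collecting the error terms into the constants hidden in $\beta$ shows this is negative precisely when $\alpha(k)<\beta$, i.e.\ $k>64\!\left(\frac{\beta+1}{\beta-1}\right)^2$. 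Since $\zz$ was an arbitrary negative point, $\sign(\nett{T}(\zz))=-1=f^*(\zz)$ throughout the negative class.

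The step I expect to be the main obstacle is the last one: pinning down exactly which families of filters supply negative capacity in the $\pp_2$ direction — the $U_0^+(2)$ filters and those $U_0^+(1)\cup U_0^+(3)$ filters pushed toward $\pp_2$ during the iterations in which $\xx^-$ had margin below $1$ — tracking their migration among the $U_t^+(i)$ sets for all $t$, and then gluing Lemmas~\ref{lem:proportion_u24} and \ref{lem:proportion_u13} to the four‑way count of Lemma~\ref{lem:init_num_disj_U} so that the $\pp_2$-share is provably at least (roughly) half of the total $U$-mass, all while keeping every $O(\ceta)$ error term within the slack that the definition of $\beta$ (and hence the lower bound on $k$) allows.
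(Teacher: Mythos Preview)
Your proposal is correct and follows essentially the paper's proof: bound the $W$-contribution on $\zz$ by $S_T^-+X_T^-\le35\ceta$ via Lemmas~\ref{lem:small_sums} and~\ref{lem:bounds24}, lower-bound the total $U$-mass on $\xx^-$ from the global-minimum condition $-\nett{T}(\xx^-)\ge1$, and then separate the $\pp_2$-share from the $\pp_4$-share using the proportion Lemmas~\ref{lem:proportion_u24} and~\ref{lem:proportion_u13} together with the size estimates of Lemma~\ref{lem:init_num_disj_U}. The only cosmetic difference is that the paper indexes the $U$-contributions by the \emph{initial} sets $U_0^+(l)$ and $(U_0^+(1)\cup U_0^+(3))\cap U_0^-(l)$ (so the proportion lemmas apply verbatim) and disposes of the possible swap $U_0^+(2)\subseteq U_T^+(4)$ by an explicit without-loss-of-generality, whereas you index by the final sets $U_T^+(l)$ and then split back by initial membership; once you make that WLOG explicit, the two arguments coincide line by line with the same constants yielding $\beta=\frac{1-36\frac14\ceta}{35\ceta}$.
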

\begin{proof}
	With probability at least $1-\frac{\sqrt{2k}}{\sqrt{\pi} e^{8k}} - 16e^{-8}$ Proposition \ref{prop:opt} and Lemma \ref{lem:init_num_disj_U} hold. It suffices to show generalization on negative points. Assume that gradient descent converged to a global minimum at iteration $T$. Let $(\zz,-1)$ be a negative point. Assume without loss of generality that $\zz_i=\pp_2$ for all $1 \le i \le d$.
	Define the following sums for $l\in\{2,4\}$, $$X^-_t = \sum_{j \in W_t^+(2)\cup W_t^+(4)}\left[\max\left\{\sigma\left(\wvec{j}\cdot \xx_1^- \right),...,\sigma\left(\wvec{j}\cdot \xx_d^- \right)\right\}\right]$$
	$$Y_t^-(l) =  \sum_{j \in U_0^+(l)}\left[\max\left\{\sigma\left(\uvec{j}_t\cdot \xx_1^- \right),...,\sigma\left(\uvec{j}_t\cdot \xx_d^- \right)\right\}\right] $$
	$$Z^-_t(l) = \sum_{j \in \left(U_0^+(1) \cup U_0^+(3)\right)\cap U_0^-(l)}\left[\max\left\{\sigma\left(\uvec{j}\cdot \xx_1^- \right),...,\sigma\left(\uvec{j}\cdot \xx_d^- \right)\right\}\right]$$
	
	First, we notice that
	\begin{align*}
		\nett{T}(\xx^-) &=  S_T^- + X_T^- - Y_T^-(2) - Y_T^-(4) - Z^-_T(2) - Z^-_T(4)
	\end{align*}
	$$X_T^-, S_T^- \ge 0 $$
	and $$ \nett{T}(\xx^-) \le -1$$ 
	imply that
	\begin{equation}
		\label{eq:Y+Z_lb}
		Y_T^-(2) + Y_T^-(4) + Z^-_T(2) + Z^-_T(4) \ge 1
	\end{equation}
	
	We note that by the analysis in Lemma \ref{lem:init_num_disj_U}, it holds that for any $t$, $j_1 \in U_0^+(2)$ and $j_2 \in U_0^+(4)$, either $j_1 \in U_t^+(2)$ and $j_2 \in U_t^+(4)$, or $j_1 \in U_t^+(4)$ and $j_2 \in U_t^+(2)$. We assume without loss of generality that $j_1 \in U_T^+(2)$ and $j_2 \in U_T^+(4)$. It follows that in this case $\nett{T}(\zz) \le S_T^- + X_T^- - Z^-_T(2) - Y^-_T(2)$. \footnote{The fact that we can omit the term $-Z^-_T(4)$ from the latter inequality follows from Lemma \ref{lem:u13}.}Otherwise we would replace $Y^-_T(2)$ with $Y^-_T(4)$ and vice versa and continue with the same proof.
	
	Let $\alpha(k) = \frac{\frac{k}{4}+2\sqrt{k}}{\frac{k}{4}-2\sqrt{k}}$. By Lemma \ref{lem:proportion_u13} and Lemma \ref{lem:init_num_disj_U}
	$$Z^-_T(4) \le \alpha(k)Z^-_T(2) + Z^-_0(2) \le \alpha(k)Z^-_T(2) + \frac{\ceta}{4}$$
	and by Lemma \ref{lem:proportion_u24} and Lemma \ref{lem:init_num_disj_U} there exists $Y \le \ceta$ such that:
	$$Y^-_T(4) \le \alpha(k)Y^-_T(2) + Y \le \alpha(k)Y^-_T(2) + \ceta$$
	
	Plugging these inequalities in \eqref{eq:Y+Z_lb} we get:
	
	$$\alpha(k)Z^-_T(2) + \frac{\ceta}{4} + \alpha(k)Y^-_T(2) + \ceta +Y_T^-(2) + Z_T^-(2) \ge 1$$
	which implies that
	$$Y_T^-(2) + Z_T^-(2) \ge \frac{1 - \frac{5\ceta}{4}}{\alpha(k)+1} $$
	By Lemma \ref{lem:bounds24} we have $X_T^- \le 34\ceta$. Hence, by using the inequality $S_T^- \le \ceta$ we conclude that
	\begin{align*}
		\nett{T}(\zz) \le S_T^- + X_T^- - Z^-_T(2) - Y^-_T(2) \le 35\ceta - \frac{1 - \frac{5\ceta}{4}}{\alpha(k)+1} < 0
	\end{align*}
	
	where the last inequality holds for $k > 64\left(\frac{\beta+1}{\beta - 1}\right)^2$. \footnote{It holds that $35\ceta - \frac{1 - \frac{5\ceta}{4}}{\alpha(k)+1} < 0$ if and only if $\alpha(k) < \beta$ which holds if and only if $k > 64\left(\frac{\beta+1}{\beta - 1}\right)^2$.} Therefore, $\zz$ is classified correctly.
\end{proof}

\subsubsection{Finishing the Proof}
\label{sec:finishing_proof}

First, for $k \ge 120$, with probability at least $1-\frac{\sqrt{2k}}{\sqrt{\pi} e^{8k}} - 16e^{-8}$, Proposition \ref{prop:opt}, Lemma \ref{lem:init_num_disj_W1W3} and  Lemma \ref{lem:init_num_disj_U} hold. Also, for the bound on $T$, note that in this case $\frac{28(\gamma + 1 + 8\ceta)}{\ceta} \ge \frac{7(\gamma + 1 + 8\ceta)}{\left(\frac{k}{2} - 2\sqrt{k}\right)\eta}$. Define $\beta_1 = \frac{\gamma -40\frac{1}{4}\ceta}{39\ceta + 1}$ and $\beta_2 = \frac{1- 36\frac{1}{4}\ceta}{35\ceta}$ and let $\beta = \max\{\beta_1,\beta_2\}$. For $\gamma \ge 8$ and $\ceta \le \frac{1}{410}$ it holds that $64\left(\frac{\beta+1}{\beta - 1}\right)^2 < 120$. By Proposition \ref{prop:positive_gen} and Proposition \ref{prop:negative_gen}, it follows that for $k \ge 120$ gradient descent converges to a global minimum which classifies all points correctly.

We will now prove the clustering effect at a global minimum.  By Lemma \ref{lem:stplus_lower_bound} it holds that $S_{T}^+ \ge \gamma + 1 - 3 \ceta \ge \gamma - 1$. Therefore, by Lemma \ref{lem:same_w} it follows that $$2\eta(a^+(T) + 1) \abs{W^+_0(1) \cup W^+_0(3)} \ge S_T^+ \ge \gamma - 1$$
and thus $a^+(T) \ge \frac{\gamma - 1}{2\ceta} - 1$. Therefore, for any $j \in W^+_0(i)$ such that $i\in \{1,3\}$, the cosine of the angle between $\wvec{j}_T$ and $\pp_i$ is at least
$$\frac{ (\wvec{j}_0 + a^+(T)\eta\pp_1 + \alpha_i^t\pp_2) \cdot \pp_1}{\sqrt{2}(\norm{\wvec{j}_0} + \sqrt{2}a^+(T)\eta + \sqrt{2}\eta)} \ge \frac{2a^+(T)}{2a_1(T) + 3} \ge \frac{\gamma - 1-2\ceta}{\gamma-1+\ceta}$$

where we used the triangle inequality and Lemma  \ref{lem:same_w}. The claim follows.


\section{Proof of Theorem \ref{thm:lower_bound_specific}}
\label{sec:proof_lower_bound}

\begin{thm}
	\label{thm:lower_bound_specific_restated}
	(\textbf{Theorem \ref{thm:lower_bound_specific} restated}) Assume that gradient descent runs with parameaters $\eta = \frac{\ceta}{k}$ where $\ceta \le \frac{1}{41}$, $\sigma_g \le \frac{\ceta}{16k^{\frac{3}{2}}}$ and $\gamma \ge 1$. Then, with probability at least $\left(1-c\right)\frac{33}{48}$, gradient descent converges to a global minimum that does not recover $f^*$. Furthermore, there exists $1 \le i \le 4$ such that the global minimum misclassifies all points $\xx$ such that $P_{\xx} = A_i$.
\end{thm}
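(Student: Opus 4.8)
The plan is to carry out the two-stage argument sketched after the statement. \emph{Stage 1:} for $k=2$, show that gradient descent reaches a global minimum (zero training loss) in finitely many iterations, with probability at least $1-c$ over the small Gaussian initialization and, crucially, only for the ``non-degenerate'' assignments of the four filters to the regions $\{1,2,3,4\}$ singled out by $\argmax_l\wvec{j}_0\cdot\pp_l$ and $\argmax_l\uvec{j}_0\cdot\pp_l$. \emph{Stage 2:} show that each of the four bad configurations (a)--(d) forces the resulting global minimum to misclassify the whole class $\{\xx:P_{\xx}=A_i\}$ for the matching index $i$, and combine Stage 1 and Stage 2 with a counting argument over region assignments to obtain the probability $\frac{33}{48}$. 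Throughout I would use, exactly as in Section~\ref{sec:proof_overparam}, that all positive (resp.\ negative) diverse points are interchangeable, so the training set reduces to $\{\xx^+,\xx^-\}$, and that the network's output depends on an input only through $P_{\xx}$ (cf.\ Remark~\ref{rem:expr}), so that misclassifying one representative of the class $A_i$ misclassifies the whole class.

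For Stage 1 I would specialize the development of Section~\ref{sec:proof_overparam} to $k=2$: condition on the event of Lemma~\ref{lem:init_bound_disj} (all $|\wvec{j}_0\cdot\pp_i|,|\uvec{j}_0\cdot\pp_i|\le\frac{\sqrt2\,\eta}{4}$; probability $\ge 1-c$); re-derive the $k=2$ analogues of Lemmas~\ref{lem:iteration_2}, \ref{lem:same_w}, \ref{lem:u24}, \ref{lem:u13}, which state that a $\wvec{j}$ born in region $1$ or $3$ stays there and grows along $\pp_1$ resp.\ $\pp_3$ with a residual of the form $\alpha^t\pp_2$, $\alpha^t\in\eta\mathbb{Z}$, $|\alpha^t|\le\eta$, while a $\uvec{j}$ born in a positive region is driven into $\{2,4\}$ and then grows along $\pp_2$ or $\pp_4$; then track $S_t^+$ as in Lemma~\ref{lem:dynamics} and bound it via Lemma~\ref{lem:function_value_bound} to conclude that after $O(\gamma/\ceta)$ steps both $\nett{t}(\xx^+)\ge\gamma$ and $-\nett{t}(\xx^-)\ge1$, \emph{provided} $W_0^+(1)\cup W_0^+(3)\ne\emptyset$. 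If both $\wvec{j}_0$ land in $\{2,4\}$ then $S_t^+\equiv0$ and $\nett{t}(\xx^+)=\nett{t}(\xx^-)$ for all $t$, so no global minimum is ever reached; this is the single region assignment (probability $\frac14$) that must be removed from the count.

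For Stage 2 I would treat the four cases. Case (a), $W_T^+(1)=\emptyset$: convergence forces $W_T^+(3)\ne\emptyset$, and on a point with $P_{\xx}=A_3=\{2,4,1\}$ every surviving $\wvec{j}_T$ (region $2$, $3$, or $4$) has $\sigma(\wvec{j}_T\cdot\pp_1)$ no larger than its max over $\{2,4\}$ up to an $O(\eta)$ residual, and every $\uvec{j}_T$ has $|\uvec{j}_T\cdot\pp_1|=O(\eta)$ by the $k=2$ analogue of Lemma~\ref{lem:u13}; hence inserting the pattern $\pp_1$ changes no pooled value by more than $O(\eta)$, so $\nett{T}(\xx)=\nett{T}(\xx^-)+O(\eta)\le -1+O(\eta)<0$ while $f^*(\xx)=1$. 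Case (b) is the mirror image, with $A_4$. Case (c), $\uvec{1}_T\cdot\pp_2>0$ and $\uvec{2}_T\cdot\pp_2>0$ (so $\uvec{i}_T\cdot\pp_4=-\uvec{i}_T\cdot\pp_2<0$): the negative channels vanish on a point with $P_{\xx}=A_2=\{4\}$, so $\nett{T}(\xx)=\sum_i\sigma(\wvec{i}_T\cdot\pp_4)\ge0$, and a finer tracking of the residual of a surviving region-$1$/region-$3$ filter (one has $\wvec{j}_T\cdot\pp_4=\wvec{j}_0\cdot\pp_4-2\alpha^T$ with $\alpha^T\in\{-\eta,0,\eta\}$) shows this sum is strictly positive; thus $\nett{T}(\xx)>0$ while $f^*(\xx)=-1$. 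Case (d) is the mirror image, with $A_1$.

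Finally, under the Lemma~\ref{lem:init_bound_disj} event the regions of the four IID filters are independent and uniform on $\{1,2,3,4\}$, cases (a),(b) depend only on the two $\wvec$ regions and (c),(d) only on the two $\uvec$ regions, and the clustering lemmas guarantee each of (a)--(d) persists to the convergence time. Writing $E_1=(a)\vee(b)$ and $E_2=(c)\vee(d)$, one gets $P(E_1\wedge\mathrm{conv})=\frac{10}{16}$ (after deleting the probability-$\frac{4}{16}$ sub-case ``both $\wvec{j}_0\in\{2,4\}$'', the only non-convergent one inside $E_1$), $P(\neg E_1\wedge\mathrm{conv})=\frac{2}{16}$ (the ``one $\wvec$ in region $1$, one in region $3$'' assignment), and $P(E_2)=\frac12$ with $(c),(d)$ disjoint and independent of the $\wvec$'s, so $P\big(\mathrm{conv}\wedge(E_1\vee E_2)\big)=\frac{10}{16}+\frac{2}{16}\cdot\frac12=\frac{11}{16}=\frac{33}{48}$; multiplying by the $1-c$ probability of the initialization event finishes the proof. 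The main obstacle is Stage 2 done rigorously: showing that in each bad configuration gradient descent still attains \emph{zero training loss}, so that it is a genuine global minimum, while at the same time the filters develop precisely the spurious pattern of responses that breaks generalization --- and in particular controlling the signs of the $O(\eta)$ residuals in cases (c) and (d). The $k=2$ clustering and monotonicity lemmas supply all the needed structure, but they must be stated and chained together carefully, and the accounting of which region assignments are convergent is exactly what pins the constant at $\frac{33}{48}$.
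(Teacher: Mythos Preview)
Your overall plan matches the paper's: condition on the small-initialization event of Lemma~\ref{lem:init_bound_disj}, show convergence whenever $W_0^+(1)\cup W_0^+(3)\neq\emptyset$, identify the four ``bad'' configurations (a)--(d), show each forces misclassification of the corresponding $A_i$, and count. Your Stage~2 case analyses are essentially correct, though you work harder than necessary: in case~(a) the paper gets an exact inequality $N(\zz^+)\le N(\xx^-)\le -1$ with no $O(\eta)$ residual (since $W_T^+(1)=\emptyset$ and $\pp_1=-\pp_3$ together give $\max_l\sigma(\wvec{j}\cdot\pp_l)$ over $\{1,2,4\}$ equal to the max over $\{2,4\}$ exactly, and the $\uvec{j}$ side only gets larger when you add the pattern $\pp_1$); in case~(c) the paper is content with $N(\zz^-)\ge 0$, which already fails $f^*(\zz^-)=-1$, so your effort to make the sum strictly positive is unnecessary.

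The genuine gap is in the probability step. Your assertion that ``(c),(d) depend only on the two $\uu$ regions'' and that ``the clustering lemmas guarantee each of (a)--(d) persists to the convergence time'' is not supported by those lemmas for (c) and (d). Lemma~\ref{lem:u24} only says $\uvec{j}_t=\uvec{j}_0+m\eta\pp_2$ for some $m\in\mathbb{Z}$, and Lemma~\ref{lem:u13} allows the $\pp_l$-coefficient to grow; neither pins down the \emph{sign} of $\uvec{j}_T\cdot\pp_2$ from the initial region alone, because $m$ (and its analogue $b_t$) depend on the full coupled dynamics, which in turn depend on the $\wvec$'s through the indicators $\mathbbm{1}_{\net(\xx^+)<\gamma}$ and $\mathbbm{1}_{\net(\xx^-)>-1}$. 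So you cannot read off $P(E_2)=\tfrac12$ from a region count, nor claim $E_2$ is independent of the $\wvec$'s. The paper closes this gap with a symmetry lemma: if one replaces $\uvec{1}_0$ by $-\uvec{1}_0$ (keeping everything else fixed), then by induction the entire trajectory satisfies $\uvec{1}_t\mapsto-\uvec{1}_t$ while $\wvec{1}_t,\wvec{2}_t,\uvec{2}_t$ and all function values $\nett{t}(\xx^\pm)$ are unchanged. This bijection on initializations (which preserves both the Lemma~\ref{lem:init_bound_disj} event and $A_1^c\cap A_2^c$) shows, for any fixed $\wvec{1}_0,\wvec{2}_0$, that the four sign patterns of $(\uvec{1}_T\cdot\pp_2,\uvec{2}_T\cdot\pp_2)$ are equiprobable, hence $\probarg{A_3^c\cap A_4^c\mid A_1^c\cap A_2^c}=\tfrac12$. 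Plugging this conditional probability into your decomposition then legitimately yields $\tfrac{10}{16}+\tfrac{2}{16}\cdot\tfrac12=\tfrac{33}{48}$.
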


We refer to \eqref{eq:opt_increase} in the proof of Proposition \ref{prop:opt}. To show convergence and provide convergence rates of gradient descent, the proof uses Lemma \ref{lem:init_num_disj}. However, to only show convergence, it suffices to bound the probability that $W_0^+(1) \cup W_0^+(3) \neq \emptyset$ and that the initialization satisfies Lemma \ref{lem:init_bound_disj}. Given that Lemma \ref{lem:init_bound_disj} holds (with probability at least $1-\sqrt{\frac{8}{\pi}}e^{-32}$), then $W_0^+(1) \cup W_0^+(3) \neq \emptyset$ holds with probability $\frac{3}{4}$.

By the argument above, with probability at least $\left(1-\sqrt{\frac{8}{\pi}}e^{-32}\right)\frac{3}{4}$, Lemma \ref{lem:init_bound_disj} holds with $k=2$ and $W_0^+(1) \cup W_0^+(3) \neq \emptyset$ which implies that gradient descent converges to a global minimum. For the rest of the proof we will condition on the corresponding event. Let $T$ be the iteration in which gradient descent converges to a global minimum. Note that $T$ is a random variable. Denote the network at iteration $T$ by $N$. For all $\zz \in \reals^{2d}$ denote  $$N(\zz) = \sum_{j=1}^{2}\left[\max\left\{\sigma\left(\wvec{j}\cdot \zz_1 \right),...,\sigma\left(\wvec{j}\cdot \zz_d \right)\right\} - \max\left\{\sigma\left(\uvec{j}\cdot \zz_1 \right),...,\sigma\left(\uvec{j}\cdot \zz_d \right)\right\}\right]$$

Let $E$ denote the event for which at least one of the following holds:
\begin{enumerate}
	\label{en:event_E}
	\item $W_T^+(1) = \emptyset$.
	\item $W_T^+(3) = \emptyset$.
	\item $\uvec{1} \cdot \pp_2 > 0$ and $\uvec{2} \cdot \pp_2 > 0$.
	\item $\uvec{1} \cdot \pp_4 > 0$ and $\uvec{2} \cdot \pp_4 > 0$.
\end{enumerate}

Our proof will proceed as follows. We will first show that if $E$ occurs then gradient descent does not learn $f^*$, i.e., the network $N$ does not satisfy $\sign\left(N(\xx)\right) = f^*(\xx)$ for all $\xx \in \{\pm1\}^{2d}$.
Then, we will show that $\probarg{E} \ge \frac{11}{12}$. This will conclude the proof.

Assume that one of the first two items in the definition of the event $E$ occurs. Without loss of generality assume that $W_T^+(1) = \emptyset$ and recall that $\xx^-$ denotes a negative vector which only contains the patterns $\pp_2, \pp_4$ and let $\zz^+ \in \reals^{2d}$ be a positive vector which only contains the patterns $\pp_1, \pp_2, \pp_4$. By the assumption $W_T^+(1) = \emptyset$ and the fact that $\pp_1 = -\pp_3$ it follows that for all $j = 1,2$, $$\max\left\{\sigma\left(\wvec{j}\cdot \zz^+_1 \right),...,\sigma\left(\wvec{j}\cdot \zz^+_d \right)\right\} = \max\left\{\sigma\left(\wvec{j}\cdot \xx^-_1 \right),...,\sigma\left(\wvec{j}\cdot \xx^-_d \right)\right\} $$

Furthermore, since $\zz^+$ contains more distinct patterns than $\xx^-$, it follows that for all $j = 1,2$,
$$\max\left\{\sigma\left(\uvec{j}\cdot \zz^+_1 \right),...,\sigma\left(\uvec{j}\cdot \zz^+_d \right)\right\} \ge \max\left\{\sigma\left(\uvec{j}\cdot \xx^-_1 \right),...,\sigma\left(\uvec{j}\cdot \xx^-_d \right)\right\} $$

Hence, $N(\zz^+) \le N(\xx^-)$. Since at a global minimum $N(\xx^-) \le -1$, we have $N(\zz^+) \le -1$ and $\zz_2$ is not classified correctly.

Now assume without loss of generality that the third item in the definition of $E$ occurs. Let $\zz^-$ be the negative vector with all of its patterns equal to $\pp_4$. It is clear that $N(\zz^-) \ge 0$ and therefore $\zz^-$ is not classified correctly. This concludes the first part of the proof. We will now proceed to show that $\probarg{E} \ge \frac{11}{12}$.

Denote by $A_i$ the event that item $i$ in the definition of $E$ occurs and for an event $A$ denote by $A^c$ its complement. Thus $E^c = \cap_{i=1}^4{A_i^c}$ and $\probarg{E^c} = \probarg{A_3^c\cap A_4^c \mid A_1^c\cap A_2^c} \probarg{A_1^c\cap A_2^c}$.

We will first calculate $\probarg{A_1^c\cap A_2^c}$. By Lemma \ref{lem:same_w}, we know that for $i \in \{1,3\}$, $W_0^+(i) = W_T^+(i)$. Therefore, it suffices to calculate the probabilty that $W_0^+(1) \neq \emptyset$ and $W_0^+(3) \neq \emptyset$, provided that $W_0^+(1) \cup W_0^+(3) \neq \emptyset$. Without conditioning on $W_0^+(1) \cup W_0^+(3) \neq \emptyset$, for each $1 \le i \le 4$ and $1 \le j \le 2$ the event that $j \in W_0^+(i)$ holds with probability $\frac{1}{4}$. Since the initializations of the filters are independent, we have $\probarg{A_1^c\cap A_2^c} = \frac{1}{6}$.  \footnote{Note that this holds after conditioning on the corresponding event of Lemma \ref{lem:init_bound_disj}.} 

We will show that $\probarg{A_3^c\cap A_4^c \mid A_1^c\cap A_2^c} = \frac{1}{2}$ by a symmetry argument. This will finish the proof of the theorem. For the proof, it will be more convenient to denote the matrix of weights at iteration $t$ as a tuple of 4 vectors, i.e.,  $W_t=\left(\wvec{1}_0,\wvec{2}_0,\uvec{1}_0,\uvec{2}_0\right)$. Consider two initializations $W_0^{(1)}=\left(\wvec{1}_0,\wvec{2}_0,\uvec{1}_0,\uvec{2}_0\right)$ and $W_0^{(2)}=\left(\wvec{1}_0,\wvec{2}_0,-\uvec{1}_0,\uvec{2}_0\right)$ and let $W_t^{(1)}$ and $W_t^{(2)}$ be the corresponding weight values at iteration $t$. We will prove the following lemma:
\begin{lem}
	\label{lem:symmetry}
	For all $t \ge 0$, if $W_t^{(1)}=\left(\wvec{1}_t,\wvec{2}_t,\uvec{1}_t,\uvec{2}_t\right)$ then $W_t^{(2)}=\left(\wvec{1}_t,\wvec{2}_t,-\uvec{1}_t,\uvec{2}_t\right)$.
\end{lem}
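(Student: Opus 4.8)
The plan is to prove the lemma by induction on $t$, exploiting the fact that flipping the sign of a single filter is a symmetry of the XORD learning problem restricted to diverse points. The base case $t=0$ is exactly the definition of $W^{(2)}_0$, so suppose $W^{(1)}_t=(\wvec{1}_t,\wvec{2}_t,\uvec{1}_t,\uvec{2}_t)$ and, by the inductive hypothesis, $W^{(2)}_t=(\wvec{1}_t,\wvec{2}_t,-\uvec{1}_t,\uvec{2}_t)$; the goal is to establish the same relation at iteration $t+1$.

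The key fact I would prove first is that negating a single filter does not change the network's output on either diverse training point. Since $\pp_3=-\pp_1$ and $\pp_4=-\pp_2$, the set of patterns occurring in $\xx^+$ (all of $\pp_1,\dots,\pp_4$) and the set occurring in $\xx^-$ (namely $\pp_2,\pp_4$) are each invariant under $\pp\mapsto-\pp$. Hence for any $\ww\in\reals^2$ and $\zz\in\{\xx^+,\xx^-\}$ we have $\{\ww\cdot\zz_j\}_j=\{(-\ww)\cdot\zz_j\}_j$, so $\max_j\sigma(\ww\cdot\zz_j)=\max_j\sigma((-\ww)\cdot\zz_j)$. Consequently $N_{W^{(1)}_t}(\xx^+)=N_{W^{(2)}_t}(\xx^+)$ and $N_{W^{(1)}_t}(\xx^-)=N_{W^{(2)}_t}(\xx^-)$, and therefore the two hinge-loss indicators $\mathbbm{1}_{\net(\xx^+)<\gamma}$ and $\mathbbm{1}_{\net(\xx^-)<1}$ that drive the gradient updates in \eqref{eq:w_gradient} and \eqref{eq:u_gradient} take identical values in the two runs at step $t$.

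For the inductive step I split the four filters. The updates of $\wvec{1}_t$, $\wvec{2}_t$ and $\uvec{2}_t$ depend, via \eqref{eq:w_gradient} and \eqref{eq:u_gradient}, only on the filter itself (through its $\argmax$ over $\pp_1,\dots,\pp_4$ and over $\pp_2,\pp_4$) and on the two shared indicators; since those filters and the indicators agree in the two runs, their step-$(t+1)$ values agree. For the flipped filter, let $\rho$ be the involution of $\{1,2,3,4\}$ swapping $1\leftrightarrow3$ and $2\leftrightarrow4$, so that $\pp_{\rho(i)}=-\pp_i$ and $\rho$ preserves $\{2,4\}$. Because $(-\uvec{1}_t)\cdot\pp_l=\uvec{1}_t\cdot\pp_{\rho(l)}$, the maximizers for $-\uvec{1}_t$ are $\rho(i_1)$ and $\rho(i_2)$, where $i_1,i_2$ are the maximizers for $\uvec{1}_t$. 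Substituting into \eqref{eq:u_gradient} and using $\pp_{\rho(i)}=-\pp_i$ together with the equality of indicators, the step on the flipped filter is
$$(-\uvec{1}_t)-\eta\pp_{\rho(i_1)}\mathbbm{1}_{\net(\xx^+)<\gamma}+\eta\pp_{\rho(i_2)}\mathbbm{1}_{\net(\xx^-)<1}=-\Big(\uvec{1}_t-\eta\pp_{i_1}\mathbbm{1}_{\net(\xx^+)<\gamma}+\eta\pp_{i_2}\mathbbm{1}_{\net(\xx^-)<1}\Big)=-\uvec{1}_{t+1},$$
which closes the induction.

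The step I expect to require the most care is making the $\argmax$ manipulations rigorous: one works on the probability-one event (used throughout the paper, cf. the footnote following \eqref{eq:u_gradient}) that for every filter and every $t$ the maximizer of $\ww\cdot\pp_l$ over $\{\pp_1,\dots,\pp_4\}$ and over $\{\pp_2,\pp_4\}$ is unique, so that the sets $U^{+}_t(\cdot)$, $U^{-}_t(\cdot)$ and hence the updates are unambiguous and $\rho$ genuinely carries the maximizer of $\ww$ to that of $-\ww$. Everything else is bookkeeping, but I would double-check the sign pattern of \eqref{eq:u_gradient} (the $-\eta\pp_{i_1}$ from the positive point versus the $+\eta\pp_{i_2}$ from the negative point) so the negation propagates consistently through both contributions; this is also the place to confirm the companion claim used afterward, namely $\probarg{A_3^c\cap A_4^c\mid A_1^c\cap A_2^c}=\tfrac12$, by observing that $\uvec{1}_0\mapsto-\uvec{1}_0$ is a measure-preserving involution of the Gaussian initialization which, by this lemma, sends $\uvec{1}_T$ to $-\uvec{1}_T$ while fixing the other three filters.
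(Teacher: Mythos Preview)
Your proposal is correct and follows essentially the same approach as the paper: induction on $t$, using that the network value on diverse points is invariant under negating a filter (because the pattern sets $\{\pp_1,\pp_2,\pp_3,\pp_4\}$ and $\{\pp_2,\pp_4\}$ are closed under negation), hence the hinge indicators coincide, hence the three unflipped filters update identically, and the flipped filter's update negates because the argmax over the $\pp_l$'s commutes with negation via the swap $1\leftrightarrow 3$, $2\leftrightarrow 4$. Your involution $\rho$ is just a compact repackaging of the five bullet-point facts the paper spells out before working one representative case; the remark on argmax uniqueness and the use of the lemma for the subsequent $\tfrac12$-probability computation also match the paper exactly.
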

\begin{proof}
	We will show this by induction on $t$. \footnote{Recall that we condition on the event corresponding to Lemma \ref{lem:init_bound_disj}. By negating a weight vector we still satisfy the bounds in the lemma and therefore the claim that will follow will hold under this conditioning.}This holds by definition for $t=0$. Assume it holds for an iteration $t$. Denote $W_{t+1}^{(2)}=\left(\zz_1,\zz_2,\vv_1,\vv_2\right)$. We need to show that $\zz_1 = \wvec{1}_{t+1}$, $\zz_2 = \wvec{2}_{t+1}$, $\vv_1 = -\uvec{1}_{t+1}$ and $\vv_2 = \uvec{2}_{t+1}$. By the induction hypothesis it holds that $N_{W_t^{(1)}}(\xx^+) = N_{W_t^{(2)}}(\xx^+)$ and $N_{W_t^{(1)}}(\xx^-) = N_{W_t^{(2)}}(\xx^-)$. This follows since for diverse points (either positive or negative), negating a neuron does not change the function value. Thus, according to \eqref{eq:w_gradient} and \eqref{eq:u_gradient} we have $\zz_1 = \wvec{1}_{t+1}$, $\zz_2 = \wvec{2}_{t+1}$ and $\vv_2 = \uvec{2}_{t+1}$. We are left to show that $\vv_1 = -\uvec{1}_{t+1}$. This follows from \eqref{eq:u_gradient} and the following  facts: 
	\begin{enumerate}
		\item $\pp_3=-\pp_1$.
		\item $\pp_2=-\pp_4$.
		\item $\argmax_{1 \le l \le 4} \uu\cdot \pp_l = 1$ if and only if $\argmax_{1 \le l \le 4} -\uu\cdot \pp_l = 3$.
		\item $\argmax_{1 \le l \le 4} \uu\cdot \pp_l = 2$ if and only if $\argmax_{1 \le l \le 4} -\uu\cdot \pp_l = 4$.
		\item $\argmax_{l \in \{2,4\}} \uu\cdot \pp_l = 2$ if and only if $\argmax_{l \in \{2,4\}} -\uu\cdot \pp_l = 4$.
	\end{enumerate}
	To see this, we will illustrate this through one case, the other cases are similar. Assume, for example, that $\argmax_{1 \le l \le 4} \uvec{1}_t\cdot \pp_l = 3$ and $\argmax_{l \in \{2,4\}} \uvec{1}_t\cdot \pp_l = 2$ and assume without loss of generality that $N_{W_t^{(1)}}(\xx^+) = N_{W_t^{(2)}}(\xx^+) < \gamma$ and $N_{W_t^{(1)}}(\xx^-) = N_{W_t^{(2)}}(\xx^-) > -1$. Then, by \eqref{eq:u_gradient}, $\uvec{1}_{t+1} = \uvec{1}_t - \pp_3 + \pp_2$. By the induction hypothesis and the above facts it follows that $\vv_1 = -\uvec{1}_t - \pp_1 + \pp_4 = -\uvec{1}_t + \pp_3 - \pp_2 =  -\uvec{1}_{t+1}$. This concludes the proof.
\end{proof}

Consider an initialization of gradient descent where $\wvec{1}_0$ and $\wvec{2}_0$ are fixed and the event that we conditioned on in the beginning of the proof and $A_1^c\cap A_2^c$ hold. Define the set $B_1$ to be the set of all pair of vectors $(\vv_1, \vv_2)$ such that if $\uvec{1}_0 = \vv_1$ and $\uvec{1}_0 = \vv_2$ then at iteration $T$, $\uvec{1} \cdot \pp_2 > 0$ and $\uvec{2} \cdot \pp_2 > 0$. Note that this definition implicitly implies that this initialization satisfies the condition in Lemma \ref{lem:init_bound_disj} and leads to a global minimum. Similarly, let $B_2$ be the set of all pair of vectors $(\vv_1, \vv_2)$ such that if $\uvec{1}_0 = \vv_1$ and $\uvec{1}_0 = \vv_2$ then at iteration $T$, $\uvec{1} \cdot \pp_4 > 0$ and $\uvec{2} \cdot \pp_2 > 0$.
First, if $(\vv_1,\vv_2) \in B_1$ then $(-\vv_1,\vv_2)$ satisfies the conditions of Lemma \ref{lem:init_bound_disj}.
Second, by Lemma \ref{lem:symmetry}, it follows that if $(\vv_1,\vv_2) \in B_1$ then initializating with $(-\vv_1,\vv_2)$, leads to the same values of $\nett{t}(\xx^+)$ and $\nett{t}(\xx^-)$ in all iterations $0 \le t \le T$. Therefore, initializing with $(-\vv_1,\vv_2)$ leads to a convergence to a global minimum with the same value of $T$ as the initialization with $(\vv_1,\vv_2)$. Furthermore, if $(\vv_1,\vv_2) \in B_1$, then by Lemma \ref{lem:symmetry}, initializing with $\uvec{1}_0 = -\vv_1$ and $\uvec{1}_0 = \vv_2$ results in $\uvec{1} \cdot \pp_2 < 0$ and $\uvec{2} \cdot \pp_2 > 0$. It follows that $(\vv_1,\vv_2) \in B_1$ if and only if $(-\vv_1,\vv_2) \in B_2$. 

For $l_1,l_2 \in \{2,4\}$ define $P_{l_1,l_2} = \probarg{\uvec{1} \cdot \pp_{l_1} > 0 \wedge \uvec{2} \cdot \pp_{l_2} > 0 \mid A_1^c\cap A_2^c, \wvec{1}_0, \wvec{2}_0}$
Then, by symmetry of the initialization and the latter arguments it follows that $P_{2,2} = P_{4,2}$.

By similar arguments we can obtain the  equalities $P_{2,2} = P_{4,2} = P_{4,4} = P_{2,4}$.

Since all of these four probabilities sum to $1$, each is equal to $\frac{1}{4}$. \footnote{Note that the probablity that $\uvec{i} \cdot \pp_j = 0$ is 0 for all possible $i$ and $j$.}Taking expectations of these probabilities with respect to the values of $\wvec{1}_0$ and $\wvec{2}_0$ (given that Lemma \ref{lem:init_bound_disj} and $A_1^c\cap A_2^c$ hold) and using the law of total expectation, we conclude that 
\begin{align*}
	\probarg{A_3^c\cap A_4^c \mid A_1^c\cap A_2^c} &=  \probarg{\uvec{1} \cdot \pp_4 > 0 \wedge \uvec{2} \cdot \pp_2 > 0 \mid A_1^c\cap A_2^c} \\ &+ \probarg{\uvec{1} \cdot \pp_2 > 0 \wedge \uvec{2} \cdot \pp_4 > 0 \mid A_1^c\cap A_2^c} =  \frac{1}{2}
\end{align*}

Finally, let $\mz_1$ be the set of positive points which contain only the patterns $\pp_1$, $\pp_2$, $\pp_4$, $\mz_2$ be the set of positive points which contain only the patterns $\pp_3$, $\pp_2$, $\pp_4$. Let $\mz_3$ be the set which contains the negative point with all patterns equal to $\pp_2$ and $\mz_4$ be the set which contains the negative point with all patterns equal to $\pp_4$. By the proof of the previous section, if the event $E$ holds, then there exists $1 \le i \le 4$, such that gradient descent converges to a solution at iteration $T$ which errs on all of the points in $\mz_i$.  
Therefore, its test error will be at least $p^*$ (recall \eqref{eq:pstar}).
\ignore{
	Let $\md$ be a probability distribution where the probability for a positive point is $0.5$. Furthermore, the probability for each $\zz_1$ and $\zz_2$ is $\frac{1-p_+}{4}$ and the probability for each $\zz_3$ and $\zz_4$ is $\frac{1-p_-}{4}$. In this case, if the event $E$ holds, then gradient descent will have test error at least $\min\left\{\frac{1-p_+}{4}, \frac{1-p_{-}}{4}\right\}$, which concludes the proof.
}

\ignore{
	\subsection{Proof of Theorem \ref{thm:lower_bound_general}}
	\label{sec:proof_lower_bound_general}
	
	The theorem follows by a simple observation. By the gradient update in \eqref{eq:w_gradient} it follows that for $i \in \{1,3\}$ if $\wvec{1}_0 \cdot \xx_i < 0$ and $\wvec{2}_0 \cdot \xx_i < 0$, then for any iteration $t$, $\wvec{1}_t \cdot \xx_i < 0$ and $\wvec{2}_t \cdot \xx_i < 0$. Furthermore, if $\wvec{1}_t \cdot \xx_i < 0$ and $\wvec{2}_t \cdot \xx_i < 0$ at convergence, then for the positive point $\zz_i \in \reals^{2d}$, which all of its patterns are equal to $\xx_i$, it holds that $\nett{t}(\zz_i) \le 0$. Therefore, $\zz_i$ is not classified correctly. Finally, the probability that $\wvec{1}_0 \cdot \xx_1 < 0$ and $\wvec{2}_0 \cdot \xx_1 < 0$ or $\wvec{1}_0 \cdot \xx_3 < 0$ and $\wvec{2}_0 \cdot \xx_3 < 0$ is equal to $\frac{1}{2}$.
	
}

\section{Proof of Theorem \ref{thm:generalization_gap}}
\label{sec:cor_proof}

Let $\delta \ge 1-p_+p_-(1-c-16e^{-8})$. By Theorem \ref{thm:main}, given 2 samples, one positive and one negative, with probability at least $1-\delta \le p_+p_-(1-c-16e^{-8})$, gradient descent will converge to a global minimum that has 0 test error. Therefore, for all $\epsilon \ge 0$, $m(\epsilon, \delta) \le 2$. On the other hand, by Theorem \ref{thm:lower_bound_specific}, if $m < \frac{2\log\left(\frac{48\delta}{33(1-c)}\right)}{\log(p_+p_-)}$ then with probability greater than $$\left(p_+p_-\right)^{\frac{\log\left(\frac{48\delta}{33(1-c)}\right)}{\log(p_+p_-)}}(1-c)\frac{33}{48} = \delta$$
gradient descent converges to a global minimum with test error at least $p^*$. It follows that for $0 \le \epsilon < p^*$, $m(\epsilon, \delta) \ge \frac{2\log\left(\frac{48\delta}{33(1-c)}\right)}{\log(p_+p_-)}$.

\section{Experiments for Section \ref{sec:experiments}}

We first provide several details on the experiments in Section \ref{sec:experiments}. We trained the overparamaterized network with 120 channels once for each training set size and recorded the clustered weights. We used Adam for optimization and batch size which is one-tenth of the size of the training set. We used learning rate=0.01 and standard deviation of 0.05 for initialization with truncated normal weights. For the small network with random initialization we used the same optimization method and batch sizes but tried 6 different pairs of values for learning rate and standard deviation: (0.01,0.01), (0.01,0.05), (0.05,0.05), (0.05, 0.01), (0.1,0.5) and (0.1,0.1). For each pair and training set size we trained 20 times and averaged the results. The curve is the best test accuracy we got among all learning rate and standard deviation pairs.

For the small network with cluster initialization we experimented with the same setup as the small network with random initializatoin but only experimented with learning rate 0.01 and standard deviation 0.05. The curve is an average of 20 runs for each training set size.

We also experimented with other filter sizes in similar setups. Figure \ref{fig:exp_mnist4x4} shows the results for 4x4 filters and clustering from 120 filters to 4 filters (with 2000 training points). Figure \ref{fig:exp_mnist7x7} shows the results for 7x7 filters and clustering from 120 filters to 4 filters (with 2000 training points).

\begin{figure*}[t]
	\begin{subfigure}{.5\textwidth}
		\centering
		\includegraphics[width=0.8\linewidth]{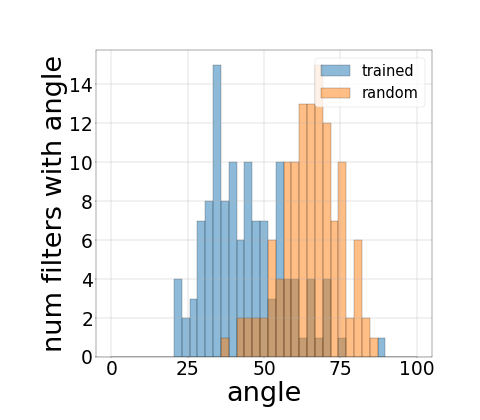}
		\caption{}
		\label{fig:exp_mnist4x4_1}
	\end{subfigure}%
	\begin{subfigure}{.5\textwidth}
		\centering
		\includegraphics[width=0.8\linewidth]{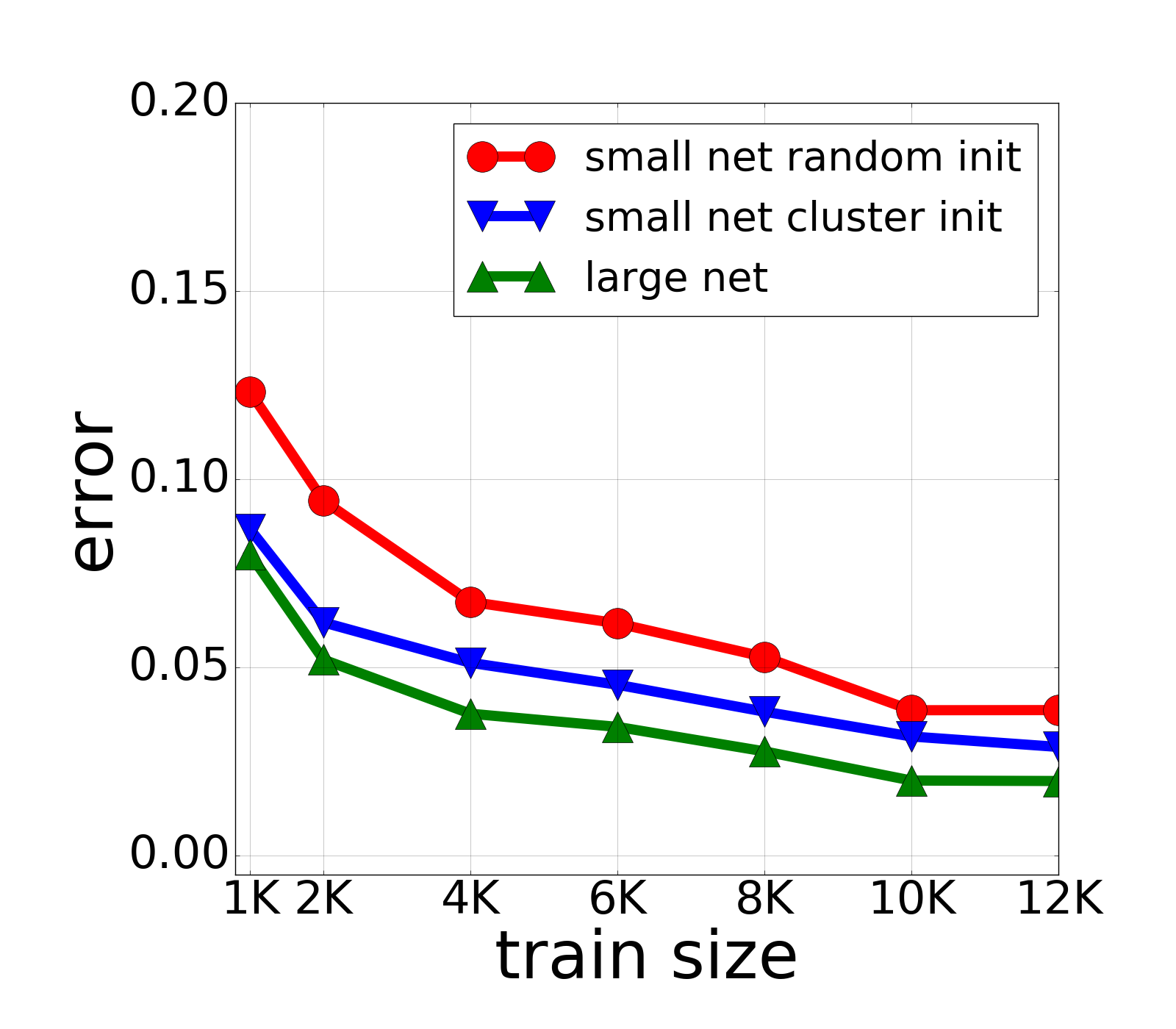}
		\caption{}
		\label{fig:exp_mnist4x4_2}
	\end{subfigure}%
	\caption{\small{ Clustering and Exploration in MNIST with 4x4 filters (a) Distribution of angle to closest center in trained and random networks. (b) The plot shows the test error of the small network (4 channels) with standard training (red), the small network that uses clusters from the large network (blue), and the large network (120 channels) with standard training (green).}}
	\label{fig:exp_mnist4x4}
\end{figure*}

\begin{figure*}[t]
	\begin{subfigure}{.5\textwidth}
		\centering
		\includegraphics[width=0.8\linewidth]{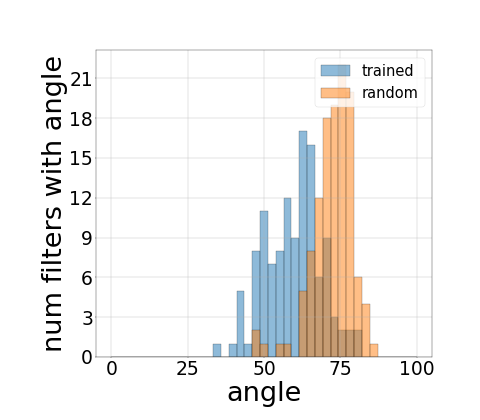}
		\caption{}
		\label{fig:exp_mnist7x7_1}
	\end{subfigure}%
	\begin{subfigure}{.5\textwidth}
		\centering
		\includegraphics[width=0.8\linewidth]{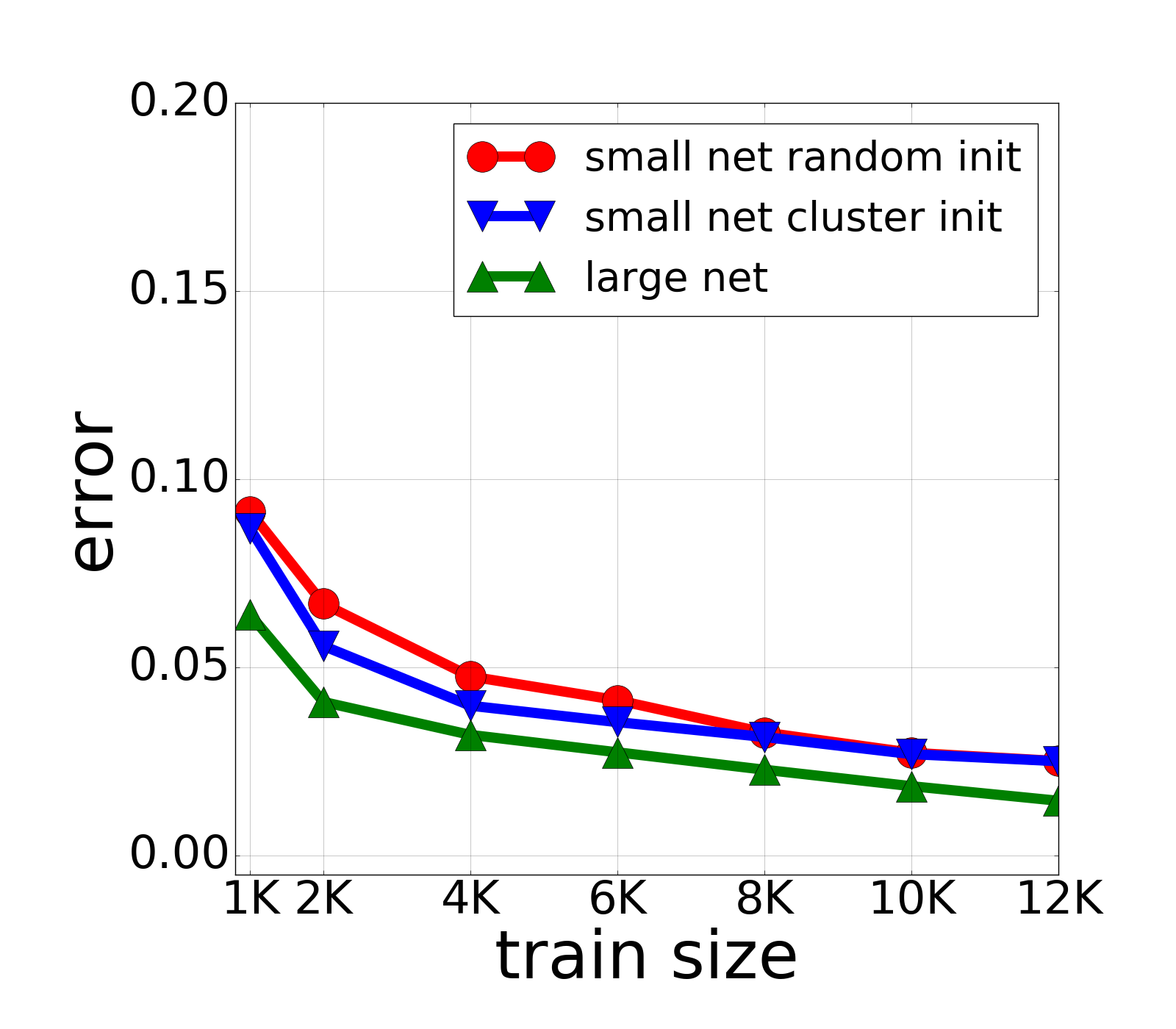}
		\caption{}
		\label{fig:exp_mnist7x7_2}
	\end{subfigure}%
	\caption{\small{ Clustering and Exploration in MNIST with 7x7 filters (a) Distribution of angle to closest center in trained and random networks. (b) The plot shows the test error of the small network (4 channels) with standard training (red), the small network that uses clusters from the large network (blue), and the large network (120 channels) with standard training (green).}}
	\label{fig:exp_mnist7x7}
\end{figure*}

\end{document}